\theoremstyle{plain}
\newtheorem{definition}{Definition}
\newtheorem{theorem}{Theorem}
\newtheorem{lemma}{Lemma}
\newtheorem{proposition}{Proposition}
\newtheorem{assumption}{Assumption}
\newtheorem{problem}{Problem}
\newcommand{\X}{{\protect\scalebox{0.5}{X}}}
\newcommand{\Y}{{\protect\scalebox{0.5}{Y}}}
\newcommand{\XY}{{\protect\scalebox{0.5}{X,Y}}}
\newcommand{\YX}{{\protect\scalebox{0.5}{Y|X}}}
\newcommand{\R}{\mathbb{R}}
\newcommand{\ba}{{\bm{a}}}
\newcommand{\bb}{{\bm{b}}}
\newcommand{\bi}{{\bm{i}}}
\newcommand{\bx}{{\bm{x}}}
\newcommand{\by}{{\bm{y}}}
\newcommand{\bu}{{\bm{u}}}
\newcommand{\bt}{{\bm{\theta}}}
\newcommand{\bX}{{\bm{X}}}
\newcommand{\bY}{{\bm{Y}}}
\newcommand{\calB}{{\mathcal{B}}}
\newcommand{\calC}{{\mathcal{C}}}
\newcommand{\calF}{{\mathcal{F}}}
\newcommand{\calL}{{\mathcal{L}}}
\newcommand{\calM}{{\mathcal{M}}}
\newcommand{\calN}{{\mathcal{N}}}
\newcommand{\calV}{{\mathcal{V}}}
\newcommand{\calX}{{\mathcal{X}}}
\newcommand{\calY}{{\mathcal{Y}}}
\newcommand{\rmA}{\mathrm{A}}
\newcommand{\rmB}{\mathrm{B}}
\newcommand{\rmI}{\mathrm{I}}
\newcommand{\rmM}{\mathrm{M}}
\newcommand{\rmO}{\mathrm{O}}
\newcommand{\rmP}{\mathrm{P}}
\newcommand{\rmQ}{\mathrm{Q}}
\newcommand{\rmR}{\mathrm{R}}
\newcommand{\rmV}{\mathrm{V}}
\newcommand{\opt}{\mathrm{opt}}
\newcommand{\naiseki}[1]{\langle #1 \rangle}
\newcommand{\tmin}{\mathop{\rm min}\limits}
\newcommand{\tst}{\mathop{\rm s.t.}\limits}
\newcommand{\tlimsup}{\mathop{\rm lim\,sup}\limits}
\newcommand{\tliminf}{\mathop{\rm lim\,inf}\limits}
\newcommand{\argmax}{\mathop{\rm arg\,max}\limits}
\DeclareMathOperator{\trace}{tr}
\DeclareMathOperator{\supp}{supp}
\DeclareMathOperator{\sign}{sign}
\DeclareMathOperator{\vecop}{vec}
\newcommand{\hyl}[1]{{(\hyperlink{{#1}}{{#1}})}}
\newcommand{\hyt}[1]{{\hypertarget{#1}{{\rm({#1})}}}}
\newcommand{\vo}{\phantom{1}}
\newcommand{\mr}[2]{\multirow{#1}{*}{#2}}
\newcommand{\CG}[1]{\textcolor[rgb]{0.4,0.4,0.4}{#1}}
\begin{document}
\title{Optimal Kernel for Kernel-Based Modal Statistical Methods}
\author{%
\name Ryoya Yamasaki \email yamasaki@sys.i.kyoto-u.ac.jp\\
\addr Department of Systems Science\\
Graduate School of Informatics, Kyoto University\\
36-1 Yoshida-Honmachi, Sakyo-ku, Kyoto 606-8501 JAPAN
\AND
\name Toshiyuki Tanaka \email tt@i.kyoto-u.ac.jp\\
\addr Department of Systems Science\\
Graduate School of Informatics, Kyoto University\\
36-1 Yoshida-Honmachi, Sakyo-ku, Kyoto 606-8501 JAPAN}
\editor{xxx}
\maketitle
\begin{abstract}
Kernel-based modal statistical methods include mode estimation, regression, and clustering.
Estimation accuracy of these methods depends on the kernel used as well as the bandwidth.
We study effect of the selection of the kernel function to the estimation accuracy of these methods.
In particular, we theoretically show a (multivariate) optimal kernel that minimizes 
its analytically-obtained asymptotic error criterion when using an optimal bandwidth,
among a certain kernel class defined via the number of its sign changes.
\end{abstract}
\begin{keywords}
Optimal kernel, 
kernel mode estimation,
modal linear regression,
mode clustering
\end{keywords}
\section{Introduction}
\label{section:Introduction}
The mode for a continuous random variable is a location statistic 
where a probability density function (PDF) takes a local maximum value.
It provides a good and easy-to-interpret summary of probabilistic events:
one can expect many sample points in the vicinity of a mode.
Besides, the mode has several desirable properties for data analysis.
One of them is that it is robust against outliers and skew or heavy-tailed distributions
as a statistic that expresses the centrality of data following a unimodal distribution.
The mean is not robust at all, which often causes troubles in real applications.
Another desirable property of the mode is that it can be naturally 
defined even in multivariate settings or non-Euclidean spaces.
For example, the median and quantiles are intrinsically difficult 
to define in these cases due to the lack of an appropriate ordering structure.
Also, the mode works well for multimodal distributions without loss of interpretability.

These attractive properties of the mode are utilized not only in location estimation 
but also in other data analysis techniques such as regression and clustering.
Among various modal statistical methods, kernel-based methods have been widely used, 
presumably because they retain the inherent goodness of the mode 
and are easy to use in that they can be implemented with simple estimation algorithms, 
such as mean-shift algorithm \citep{comaniciu2002mean, yamasaki2019ms} 
for kernel mode estimation \citep{parzen1962estimation, eddy1980optimum, romano1988weak} 
and for mode clustering \citep{casa2020},
and iteratively reweighted least squares algorithm \citep{yamasaki2020kernel} 
for modal linear regression \citep{yao2014new, kemp2019dynamic}.
Also, their good estimation accuracy is an important advantage:
for example, kernel mode estimation generally has the best estimation accuracy in mode estimation
in the sense that it attains the minimax-optimal rate regarding 
the used sample size \citep{hasminskii1979lower, donoho1991geometrizing}.

Estimation accuracy of such kernel-based methods 
depends on the kernel function used as well as the bandwidth.
Nevertheless, the selection of a kernel has not been well studied, 
and some questions remain unanswered.
For instance, for univariate kernel mode estimation, 
\citet{granovsky1991optimizing} have derived the optimal kernel, which 
minimizes the asymptotic mean squared error (AMSE) 
of a kernel mode estimate using an optimal bandwidth 
among a certain kernel class defined via the number of sign changes. 
Whether their argument can be extended to multivariate cases, 
however, has not been clarified yet.
We are also interested in optimal kernels for other kernel-based modal statistical methods,
such as modal linear regression and mode clustering, 
for which the problem of kernel selection has not been elucidated enough, as far as we are aware. 
The objective of this paper is to study the problem of kernel selection 
for various kernel-based modal statistical methods in the multivariate setting.

In the main part of this paper (Sections~\ref{section:KME}--\ref{section:Simulation}), 
with the aim of simplifying the presentation, we focus on kernel mode estimation 
\citep{parzen1962estimation, eddy1980optimum, romano1988weak} 
as a representative kernel-based modal statistical method.
In Section~\ref{section:KME}, we address multivariate extension of 
the study on the optimal kernel by \citet{granovsky1991optimizing}.
For multivariate cases, we need some additional assumptions on the structure of the kernels 
to allow systematic discussion on the optimal kernel,
so we mainly consider two commonly-used kernel classes, 
radial-basis kernels (RKs) and product kernels (PKs).
In Section~\ref{section:BAB}, we show 
basic asymptotic behaviors of the mode estimator, such as asymptotic normality of the mode estimator 
that can apply to general kernels including RKs and PKs, 
which is a modification of the existing statement in \citep{mokkadem2003law}. 
The characterization of the asymptotic behaviors leads to the AMSE of the mode estimator and the optimal bandwidth, 
providing the basis for our subsequent discussion on the optimal kernel.
We then construct theories for the optimal RK in Section~\ref{section:RBK},
and study PKs in Section~\ref{section:PK}.
As one of the consequences, the optimal RK is found to improve 
the AMSE by more than 10~\% compared with the commonly used Gaussian kernel, 
and the improvement is greater in higher dimensions.
In view of the slow nonparametric convergence rate of mode estimation, 
this improvement would significantly contribute to higher sample efficiency.
Moreover, on the basis of the consideration in Sections~\ref{section:RBK} and \ref{section:PK}, 
in Section~\ref{section:Comparison} we compare 
these two kernel classes in terms of the AMSE,
and show that, among non-negative kernels, the optimal RK is better than any PK
regardless of the underlying PDF.
Section~\ref{section:Discussions} gives some additional discussion.
In Section~\ref{section:Simulation} we show results of simulation experiments to examine 
to what extent the theories on the kernel selection (the results in Section~\ref{section:KME})
based on the asymptotics reflect the real performance in a finite sample size situation,
which supports the usefulness of our discussion.

We show that our discussion can be further adapted
to other kernel-based modal statistical methods, 
such another mode estimation method \citep{abraham2004asymptotic}, 
modal linear regression \citep{yao2014new, kemp2019dynamic}, 
and mode clustering \citep{comaniciu2002mean, casa2020}.
For these methods, we can obtain results similar to, but not completely the same 
as, those for the kernel mode estimation.
Section~\ref{section:OtherMethods} gives method-specific results on kernel selection.

The remaining part of this paper consists of a concluding section and appendix sections:
Section~\ref{section:Conclusion} summarizes our results 
and presents possible directions for future work.
Appendices give proofs of the theories on the optimal RK and of 
the theorems for the asymptotic behaviors of the kernel mode estimator 
and modal linear regression, 
which are presented newly in this paper.

\section{Optimal Kernel for Kernel Mode Estimation}
\label{section:KME}
\subsection{Basic Asymptotic Behaviors}
\label{section:BAB}
In general, for a continuous random variable, 
its mode refers either to a point where its PDF takes a local maximum value, 
or to a set of such local maximizers.
If we followed such definitions of the mode, the design of an estimator corresponding to each mode and theoretical statements regarding such an estimator will become complicated in a multimodal case.
So as to evade such a technical complication, we assume, unless stated otherwise, that the PDF has a unique global maximizer (see \hyl{A.3} in Appendix~\ref{section:ProofKME}). 
Let $\bX\in\R^d$ be a $d$-variate continuous random variable, 
and assume that it has a PDF $f$.
Then, we define the mode $\bt\in\R^d$ of $f$ as
\begin{align}
	\label{eq:def_mode}
	\bt\coloneq\argmax_{\bx\in\R^d} f(\bx).
\end{align}

Suppose that an independent and identically distributed (i.i.d.)~sample 
$\{\bX_i\in\R^d\}_{i=1}^n$ is drawn from $f$.
The kernel mode estimator (KME) is a plugin estimator
\begin{align}
	\label{eq:KMEdef}
	\bt_n\coloneq\argmax_{\bx\in\R^d} f_n(\bx),
\end{align}
where the kernel density estimator (KDE)
\begin{align}
\label{eq:KDE}%
	f_n(\bx)\coloneq\frac{1}{nh_n^d}
	\sum_{i=1}^nK\biggl(\frac{\bx-\bX_i}{h_n}\biggr)
\end{align}
is used as a surrogate for the PDF in~\eqref{eq:def_mode}, 
where $K$ is a kernel function defined on $\R^d$, 
and where $h_n>0$ is a parameter regarding the scale of the kernel and is called a bandwidth.
Here, we assume that the KDE also has a unique global maximizer.

Evaluating the KME amounts to solving the optimization problem~\eqref{eq:KMEdef} 
where the objective function $f_n$ is in general non-convex, so how one solves it 
is itself a very important problem. 
In this paper, however,
as we are mainly interested in elucidating properties of the KME, 
we assume that the KME is evaluated by a certain means, 
so that how to solve it is beyond the scope of this paper.

In what follows, for a set $S\subseteq\R$,
we let $S_{\ge0}$ denote the set of all nonnegative elements in $S$.
Thus, $\mathbb{Z}_{\ge0}=\{0,1,\ldots\}$, and $\R_{\ge0}$
represents the set of nonnegative real numbers. 
For later use, we introduce the multi-index notation:
For $\bi=(i_1,\ldots,i_d)^\top\in\mathbb{Z}_{\ge0}^d$,
let $|\bi|\coloneq\sum_{j=1}^di_j$.
It then follows that $\bi=\bm{0}_d$ and $|\bi|=0$ are equivalent, 
where $\bm{0}_d\coloneq(0,\ldots,0)^\top$ is the $d$-dimensional zero vector. 
For a multi-index $\bi\in\mathbb{Z}_{\ge0}^d$
and a vector $\bx=(x_1,\ldots,x_d)^\top$,
let $\bx^\bi \coloneq\prod_{j=1}^dx_j^{i_j}$.
The factorial of a multi-index is defined as
$\bi!\coloneq\prod_{j=1}^di_j!$.
Let $\partial^\bi \coloneq\prod_{j=1}^d\partial_j^{i_j}$,
where $\partial_j\coloneq\partial/\partial x_j$.

For a kernel function $K$ on $\R^d$
and a multi-index $\bi\in\mathbb{Z}_{\ge0}^d$, 
we let 
\begin{align}
	\calB_{d,\bi}(K)
	\coloneq \int_{\R^d} \bx^\bi K(\bx)\,d\bx
\end{align}
be the (multivariate) moment of $K$ indexed by $\bi$. 
Throughout this paper,
we assume using a \emph{$q$-th order kernel} $K$ of an even integer $q\ge2$.
Here, we say that $K$ defined on $\R^d$ is \emph{$q$-th order} 
if it satisfies the following \emph{moment condition}:
For $\bi\in\mathbb{Z}_{\ge0}^d$, 
\begin{align}
\label{eq:GenMomCon}%
	\calB_{d,\bi}(K)=
	\begin{cases}
	1,&\bi=\bm{0}_d,\\
	0,&\text{for all $\bi$ such that}\;|\bi|\in\{1,\ldots,q-1\},\\
	\text{non-zero},&\text{for some $\bi$ such that}\;|\bi|=q.
	\end{cases}
\end{align}
Also, note that the condition~\eqref{eq:GenMomCon} 
for $\bi=\bm{0}_d$ implies normalization of the kernel.

Here we consider the standard situation where the Hessian matrix $Hf(\bt)$ of the PDF at the mode is non-singular, 
where $H\coloneq\nabla\nabla^\top$. 
Existing works \citep{yamato1971sequential, ruschendorf1977consistency, mokkadem2003law} 
have revealed basic statistical properties of the KME such as consistency and asymptotic distribution.
In the following, we provide a theorem modified from that of, especially, \citep{mokkadem2003law}:
it shows the asymptotic bias (AB), variance-covariance matrix (AVC), 
and AMSE of $\bt_n$, along with asymptotic normality, 
for a more general kernel class than those covered by existing proofs.
This modification is essential for our purpose, 
since it covers the RKs that will be discussed in the next section, 
whereas the existing proofs do not.
\begin{theorem}
\label{theorem:AN}%
Assume Assumption \ref{assumption:asmAN} in Appendix~\ref{section:ProofKME} with an even integer $q\ge2$.
Then, the KME $\bt_n$ asymptotically follows a normal distribution with the following AB and AVC: 
\begin{align}
\label{eq:ABVC}%
	\mathrm{E}[\bt_n-\bt]\approx-h_n^q\rmA\bb,\quad
	\mathrm{Cov}[\bt_n]\approx\frac{1}{nh_n^{d+2}}\rmA\rmV\rmA,
\end{align}
where $\rmA\coloneq\{Hf(\bt)\}^{-1}$,
where $\bb=\bb(\bt;f,K)$ with
\begin{align}
	\bb(\bx;f, K)
	\coloneq \sum_{\bi\in\mathbb{Z}_{\ge0}^d:|\bi|=q}
	\frac{1}{\bi!} \cdot \nabla\partial^\bi f(\bx) \cdot \calB_{d,\bi}(K),
\end{align}
and where $\rmV$ is the $d\times d$ matrix defined as $\rmV\coloneq f(\bt)\calV_d(K)$ with
\begin{align}
\label{eq:FuncVofK}%
	\calV_d(K)
	\coloneq \int_{\R^d} \nabla K(\bx) \nabla K(\bx)^\top\,d\bx.
\end{align}
Moreover, the AMSE is given by
\begin{align}
\label{eq:AMSE}%
	\mathrm{E}[\|\bt_n-\bt\|^2]
	\approx h_n^{2q}\|\rmA\bb\|^2
	+\frac{1}{nh_n^{d+2}}\trace(\rmA\rmV\rmA),
\end{align}
where $\|\cdot\|$ is the Euclidean norm in $\R^d$ and 
where $\trace(\mathrm{M})$ denotes the trace of a square matrix $\mathrm{M}$.
\end{theorem}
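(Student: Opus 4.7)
The plan is to treat $\bt_n$ as an $M$-estimator defined by the first-order condition $\nabla f_n(\bt_n)=\bm{0}_d$, linearize around $\bt$, and then apply a multivariate central limit theorem to the score-like quantity $\nabla f_n(\bt)$. Concretely, since $\bt_n$ is (with high probability) an interior maximizer of the smooth function $f_n$, a coordinatewise mean-value expansion of each component of $\nabla f_n$ along the segment joining $\bt$ and $\bt_n$ yields $\bm{0}_d=\nabla f_n(\bt_n)=\nabla f_n(\bt)+\widetilde{H}_n(\bt_n-\bt)$, where $\widetilde{H}_n$ is a $d\times d$ matrix of Hessian rows at intermediate points. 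Inverting gives the stochastic representation
\begin{align}
\bt_n-\bt=-\widetilde{H}_n^{-1}\,\nabla f_n(\bt),
\end{align}
which reduces the theorem to (i)~showing $\widetilde{H}_n^{-1}\to\rmA$ in probability, and (ii)~characterizing the mean, covariance, and limiting law of $\nabla f_n(\bt)$.

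For step~(i), I would first establish consistency $\bt_n\to\bt$ in probability via the standard argmax argument: under the smoothness and bandwidth conditions of Assumption~\ref{assumption:asmAN}, $f_n$ converges uniformly to $f$ on compact neighborhoods, and well-separatedness of the unique maximizer $\bt$ then forces $\bt_n\to\bt$. The same neighborhood argument applied to second derivatives gives $Hf_n\to Hf$ uniformly near $\bt$, whence $\widetilde{H}_n\to Hf(\bt)$ and $\widetilde{H}_n^{-1}\to\rmA$ by non-singularity. For step~(ii), I would analyze $\nabla f_n(\bt)=(nh_n^{d+1})^{-1}\sum_{i=1}^n\nabla K((\bt-\bX_i)/h_n)$ with the change of variables $\bu=(\bt-\bx)/h_n$ and a Taylor expansion of $f$ about $\bt$ to order $q$. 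The moment condition~\eqref{eq:GenMomCon} annihilates the contributions from $|\bi|\in\{1,\ldots,q-1\}$, and, using $\nabla f(\bt)=\bm{0}_d$, leaves $\mathrm{E}[\nabla f_n(\bt)]=h_n^q\bb(\bt;f,K)+o(h_n^q)$. A parallel second-moment calculation yields $\mathrm{Cov}[\nabla f_n(\bt)]=(nh_n^{d+2})^{-1}f(\bt)\calV_d(K)+o\bigl((nh_n^{d+2})^{-1}\bigr)$, the $(\mathrm{E}[\nabla f_n(\bt)])(\mathrm{E}[\nabla f_n(\bt)])^\top$ piece being of smaller order. A Lindeberg--Feller CLT applied to the normalized i.i.d.\ sum $\sqrt{nh_n^{d+2}}\{\nabla f_n(\bt)-\mathrm{E}[\nabla f_n(\bt)]\}$, with the Lindeberg condition verified from square-integrability of $\nabla K$, then gives asymptotic normality of $\nabla f_n(\bt)$. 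Multiplying by $-\widetilde{H}_n^{-1}$ and applying Slutsky produces the claimed AB and AVC in~\eqref{eq:ABVC}, and the AMSE~\eqref{eq:AMSE} follows to leading order from $\mathrm{E}[\|\bt_n-\bt\|^2]=\|\mathrm{E}[\bt_n-\bt]\|^2+\trace(\mathrm{Cov}[\bt_n])$.

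The main obstacle, and the reason a modification of \citep{mokkadem2003law} is necessary, is that existing proofs rely on regularity of $K$ (typically compact support, or strong uniform decay of $K$ and its derivatives) which excludes the radial-basis kernels targeted in Section~\ref{section:RBK}. The delicate part of the argument is therefore to establish the uniform convergence of $f_n$ and of $Hf_n$ near $\bt$, and to control the Taylor remainder in the bias expansion, under the weaker integrability and tail conditions that accommodate RKs. Getting these two ingredients right, without inadvertently invoking properties our broader kernel class does not possess, is where the real care lies; the CLT and the Slutsky-style combination that follow are routine once the uniform estimates are in place.
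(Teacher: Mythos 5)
Your proposal follows essentially the same route as the paper's proof: the mean-value linearization $\bm{0}_d=\nabla f_n(\bt)+Hf_n(\bt^*)(\bt_n-\bt)$, consistency of $\{Hf_n(\bt^*)\}^{-1}$ to $\rmA$, a Taylor-plus-moment-condition computation of the mean and covariance of $\nabla f_n(\bt)$, a CLT (the paper uses Lyapounov where you invoke Lindeberg--Feller), and Slutsky. The "delicate part" you flag but leave open is handled in the paper by the covering number condition via Pollard's uniform-consistency lemma together with Bochner's approximate-identity lemma, which is exactly what lets the argument cover RKs.
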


A plausible approach to selecting the bandwidth $h_n$ and kernel $K$
is by making the AMSE~\eqref{eq:AMSE} as small as possible.
Noting $\rmA\bb\neq\bm{0}_d$ to hold under our assumptions (see \hyl{A.4} and \hyl{A.6}),
the stationary condition of the AMSE with respect to $h_n$ leads to the optimal bandwidth, 
\begin{align}
\label{eq:OptBan}%
	h_{d,q,n}^\opt
	=\left(\frac{(d+2)\trace(\rmA\rmV\rmA)}
	{2qn\|\rmA\bb\|^2}\right)^{\frac{1}{d+2q+2}},
\end{align}
which strictly minimizes the AMSE.%
\footnote{%
\label{ft:ft1}
The optimal bandwidth~\eqref{eq:OptBan} depends on the inaccessible PDF $f$ 
and its mode $\bt$, via $\rmA$, $\bb$, and $\rmV$, and thus it cannot be used in practice as it is.
However, it is possible to use a plugin estimator of the bandwidth 
which replaces these inaccessible quantities with their consistent estimates, 
and the discussion on the optimal kernel below holds also for the estimated optimal bandwidth.}
Moreover, substituting the optimal bandwidth into the AMSE, 
the bandwidth-optimized AMSE becomes
\begin{align}
\label{eq:OptAMSE}%
	\mathrm{E}[\|\bt_n-\bt\|^2\mid h_{d,q,n}^\opt]
	\approx \frac{2q}{d+2q+2}
	\|\rmA\bb\|^{\frac{2(d+2)}{d+2q+2}}
	\left(\frac{d+2}{2qn}\trace(\rmA\rmV\rmA)\right)^{\frac{2q}{d+2q+2}}.
\end{align}
The bandwidth-optimized AMSE~\eqref{eq:OptAMSE} depends
on the kernel function $K$ used in the KDE through $\{\calB_{d,\bi}(K)\}$ and $\calV_d(K)$,
so that one may consider further optimizing the bandwidth-optimized AMSE with respect to the kernel function.
However, it also depends on the sample-generating PDF $f$ via $\rmA,\bb,\rmV$, 
and moreover, the dependence on the kernel and that on the PDF are mixed in a complex way 
in the AMSE expression~\eqref{eq:OptAMSE}, 
so it seems quite difficult to study kernel optimization in the multivariate setting 
without additional assumptions. 
Furthermore, a resulting optimal kernel should depend on the PDF. 
A multivariate kernel without any structural assumptions is also difficult to implement in practice. 
We would thus introduce some structural assumptions on kernel functions,
and consider optimization of the bandwidth-optimized AMSE
under such assumptions.
Under appropriate structural assumptions on kernel functions
the optimal kernel function turns out not to depend on the PDF $f$,
as will be shown in the following sections.

\subsection{Optimal Kernel in Radial-Basis Kernels}
\label{section:RBK}
In multivariate kernel-based methods, RKs have been most commonly used.
We first consider the kernel optimization among the class of RKs.
We say that a kernel $K$ is an RK if there is a function $G$ on $\R_{\ge0}$ such that $K$ is represented as $K(\cdot)=G(\|\cdot\|)$.

For an RK $K(\cdot)=G(\|\cdot\|)$,
by means of the cartesian-to-polar coordinate transformation,
the functional $\calB_{d,\bi}(K)$ is rewritten in terms of $G$ as
\begin{align}
	\label{eq:bRK}
	\calB_{d,\bi}(K)=b_{d,\bi} B_{d,i}(G)
\end{align}
with $i=|\bi|$,
where $B_{d,i}(G)\coloneq\int_{\R_{\ge0}} x^{d-1+i}G(x)\,dx$, 
and where $b_{d,\bi}$ is a kernel-independent factor given by
\begin{align}
\begin{split}
	b_{d,\bi}
	\coloneq&\,
	\int_0^\pi \cos^{i_1}\xi_1 \sin^{d-2 + \sum_{j>1}i_j}\xi_1\,d\xi_1
	\int_0^\pi \cos^{i_2}\xi_2 \sin^{d-3 + \sum_{j>2}i_j}\xi_2\,d\xi_2 \cdots\\
	\times&\,
	\int_0^\pi \cos^{i_{d-2}}\xi_{d-2} \sin^{1 + \sum_{j>d-2}i_j}\xi_{d-2} \,d\xi_{d-2}
	\int_0^{2\pi} \cos^{i_{d-1}}\xi_{d-1} \sin^{i_d}\xi_{d-1} \,d\xi_{d-1}\\
	=&\,
	\begin{cases}
	\frac{2\prod_{j=1}^d\Gamma\bigl(\frac{1+i_j}{2}\bigr)}{\Gamma(\frac{d+i}{2})}\neq0,&i_1,\ldots,i_d\in2\mathbb{Z}_{\ge0},\\
	0,&\mbox{otherwise}.
	\end{cases}
\end{split}
\end{align}
Note that $b_{d,\bi}$, and so does $\calB_{d,\bi}(K)$, vanishes 
once any index in the multi-index $\bi$ is odd, which includes the case where $|\bi|$ is odd,
and that $b_{d,\bm{0}_d}$ is equal to $b_d\coloneq 2\pi^{d/2}/\Gamma(\frac{d}{2})$.
One then has $\bb=B_{d,i}(G)\bar{\bb}$, where
\begin{align}
\label{eq:bbar}%
	\bar{\bb}
	\coloneq
	\sum_{\bi\in\mathbb{Z}_{\ge0}^d:|\bi|=q}
	\frac{1}{\bi!}\cdot \nabla\partial^\bi f(\bt)\cdot b_{d,\bi}
	=\frac{\pi^{\frac{d}{2}}}{2^{q-1}\Gamma(\frac{d+q}{2})
    \Gamma(\frac{q}{2}+1)}
    \nabla\nabla_2^{q/2}f(\bt),
\end{align}
where $\nabla_l\coloneq\sum_{j=1}^d\frac{\partial^l}{\partial x_j^l}$
so that $\nabla_2$ denotes the Laplacian operator.

Also, the functional $\calV_d(K)$ for an RK $K(\cdot)=G(\|\cdot\|)$ reduces to
\begin{align}
	\calV_d(K)=v_d V_{d,1}(G) \rmI_d
\end{align}
with $V_{d,l}(G)\coloneq\int_{\R_{\ge0}} x^{d-1}\{G^{(l)}(x)\}^2\,dx$
and $v_d\coloneq b_{d,\{2,0,\ldots,0\}}=2\pi^{d/2}/(d \Gamma(\frac{d}{2}))$.
This implies that $\rmV=v_df(\bt)V_{d,1}(G)\rmI_d$,
that is, $\rmV$ is proportional to the identity matrix $\rmI_d$.

Therefore, the bandwidth-optimized AMSE~\eqref{eq:OptAMSE} becomes
\begin{align}
\label{eq:RKOptAMSE}%
\begin{split}
	\mathrm{E}[\|\bt_n-\bt\|^2\mid h_{d,q,n}^\opt]
	&\approx \frac{2q}{d+2q+2}
	\|\rmA\bar{\bb}\|^{\frac{2(d+2)}{d+2q+2}}
	\left(\frac{d+2}{2qn} v_d f(\bt) \trace(\rmA^2)\right)^{\frac{2q}{d+2q+2}}\\
	&\times 
	\Bigl(B_{d,q}^{2(d+2)}(G)\cdot V_{d,1}^{2q}(G)\Bigr)^{\frac{1}{d+2q+2}},
\end{split}
\end{align}
in which the kernel-dependent factor is $(B_{d,q}^{2(d+2)}\cdot V_{d,1}^{2q})^{\frac{1}{d+2q+2}}$ alone.%
\footnote{%
Although implicit in the notations here,
$B_{d,i}$ and $V_{d,1}$ are functionals of $G$.
We will use the notation $B_{d,i}(G)$ etc.~when 
we want to make the dependence on $G$ explicit,
but also use such abbreviations otherwise.}

As one can see, the bandwidth-optimized AMSE~\eqref{eq:RKOptAMSE} 
for an RK is decomposed into a product of 
the kernel-dependent factor, which we call the \emph{AMSE criterion},
and the remaining PDF-dependent part.
This fact implies that an RK minimizing the AMSE criterion 
also minimizes the AMSE, that is it is optimal, 
among the class of RKs for every PDF satisfying the requirements of Theorem~\ref{theorem:AN}.

For an RK $K(\cdot)=G(\|\cdot\|)$, as stated above, 
$\calB_{d,\bi}(K)$ vanishes for $\bi$ with odd $|\bi|$, 
so that one needs to consider the even moment condition alone, 
which can be translated into the moment condition for $G$ as follows:
\begin{align}
\label{eq:RKMomCon}%
	B_{d,i}(G)=
	\begin{cases}
	b_d^{-1},&i=0,\\
	0,&i\in I_{q-2},\\
	\text{non-zero},&i=q,
	\end{cases}
\end{align}
where $I_k$ is the index set defined as
\begin{align}
\label{eq:Ik}%
	I_k\coloneq\bigl\{k-2i:i=0,1,\ldots,\bigl\lfloor\tfrac{k}{2}\bigr\rfloor\bigr\}.
\end{align}
Thus, if one focuses on the functional form of the AMSE criterion, 
one might consider the following variational problem (we name it `type-1' because 
it depends on the first derivative of $G$ via $V_{d,1}^{2q}(G)=V_{d,0}^{2q}(G^{(1)})$).
\begin{problem}[$d$-variate, type-1, $q$-th order]
\label{problem:Prob1}
\begin{align}
	\tmin_G\quad
	&B_{d,q}^{2(d+2)}(G)\cdot V_{d,1}^{2q}(G),
\tag{\hypertarget{P1}{P1}}\\
	\mathrm{s.t.}\quad
	&G\;\text{satisfies the moment condition~\eqref{eq:RKMomCon}}.
\tag{\hypertarget{P1-1}{P1-1}}
\end{align}
\end{problem}

However, this problem has no solution, as shown in the next proposition.
\begin{proposition}
\label{prop:nosol}
	Problem~\ref{problem:Prob1} has no solution.
\end{proposition}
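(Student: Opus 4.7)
The plan is to prove that the infimum of the objective of Problem~\ref{problem:Prob1} over the feasible set equals zero while being strictly positive at every feasible $G$, so the minimum is not attained.

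First, I would check strict positivity of the objective on the feasible set. The factor $B_{d,q}^{2(d+2)}(G)$ is nonzero because the moment condition~\hyl{P1-1} requires $B_{d,q}(G)\neq 0$. The factor $V_{d,1}^{2q}(G)$ vanishes only if $G^{(1)}\equiv 0$ almost everywhere on $\R_{\ge0}$ with respect to $x^{d-1}\,dx$, which forces $G$ to be constant there; but the only integrable constant is zero, contradicting the normalization $B_{d,0}(G)=b_d^{-1}>0$. Hence the objective is strictly positive on the feasible set.

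Next, I would construct a feasible sequence along which the objective tends to zero. Pick a radial profile $G_0$ of a higher-order kernel with $B_{d,0}(G_0)=b_d^{-1}$ and $B_{d,i}(G_0)=0$ for every even $i$ with $2\le i\le q$ (in particular $B_{d,q}(G_0)=0$), and with $V_{d,1}(G_0)<\infty$; such a profile is standard, e.g., a suitable linear combination of Gaussians of differing scales. Let $G_1$ be any feasible profile for Problem~\ref{problem:Prob1}, and set $G_t\coloneq tG_0+(1-t)G_1$ for $t\in[0,1)$. By linearity of $B_{d,\cdot}$ in $G$, the profile $G_t$ satisfies the moment condition~\eqref{eq:RKMomCon} with $B_{d,q}(G_t)=(1-t)B_{d,q}(G_1)\neq 0$, so $G_t$ is feasible. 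As $t\to 1^-$, $B_{d,q}(G_t)\to 0$ while $V_{d,1}(G_t)\to V_{d,1}(G_0)<\infty$, and therefore $B_{d,q}(G_t)^{2(d+2)}V_{d,1}(G_t)^{2q}\to 0$.

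Combining the two steps, the infimum equals zero but is never attained on the feasible set; hence Problem~\ref{problem:Prob1} admits no solution. The main technical obstacle is verifying the existence of a higher-order radial profile $G_0$ with finite $V_{d,1}$. This is classical in kernel density estimation---for instance, a polynomial-times-Gaussian construction gives a profile whose first derivative has Gaussian decay and therefore lies in $L^2(\R_{\ge0}; x^{d-1}\,dx)$---but the explicit verification of the moment conditions and integrability of $\{G_0^{(1)}\}^2$ against $x^{d-1}$ has to be carried out.
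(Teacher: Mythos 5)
Your proof is correct and follows essentially the same route as the paper: strict positivity of the objective on the feasible set for the same two reasons, plus an affine one-parameter family of feasible profiles that drives $B_{d,q}$ to zero while keeping $V_{d,1}$ bounded (the paper interpolates between two feasible profiles with distinct $q$-th moments, e.g.\ $G_{d,q}^B$ and $G_{d,q}^E$, rather than toward an explicitly constructed higher-order profile $G_0$, which spares it the construction you flag as your remaining obstacle). One small fix: you must take $G_1$ with $V_{d,1}(G_1)<\infty$ (e.g.\ $G_{d,q}^B$), not an arbitrary feasible profile, since feasibility in Problem~\ref{problem:Prob1} only imposes the moment condition, and if $V_{d,1}(G_1)=\infty$ then $V_{d,1}(G_t)=\infty$ for all $t<1$ and the objective along your family does not tend to zero.
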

\begin{proof}
One has $B_{d,q}^{2(d+2)}(G)\cdot V_{d,1}^{2q}(G)\ge0$ by definition. 
It should never be equal to zero under the moment condition~\eqref{eq:RKMomCon}. 
Indeed, if one had $B_{d,q}^{2(d+2)}(G)\cdot V_{d,1}^{2q}(G)=0$, 
either $B_{d,q}(G)$ or $V_{d,1}(G)$ should be equal to zero, 
whereas $B_{d,q}(G)$ should be nonzero due to the moment condition~\eqref{eq:RKMomCon}. 
$V_{d,q}(G)$ cannot be equal to zero either, since otherwise 
$G^{(1)}$ should be equal to zero identically, implying that $G$ is a constant, 
contradicting the condition $B_{d,0}(G)=b_d^{-1}<\infty$.

We next show that the infimum of $B_{d,q}^{2(d+2)}(G)\cdot V_{d,1}^{2q}(G)$ is zero. 
Take $G_1$ and $G_2$ satisfying~\eqref{eq:RKMomCon} 
with $B_{d,q}(G_1)\not=B_{d,q}(G_2)$ 
and $V_{d,1}(G_1),V_{d,1}(G_2)<\infty$
(for example, $G_1=G_{d,q}^B$ and $G_2=G_{d,q}^E$ defined later).
Consider the linear combination $G=wG_1+(1-w)G_2$ 
of $G_1$ and $G_2$ with $w\in\R$. 
One then has $B_{d,q}(G)=wB_{d,q}(G_1)+(1-w)B_{d,q}(G_2)$, 
so that $G$ satisfies~\eqref{eq:RKMomCon} 
unless $w=w_0\coloneq \frac{B_{d,q}(G_2)}{B_{d,q}(G_2)-B_{d,q}(G_1)}$, 
in which case one has $B_{d,q}(G)=0$ so that the order of $G$ is 
strictly higher than $q$. 
On the other hand, substituting $(s,t)=(wG_1^{(1)}(x),(1-w)G_2^{(1)}(x))$ 
into the inequality $(s+t)^2\le2(s^2+t^2)$, 
multiplying both sides with $x^{d-1}$, 
and taking the integral over $x\in\R_{\ge0}$, 
one has 
\begin{align}
	V_{d,1}(G)\le2\bigl(w^2V_{d,1}(G_1)+(1-w)^2V_{d,1}(G_2)\bigr)<\infty
\end{align}
for any $w$. 
One therefore has $B_{d,q}^{2(d+2)}(G)\cdot V_{d,1}^{2q}(G)\to0$ 
as $w\to w_0$, proving that the infimum of $B_{d,q}^{2(d+2)}(G)\cdot V_{d,1}^{2q}(G)$ 
is equal to zero. 
\end{proof}

This proposition states that the approach to kernel selection via 
simply minimizing the AMSE criterion among the entire class of $q$-th order RKs 
is not a theoretically viable option. 
One may be able to discourage this approach from a practical standpoint as well. 
As shown above, one can make the AMSE criterion arbitrarily close to zero 
by considering a sequence of kernels for which $B_{d,q}$ approaches zero. 
For such a sequence of kernels, however, 
the optimal bandwidth $h_{d,q,n}^\opt\propto\{V_{1,d}/(nB_{d,q})^2\}^{\frac{1}{d+2q+2}}$ 
diverges toward infinity. 
It then causes the effect of the next leading order term of the AB, which is $O(h_n^{q+2})$, 
to be non-negligible in the MSE when the sample size $n$ is finite, 
thereby invalidating the use of the AMSE criterion to estimate 
the MSE in a finite sample situation.

The above discussion suggests that, in order to avoid diverging behaviors of the optimal bandwidth 
and to obtain results meaningful in a finite sample situation, 
one should consider optimization of the AMSE criterion 
within a certain kernel class which at least excludes those kernels with 
vanishing leading-order moments. 
Since such an approach can give a kernel with the smallest AMSE criterion at least among the considered class,
it would be superior to other kernel design methods that are not based on 
any discussion of the goodness of resulting higher-order kernels in kernel classes of the same order, 
such as jackknife \citep{schucany1977improvement, wand1990gaussian},
as well as those that optimize alternative criteria other than the AMSE criterion,
such as the minimum-variance kernels~\citep{eddy1980optimum, muller1984smooth}.
The existing researches \citep{gasser1979kernel, gasser1985kernels, granovsky1989optimality, granovsky1991optimizing} 
can be interpreted as following this approach, and they
have focused on the relationship between the order of the kernel and the number of sign changes.
More concretely, on the basis of the observation that 
any $q$-th order univariate kernel should change its sign at least $(q-2)$ times on $\R$, 
they considered kernel optimization among the class of $q$-th order kernels 
with exactly $(q-2)$ sign changes (which we call the minimum-sign-change condition). 
They have also shown that the minimum-sign-change condition excludes kernels 
with small leading-order moments, which cause the difficulties discussed above 
in the minimization of the AMSE criterion. 
In addition, and importantly, they succeeded in obtaining a closed-form solution 
of the resulting kernel optimization problem under the minimum-sign-change condition.

In this paper, we also take the same approach as the one by
\citep{gasser1979kernel, gasser1985kernels, granovsky1989optimality, granovsky1991optimizing}:
under the moment condition~\eqref{eq:RKMomCon},
one can show that $G$ changes its sign at least $(\frac{q}{2}-1)$ times on $\R_{\ge0}$; 
see Lemma~\ref{lemma:MinCro}.
Then, we consider the following modified problem to which 
the minimum-sign-change condition~\hyl{P2-3} is added.  
\begin{problem}[Modified $d$-variate, type-1, $q$-th order]
\label{problem:Prob21}%
\begin{align}
	\tmin_G\quad
	&B_{d,q}^{2(d+2)}(G)\cdot V_{d,1}^{2q}(G),
\tag{\hypertarget{P2}{P2}}\\
	\mathrm{s.t.}\quad
	&G\;\text{satisfies the moment condition~\eqref{eq:RKMomCon}},
\tag{\hypertarget{P2-1}{P2-1}}\\
	&G\;\text{is differentiable and}\;G^{(1)}\;\text{is integrable},
\tag{\hypertarget{P2-2}{P2-2}}\\
	&G\;\text{changes its sign}\;(\tfrac{q}{2}-1)\;\text{times on}\;\R_{\ge0},
\tag{\hypertarget{P2-3}{P2-3}}\\
	&V_{d,1}(G)<\infty,
\tag{\hypertarget{P2-4}{P2-4}}\\
	&x^{d+q}G(x)\to0\;\text{as}\;x\to\infty.
\tag{\hypertarget{P2-5}{P2-5}}
\end{align}
\end{problem}
\noindent%
Here, the conditions~\hyl{P2-2}--\hyl{P2-5} are technically required in the proof. 
The conditions \hyl{P2-2} and \hyl{P2-4} for making the problem well-defined and 
the condition \hyl{P2-5} which is close to $|B_{d,q}(G)|<\infty$ would be almost non-restrictive.
The minimum-sign-change condition \hyl{P2-3} for $q=2$ implies that the kernel is non-negative,
under the normalization condition in \hyl{P2-1}.
Thus, the minimum-sign-change condition~\hyl{P2-3} for higher orders $q\ge4$ 
can be regarded as a sort of extension of the non-negativity condition.

In the same way as that by \citep{gasser1979kernel, gasser1985kernels},
we can show when $q=2,4$  that this problem is solvable and the solution is provided as follows.
\begin{theorem}
\label{theorem:thm21}%
For every $d\ge1$ and $q=2, 4$,
a solution of Problem~\ref{problem:Prob21}{\footnotemark} is
\begin{align}
\label{eq:Opt-RK1}%
	G_{d,q}^B(x)\coloneq\biggl(\sum_{i\in I_{q+2}}c^B_{d,q,i}x^i\biggr)\mathbbm{1}_{x\le1},
\end{align}
where the coefficients $c^B_{d,q,i}$ with $i\in I_{q+2}$ are given by
\begin{align}
\label{eq:Opt-RK1-Coe}
	c^B_{d,q,i}=
	\frac{(-1)^{\frac{i}{2}}\Gamma(\frac{d+q+4}{2})\Gamma(\frac{d+q+i}{2})}
	{\pi^{\frac{d}{2}}\Gamma(\frac{q}{2})\Gamma(\frac{2+i}{2})
	\Gamma(\frac{d+2+i}{2})\Gamma(\frac{q+4-i}{2})}.
\end{align}
The kernel $G_{d,q}^B$ is also represented as
\begin{align}
\label{eq:Opt-RK2}%
\begin{split}
	G_{d,q}^B(x)
	&=\frac{\Gamma(\frac{d+q}{2})}{\pi^{\frac{d}{2}}\Gamma(\frac{q}{2})}
	P_{\frac{q}{2}+1}^{(\frac{d}{2},-2)}(1-2x^2)\mathbbm{1}_{x\le1}\\
	&=(-1)^{\frac{q}{2}+1}
	\frac{\Gamma(\frac{d+q}{2}+2)}{\pi^{\frac{d}{2}}\Gamma(\frac{q}{2}+2)}
	(1-x^2)^2P_{\frac{q}{2}-1}^{(2,\frac{d}{2})}(2x^2-1)\mathbbm{1}_{x\le1},
\end{split}
\end{align}
where $\Gamma(\cdot)$ is the gamma function and where $P_n^{(\alpha,\beta)}(\cdot)$ 
is the Jacobi polynomial \citep{szeg1939orthogonal} defined by 
\begin{align}
\label{eq:JACOBI}%
	P_n^{(\alpha,\beta)}(x)
	\coloneq\frac{(-1)^n}{2^nn!}(1-x)^{-\alpha}(1+x)^{-\beta}
	\frac{d^n}{dx^n}\left[(1-x)^{n+\alpha}(1+x)^{n+\beta}\right].
\end{align}
\end{theorem}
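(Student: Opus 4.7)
The plan is to carry out a constrained variational analysis following the univariate program of Gasser--M\"uller--Mammitzsch, adapted to the radially symmetric multivariate setting. By the scale invariance of the kernel-dependent factor $B_{d,q}^{2(d+2)}(G)\cdot V_{d,1}^{2q}(G)$ under the two-parameter family $G(x)\mapsto \alpha G(x/\lambda)$ (with $\alpha$ chosen to preserve $B_{d,0}(G)=b_d^{-1}$), we may fix the support of any compactly-supported candidate to be $[0,1]$. Minimizing the product then reduces to minimizing $V_{d,1}(G)$ subject to $B_{d,0}(G)=b_d^{-1}$, $B_{d,2j}(G)=0$ for $j=1,\ldots,q/2-1$, and a fixed non-zero value of $B_{d,q}(G)$; this is a convex quadratic program on an affine subspace.

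I would then form the Lagrangian $V_{d,1}(G) - \sum_{i\in\{0\}\cup I_{q-2}\cup\{q\}} \mu_i B_{d,i}(G)$ and take its first variation. Integration by parts yields the Euler--Lagrange equation $(x^{d-1}G'(x))' = -\tfrac{1}{2}x^{d-1}\sum_i \mu_i x^i$; integrating twice with the regularity condition at $x=0$ shows that any admissible extremum is a polynomial in $x^2$ of degree $q+2$ on its support. Because any non-trivial polynomial defined on all of $\R_{\ge 0}$ would make $V_{d,1}$ infinite, the support must be compact, and \hyl{P2-2} (differentiability) together with \hyl{P2-4} force continuity of both $G$ and $G'$ at the boundary, giving $G(1)=G'(1)=0$; equivalently, the polynomial contains the factor $(1-x^2)^2$. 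Writing $G(x) = C(1-x^2)^2 Q(x^2)$ for a polynomial $Q$ of degree $q/2-1$, the substitution $u=x^2$ recasts the remaining moment conditions $B_{d,2j}(G)=0$ as orthogonality of $Q(u)$ to $\{1,u,\ldots,u^{q/2-2}\}$ with respect to the weight $u^{d/2}(1-u)^2$ on $[0,1]$. This uniquely identifies $Q(u)$ (up to scalar) as the shifted Jacobi polynomial $P_{q/2-1}^{(2,d/2)}(2u-1)$, with $C$ fixed by $B_{d,0}(G)=b_d^{-1}$. The explicit coefficients $c_{d,q,i}^B$ in \eqref{eq:Opt-RK1-Coe} and the two equivalent forms in \eqref{eq:Opt-RK2} then follow from standard Jacobi polynomial identities. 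The sign-change condition \hyl{P2-3} is verified immediately, since $P_{q/2-1}^{(2,d/2)}$ has exactly $q/2-1$ simple zeros in $(-1,1)$, matching the lower bound of Lemma~\ref{lemma:MinCro}.

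The main obstacle is promoting this Euler--Lagrange extremum to a genuine global minimizer of Problem~\ref{problem:Prob21}, given that the sign-change constraint \hyl{P2-3} is non-convex and that Proposition~\ref{prop:nosol} precludes minimizing over the moment-only class. I expect to argue that \hyl{P2-3}, combined with the polynomial expansion of any admissible kernel against the above Jacobi basis, permits a decomposition of $V_{d,1}(G)$ that bounds it from below by its value at the candidate plus a non-negative remainder vanishing only at the candidate. The restriction to $q=2,4$ will likely enter precisely here: for $q=2$ (non-negative kernel, by \hyl{P2-1} and \hyl{P2-3}) and for $q=4$ (a single sign change) the relevant positivity or Tchebycheff-system arguments are elementary, whereas $q\ge 6$ would demand substantially more delicate analysis to rule out competing extremals with the same sign-change pattern.
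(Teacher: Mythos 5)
Your identification of the candidate is consistent with what the paper does in Lemma~\ref{lemma:Confirm} (moment verification via Jacobi-polynomial integrals), and you correctly locate where the real difficulty sits. But the step you defer with ``I expect to argue that \hyl{P2-3} \ldots permits a decomposition of $V_{d,1}(G)$ that bounds it from below'' is not a finishing touch: it is essentially the entire proof. The paper establishes optimality by a direct comparison, not a variational one. Writing a competitor as $G_{d,q}=G_{d,q}^B+Q_{d,q}$ with $B_{d,i}(Q_{d,q})=0$ for $i\in I_q$, it integrates by parts against the \emph{untruncated} polynomial $P_{d,q}^B$ to obtain $V_{d,1}(G_{d,q})-V_{d,1}(G_{d,q}^B)\ge 2\int_1^\infty x^{d-1}R_{d,q}^B(x)Q_{d,q}(x)\,dx$ (inequality \eqref{eq:B.ineq}), and then proves this integral is non-negative using the sign-ordering of admissible kernels (Lemma~\ref{lemma:LemmaA.1.1}), the monotonicity and sign of $R_{d,q}^B(x)/x^j$ on $(1,\infty)$ (Lemma~\ref{lemma:LemmaA.2.4}), and the mean value theorem for integrals, with a case split on whether the sign change of $G_{d,4}$ occurs before or after $x=1$. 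That is precisely where the restriction to $q=2,4$ enters, and none of it appears in your proposal.

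There is also a substantive error in your reduction. The problem does \emph{not} reduce to a convex quadratic program on the affine subspace cut out by the moment conditions with $B_{d,q}$ fixed: $G_{d,q}^B$ is not even a stationary point of that program, because the cross term $\langle G_{d,q}^{B(1)},Q_{d,q}^{(1)}\rangle=\int_1^\infty x^{d-1}R_{d,q}^BQ_{d,q}\,dx$ does not vanish for all moment-feasible perturbations $Q_{d,q}$ (take $Q_{d,q}$ split between $[0,1]$ and $(1,\infty)$ with cancelling moments); it is only shown to be \emph{non-negative} for perturbations that keep the sign-change pattern admissible. So \hyl{P2-3} is load-bearing in the optimality argument itself, not merely a device for selecting among Euler--Lagrange extremals, and an unconstrained first-variation analysis cannot certify $G_{d,q}^B$ as the minimizer. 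Relatedly, your normalization ``fix $B_{d,q}$ to a common value'' silently requires that every admissible kernel has $B_{d,q}$ of the same sign as the candidate (rescaling multiplies $B_{d,q}$ by $s^q>0$); this is exactly the content of Lemma~\ref{lemma:LemmaA.1.1} and must be proved, which is another reason the argument is currently restricted to $q=2,4$.
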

\footnotetext{
\label{ft:ft3}
Note that a solution of Problem~\ref{problem:Prob21} has freedom on its scale, 
i.e., for any $s>0$, $G(x)=s^{-d}G_{d,q}^B(x/s)$ is also a solution of Problem~\ref{problem:Prob21}.}

Alternatively, we can consider the following problem 
with another minimum-sign-change condition~\hyl{P3-3}
that is defined for the first derivative of a kernel. 
\begin{problem}[Modified $d$-variate, type-1, $q$-th order]
\label{problem:Prob22}%
\begin{align}
	\tmin_G\quad
	&B_{d,q}^{2(d+2)}(G)\cdot V_{d,1}^{2q}(G),
\tag{\hypertarget{P3}{P3}}\\
	\mathrm{s.t.}\quad
	&G\;\text{satisfies the moment condition~\eqref{eq:RKMomCon}},
\tag{\hypertarget{P3-1}{P3-1}}\\
	&G\;\text{is differentiable and}\;G^{(1)}\;\text{is integrable},
\tag{\hypertarget{P3-2}{P3-2}}\\
	&G^{(1)}\;\text{changes its sign}\;(\tfrac{q}{2}-1)\;\text{times on}\;\R_{\ge0},
\tag{\hypertarget{P3-3}{P3-3}}\\
	&V_{d,1}(G)<\infty,
\tag{\hypertarget{P3-4}{P3-4}}\\
	&\text{there exists }\delta>0\text{ s.t.~}x^{d+q+\delta}G(x)\to0\;\text{as}\;x\to\infty.
\tag{\hypertarget{P3-5}{P3-5}}
\end{align}
\end{problem}
\noindent%
The minimum-sign-change condition \hyl{P3-3} for $q=2$ implies that the kernel is non-negative and non-increasing, 
under the normalization condition in \hyl{P3-5} and the end-point condition \hyl{P3-5}.
Therefore, the minimum-sign-change condition for the first derivative \hyl{P3-3} can be seen as 
a more restrictive version of the minimum-sign-change condition for the kernel itself \hyl{P2-3}.
Additionally note that the change of \hyl{P2-5} to \hyl{P3-5} stems from the difference 
in proof techniques used for Theorems~\ref{theorem:thm21} and \ref{theorem:thm22}.

We can show that this problem has the same solution even for general even orders $q=2,4,6,\ldots$,
on the basis of the proof techniques in \citep{granovsky1989optimality, granovsky1991optimizing}.
\begin{theorem}
\label{theorem:thm22}%
For every $d\ge1$ and even $q\ge2$, a solution of Problem~\ref{problem:Prob22} 
is \eqref{eq:Opt-RK1} or equivalently \eqref{eq:Opt-RK2}.
\end{theorem}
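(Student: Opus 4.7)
The plan is to adapt the technique of Granovsky and Müller to the $d$-variate radial-basis setting, via a derivative substitution that converts the problem into one on a single function with the right sign structure. First I would set $H \coloneq -G^{(1)}$. Using the decay condition \hyl{P3-5} (which is strengthened relative to \hyl{P2-5} precisely so that the boundary terms vanish after integration by parts), one obtains
\begin{align*}
B_{d,i}(G) = \frac{1}{d+i}\int_{\R_{\ge0}} x^{d+i}\,H(x)\,dx
\end{align*}
for every $i$ appearing in \eqref{eq:RKMomCon}. Thus the moment condition on $G$ translates into a finite list of weighted-moment conditions on $H$, and the kernel-dependent factor of the AMSE criterion becomes
\begin{align*}
\left(\int_{\R_{\ge0}} x^{d+q} H(x)\,dx\right)^{2(d+2)}\cdot\left(\int_{\R_{\ge0}} x^{d-1} H(x)^2\,dx\right)^{2q}
\end{align*}
up to a constant. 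The minimum-sign-change condition \hyl{P3-3} then amounts to saying that $H$ itself has exactly $\frac{q}{2}-1$ sign changes on $\R_{\ge0}$, which is the hypothesis to which Granovsky's univariate machinery directly applies.

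Next I would rewrite the integrals via the substitution $y = x^2$, exhibiting the problem as a weighted extremal problem on $(0,\infty)$ whose natural orthogonal basis is the Jacobi family $\{P_n^{(\alpha,\beta)}\}$ with parameters matching those appearing in \eqref{eq:Opt-RK2}. Writing any admissible $H$ as $H = H^* + R$, where $H^*$ is the candidate extremizer whose $\frac{q}{2}-1$ sign changes coincide with the zeros of the relevant Jacobi polynomial, the perturbation $R$ must be orthogonal to the finite-dimensional space of low-degree polynomials that encode the moment constraints. The orthogonality relations of the Jacobi polynomials combined with the Cauchy--Schwarz inequality then show that every admissible $R$ strictly increases the objective unless $R\equiv 0$, identifying $H^*$ as the unique minimizer up to scale. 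Back-integrating $G^{(1)}=-H^*$ and matching the normalization from \hyl{P3-1} recovers $G_{d,q}^B$ as given in~\eqref{eq:Opt-RK1} and~\eqref{eq:Opt-RK2}.

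The main obstacle will be the global-optimality step: a Lagrange-multiplier calculation only yields a critical point, and the constraint \hyl{P3-3} is non-convex, so convexity alone cannot close the argument. Handling this requires the Chebyshev/Markov-system argument of \citet{granovsky1989optimality}: the hypothesis that $H$ has the \emph{minimum possible} number $\frac{q}{2}-1$ of sign changes forces the sign pattern of $H$ to be dictated by a single polynomial of degree $\frac{q}{2}-1$, and the moment constraints pin that polynomial down, up to scale, to the Jacobi polynomial occurring in~\eqref{eq:Opt-RK2}. A secondary technical issue is the careful verification that \hyl{P3-5} is exactly strong enough to annihilate every boundary term produced by the integrations by parts relating the various moments of $G$ to those of $H$ (in particular for the top moment $i=q$), which is the reason why \hyl{P3-5} supersedes \hyl{P2-5} in this second formulation. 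Once this is established, the reasoning is essentially parallel to the univariate case, the dimension $d$ entering only through the parameters of the Jacobi polynomials and the weight, so the same closed-form kernel~\eqref{eq:Opt-RK1} is obtained for all even $q\ge 2$.
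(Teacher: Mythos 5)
Your reformulation is exactly the paper's first move: setting $H=-G^{(1)}$, using \hyl{P3-5} to kill the boundary terms in the integration by parts $B_{d,i}(G)=-\tfrac{1}{d+i}B_{d,i+1}(G^{(1)})$, and thereby recasting the problem as minimizing $B_{d,q+1}^{2(d+2)}(H)\cdot V_{d,0}^{2q}(H)$ over functions $H$ with prescribed weighted moments and $\tfrac{q}{2}-1$ sign changes; the Jacobi-polynomial expansion you invoke at the end is also how the paper pins down the extremizer uniquely. The gap is in the middle. The ``write $H=H^*+R$ and apply Cauchy--Schwarz'' step does not close as stated: the cross term $\int_0^\infty x^{d-1}H^*(x)R(x)\,dx$ does not vanish for a general admissible perturbation $R$, because $H^*$ is a polynomial only on its (compact) support and $R$ need not be supported there, so the vanishing moments of $R$ over all of $\R_{\ge0}$ do not annihilate the cross term. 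Relatedly, you never establish that a minimizer exists at all, which is genuinely at issue here: Proposition~\ref{prop:nosol} shows that without the sign-change constraint the infimum is $0$ and is not attained, so existence under \hyl{P3-3} has to be proved, not assumed.

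The paper fills this in with three pieces you would need to reconstruct. First, existence: a minimizing sequence is bounded in the Hilbert space $L^2(\R_{\ge0},x^{d-1})$, hence has a weak limit $f_*$ attaining the infimum, and one checks the moment and sign-change constraints pass to the limit. Second, form of the minimizer: a first-variation argument shows $f_*$ is orthogonal to every perturbation with vanishing moments supported inside $\supp(f_*)$, and an orthogonal-complement lemma then forces $f_*\in\mathop{\mathrm{span}}\{x^i\mathbbm{1}_{x\in D_*}:i\in I_{q+1}\}$ --- note the extremizer is the odd polynomial $p_{q+1}$ of degree $q+1$ restricted to its support, not a polynomial of degree $\tfrac{q}{2}-1$ as your sketch suggests. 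Third, support structure: a separate lemma, resting on the nonsingularity of weighted even-moment matrices over ordered partitions (the multivariate-weight analogue of the Chebyshev-system step you allude to), shows that $D_*$ must be exactly $[0,\tau]$ with $p_{q+1}(\tau)=0$, i.e.\ no discontinuity and no gaps in the support. Only after these three steps does the Jacobi expansion give uniqueness and identify the solution with $G_{d,q}^{B(1)}$. Your proposal names the right machinery but does not supply the existence step or a correct substitute for the support/sign-pattern argument, and the Cauchy--Schwarz shortcut it offers in their place is invalid.
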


\begin{figure}[t]
\centering
\begin{overpic}[height=2.75cm, bb=0 0 495 285]{./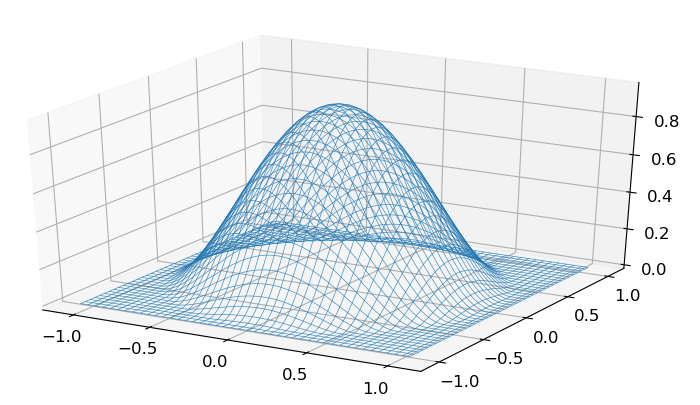}\put(3,48){\small $q=2$}\end{overpic}
\begin{overpic}[height=2.75cm, bb=0 0 495 285]{./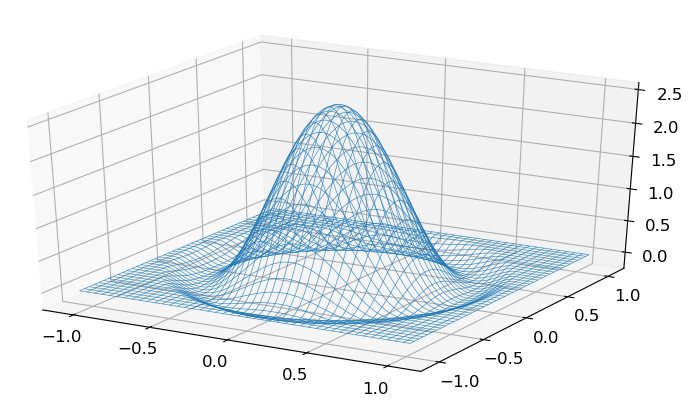}\put(3,48){\small $q=4$}\end{overpic}
\begin{overpic}[height=2.75cm, bb=0 0 495 285]{./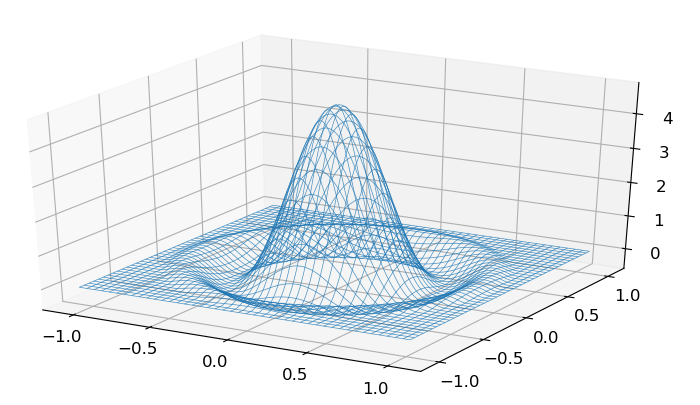}\put(3,48){\small $q=6$}\end{overpic}
\caption{The kernel $K_{d,q}^B$, $d=2$, $q=2,4,6$.}
\label{figure:Kernel-B}
\end{figure}

This type-1 optimal kernel is a truncation of a polynomial function with terms of 
degrees $0,2,\ldots,(q+2)$ and has a compact support (see Figure \ref{figure:Kernel-B}).
It should be emphasized that the optimal kernel is a truncated kernel 
even though we have not assumed truncation \emph{a priori},
and that the truncation of the optimal kernel is a consequence of the minimum-sign-change 
condition and the optimality with respect to the AMSE criterion. 
In particular, the 2nd order optimal kernel $K_{d,2}^B(\bx)\propto\{(1-\|\bx\|^2)_+\}^2$ (where 
$(\cdot)_+=\max\{\cdot,0\}$), often called a Biweight kernel, is optimal among non-negative RKs
(from Theorem~\ref{theorem:thm21}).
The kernel $G_{d,q}^B$ can be represented by the product of the 
$d$-variate Biweight RK $G_{d,2}^B$ and a polynomial function as
\begin{align}
\label{eq:Opt-RK3}%
	G_{d,q}^B(x)
	=(-1)^{\frac{q}{2}+1}
	\frac{2\Gamma(\frac{d+q}{2}+2)}{\Gamma(\frac{d}{2}+3)\Gamma(\frac{q}{2}+2)}
	P_{\frac{q}{2}-1}^{(2,\frac{d}{2})}(2x^2-1) \cdot G_{d,2}^B(x).
\end{align}
\citet{berlinet1993hierarchies} pointed out that in a variety of kernel-based estimation problems 
the optimal kernels of different orders often form a \emph{hierarchy}, 
in the sense that an optimal kernel can be represented as a product of a polynomial and 
the corresponding lowest-order kernel called the basic kernel:
See also Section 3.11 of~\citep{BerlinetThomasAgnan2004}.
According to their terminology, for example, the statement of 
Theorem~\ref{theorem:thm21} or \ref{theorem:thm22} can be concisely summarized as follows: 
Among the class of RKs, the Biweight hierarchy minimizes the AMSE of the kernel mode estimator 
for any $d$ under the minimum-sign-change condition for the kernel or its first derivative.

However, it also should be noted that the kernel optimization problem was formulated
without considering several regularity conditions other than the moment condition 
that ensure the asymptotic normality and are for deriving the AMSE.
The kernel $K_{d,q}^B$ is not twice differentiable at the edge of its support.
In this respect, $K_{d,q}^B$ does not satisfy all the sufficient conditions in Theorem~\ref{theorem:AN} 
for deriving the AMSE (see \hyl{A.7} and \hyl{A.14} in Assumption \ref{assumption:asmAN} in Appendix).
On the other hand, one can still add a small enough perturbation to $K_{d,q}^B$ to make 
it twice differentiable at the edge of its support as well, while keeping other conditions to hold,
as well as having the change of the value of the AMSE criterion arbitrarily small.
Namely, Problem~\ref{problem:Prob21} or \ref{problem:Prob22} and its solutions $K_{d,q}^B$ still tell us 
the lower bound of achievable AMSE under the minimum-sign-change condition, 
as well as the shape of kernels which will achieve AMSE close to the bound
(it also implies that the problem of minimizing the AMSE under 
all the conditions for Theorem~\ref{theorem:AN} has no solution).
From these considerations, we expect that, despite the lack of theoretical guarantee for the asymptotic normality of KME,
this trouble is not so destructive and $K_{d,q}^B$ will provide a 
practically good performance among the minimum-sign-change kernels;
See also simulation experiments in Section~\ref{section:Simulation}.

\subsection{Optimal Kernel in Product Kernels}
\label{section:PK}
In this section we consider optimization of the bandwidth-optimized 
AMSE with respect to the kernel among the class of PKs, 
where a kernel $K$ is a PK if there exists a function $G$ on $\R$ 
such that $K(\bx)=\prod_{j=1}^d G(x_j)$ for all $\bx=(x_1,\ldots,x_d)^\top$.
To simplify the discussion, we assume in this paper that $G$ as a kernel on $\R$ is symmetric and $q$-th order, 
making the resulting PK $K$ to be $q$-th order as well. 
We furthermore assume that $G$ satisfies the minimum-sign-change condition \hyl{P2-3} or \hyl{P3-3}.

Under these assumptions, simple calculations lead to
\begin{align}
	\calB_{d,\bi}(K)
	&=\begin{cases}
	1,&\bi=\bm{0}_d,\\
	2B_{1,q}(G),&\text{if one index in }\bi\text{ is }q\text{ and others are }0,\\
	0,&\text{for other }\bi\text{ such that }|\bi|\in\{1,\ldots,q\},
	\end{cases}\\
	\calV_d(K)
	&=2^d V_{1,1}(G)\cdot V^{d-1}_{1,0}(G)\rmI_d.
\end{align}
Therefore, the bandwidth-optimized AMSE~\eqref{eq:OptAMSE} becomes
\begin{align}
\label{eq:PKOptAMSE}%
\begin{split}
	\mathrm{E}[\|\bt_n-\bt\|^2\mid h_{d,q,n}^\opt]
	&\approx \frac{2q}{d+2q+2}
	\|\rmA\tilde{\bb}\|^{\frac{2(d+2)}{d+2q+2}}
	\biggl(\frac{d+2}{2qn} 2^d f(\bt) \trace(\rmA^2)\biggr)^{\frac{2q}{d+2q+2}}\\
	&\times 
	\biggl[\Bigl(B_{1,q}^6(G)\cdot V_{1,1}^{2q}(G)\Bigr)\cdot 
	\Bigl(B_{1,q}^2(G)\cdot V_{1,0}^{2q}(G)\Bigr)^{d-1}\biggr]^{\frac{1}{d+2q+2}},
\end{split}
\end{align}
where $\tilde{\bb}$ is given by
\begin{align}
\label{eq:btilde}%
	\tilde{\bb}
	\coloneq\sum_{j=1}^d\frac{1}{q!}\cdot \nabla\frac{\partial^qf(\bt)}{\partial x_j^q}\cdot2
	=\frac{2}{q!} \nabla \nabla_q f(\bt).
\end{align}
Equation~\eqref{eq:PKOptAMSE} shows that the bandwidth-optimized AMSE for the PK 
is also decomposed into a product of the kernel-dependent factor and the PDF-dependent factor. 
One can therefore take the kernel-dependent factor 
$\{(B_{1,q}^6\cdot V_{1,1}^{2q})\cdot(B_{1,q}^2\cdot V_{1,0}^{2q})^{d-1}\}^{\frac{1}{d+2q+2}}$ 
as the AMSE criterion for the PK.

We have not succeeded in deriving the optimal $G$ minimizing this criterion.
In the following, we consider instead a lower bound of the AMSE criterion, 
which will be compared with the optimal value of the criterion for the RK 
in the next section.
The bound we discuss is derived as follows:
\begin{align}
\begin{split}
	&\min_G\biggl\{\Bigl(B_{1,q}^6(G)\cdot V_{1,1}^{2q}(G)\Bigr)
	\cdot\Bigl(B_{1,q}^2(G)\cdot V_{1,0}^{2q}(G)\Bigr)^{d-1}\bigg\}\\
	&=\min_{G,G':G=G'}\bigg\{\Bigl(B_{1,q}^6(G)\cdot V_{1,1}^{2q}(G)\Bigr)
	\cdot\Bigl(B_{1,q}^2(G')\cdot V_{1,0}^{2q}(G')\Bigr)^{d-1}\bigg\}\\
	&\ge\min_G\Bigl(B_{1,q}^6(G)\cdot V_{1,1}^{2q}(G)\Bigr)
	\cdot\min_{G'}\Bigl(B_{1,q}^2(G')\cdot V_{1,0}^{2q}(G')\Bigr)^{d-1},
\end{split}
\end{align}
where the last inequality holds strictly if $d>1$ 
since the solutions of the two minimization problems involved will become different.
The first minimization problem is a univariate case of Theorem~\ref{theorem:thm21} 
or Theorem~\ref{theorem:thm22} (or \citep[Corollary~1]{granovsky1991optimizing}): 
the Biweight hierarchy $\{G_{1,q}^B\}$ provides the optimal $G$ among 
the $q$-th order kernels satisfying the minimum-sign-change condition \hyl{P3-3}.
The second minimization problem, which is an instance of the type-0 problems, 
has been solved by \citet{granovsky1989optimality}:
for even $q\ge2$, the kernel
\begin{align}
	G_{1,q}^E(x)
	=\frac{\Gamma(\frac{q+1}{2})}{\pi^{\frac{1}{2}}\Gamma(\frac{q}{2})}
	P_{\frac{q}{2}}^{(\frac{1}{2},-1)}(1-2x^2)\mathbbm{1}_{x\le1}
	=(-1)^{\frac{q}{2}+1}\frac{\Gamma(\frac{q+3}{2})}{\pi^{\frac{1}{2}}\Gamma(\frac{q}{2}+1)}
	(1-x^2)P_{\frac{q}{2}-1}^{(1,\frac{1}{2})}(2x^2-1)\mathbbm{1}_{x\le1}
\end{align}
minimizes $B_{1,q}^2\cdot V_{1,0}^{2q}$ among $q$-th order kernels that change their sign $(\frac{q}{2}-1)$ times on $\R_{\ge0}$.
The type-0 optimal kernel $K_{1,q}^E(x)=G_{1,q}^E(|x|)$ is a truncation of a polynomial function with terms of degrees $0,2,\ldots,q$, 
and forms the Epanechnikov hierarchy, 
with the Epanechnikov kernel $K_{1,2}^E\propto(1-x^2)_+$ as the basic kernel. 
We have therefore obtained a lower bound 
\begin{align}
\label{eq:PKOptAMSEBound}%
  \biggl\{\Bigl(B_{1,q}^6(K_{1,q}^B)\cdot V_{1,1}^{2q}(K_{1,q}^B)\Bigr)\cdot
  \Bigl(B_{1,q}^2(K_{1,q}^E)\cdot V_{1,0}^{2q}(K_{1,q}^E)\Bigr)^{d-1}\biggr\}^{\frac{1}{d+2q+2}}
\end{align}
of the AMSE criterion for the PK model.

\subsection{Comparison between Radial-Basis Kernels and Product Kernels}
\label{section:Comparison}
We have so far studied the RKs and the PKs separately. 
One may then ask which of these classes of kernels will perform better. 
In this section we discuss this question. 

Our approach to answering this question 
is basically grounded on a comparison between the AMSE of the optimal RK 
and that of the optimal PK. 
There are, however, difficulties in this approach: 
One is that the optimal PK, as well as its AMSE, is not known, 
which prevents us from comparing the RKs and the PKs directly in terms of the AMSE. 
Another is that the PDF-dependent factor of the AMSE takes different forms 
between the RKs and the PKs. 
Most evidently, in $\bar{\bb}$~\eqref{eq:bbar} and $\tilde{\bb}$~\eqref{eq:btilde}, 
the quantities which contribute to the AB of $\bm{\theta}_n$ for the RKs and the PKs, respectively, 
cross derivatives of the PDF are present in the former 
whereas they are absent in the latter. 
Such differences in the PDF-dependent factors 
hinder direct comparison between the RKs and the PKs 
in terms of the AMSE in the general setting, 
in a PDF-independent manner.

The only exception is the case $q=2$, 
where the PDF-dependent factor takes the same form 
for the RKs and the PKs, 
and consequently one can compare the AMSEs for RKs and PKs, regardless of the PDF.
In the case of $q=2$,
simple calculations show 
$\tilde{\bb}=\nabla \nabla_2f(\bt)$ and
$\bar{\bb}=\frac{v_d}{2} \nabla \nabla_2f(\bt)=\frac{v_d}{2}\tilde{\bb}$.
Therefore, 
the ratio of the lower bound of the AMSE for PK that derives from~\eqref{eq:PKOptAMSEBound}
to the AMSE for the optimal RK $K_{d,2}^B$ 
becomes 
\begin{align}
\label{eq:RatioBound}
	\Biggl(\frac{2^{6d+4}\bigl(B_{1,2}^6(K_{1,2}^B)\cdot V_{1,1}^4(K_{1,2}^B)\bigr)
	\cdot\bigl(B_{1,2}^2(K_{1,2}^E)\cdot V_{1,0}^4(K_{1,2}^E)\bigr)^{d-1}}
	{v_d^{2d+8}\bigl(B_{d,2}^{2(d+2)}(G_{d,2}^B)\cdot V_{d,1}^4(G_{d,2}^B)\bigr)}\Biggr)^{\frac{1}{d+6}}.
\end{align}
\begin{figure}
	\centering
	\includegraphics[height=50mm, bb=0 0 360 252]{./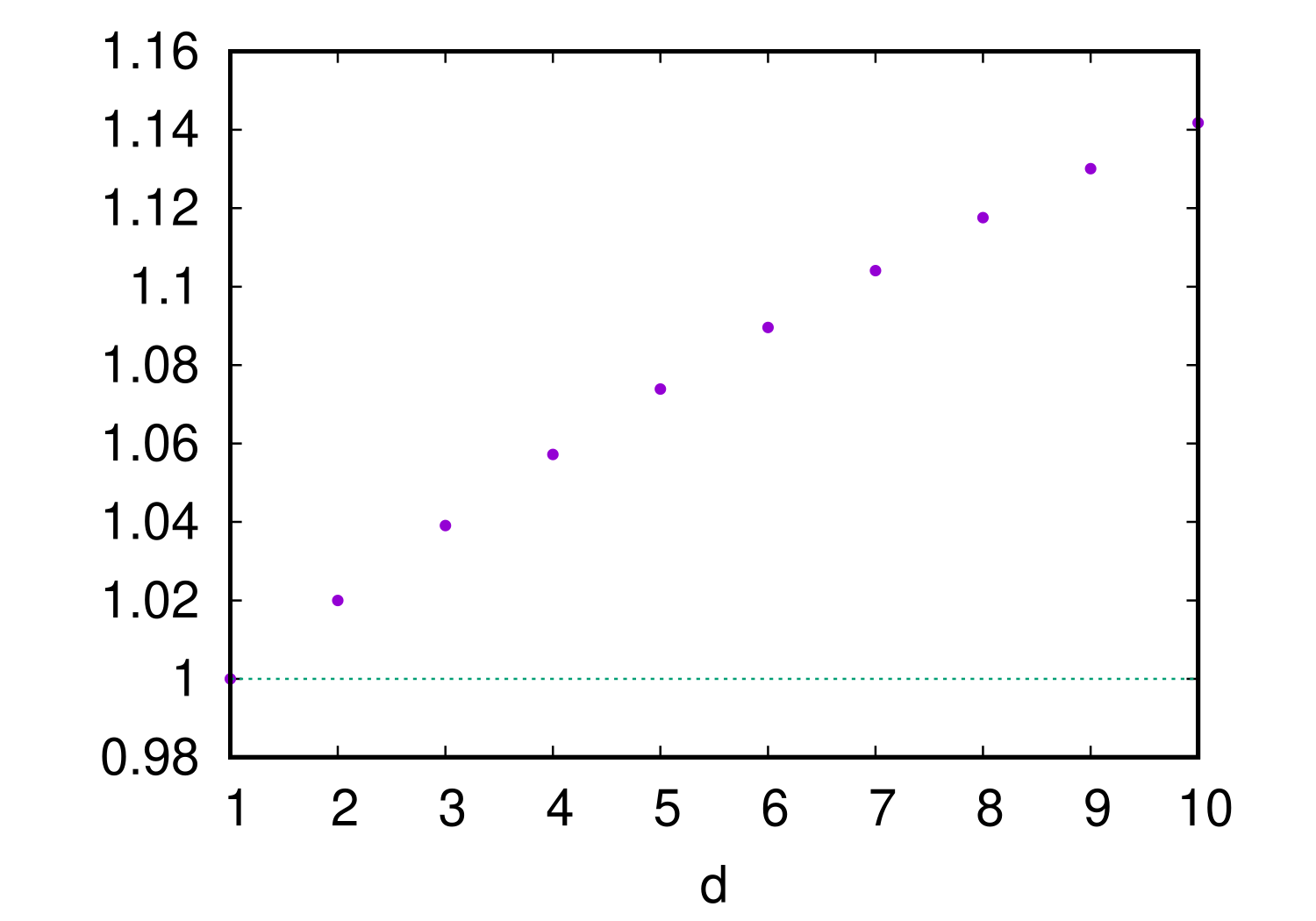}
	\caption{Ratio~\protect\eqref{eq:RatioBound} of AMSE lower bound for PK to AMSE for RK versus dimension $d$.}
	\label{fig:ratiobound}
\end{figure}
Figure~\ref{fig:ratiobound} shows how the ratio depends on the dimension $d$. 
One observes that it increases monotonically with $d$. 
%
Since the ratio is larger than 1 for $d\ge2$ 
(note that there is no distinction between the RKs and the PKs when $d=1$), 
one can conclude that the optimal second-order non-negative RK 
achieves the AMSE that is smaller than those with any second-order non-negative PK. 

It should be emphasized that the above superiority result of the optimal RK over the PKs is 
limited to the second-order ($q=2$) case. 
As mentioned above, for $q\ge4$ one cannot compare the RKs and the PKs 
in terms of the AMSE in a PDF-independent manner. 
Consequently, depending on the PDF, 
there might be some PKs with better AMSE values than the optimal RK $K_{d,q}^B$ for $q\ge4$, 
some examples of which will be shown in the simulation experiments in Section~\ref{section:Simulation}.

\subsection{Optimal Kernel in Elliptic Kernels}
\label{section:Elliptic}
Kernel methods in the multivariate setting may use a kernel function together with a linear transformation, 
in order to calibrate the difference in scales and shearing across the coordinates. 
In this section we assume for simplicity that an RK is to be combined with a linear transformation. 
For a $d\times d$ matrix $\rmP$ representing an area-preserving linear transformation in $\R^d$ (hence $|\det\rmP|=1$) 
and an RK $K$ (the base kernel), 
the transformed kernel $K_\rmP(\bx)=K(\rmP\bx)$ is often called an elliptic kernel.
In this section, we discuss the optimal base kernel $K$ and optimal transformation 
for kernel mode estimation based on an elliptic kernel.

Using the elliptic kernel $K_\rmP$ is equivalent to 
using the kernel $K$ with the transformed sample $\{\rmP\bX_i\}_{i=1}^n$ drawn from $f_\rmP$,
which is defined by $f_\rmP(\cdot)=f(\rmP^{-1}\cdot)$, 
for estimation of the mode $\rmP\bt$ of $f_\rmP$.
Multiplying the resulting mode estimate by $\rmP^{-1}$ from left yields the KME with $K_\rmP$ of $f$, 
an estimate of the mode $\bt$.
This view, together with the calculations
\begin{align}
	\{Hf_\rmP(\rmP\bt)\}^{-1}=\rmP\rmA\rmP^\top,\quad
	\nabla\nabla^{q/2}_2f_\rmP(\rmP\bt)
	=\nabla(\nabla^\top\rmQ\nabla)^{q/2}f(\bt),
\end{align}
where $\rmQ\coloneq\rmP^{-1}\rmP^{-\top}$,
reveals that the AB and AVC of the KME using $K_\rmP$ reduce to 
\begin{align}
\label{eq:AB-H}%
	&\mathrm{E}[\bt_n-\bt]\approx
	-\frac{\pi^{d/2}h_n^q}{2^{q-1}\Gamma(\frac{d+q}{2})\Gamma(\frac{q}{2}+1)}B_{d,i}(G)
	\rmA \nabla(\nabla^\top\rmQ\nabla)^{q/2}f(\bt),\\
\label{eq:AVC-H}%
	&\mathrm{Cov}[\bt_n]\approx
	\frac{v_df(\bt)}{nh_n^{d+2}}
	V_{d,1}(G)\rmA\rmQ^{-1}\rmA,
\end{align} 
and that their kernel-dependent factors remain $B_{d,q}(G)$ and $V_{d,1}(G)$ of $K(\cdot)=G(\|\cdot\|)$.
Therefore, as long as $\rmP$ (or equivalently $\rmQ$) is fixed such that the AB of the KME is non-zero, 
an argument similar to the ones in Section~\ref{section:KME} leads 
to the optimal bandwidth and the same AMSE criterion with respect to~$K$.
Thus, the kernel $K_{d,q}^B$ is optimal as a base kernel even under such a transformation $\rmP$.

One may furthermore consider optimizing $\rmQ$ (equivalently $\rmP$) via
\begin{align}
\label{eq:Opt-Prob-H}
	\tmin_\rmQ\;
	\|\rmA \nabla(\nabla^\top\rmQ\nabla)^{q/2}f(\bt)\|^{2(d+2)}
	\trace(\rmA\rmQ^{-1}\rmA)^{2q},\quad
	\mathrm{s.t.}\ \det(\rmQ)=1,
\end{align}
which comes from the linear-transformation-dependent factor of the bandwidth-optimized AMSE.
We have so far not succeeded in obtaining a closed-form solution
of the optimization problem~\eqref{eq:Opt-Prob-H},
although a numerical optimization might be possible
with plug-in estimates of $f(\bm{\theta})$ and $\rmA$.

It would be worth mentioning that
in some cases one can make the AB equal to zero by a certain choice of $\rmQ$,
whereas in some other cases one cannot make the AB equal to zero
by any choice of $\rmQ$.
Furthermore, whether one can make the AB equal to zero
via the choice of $\rmQ$ depends on the ($q+1)$-st order term
of the Taylor expansion of $f(\bx)$ around the mode $\bx=\bt$, 
and the dependence seems to be quite complicated. 
Consequently, optimization of the linear transformation in terms of the AMSE is difficult to tackle 
in a manner similar to that adopted in the previous sections, 
because the assumption that the AB does not vanish does not necessarily hold. 
Although we have not succeeded in providing a complete specification
as to when one can make the AB equal to zero via the choice of $\rmQ$, 
we describe in the following 
some cases where the AB can be made equal to zero by a certain choice of $\rmQ$ 
and some other cases where the AB cannot be equal to zero by any choice of $\rmQ$.

The following two propositions provide cases where the AB can be made zero by an appropriate choice of $\rmQ$. 
\begin{proposition}
  Assume $d\ge q+1$.
  Assume the $(q+1)$-st order term 
  of the Taylor expansion of $f(\bx)$ around
  the mode $\bx=\bt$ be of the form
  \begin{align}
    \prod_{i=1}^{q+1}\bm{a}_i^\top(\bx-\bt),
  \end{align}
  with $\{\bm{a}_i\}_{i=1}^{q+1}$ linearly independent.
  Then there is a choice of $\rmQ$ with which the AB~\eqref{eq:AB-H}
  is made equal to zero.
\end{proposition}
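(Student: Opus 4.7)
The plan is to produce an explicit $\rmQ$ that annihilates the AB~\eqref{eq:AB-H}. Since $\rmA=\{Hf(\bt)\}^{-1}$ is nonsingular and $B_{d,q}(G)\ne 0$ for a $q$-th order kernel, it suffices to force $\nabla(\nabla^\top\rmQ\nabla)^{q/2}f(\bt)=\bm{0}_d$. The operator $\nabla(\nabla^\top\rmQ\nabla)^{q/2}$ has total order $q+1$, so acting on a homogeneous Taylor term of $f$ about $\bt$ of degree $k$ it produces a homogeneous polynomial of degree $k-q-1$: zero whenever $k<q+1$, and of positive degree (hence vanishing at $\bt$) whenever $k>q+1$. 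Only the $(q+1)$-st Taylor term survives the evaluation at $\bt$, so I may replace $f$ by $p(\bx)\coloneq\prod_{i=1}^{q+1}\bm{a}_i^\top(\bx-\bt)$.

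The key algebraic step is a Leibniz expansion of $\nabla^\top\rmQ\nabla=\sum_{j,k}Q_{jk}\partial_j\partial_k$ acting on $p=\prod_i\ell_i$, where $\ell_i(\bx)\coloneq\bm{a}_i^\top(\bx-\bt)$. A direct computation yields
\begin{align*}
  (\nabla^\top\rmQ\nabla)p(\bx)=\sum_{a\ne b}(\bm{a}_a^\top\rmQ\bm{a}_b)\prod_{c\in\{1,\ldots,q+1\}\setminus\{a,b\}}\ell_c(\bx).
\end{align*}
Hence if $\rmQ$ is chosen so that the Gram-type pairings $\bm{a}_a^\top\rmQ\bm{a}_b$ vanish for every $a\ne b$, then already $(\nabla^\top\rmQ\nabla)p\equiv 0$, and a fortiori $\nabla(\nabla^\top\rmQ\nabla)^{q/2}p|_{\bt}=\bm{0}_d$.

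What remains is to construct a symmetric positive-definite $\rmQ$ with $\det\rmQ=1$ that renders $\{\bm{a}_i\}_{i=1}^{q+1}$ mutually $\rmQ$-orthogonal. This is where the hypothesis $d\ge q+1$ is used: linear independence of the $\bm{a}_i$'s together with $d\ge q+1$ permits extending them to a basis $\bm{a}_1,\ldots,\bm{a}_d$ of $\R^d$. Collecting this basis as the columns of an invertible matrix $\rmM$ and setting $\rmQ\coloneq|\det\rmM|^{2/d}\,\rmM^{-\top}\rmM^{-1}$ produces a symmetric positive-definite $\rmQ$ with $\det\rmQ=1$, satisfying $\bm{a}_i^\top\rmQ\bm{a}_j=|\det\rmM|^{2/d}$ when $i=j$ and $0$ otherwise, because $\rmM^{-1}\bm{a}_i=\bm{e}_i$. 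A Cholesky-type factorisation then recovers an area-preserving $\rmP$ with $\rmP^{-1}\rmP^{-\top}=\rmQ$, completing the construction.

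The argument is essentially algebraic and I do not foresee a serious obstacle. The only mildly delicate point is the reduction to the $(q+1)$-st Taylor term, which rests on the degree-counting argument above together with the standing smoothness of $f$ at $\bt$. The heart of the matter is the observation that $(\nabla^\top\rmQ\nabla)^{q/2}$ couples the factors of $p$ only through the bilinear pairings $\bm{a}_a^\top\rmQ\bm{a}_b$, so that $\rmQ$-orthogonalising the $\bm{a}_i$'s---possible exactly when $d\ge q+1$---annihilates every such pairing at once, with the $\tfrac{d(d+1)}{2}-1$ free parameters of $\rmQ$ comfortably accommodating the $\binom{q+1}{2}$ orthogonality conditions.
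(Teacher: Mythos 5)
Your proof is correct and follows essentially the same route as the paper: both arguments reduce to making the off-diagonal pairings $\bm{a}_a^\top\rmQ\bm{a}_b$ ($a\ne b$) vanish, and both construct $\rmQ$ from the dual basis of $\{\bm{a}_i\}$ extended to all of $\R^d$ (your $\rmM^{-\top}\rmM^{-1}$ is exactly the paper's $\sum_i\bb_i\bb_i^\top$, up to the determinant normalization). Your observation that a single application of $\nabla^\top\rmQ\nabla$ already annihilates the product is a mild streamlining of the paper's full permutation expansion, but not a different idea.
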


We show that there exists a positive definite $\rmQ$
which satisfies
\begin{align}
  \label{eq:tmppd0}
  \nabla(\nabla^\top \rmQ\nabla)^{q/2}
  \prod_{i=1}^{q+1}\ba_i^\top(\bx-\bt)=\bm{0}_d.
\end{align}
The left-hand side is calculated as
\begin{align}
  \label{eq:tmppd}
  \nabla(\nabla^\top \rmQ\nabla)^{q/2}
  \prod_{i=1}^{q+1}\ba_i^\top(\bx-\bt)
  =\sum_{\sigma}(\ba_{\sigma(1)}^\top \rmQ\ba_{\sigma(2)})
  \cdots(\ba_{\sigma(q-1)}^\top \rmQ\ba_{\sigma(q)})\ba_{\sigma(q+1)},
\end{align}
where the summation on the right-hand side is to be taken
with respect to all the permutations of $\{1,\ldots,q+1\}$.
Due to the assumption of the linear independence of
$\{\ba_i\}_{i=1}^{q+1}$,
there exists the duals $\{\bb_i\}_{i=1}^{q+1}$ 
of $\{\ba_i\}_{i=1}^{q+1}$ satisfying
$\ba_i^\top\bb_j=\mathbbm{1}_{i=j}$.
Take $\{\bb_i\}_{i=q+2,\ldots,d}$ as a basis
of the orthogonal complement of $\{\ba_i\}_{i=1}^{q+1}$.
Then $\{\bb_i\}_{i=1,\ldots,d}$ forms a basis of $\R^d$.
Let
\begin{align}
  \rmQ=\sum_{i=1}^d\bb_i\bb_i^\top\succ \rmO_d,
\end{align}
with the zero $d\times d$ matrix $\rmO_d$,
where the notation $\rmA\succ\rmB$ means that
$\rmA-\rmB$ is positive definite.
One then has $\ba_i^\top \rmQ\ba_j=\mathbbm{1}_{i=j}$
for $i,j\in\{1,\ldots,q+1\}$.
It implies that all the coefficients of $\ba_i$
on the right-hand side of~\eqref{eq:tmppd} vanish,
proving~\eqref{eq:tmppd0} to hold with the above $\rmQ$.
\begin{proposition}
  \label{prop:Prop3}
  Assume $d\ge q$.
  Assume the $(q+1)$-st order term 
  of the Taylor expansion of $f(\bx)$ around
  the mode $\bx=\bt$ be of the form
  \begin{align}
    \prod_{i=1}^{q+1}\ba_i^\top(\bx-\bt),
  \end{align}
  where $\{\ba_i\}_{i=1}^{q+1}$ spans a $q$-dimensional 
  subspace of $\R^d$. 
  Assume further that $\sum_{i=1}^{q+1}s_i\ba_i=\mathbf{0}_d$ holds with coefficients $\{s_i\}_{i=1}^{q+1}$ 
  satisfying $s_1s_2\cdots s_{q+1}\not=0$. 
  Then there is a choice of $\rmQ$ with which the AB~\eqref{eq:AB-H}
  is made equal to zero.
\end{proposition}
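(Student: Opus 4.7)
The plan is to follow the structure of the previous proof but track how the linear dependence $\sum_is_i\ba_i=\bm{0}_d$ modifies the combinatorics. Setting $M\coloneq(\ba_i^\top\rmQ\ba_j)_{i,j=1}^{q+1}$ and grouping the permutations in \eqref{eq:tmppd} by the value $\sigma(q+1)=j$, one obtains
\begin{align}
\nabla(\nabla^\top\rmQ\nabla)^{q/2}\prod_{i=1}^{q+1}\ba_i^\top(\bx-\bt)
=2^{q/2}\Bigl(\tfrac{q}{2}\Bigr)!\sum_{j=1}^{q+1}\mathrm{haf}(M^{(j)})\,\ba_j,
\end{align}
where $M^{(j)}$ denotes the principal submatrix of $M$ obtained by deleting row and column $j$ and $\mathrm{haf}$ is the hafnian. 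Because $s_1\cdots s_{q+1}\neq0$ forces every proper $q$-subset of $\{\ba_i\}_{i=1}^{q+1}$ to be linearly independent, the kernel of the linear map $(c_j)_j\mapsto\sum_j c_j\ba_j$ is exactly $\mathrm{span}(s)$, so the AB \eqref{eq:AB-H} vanishes if and only if one can choose $\rmQ\succ\rmO_d$ with $\det\rmQ=1$ making $(\mathrm{haf}(M^{(1)}),\ldots,\mathrm{haf}(M^{(q+1)}))$ proportional to $(s_1,\ldots,s_{q+1})$.

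For $q\ge4$ I would adapt the dual-basis construction of the previous proposition to the $q$ linearly independent vectors $\ba_2,\ldots,\ba_{q+1}$: take their duals $\bb_2,\ldots,\bb_{q+1}\in\mathrm{span}\{\ba_i\}$ satisfying $\ba_i^\top\bb_k=\mathbbm{1}_{i=k}$ for $i,k\in\{2,\ldots,q+1\}$, extend by any basis of $\mathrm{span}\{\ba_i\}^\perp$ when $d>q$, and let $\rmQ$ be the sum of all the resulting outer products, rescaled so that $\det\rmQ=1$. Substituting $\ba_1=-\sum_{k\ge2}(s_k/s_1)\ba_k$, a short computation shows that $M_{ij}=\mathbbm{1}_{i=j}$ whenever $i,j\ge2$, so every $M^{(j)}$ contains an $I_{q-1}$ principal block of size at least $3$. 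Any perfect matching entering the hafnian expansion of $M^{(j)}$ must then pair two distinct indices inside this identity block, contributing $0$; hence $\mathrm{haf}(M^{(j)})=0$ for all $j$, which is trivially proportional to $s$ and yields AB $=\bm{0}_d$.

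The case $q=2$ is the main obstacle, because the identity block shrinks to $1\times1$ and the combinatorial vanishing above fails, so the construction must be produced by hand. In coordinates with $\ba_1=\bm{e}_1$, $\ba_2=\bm{e}_2$, and $\ba_3=-(s_1/s_3)\bm{e}_1-(s_2/s_3)\bm{e}_2$, the required proportionality collapses to the three-term equality $M_{23}/s_1=M_{13}/s_2=M_{12}/s_3$ on the $2\times2$ block $R$ of $\rmQ$ supported on $\mathrm{span}\{\ba_i\}$. Solving this linear system yields, for example, $R_{11}=1$, $R_{22}=s_1^2/s_2^2$, $R_{12}=-s_1/(2s_2)$, whose Schur determinant $R_{11}R_{22}-R_{12}^2=3s_1^2/(4s_2^2)>0$ verifies positive definiteness. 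Extending $R$ by any positive definite block on $\mathrm{span}\{\ba_i\}^\perp$ (if $d>2$) and rescaling gives a valid $\rmQ$, completing the construction.
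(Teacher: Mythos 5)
Your proposal is correct, but it takes a genuinely different route from the paper's. The paper exhibits a single closed-form $\rmQ$ valid uniformly for all even $q\ge2$: choosing $\rmQ$ so that $\ba_i^\top\rmQ\ba_j=1/s_i^2$ for $i=j$ and $-1/(qs_is_j)$ for $i\neq j$, every permutation term in~\eqref{eq:tmppd} becomes the common constant $(-1/q)^{q/2}(\prod_is_i)^{-1}$ times $s_{\sigma(q+1)}\ba_{\sigma(q+1)}$, so the whole sum is proportional to $\sum_js_j\ba_j=\bm{0}_d$. You instead first extract the exact obstruction — since every $q$-subset of $\{\ba_i\}$ is linearly independent, the AB vanishes if and only if the vector of hafnians $(\mathrm{haf}(M^{(j)}))_j$ lies in the span of $(s_1,\ldots,s_{q+1})$ — and then satisfy it in the strongest possible way for $q\ge4$ (all hafnians identically zero, via the dual-basis $\rmQ$ built from $\ba_2,\ldots,\ba_{q+1}$, which places an identity block of size $q-1\ge3$ in every $M^{(j)}$ and kills every perfect matching), falling back to an explicit $2\times2$ computation for $q=2$. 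I checked your $q=2$ numbers: with $R_{11}=1$, $R_{22}=s_1^2/s_2^2$, $R_{12}=-s_1/(2s_2)$ one gets $(M_{23},M_{13},M_{12})=-\tfrac{s_1}{2s_2s_3}(s_1,s_2,s_3)$, and $R\succ\rmO_2$, so the construction goes through. What your approach buys is the if-and-only-if characterization of when a given $\rmQ$ works, which the paper does not state; what it costs is uniformity — the paper's one formula avoids the case split at $q=2$. (The rescaling to $\det\rmQ=1$ is harmless in both arguments since the AB expression is homogeneous in $\rmQ$.)
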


One can find a subset of $\{\ba_i\}_{i=1}^{q+1}$ 
of size $q$, which consists of $q$ linearly independent vectors. 
Without loss of generality we assume that $\{\ba_i\}_{i=1}^q$ are linearly independent. 
Choosing
\begin{align}
  \rmQ=\sum_{i=1}^{q}\frac{1}{s_i^2}\bb_i\bb_i^\top
  -\sum_{i,j=1\atop i<j}^{q}\frac{1}{qs_is_j}
  (\bb_i\bb_j^\top+\bb_j\bb_i^\top)
  +\sum_{i=q+2}^d\bb_i\bb_i^\top\succ \mathrm{O}_d
\end{align}
with $\{\bb_i\}_{i=1}^{q}$, 
the duals of $\{\ba_i\}_{i=1}^{q}$ 
in the subspace spanned by $\{\ba_i\}_{i=1}^{q}$, 
and with $\{\bb_i\}_{i=q+2}^d$, a basis of the orthogonal complement
of $\{\ba_i\}_{i=1}^{q}$,
one has 
\begin{align}
	\ba_i^\top\rmQ\ba_j=\left\{
		\begin{array}{ll}
			\frac{1}{s_i^2},&i=j\in\{1,\ldots,q+1\},\\
			-\frac{1}{qs_is_j},&i\not=j,i,j\in\{1,\ldots,q+1\},\\
			0,&\mbox{otherwise}.
		\end{array}\right.
\end{align}
One can thus show that the following holds:
\begin{align}
  \nabla(\nabla^\top \rmQ\nabla)^{q/2}
  \left(\prod_{i=1}^{q+1}\ba_i^\top(\bx-\bt)\right)
  &=\left(-\frac{1}{q}\right)^{q/2}\frac{1}{\prod_{i=1}^{q+1}s_i}
  \sum_{\sigma}s_{\sigma(q+1)}\ba_{\sigma(q+1)}=\bm{0}_d.
\end{align}

The following propositions provide cases where the AB cannot be equal
to zero by any choice of $\rmQ$.
\begin{proposition}
	\label{prop:Prop4}
  Assume $d\ge q$.
  Assume the $(q+1)$-st order term 
  of the Taylor expansion of $f(\bx)$ around
  the mode $\bx=\bt$ be of the form
  \begin{align}
    \ba^\top(\bx-\bt)
    \prod_{i=1}^{q/2}\frac{1}{2}(\bx-\bt)^\top
    \rmR_i(\bx-\bt),
  \end{align}
  where $\rmR_i$, $i=1,\ldots,q/2$ are positive definite. 
  Then the AB~\eqref{eq:AB-H} cannot be equal to zero
  by any choice of $\rmQ$. 
\end{proposition}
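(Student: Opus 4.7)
The plan is to reduce the claim to the nonvanishing of a certain constant vector for every $\rmQ\succ\mathrm{O}_d$, express that vector as a Gaussian expectation via a heat-kernel representation, apply Stein's lemma to rewrite it in matrix--vector form, and conclude via a positive-semidefiniteness argument.

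First, since $\rmA=\{Hf(\bt)\}^{-1}$ is nonsingular, the AB~\eqref{eq:AB-H} vanishes if and only if $\nabla(\nabla^\top\rmQ\nabla)^{q/2}f(\bt)=\bm{0}_d$. The operator has total order $q+1$, so applying it to the Taylor expansion of $f$ about $\bt$ and evaluating at $\bt$ picks out only the assumed $(q+1)$-st order term. Writing $p(\bu)\coloneq\ba^\top\bu\prod_{i=1}^{q/2}q_i(\bu)$ with $q_i(\bu)\coloneq\tfrac12\bu^\top\rmR_i\bu$, $Q(\bu)\coloneq\prod_i q_i(\bu)$, and $L\coloneq\nabla^\top\rmQ\nabla$, the task becomes to show that the constant vector $\bc\coloneq\nabla L^{q/2}p$ (note $L^{q/2}p$ is linear in $\bu$) is nonzero for every $\rmQ\succ\mathrm{O}_d$.

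Next, I would use the heat-kernel identity $e^{L/2}\phi(\bu)=\mathrm{E}[\phi(\bu+\bZ)]$ with $\bZ\sim\calN(\bm{0}_d,\rmQ)$, valid on polynomials. Since $p$ has degree $q+1$, the series $e^{L/2}p=\sum_{m\ge 0}L^m p/(2^m m!)$ truncates at $m=q/2$, and the degree-one-in-$\bu$ part of $\mathrm{E}[p(\bu+\bZ)]$ equals $L^{q/2}p(\bu)/[2^{q/2}(q/2)!]$. Taking the gradient and setting $\bu=\bm{0}_d$ gives $\bc=2^{q/2}(q/2)!\,\mathrm{E}[\nabla p(\bZ)]$. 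The product rule yields $\nabla p=Q\ba+(\ba^\top\bu)\nabla Q$, and Stein's lemma applied component-wise to the second term gives $\mathrm{E}[(\ba^\top\bZ)\nabla Q(\bZ)]=M\rmQ\ba$ with $M\coloneq\mathrm{E}[HQ(\bZ)]$, hence
\begin{align*}
\mathrm{E}[\nabla p(\bZ)]=(\mathrm{E}[Q(\bZ)]\,\rmI_d+M\rmQ)\ba.
\end{align*}

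To conclude, I would prove $M\succeq\mathrm{O}_d$: granted this, $\mathrm{E}[Q(\bZ)]\rmI_d+M\rmQ$ shares its eigenvalues with the symmetric matrix $\mathrm{E}[Q(\bZ)]\rmI_d+\rmQ^{1/2}M\rmQ^{1/2}\succeq\mathrm{E}[Q(\bZ)]\rmI_d\succ\mathrm{O}_d$ (using $\mathrm{E}[Q(\bZ)]>0$ from $Q>0$ almost everywhere, and $\rmQ^{1/2}M\rmQ^{1/2}\succeq\mathrm{O}_d$), and is therefore nonsingular. Combined with $\ba\ne\bm{0}_d$ (implicit in the hypothesis, else the assumed $(q+1)$-st order term would vanish identically), this yields $\bc\ne\bm{0}_d$. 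The main obstacle is the positive-semidefiniteness of $M$. Since $M=Hf_Q(\bm{0}_d)$ for the even, nonnegative polynomial $f_Q(\bu)\coloneq\mathrm{E}[Q(\bu+\bZ)]$ whose leading-in-$\bu$ term is $Q(\bu)$, it suffices to show $\bm{0}_d$ is a global minimizer of $f_Q$. I would attack this either by expanding $Q(\bu+\bZ)=\prod_i(q_i(\bu)+\bZ^\top\rmR_i\bu+q_i(\bZ))$, collecting the degree-two-in-$\bu$ contributions, and verifying via Wick-contraction bookkeeping that the resulting quadratic form has a positive-semidefinite matrix, or by an Anderson-type inequality exploiting the product structure of $Q$ and the log-concavity of the Gaussian density.
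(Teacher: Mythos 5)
Your reduction in the first four steps is correct and, modulo normalization, reproduces the paper's representation of $\nabla(\nabla^\top\rmQ\nabla)^{q/2}f(\bt)$ as $\rmM(\{\rmR_i\},\rmQ)\ba$: one indeed gets $\bc=2^{q/2}(q/2)!\,\bigl(\mathrm{E}[Q(\bZ)]\rmI_d+M\rmQ\bigr)\ba$, which for $q=2$ is exactly the paper's $(2\rmR\rmQ+(\trace\rmR\rmQ)\rmI)\ba$. The genuine gap is in the final step: the key lemma you propose to prove, $M=\mathrm{E}[HQ(\bZ)]\succeq\rmO_d$, is false, so neither the Wick-bookkeeping route nor the Anderson-type route can close the argument (Anderson's inequality in any case requires a symmetric \emph{quasi-concave} integrand, whereas $Q$ grows away from the origin). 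Concretely, for $q=4$ one computes
\begin{align*}
M=\tfrac12\trace(\rmR_2\rmQ)\,\rmR_1+\tfrac12\trace(\rmR_1\rmQ)\,\rmR_2+\rmR_1\rmQ\rmR_2+\rmR_2\rmQ\rmR_1 ,
\end{align*}
and taking $\rmR_1=\mathrm{diag}(1,\epsilon)$, $\rmR_2=\mathrm{diag}(\epsilon,1)$ and $\rmQ$ with unit diagonal and off-diagonal $\rho$ in the leading $2\times2$ block (padding both $\rmR_i$ with $\rmI$ and $\rmQ$ with $\delta\rmI$, $\delta$ small, to meet $d\ge q$) yields a leading block whose diagonal entries tend to $\frac{(1+\epsilon)^2}{2}+2\epsilon$ and whose off-diagonal entries are $\rho(1+\epsilon^2)$; at $\epsilon=0.1$, $\rho=0.9$ the smaller eigenvalue is $0.805-0.909<0$. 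Equivalently, $\bm{0}_d$ is not even a local minimizer of $f_Q(\bu)=\mathrm{E}[Q(\bu+\bZ)]$, so the sufficient condition you reduce to cannot be established.

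The framework is nevertheless salvageable with one more application of Stein's lemma in the opposite direction: $\mathrm{E}[\nabla p(\bZ)]=\rmQ^{-1}\mathrm{E}[\bZ\,p(\bZ)]=\rmQ^{-1}\mathrm{E}[\bZ\bZ^\top Q(\bZ)]\,\ba$, and $\mathrm{E}[\bZ\bZ^\top Q(\bZ)]\succ\rmO_d$ because $\bm{v}^\top\mathrm{E}[\bZ\bZ^\top Q(\bZ)]\bm{v}=\mathrm{E}[(\bm{v}^\top\bZ)^2Q(\bZ)]>0$ for every $\bm{v}\neq\bm{0}_d$ (the integrand is nonnegative and positive almost everywhere). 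Hence $\mathrm{E}[Q(\bZ)]\rmI_d+M\rmQ=\rmQ^{-1}\mathrm{E}[\bZ\bZ^\top Q(\bZ)]$ is nonsingular and $\bc\neq\bm{0}_d$; you never need $M\succeq\rmO_d$. For comparison, the paper instead expands the differential operator applied to the product directly and asserts that the resulting matrix, a combination of words $\rmR_{j_1}\rmQ\cdots\rmR_{j_n}\rmQ$ with positive trace coefficients, is positive definite.
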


In the case of $q=2$ one can prove the following proposition.
\begin{proposition}
If, around its mode
$\bx=\bt$, $f(\bx)$ admits the expression
\begin{align}
  f(\bx)=f(\bt)
  +\frac{1}{2}(\bx-\bt)^\top\rmA^{-1}(\bx-\bt)
  +g(\bx-\bt)\ba^\top(\bx-\bt)
  +\mathrm{hot}.,
\end{align}
where $\ba$ is nonzero, 
and where
\begin{align}
  g(\bu)=\frac{1}{2}\bu^\top \rmR\bu,\quad \rmR\succ \rmO_d
\end{align}
is a positive definite quadratic form,
then there is no $\rmP$ which makes the AB to vanish.
\end{proposition}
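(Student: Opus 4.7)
The plan is to use equation~\eqref{eq:AB-H} at $q=2$, which up to the nonzero scalar factor and the nonsingular matrix $\rmA$ identifies the AB with the vector $\nabla(\nabla^\top\rmQ\nabla)f(\bt)$, so it suffices to show that this vector is nonzero for every $\rmQ\succ\rmO_d$ (in particular for every $\rmQ=\rmP^{-1}\rmP^{-\top}$ arising from an area-preserving $\rmP$).

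First, I would observe that the only piece of the Taylor expansion of $f$ around $\bt$ that can contribute is the third-order term $g(\bu)\ba^\top\bu$, where $\bu\coloneq\bx-\bt$. Indeed, the third-order differential operator $\nabla(\nabla^\top\rmQ\nabla)$ annihilates every polynomial in $\bu$ of degree at most $2$, $\nabla f(\bt)=\bm{0}_d$ at the mode, and any Taylor term of degree $\ge 4$ produces, after the operator, a polynomial of positive degree in $\bu$ that vanishes at $\bu=\bm{0}_d$.

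Second, I would compute the contribution of the third-order term directly. Setting $T(\bu)\coloneq\tfrac{1}{2}(\bu^\top\rmR\bu)(\ba^\top\bu)$ and applying the product rule for the Hessian, one gets
\begin{align*}
	HT(\bu)=(\ba^\top\bu)\rmR+\rmR\bu\ba^\top+\ba\bu^\top\rmR.
\end{align*}
Taking the trace against $\rmQ$ and using symmetry of $\rmQ$ and $\rmR$ yields $\nabla^\top\rmQ\nabla\,T(\bu)=(\ba^\top\bu)\trace(\rmR\rmQ)+2\ba^\top\rmQ\rmR\bu$, and one further gradient gives the constant vector
\begin{align*}
	\nabla(\nabla^\top\rmQ\nabla)f(\bt)=\trace(\rmR\rmQ)\,\ba+2\rmR\rmQ\,\ba.
\end{align*}

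Third, and this is the only substantive step, I would rule out this vector being $\bm{0}_d$ for any $\rmQ\succ\rmO_d$. If it were zero, then $\ba$ would be an eigenvector of $\rmR\rmQ$ with eigenvalue $-\tfrac{1}{2}\trace(\rmR\rmQ)$, which is strictly negative because $\rmR,\rmQ\succ\rmO_d$ imply $\trace(\rmR\rmQ)=\trace(\rmR^{1/2}\rmQ\rmR^{1/2})>0$. However, $\rmR\rmQ$ is similar to the symmetric positive definite matrix $\rmR^{1/2}\rmQ\rmR^{1/2}$ and hence has only strictly positive eigenvalues, a contradiction. The main obstacle is the clean bookkeeping of the Leibniz-rule computation in the second step; once the closed-form expression $\trace(\rmR\rmQ)\ba+2\rmR\rmQ\ba$ is in hand, the finishing eigenvalue argument via positive-definiteness is essentially immediate.
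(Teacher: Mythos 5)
Your proposal is correct and follows essentially the same route as the paper: both reduce the claim to showing that $\nabla(\nabla^\top\rmQ\nabla)\bigl(g(\bx-\bt)\ba^\top(\bx-\bt)\bigr)=(2\rmR\rmQ+(\trace\rmR\rmQ)\rmI)\ba$ is nonzero, and both conclude via the fact that $\rmR\rmQ$ has only positive eigenvalues while $\trace\rmR\rmQ>0$. Your eigenvector phrasing of the last step is just a restatement of the paper's invertibility argument, and your explicit Leibniz-rule computation and the remark that lower- and higher-order Taylor terms do not contribute are fine, slightly more detailed versions of what the paper leaves implicit.
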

We show that in this case no choice of $\rmQ$ which is positive definite 
will make the quantity
$\nabla\nabla^\top \rmQ\nabla(g(\bx-\bt)\ba^\top(\bx-\bt))$ vanishing.
Indeed, one can write
\begin{align}
  \label{eq:tmp}
  \nabla\nabla^\top \rmQ\nabla
  \left(g(\bx-\bt)\ba^\top(\bx-\bt)\right)
  =(2\rmR\rmQ+(\trace \rmR\rmQ)\rmI)\ba.
\end{align}
One observes that $\rmR\rmQ$ has positive eigenvalues only, 
since $\rmR$ and $\rmQ$ are both positive definite. 
We thus have $\trace \rmR\rmQ>0$, which in turn implies
that the matrix $2\rmR\rmQ+(\trace \rmR\rmQ)\rmI$ is invertible. 
Hence the quantity in~\eqref{eq:tmp} is nonvanishing
with an arbitrary choice of $\rmQ$.

Proposition~\ref{prop:Prop4} can be proven similarly. 
The term $\nabla(\nabla^\top\rmQ\nabla)^{q/2}f(\bt)$ in this case 
is represented as $\rmM(\{\rmR_i\},\rmQ)\ba$, 
where the $d\times d$ matrix $\rmM(\{\rmR_i\},\rmQ)$ 
is a sum of terms of the form $\rmR_{j_1}\rmQ\cdots\rmR_{j_n}\rmQ$ 
with coefficients which are products of $\trace\rmR_{i_1}\rmQ\cdots\rmR_{i_k}\rmQ>0$, 
and is hence positive definite. 
The term $\nabla(\nabla^\top\rmQ\nabla)^{q/2}f(\bt)$ is therefore nonzero 
irrespective of the choice of $\rmQ$.

We close this section by showing a complete specification as to when
the AB can be made equal to zero via the choice of $\rmQ$
in the most basic case $(d,q)=(2,2)$.
\begin{proposition}
  \label{prop:Prop6}
  Assume $(d,q)=(2,2)$.
  The third-order term of the Taylor expansion of $f(\bm{x})$
  around the mode $\bm{x}=\bm{\theta}$ is either:
  \begin{itemize}
  \item 0 identically, in which case the AB is equal to zero
    with an arbitrary $\rmQ$,
  \item of the form $\prod_{i=1}^3\bm{a}_i^\top(\bm{x}-\bm{\theta})$
    with $\bm{a}_1,\bm{a}_2,\bm{a}_3$,
    any two of which are linearly independent,
    in which case there is a choice of $\rmQ$ with which the AB
    is equal to zero,
  \item other than any of the above, in which case
    the AB cannot be made equal to zero by any choice of $\rmQ$.
  \end{itemize}
\end{proposition}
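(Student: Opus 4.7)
The plan is to reduce the problem to the third-order term of the Taylor expansion of $f$ at $\bt$. Since $\rmA=\{Hf(\bt)\}^{-1}$ is invertible and the constant and quadratic terms of $f(\bt+\bu)$ are annihilated by the third-order differential operator $\nabla(\nabla^\top\rmQ\nabla)$, the AB~\eqref{eq:AB-H} vanishes if and only if $\nabla(\nabla^\top\rmQ\nabla)T(\bm{0}_2)=\bm{0}_2$, where $T(\bu)$ denotes the homogeneous cubic part of the expansion. Thus the statement becomes a classification of homogeneous cubic forms on $\R^2$ according to whether some positive-definite $\rmQ$ sends this vector to zero. Over $\R$ a cubic in two variables either vanishes identically, or factors as a product of three real linear forms, or factors as a real linear form times an \emph{irreducible} (necessarily definite) quadratic form. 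I would subdivide the three-linear-factor case according to whether all three factors are pairwise non-parallel or a repeated factor occurs, giving four cases in total.

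For $T\equiv 0$ the conclusion is immediate. When $T=\prod_{i=1}^3\ba_i^\top\bu$ with any two of the $\ba_i$ linearly independent, the three vectors lie in $\R^2$ and therefore admit a relation $\sum_{i=1}^3 s_i\ba_i=\bm{0}_2$; pairwise independence forces $s_1s_2s_3\ne 0$, and Proposition~\ref{prop:Prop3} with $(d,q)=(2,2)$ supplies a positive-definite $\rmQ$ annihilating the AB. This takes care of the first two bullets of the trichotomy.

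The remaining cases form the third bullet and constitute the bulk of the work. For the factorization $T=(\ba^\top\bu)^2(\bb^\top\bu)$ with $\ba,\bb$ linearly independent I would compute the Hessian $HT(\bu)=2(\bb^\top\bu)\ba\ba^\top+2(\ba^\top\bu)(\ba\bb^\top+\bb\ba^\top)$, take $\trace(\rmQ\,HT)$, differentiate once more, and evaluate at $\bu=\bm{0}_2$, obtaining $2(\ba^\top\rmQ\ba)\bb+4(\ba^\top\rmQ\bb)\ba$. Since $\ba,\bb$ are linearly independent and $\ba^\top\rmQ\ba>0$ for any positive-definite $\rmQ$, this vector cannot vanish. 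The degenerate subcase $T=(\ba^\top\bu)^3$ is handled by the same computation and yields $6(\ba^\top\rmQ\ba)\ba\ne\bm{0}_2$. Finally, when $T=\ba^\top\bu\cdot\tfrac{1}{2}\bu^\top\rmR\bu$ with $\rmR$ definite, the calculation already carried out immediately after Proposition~\ref{prop:Prop4} gives $\nabla(\nabla^\top\rmQ\nabla)T(\bm{0}_2)=(2\rmR\rmQ+(\trace\rmR\rmQ)\rmI_2)\ba$; for $\rmR$ positive definite this is Proposition~\ref{prop:Prop4} verbatim, while for $\rmR$ negative definite the same argument applied to $-\rmR$ shows that $\rmR\rmQ$ has strictly negative eigenvalues and $\trace\rmR\rmQ<0$, so the matrix is again invertible.

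The main obstacle is the repeated-factor subcase $T=(\ba^\top\bu)^2(\bb^\top\bu)$: it sits on the boundary between Propositions~\ref{prop:Prop3} and~\ref{prop:Prop4} and is covered by neither. Intuitively it is a degeneration of the three-distinct-factor case in which the $\rmQ$ required to kill the AB would have to satisfy $\ba^\top\rmQ\ba=0$, losing positive definiteness. Once this direct verification is in place, the classification of real cubic factorizations on $\R^2$ together with the invocations of Propositions~\ref{prop:Prop3} and~\ref{prop:Prop4} assembles the trichotomy asserted by the proposition.
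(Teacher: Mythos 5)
Your proposal is correct and follows essentially the same route as the paper's proof in Appendix~\ref{appendix:ProofProp6}: the same classification of real binary cubic forms into the identically-zero, three-pairwise-independent-factors, repeated-factor, and linear-times-irreducible-quadratic cases, with Proposition~\ref{prop:Prop3} and Proposition~\ref{prop:Prop4} invoked for the second and last of these, and your direct computations for $(\bm{a}^\top\bm{u})^3$ and $(\bm{a}^\top\bm{u})^2(\bm{b}^\top\bm{u})$ reproducing the paper's two auxiliary lemmas at $q=2$. The only cosmetic differences are that you merge the two repeated-factor subcases before splitting them again and that you treat a negative definite quadratic factor explicitly, which the paper handles implicitly by absorbing the sign into the linear factor.
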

A proof is given in Appendix~\ref{appendix:ProofProp6}.

\section{Additional Discussion}
\label{section:Discussions}
\subsection{Degree of Goodness of Optimal Kernel and Heuristic Findings}
\label{section:Heuristic}
\begin{table}[t]
\centering
\caption{The AMSE criterion $(B_{1,q}^6\cdot V_{1,1}^{2q})^{\frac{1}{2q+3}}$ and the AMSE ratio in brackets, for the univariate KME, $q=2, 4, 6$.}
\label{table:AMSEratio}
\scalebox{0.7}{\begin{sc}\begin{tabular}{c|l|cc|c}
\toprule
$q$ & \hspace{65pt}Kernel $K$ & $B_{1,q}$ & $V_{1,1}$ & $(B_{1,q}^6\cdot V_{1,1}^{2q})^{\frac{1}{2q+3}}$\\
\midrule\midrule
\mr{10}{2}
& Biweight $K_{1,2}^B(x)=\frac{15}{16}\{(1-x^2)_+\}^2$ & $\frac{1}{14}$ & $\frac{15}{14}$ & \textbf{0.1083}~[\textbf{1.0000}]\\
& Triweight $\frac{35}{32}\{(1-x^2)_+\}^3$ & $\frac{1}{18}$ & $\frac{35}{22}$ & 0.1095~[1.0105]\\
& Tricube $\frac{70}{81}\{(1-|x|^3)_+\}^3$ & $\frac{35}{486}$ & $\frac{210}{187}$ & 0.1121~[1.0345]\\
& Cosine $\frac{\pi}{4}\cos(\frac{\pi x}{2})\mathbbm{1}_{|x|\le1}$ & $\frac{1}{2}-\frac{4}{\pi^2}$ & $\frac{\pi^4}{128}$ & 0.1135~[1.0475]\\
& Epanechnikov $K_{1,2}^E(x)=\frac{3}{4}(1-x^2)_+$ & 0.1 & 0.75 & 0.1179~[1.0883]\\
& Triangle $(1-|x|)_+$ & $\frac{1}{12}$ & 1 & 0.1188~[1.0971]\\
& Gaussian $K_{1,2}^G(x)=\frac{1}{(2\pi)^{1/2}}e^{-x^2/2}$ & 0.5 & $\frac{1}{8\pi^{1/2}}$ & 0.1213~[1.1198]\\
& Logistic $\frac{1}{e^x+2+e^{-x}}$ & $\frac{\pi^2}{6}$ & $\frac{1}{60}$ & 0.1476~[1.3629]\\
& Sech $\frac{1}{2}\mathrm{sech}(\frac{\pi x}{2})$ & 0.5 & $\frac{\pi}{24}$ & 0.1727~[1.5949]\\
& Laplace $K_{1,2}^L(x)=\frac{1}{2}e^{-|x|}$ & 1 & 0.125 & 0.3048~[2.8133]\\
\midrule
\mr{4}{4}
& $K_{1,4}^B(x)=\frac{105}{64}(1-3x^2)\{(1-x^2)_+\}^2$ & $-\frac{1}{66}$ & $\frac{525}{88}$ & \textbf{0.3729}~[\textbf{1.0000}]\\
& $K_{1,4}^E(x)=\frac{15}{32}(3-7x^2)(1-x^2)_+$ & $-\frac{1}{42}$ & $\frac{75}{16}$ & 0.4005~[1.0737]\\
& $K_{1,4}^G(x)=\frac{1}{2(2\pi)^{1/2}}(3-x^2)e^{-x^2/2}$ & -1.5 & $\frac{55}{128\pi^{1/2}}$ & 0.4451~[1.1935]\\
& $K_{1,4}^L(x)=\frac{1}{20}(12-x^2)e^{-|x|}$ & -21.6 & $\frac{313}{1600}$ & 1.6314~[4.3743]\\
\midrule
\mr{4}{6}
& $K_{1,6}^B(x)=\frac{315}{2048}(15-110x^2+143x^4)\{(1-x^2)_+\}^2$ & $\frac{1}{286}$ & $\frac{2205}{128}$ & \textbf{1.0149}~[\textbf{1.0000}]\\
& $K_{1,6}^E(x)=\frac{105}{256}(5-30x^2+33x^4)(1-x^2)_+$ & $\frac{5}{858}$ & $\frac{3675}{256}$ & 1.0760~[1.0600]\\
& $K_{1,6}^G(x)=\frac{1}{8(2\pi)^{1/2}}(15-10x^2+x^4)e^{-x^2/2}$ & 7.5 & $\frac{7105}{8192\pi^{1/2}}$ & 1.2639~[1.2453]\\
& $K_{1,6}^L(x)=\frac{1}{2384}(1560-220x^2+3x^4)e^{-|x|}$ & $\frac{196200}{149}$ & $\frac{5572345}{22733824}$ & 5.7451~[5.6609]\\
\bottomrule
\end{tabular}\end{sc}}
\end{table}

We are interested in not only specifying the optimal kernel, 
but also in how the kernel selection will affect the AMSE.
To quantify the degree of suboptimality of kernels, 
we define, for a $d$-variate $q$-th order kernel $K$,
the \emph{AMSE ratio} as the ratio of the 
bandwidth-optimized AMSE~\eqref{eq:OptAMSE} for $K$
to that for the optimal RK $K_{d,q}^B$.
The AMSE ratio depends only on $K$ if the bandwidth-optimized AMSEs for $K$ and $K_{d,q}^B$ 
share the same PDF-dependent factor, which is the case 
for RKs (including univariate ones) and PKs  with $q=2$ (see Section~\ref{section:Comparison}).
Table~\ref{table:AMSEratio} shows the comparison of the AMSE criteria
and the AMSE ratios in the univariate case.
An empirical observation is that truncated kernels, such as a Triweight and 
an Epanechnikov in addition to the optimal Biweight kernel, are better
than non-truncated kernels including a Sech, a Laplace, and $K_{1,4}^L$.
This tendency still holds even for multivariate RKs and PKs.
Figure~\ref{fig:AMSEratio_multivariateRK} shows the AMSE ratios for several multivariate RKs.
\begin{figure}
\centering
 \begin{tabular}{cc}
\begin{overpic}[height=50mm, bb=0 0 360 252]{./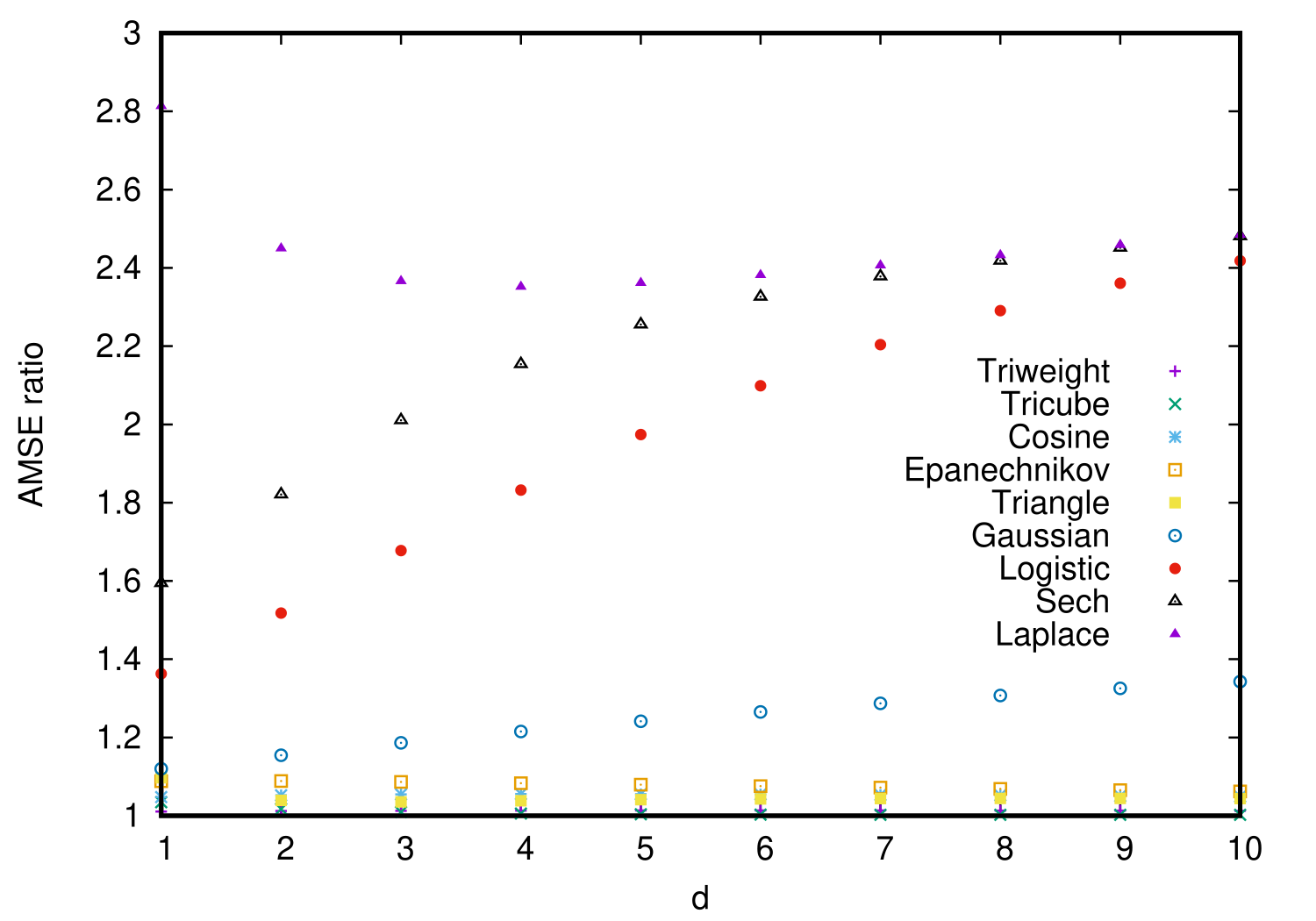}\put(18,60){\small(a)}\end{overpic}&
\begin{overpic}[height=50mm, bb=0 0 360 252]{./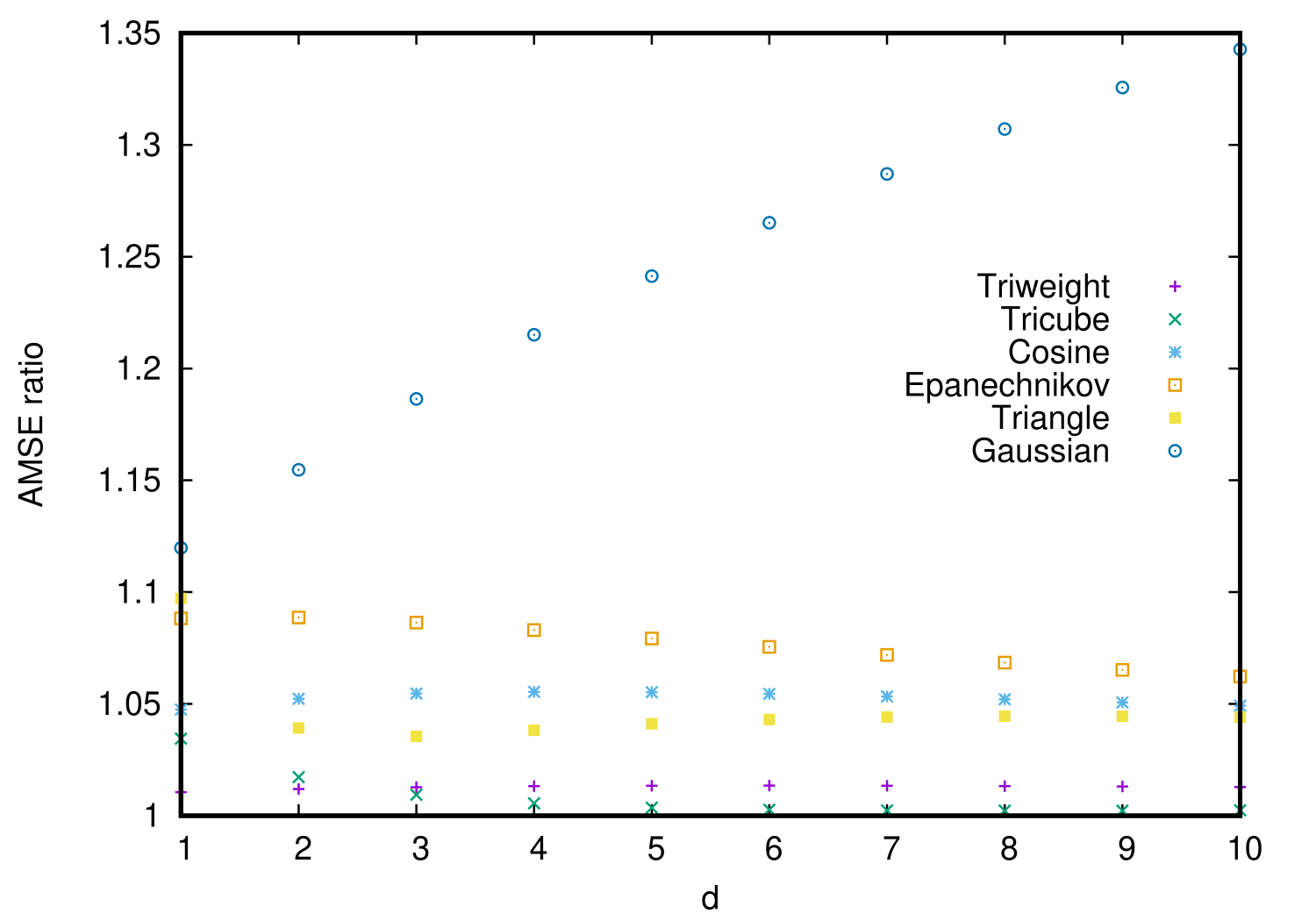}\put(18,60){\small(b)}\end{overpic}
\end{tabular}
\caption{AMSE ratios for multivariate RKs. 
(b) shows a zoom of (a), with the range of the AMSE ratio being [1,1.35], 
to better visualize the differences.}
\label{fig:AMSEratio_multivariateRK}
\end{figure}
For example, the AMSE ratio for the most frequently used Gaussian kernel $K_{d,2}^G$, 
approximately equals to 1.1198, 1.1547, 1.1864, 1.2151, 1.2413, 
1.2652, 1.2870, 1.3071, 1.3256, 1.3428 for $d=1,\ldots,10$, 
and monotonically approaches $e^2/4\approx 1.8473$ as $d$ goes to infinity.

\subsection{Other Criteria}
\label{section:Criteria}
We here mention that the optimality of the kernel $K_{d,q}^B$ is not limited to the scenario in which the AMSE is used as the criterion.
In this section, the univariate case is assumed for simplicity,
but its multivariate extension is straightforward.

\citet{grund1995minimisation} have studied the asymptotic $L^p$ error (A$L^p$E) 
$\mathrm{E}[|\theta_n-\theta|^p]$ for $p\ge1$ and the optimal bandwidth minimizing it.
Although they have clarified only the $n$-dependence of the optimal bandwidth, 
we can go one step further beyond their discussion and clarify the kernel-dependence: 
the optimal bandwidth is represented as $h_{1,q,p,n}^\opt=c_p(V_{1,1}/(nB_{1,q}^2))^{\frac{1}{2q+3}}$, 
where it is difficult to derive a closed-form expression for $c_p$, 
but it is a PDF-dependent and kernel-independent coefficient.
If this optimal bandwidth is used, then the A$L^p$E reduces to
\begin{align}
	\mathrm{E}[|\theta_n-\theta|^p\mid h_{1,q,p,n}^\opt]
	\approx 
	n^{-\frac{pq}{2q+3}}
	\Bigl(B_{1,q}^6\cdot V_{1,1}^{2q}\Bigr)^{\frac{p}{2(2q+3)}}
	\mathrm{E}\left[\biggl|\frac{2^{1/2}\{f(\theta)\}^{1/2}}{c_p^{3/2}f^{(2)}(\theta)}Z
	+\frac{2 c_p^q f^{(q+1)}(\theta)}{q!f^{(2)}(\theta)}\biggr|^p\right],
\end{align}
where $Z$ denotes the random variable following the standard normal distribution, 
and where the expectation is taken with respect to $Z$.
Since the kernel-dependent factor of the A$L^p$E is $(B_{1,q}^6\cdot V_{1,1}^{2q})$, 
which is the same as that for the AMSE, 
the kernel $K_{1,q}^B$ also minimizes the A$L^p$E among minimum-sign-change kernels, 
even when $p\neq 2$.

Also, from \citep{mokkadem2003law}, it can be found that 
the KME $\theta_n$ satisfies the law of the iterated logarithm.
In the scaling $\lim_{n\to\infty}nh_n^{2q+3}/\ln\ln n=0$, 
it holds that
\begin{align}
	\tlimsup_{n\to\infty}\sqrt{\frac{nh_n^3}{2\ln\ln n}}(\theta_n-\theta)
	=-\tliminf_{n\to\infty}\sqrt{\frac{nh_n^3}{2\ln\ln n}}(\theta_n-\theta)
	=\frac{\sqrt{f(\theta)V_{1,1}}}{|f^{(2)}(\theta)|},\quad
	\text{a.s.},
\end{align}
in other words, $\sqrt{nh_n^3/(2\ln\ln n)} (\theta_n-\theta)$ becomes relatively compact, 
and its limit set, in which the rescaled AB is included almost surely, is given by 
\begin{align}
	\Bigl[-\sqrt{f(\theta)V_{1,1}}/|f^{(2)}(\theta)|,
	\sqrt{f(\theta)V_{1,1}}/|f^{(2)}(\theta)|\Bigr].
\end{align}
Then, the combination of the bandwidth~\eqref{eq:OptBan} minimizing the AMSE and the kernel $K_{1,q}^B$ uniformly minimizes the size of the limit set of $(\theta_n-\theta)$, 
which is proportional to $\sqrt{V_{1,1}/h_n^3}\propto(B_{1,q}^6\cdot V_{1,1}^{2q})^{\frac{1}{2(2q+3)}}$.

\subsection{Singular Hessian}
\label{section:Singular}
Although it has been so far assumed that the Hessian matrix $Hf$ is 
non-singular at the mode (see~\hyl{A.4} in Assumption~\ref{assumption:asmAN}), 
this section provides a consideration on the optimal kernel when the Hessian matrix is singular.
Since the multivariate singular case is so intricate
that even its convergence rate remains an open problem, 
we suppose $d=1$ and hence $f^{(2)}(\theta)=0$ at the mode $\theta$.

\citet{vieu1996note,mokkadem2003law} have studied the singular case, 
where the PDF $f$ satisfies $f^{(i)}(\theta)=0$, $i=1,\ldots,p$ 
and $f^{(p+1)}(\theta)<0$ for $1\le p\le q-1$ instead of $f^{(2)}(\theta)<0$.
The non-singular case studied in Section~\ref{section:KME} corresponds to $p=1$.
Note that the integer $p$ should be an odd number for $f$ to take a maximum at $\theta$.

A weak convergence of the KME $\theta_n$ holds even in the singular case, in a somewhat different manner, 
where the requirements on the moments of the kernel $K$ are weakened to 
\begin{align}
\label{eq:DegMomCon}%
	\calB_{1,i}(K)=
	\begin{cases}
	1,&i=0,\\
	0,&i=p,\ldots,q-1,\\
	2B_{1,q}(K)\neq0,&i=q.
	\end{cases}
\end{align}
In other words, the conditions $\calB_{1,i}(K)=0$, $i=1,\ldots,p-1$, 
required in the moment condition~\eqref{eq:GenMomCon} in the non-singular case, 
are no longer needed in the singular case. 
Together with several additional conditions (see \citep{mokkadem2003law} for the details), 
$(\theta_n-\theta)^p$ asymptotically follows a normal distribution with 
\begin{align}
  \mathrm{E}[(\theta_n-\theta)^p]\approx -h_n^q\bar{a}\bar{b}B_{1,q},\quad
  \mathrm{var}[(\theta_n-\theta)^p]\approx\frac{2f(\theta)}{nh_n^3}\bar{a}^2V_{1,1},
\end{align}
where $\bar{a}\coloneq(\frac{1}{p!}f^{(p+1)}(\theta))^{-1}$, $\bar{b}\coloneq\frac{2}{q!}f^{(q+1)}(\theta)$.
Moreover, the bias-variance decomposition leads straightforwardly to 
the asymptotic $L^{2p}$ error (A$L^{2p}$E), instead of the AMSE, of the KME $\theta_n$: 
\begin{align}
  \mathrm{E}[(\theta_n-\theta)^{2p}]
  \approx h_n^{2q}(\bar{a}\bar{b})^2B_{1,q}^2
  +\frac{2f(\theta)}{nh_n^3}\bar{a}^2V_{1,1},
\end{align}
which has the same functional form regarding the bandwidth $h_n$ and kernel-dependent terms 
$B_{1,q}$, $V_{1,1}$ as those of the AMSE~\eqref{eq:AMSE} for the univariate non-singular case.
The optimal bandwidth minimizing the A$L^{2p}$E can be calculated
in the same way as that for the conventional one~\eqref{eq:OptBan}.
Also, the A$L^{2p}$E criterion reduces to $(B_{1,q}^6\cdot V_{1,1}^{2q})^{\frac{1}{2q+3}}$.

\begin{table}[t]
\centering
\caption{The A$L^{2p}$E criterion $(B_{1,q}^6\cdot V_{1,1}^{2q})^{\frac{1}{2q+3}}$ and 
its ratio to the smallest value in brackets for the univariate singular cases, $(p,q)=(3,4), (5,6)$.}
\label{table:tab2}
\scalebox{0.7}{\begin{sc}\begin{tabular}{l|ccc|cc}
\toprule
\hspace{8pt}Kernel $K$ & $B_{1,4}$ & $B_{1,6}$ & $V_{1,1}$ & $(B_{1,4}^6\cdot V_{1,1}^8)^{\frac{1}{11}}$ & $(B_{1,6}^6\cdot V_{1,1}^{12})^{\frac{1}{15}}$\\
\midrule\midrule
Biweight & $\frac{1}{42}$ & $\frac{5}{462}$ & $\frac{15}{14}$ & \textbf{0.1369}~[\textbf{1.0000}] & 0.1729~\vo[1.0098]\\
Triweight & $\frac{1}{66}$ & $\frac{5}{858}$ & $\frac{35}{22}$ & 0.1426~[1.0418] & 0.1851~\vo[1.0815]\\
Tricube & $\frac{1}{44}$ & $\frac{1}{104}$ & $\frac{210}{187}$ & 0.1381~[1.0089] & $\mathbf{0.1712}~\vo[\mathbf{1.0000}]$\\
Cosine & $0.0394$ & $0.0214$ & $\frac{\pi^4}{128}$ & 0.1404~[1.0257] & 0.1728~\vo[1.0095]\\
Epanechnikov & $\frac{3}{70}$ & $\frac{1}{42}$ & 0.75 & 0.1455~[1.0631] & 0.1781~\vo[1.0405]\\
Triangle & $\frac{1}{30}$ & $\frac{1}{56}$ & 1 & 0.1564~[1.1427] & 0.1999~\vo[1.1674]\\
Gaussian & 1.5 & 7.5 & $\frac{1}{8\pi^{1/2}}$ & 0.1813~[1.3246] & 0.2683~\vo[1.5675]\\
Logistic & $\frac{7\pi^4}{30}$ & $\frac{31\pi^6}{42}$ & $\frac{1}{60}$ & 0.2797~[2.0435] & 0.5223~\vo[3.0507]\\
Sech & 2.5 & 30.5 & $\frac{\pi}{24}$ & 0.3757~[2.7443] & 0.7714~[4.4614]\\
Laplace & 12 & 360 & 0.125 & 0.8548~[6.2441] & 1.9955~[11.6563]\\
\midrule
$K_{1,4}^B$ & $-\frac{1}{66}$ & $-\frac{5}{429}$ & $\frac{525}{88}$ & 0.3729~[2.7244] & 0.7033~\vo[4.1083]\\
\midrule
$K_{1,6}^B$ & 0 & $\frac{1}{286}$ & $\frac{2205}{128}$ & -- & 1.0145~\vo[5.9282]\\
\bottomrule
\end{tabular}\end{sc}}
\end{table}

The essential difference of the discussion on the optimal
kernel under the singular case from that under the non-singular case
arises from the difference between the required moment conditions~\eqref{eq:DegMomCon} and~\eqref{eq:GenMomCon}: 
As mentioned above, a kernel function is not required to satisfy 
the $i$-th moment condition for $i=1,\ldots,p-1$ in the singular case. 
For example, when $p=3$ and $q=4$, 
any symmetric 2nd order kernel does not satisfy~\eqref{eq:GenMomCon}, 
but does satisfy the singular version~\eqref{eq:DegMomCon} of the moment conditions.
The kernel $K_{1,q}^B$, of course, minimizes the A$L^{2p}$E criterion among 
the conventional $q$-th order kernels satisfying the minimum-sign-change condition, 
regardless of $p$. 
However, there is a possibility to improve the A$L^{2p}$E criterion by 
using a lower order kernel satisfying~\eqref{eq:DegMomCon}.

We investigated two simple cases $(p,q)=(3,4), (5,6)$ 
by calculating the A$L^{2p}$E criterion for several kernels, 
and report the results in Table~\ref{table:tab2}, 
where the A$L^{2p}$E criterion is given by 
$(B_{1,4}^6\cdot V_{1,1}^8)^{\frac{1}{11}}$ and $(B_{1,6}^6\cdot V_{1,1}^{12})^{\frac{1}{15}}$ 
for $(p,q)=(3,4)$ and $(5,6)$, respectively.
In the case $(p,q)=(3,4)$, where any symmetric 2nd order kernel,
as well as any 4th order kernel, fulfills the conditions~\eqref{eq:DegMomCon}, 
one can observe that a Biweight kernel $K_{1,2}^B$ is better than $K_{1,4}^B$.
In the case $(p,q)=(5,6)$, where at least three types of kernels, 
symmetric 2nd and 4th order kernels and 6th order kernels, 
satisfy the conditions~\eqref{eq:DegMomCon}, 
Table~\ref{table:tab2} shows that a Tricube kernel gives the minimal value 
of the A$L^{2p}$E criterion among the kernels we investigated.
Thus, we have confirmed that the kernel $K_{1,q}^B$ which is optimal
under the non-singular case may not be optimal under the singular case,
where a lower order kernel may improve the asymptotic estimation accuracy. 
Although we have not succeeded in deriving an optimal kernel in the singular case, 
an empirical finding that truncated kernels perform well would be useful.

\section{Simulation Experiment}
\label{section:Simulation}
The analysis in the previous sections on the optimal kernel is based on 
the asymptotic normality and the evaluation of the AMSE, 
derived from the asymptotic expansion of the KME.
While the leading-order term of the bandwidth-optimized AMSE~\eqref{eq:OptAMSE} is $O(n^{-\frac{2q}{d+2q+2}})$, 
the next-leading-order term is $O(n^{-\frac{2(q+2)}{d+2q+2}})$
if one uses symmetric kernels such as RKs and PKs.
Although the bandwidth-optimized AMSE ignores all but the leading-order term, 
those ignored terms may affect behaviors of the KME and thus its MSE when the sample size $n$ is finite.
In this section, via simulation studies, 
we examine how well the kernel selection based on the AMSE criterion 
reflects the real performance of the KME
and verify practical goodness of the optimal RK $K_{d,q}^B$ in a finite sample situation.

We tried the three cases regarding the dimension, $d=1,2,3$, 
and used synthetic i.i.d.\ sample sets of size $n=100, 400, \ldots$, 102400, 
drawn from the distribution 
$\frac{1}{2}\mathcal{N}(\bm{0}_d,\rmI_d)+\frac{1}{2}\mathcal{N}(\bm{1}_d,2\rmI_d)$, 
where $\mathcal{N}(\bm{\mu},\Sigma)$ denotes the normal distribution with mean $\bm{\mu}$ and variance-covariance matrix $\Sigma$, 
and where $\bm{1}_d$ is the all-1 $d$-vector and $\rmI_d$ is the $d\times d$ identity matrix.
The modes of the sample-generating distribution are located at $\bt\approx0.2395, 0.1514\cdot\bm{1}_2, 0.0874\cdot\bm{1}_3$ 
for $d=1,2,3$, respectively.
Because a symmetric distribution such as the normal distribution does not satisfy assumption \hyl{A.6}, 
skewed distributions were used in the experiments.
As 2nd order kernels, we examined the optimal Biweight $K_{d,2}^B$, Epanechnikov $K_{d,2}^E$, Gaussian $K_{d,2}^G$, and Laplace $K_{d,2}^L$ kernels,
as well as PKs of $K_{1,2}^B$, $K_{1,2}^E$, and $K_{1,2}^L$ if $d\ge2$.
For the higher orders $q=4$ and $6$, 
in addition to the optimal kernels $K_{d,q}^B$, 
three RKs $K_{d,q}^E$, $K_{d,q}^G$, and $K_{d,q}^L$ designed via jackknife \citep{schucany1977improvement, wand1990gaussian}, 
an alternative method for designing a higher-order kernel,
on the basis of $K_{d,2}^E$, $K_{d,2}^G$, and $K_{d,2}^L$
(see Section~\ref{section:Kernel-information}), 
and four PKs with $K_{1,q}^B$, $K_{1,q}^E$, $K_{1,q}^G$, and $K_{1,q}^L$ were also examined. 
Note that the RKs $K_{d,q}^B$, $K_{d,q}^E$, and $K_{d,q}^L$ 
and PKs of $K_{1,q}^B$, $K_{1,q}^E$ and $K_{1,q}^L$ are not twice differentiable at some points 
and thus do not exactly satisfy all the conditions of Theorem~\ref{theorem:AN} on the asymptotic behaviors of the KME.
For each kernel, we used the optimal bandwidth~\eqref{eq:OptBan} 
calculated from the sample-generating PDF and its mode\footnote{%
It should be noted that this procedure makes access to the sample-generating PDF, 
which is unavailable in practice. See Footnote~\ref{ft:ft1}.
The purpose of our simulation experiments here is not to evaluate performance in practical settings 
but rather to see how the MSE of the KME with the bandwidth optimized with respect to the AMSE 
behaves in the finite-$n$ situation, so that we used the bandwidth values exactly optimized 
with respect to the AMSE.}.
In these settings, the smallest AMSE among those of the kernels examined 
is given by the RK $K_{d,q}^B$ for $d=1$ or $q=2,6$ 
and PK with $K_{1,4}^B$ for $d=2,3$ and $q=4$. 
On the basis of 1000 trials, we calculated the MSE and its standard deviation (SD), 
and the results are reported in Table~\ref{table:SimRes},
along with the MSE ratio, defined as the ratio
of the MSE of each kernel to that for the kernel $K_{d,q}^B$ with the same $d$ and $q$ when $n=102400$.
Table~\ref{table:SimRes} also shows the results of  the Welch test with $p$-value cutoff 0.05, with the null hypothesis that the MSE of interest
be equal to the best MSE for the same $d,q,n$.

\begin{table}[t]
\centering
\caption{MSE and SD (both multiplied by $10^2$) of the KME.
For each combination of $d,q,n$, the smallest MSE is shown in bold.
Shown in gray are those which were worse than the smallest
(Welch test, $p=0.05$).}
\label{table:SimRes}
\scalebox{0.54}{\begin{sc}\begin{tabular}{c|c|cc|cccccccr}
\toprule
$d$ & $q$ & Kernel & AMSE ratio & $n=100$ & $n=400$ & $n=1600$ & $n=6400$ & $n=25600$ & $n=102400$ & MSE ratio\\
\midrule\midrule
\mr{12}{1} 
& \mr{4}{2} 
& $K_{1,2}^B$ & [\textbf{1.0000}] &
$\mathbf{4.2571\pm0.1977}$ & $\mathbf{1.6822\pm0.0737}$ & $\mathbf{0.5799\pm0.0259}$ & $\mathbf{0.2574\pm0.0110}$ & $\mathbf{0.1186\pm0.0052}$ & $\mathbf{0.0489\pm0.0021}$ & [\textbf{1.0000}]\\
&& $K_{1,2}^E$ & [1.0883] &
$4.3761\pm0.1945$ & $1.7085\pm0.0707$ & $0.6470\pm0.0285$ & $0.2772\pm0.0118$ & $0.1274\pm0.0054$ & $0.0528\pm0.0022$ & [1.0805]\\
&& $K_{1,2}^G$ & [1.1198] &
\CG{$4.9153\pm0.2414$} & \CG{$1.9430\pm0.0870$} & $0.6425\pm0.0282$ & \CG{$0.2918\pm0.0125$} & \CG{$0.1349\pm0.0058$} & \CG{$0.0553\pm0.0024$} & [1.1307]\\
&& $K_{1,2}^L$ & [2.8133] &
\CG{$10.8033\pm0.4957$} & \CG{$4.2865\pm0.2001$} & \CG{$1.4643\pm0.0606$} & \CG{$0.6984\pm0.0305$} & \CG{$0.2935\pm0.0126$} & \CG{$0.1234\pm0.0056$} & [2.5243]\\
\cmidrule{2-11}
& \mr{4}{4} 
& $K_{1,4}^B$ & [\textbf{1.0000}] &
$5.6845\pm0.2968$ & $1.9812\pm0.0877$ & $0.5978\pm0.0272$ & $0.2220\pm0.0099$ & $\mathbf{0.0876\pm0.0041}$ & $\mathbf{0.0294\pm0.0014}$ & [\textbf{1.0000}]\\
&& $K_{1,4}^E$ & [1.0737] &
$\mathbf{5.4032\pm0.2695}$ & $\mathbf{1.9559\pm0.0869}$ & $\mathbf{0.5976\pm0.0273}$ & $\mathbf{0.2217\pm0.0100}$ & $0.0882\pm0.0041$ & $0.0304\pm0.0014$ & [1.0334]\\
&& $K_{1,4}^G$ & [1.1935] &
\CG{$7.5127\pm0.3916$} & \CG{$2.5871\pm0.1156$} & \CG{$0.7530\pm0.0330$} & \CG{$0.2829\pm0.0125$} & \CG{$0.1089\pm0.0051$} & \CG{$0.0374\pm0.0017$} & [1.2712]\\
&& $K_{1,4}^L$ & [4.3743] &
\CG{$14.0314\pm0.5999$} & \CG{$6.2487\pm0.2741$} & \CG{$2.2248\pm0.0944$} & \CG{$0.9542\pm0.0428$} & \CG{$0.3592\pm0.0157$} & \CG{$0.1323\pm0.0059$} & [4.5026]\\
\cmidrule{2-11}
& \mr{4}{6} 
& $K_{1,6}^B$ & [\textbf{1.0000}] &
$7.8852\pm0.4199$ & $2.5326\pm0.1112$ & $0.7125\pm0.0320$ & $0.2374\pm0.0105$ & $0.0850\pm0.0041$ & $0.0254\pm0.0012$ & [1.0000]\\
&& $K_{1,6}^E$ & [1.0602] &
$\mathbf{7.4116\pm0.3902}$ & $\mathbf{2.3941\pm0.1070}$ & $\mathbf{0.6891\pm0.0320}$ & $\mathbf{0.2296\pm0.0103}$ & $\mathbf{0.0843\pm0.0041}$ & $\mathbf{0.0251\pm0.0012}$ & [\textbf{0.9882}]\\
&& $K_{1,6}^G$ & [1.2453] &
$10.5674\pm0.5306$ & $3.5329\pm0.1580$ & $0.9670\pm0.0418$ & $0.3282\pm0.0143$ & $0.1128\pm0.0053$ & $0.0351\pm0.0016$ & [1.3804]\\
&& $K_{1,6}^L$ & [5.6609] &
$16.9073\pm0.6946$ & $8.1200\pm0.3506$ & $2.9961\pm0.1284$ & $1.3032\pm0.0580$ & $0.4784\pm0.0213$ & $0.1700\pm0.0078$ & [6.6836]\\
\midrule
\mr{23}{2} 
& \mr{7}{2} 
& RK $K_{2,2}^B$ & [\textbf{1.0000}] &
$\mathbf{11.4818\pm0.4159}$ & $\mathbf{4.6054\pm0.1672}$ & $\mathbf{1.8538\pm0.0571}$ & $\mathbf{0.7933\pm0.0244}$ & $\mathbf{0.3510\pm0.0102}$ & $\mathbf{0.1669\pm0.0050}$ & [\textbf{1.0000}]\\
&& RK $K_{2,2}^E$ & [1.0887] &
$12.0854\pm0.4320$ & $4.8839\pm0.1710$ & \CG{$2.0408\pm0.0613$} & \CG{$0.8817\pm0.0267$} & \CG{$0.3842\pm0.0110$} & \CG{$0.1824\pm0.0057$} & [1.0930]\\
&& RK $K_{2,2}^G$ & [1.1547] &
\CG{$14.2550\pm0.5187$} & \CG{$5.4297\pm0.1925$} & \CG{$2.1603\pm0.0700$} & \CG{$0.9142\pm0.0301$} & \CG{$0.4018\pm0.0121$} & \CG{$0.1929\pm0.0056$} & [1.1557]\\
&& RK $K_{2,2}^L$& [2.4495] &
\CG{$28.7910\pm0.9371$} & \CG{$11.9038\pm0.3931$} & \CG{$4.8348\pm0.1622$} & \CG{$2.0335\pm0.0691$} & \CG{$0.8559\pm0.0271$} & \CG{$0.4139\pm0.0133$} & [2.4801]\\
&& PK of $K_{1,2}^B$ & [1.0231] &
$11.9216\pm0.4283$ & $4.7388\pm0.1700$ & $1.8922\pm0.0588$ & $0.8127\pm0.0254$ & $0.3579\pm0.0105$ & $0.1713\pm0.0051$ & [1.0265]\\
&& PK of $K_{1,2}^E$ & [1.0984] &
$12.0406\pm0.4247$ & $4.9762\pm0.1744$ & $1.9840\pm0.0600$ & \CG{$0.8672\pm0.0266$} & \CG{$0.3893\pm0.0111$} & \CG{$0.1826\pm0.0057$} & [1.0945]\\
&& PK of $K_{1,2}^L$& [2.8944] &
\CG{$29.0720\pm0.9473$} & \CG{$11.9243\pm0.3914$} & \CG{$4.9221\pm0.1607$} & \CG{$2.2091\pm0.0737$} & \CG{$0.9401\pm0.0311$} & \CG{$0.4626\pm0.0154$} & [2.7722]\\
\cmidrule{2-11}
& \mr{8}{4} 
& RK $K_{2,4}^B$ & [1.0000] &
$16.4534\pm0.6985$ & $5.4845\pm0.2056$ & $1.8539\pm0.0597$ & $0.6997\pm0.0226$ & $0.2671\pm0.0083$ & $\mathbf{0.1065\pm0.0035}$ & [\textbf{1.0000}]\\
&& RK $K_{2,4}^E$ & [1.0817] &
$\mathbf{15.1225\pm0.5873}$ & $\mathbf{5.3820\pm0.2064}$ & $\mathbf{1.8173\pm0.0575}$ & $0.6958\pm0.0225$ & $0.2774\pm0.0086$ & $0.1089\pm0.0036$ & [1.0218]\\
&& RK $K_{2,4}^G$ & [1.2552] &
\CG{$22.5890\pm0.8764$} & \CG{$7.5474\pm0.2701$} & \CG{$2.5624\pm0.0866$} & \CG{$0.9378\pm0.0317$} & \CG{$0.3452\pm0.0110$} & \CG{$0.1380\pm0.0044$} & [1.2955]\\
&& RK $K_{2,4}^L$ & [3.9734] &
\CG{$39.0583\pm1.1950$} & \CG{$18.1752\pm0.5879$} & \CG{$7.3580\pm0.2340$} & \CG{$3.0711\pm0.1027$} & \CG{$1.1322\pm0.0356$} & \CG{$0.4634\pm0.0152$} & [4.3494]\\
&& PK of $K_{1,4}^B$ & [\textbf{0.9948}] &
$16.6173\pm0.6915$ & $5.5669\pm0.2049$ & $1.8709\pm0.0616$ & $0.6935\pm0.0222$ & $\mathbf{0.2667\pm0.0084}$ & $0.1086\pm0.0036$ & [1.0191]\\
&& PK of $K_{1,4}^E$ & [1.0579] &
$15.6356\pm0.6444$ & $5.4196\pm0.2036$ & $1.8217\pm0.0599$ & $\mathbf{0.6756\pm0.0215}$ & $0.2711\pm0.0084$ & $0.1109\pm0.0037$ & [1.0409]\\
&& PK of $K_{1,4}^G$ & [1.2215] &
\CG{$22.0738\pm0.8510$} & \CG{$7.3806\pm0.2646$} & \CG{$2.4888\pm0.0845$} & \CG{$0.9088\pm0.0302$} & \CG{$0.3344\pm0.0106$} & \CG{$0.1350\pm0.0043$} & [1.2666]\\
&& PK of $K_{1,4}^L$& [4.9457] &
\CG{$38.6564\pm1.1771$} & \CG{$18.4631\pm0.5831$} & \CG{$7.5841\pm0.2377$} & \CG{$3.2581\pm0.1053$} & \CG{$1.2406\pm0.0398$} & \CG{$0.5353\pm0.0175$} & [5.0237]\\
\cmidrule{2-11}
& \mr{8}{6} 
& RK $K_{2,6}^B$ & [\textbf{1.0000}] &
$23.2642\pm0.9163$ & $7.2236\pm0.2647$ & $2.2068\pm0.0715$ & $0.7672\pm0.0248$ & $0.2628\pm0.0083$ & $0.0969\pm0.0033$ & [1.0000]\\
&& RK $K_{2,6}^E$ & [1.0700] &
$\mathbf{22.0099\pm0.9000}$ & $\mathbf{6.8727\pm0.2570}$ & $\mathbf{2.0995\pm0.0665}$ & $\mathbf{0.7287\pm0.0235}$ & $\mathbf{0.2569\pm0.0081}$ & $\mathbf{0.0961\pm0.0033}$ & [\textbf{0.9913}]\\
&& RK $K_{2,6}^G$ & [1.3282] &
\CG{$31.5057\pm1.1761$} & \CG{$10.7053\pm0.3818$} & \CG{$3.4107\pm0.1170$} & \CG{$1.1480\pm0.0379$} & \CG{$0.3806\pm0.0121$} & \CG{$0.1362\pm0.0044$} & [1.4056]\\
&& RK $K_{2,6}^L$& [5.4536] &
\CG{$45.7616\pm1.3157$} & \CG{$23.4046\pm0.7160$} & \CG{$10.3467\pm0.3170$} & \CG{$4.3718\pm0.1386$} & \CG{$1.6100\pm0.0501$} & \CG{$0.6141\pm0.0200$} & [6.3369]\\
&& PK of $K_{1,6}^B$ & [1.0419] &
$24.0760\pm0.9366$ & $7.5713\pm0.2742$ & \CG{$2.3077\pm0.0786$} & $0.7828\pm0.0243$ & $0.2714\pm0.0086$ & $0.1014\pm0.0034$ & [1.0463]\\
&& PK of $K_{1,6}^E$ & [1.0966] &
$22.6378\pm0.9059$ & $7.0804\pm0.2569$ & $2.1788\pm0.0742$ & $0.7332\pm0.0230$ & $0.2627\pm0.0082$ & $0.0991\pm0.0033$ & [1.0226]\\
&& PK of $K_{1,6}^G$ & [1.3577] &
\CG{$31.6913\pm1.1353$} & \CG{$10.8801\pm0.3880$} & \CG{$3.4745\pm0.1217$} & \CG{$1.1579\pm0.0373$} & \CG{$0.3831\pm0.0122$} & \CG{$0.1393\pm0.0046$} & [1.4379]\\
&& PK of $K_{1,6}^L$ & [7.3943] &
\CG{$46.4623\pm1.3305$} & \CG{$24.1137\pm0.7278$} & \CG{$11.0021\pm0.3326$} & \CG{$4.7613\pm0.1480$} & \CG{$1.8774\pm0.0591$} & \CG{$0.7473\pm0.0235$} & [7.7113]\\
\midrule
\mr{23}{3} 
& \mr{7}{2} 
& RK $K_{3,2}^B$ & [\textbf{1.0000}] &
$\mathbf{18.7642\pm0.6040}$ & $\mathbf{6.9539\pm0.1976}$ & $\mathbf{2.9496\pm0.0771}$ & $\mathbf{1.3446\pm0.0338}$ & $\mathbf{0.6420\pm0.0161}$ & $\mathbf{0.3099\pm0.0079}$ & [\textbf{1.0000}]\\
&& RK $K_{3,2}^E$ & [1.0864] &
$19.3876\pm0.6041$ & $7.4858\pm0.2149$ & $3.1403\pm0.0810$ & \CG{$1.4470\pm0.0358$} & \CG{$0.6943\pm0.0174$} & \CG{$0.3372\pm0.0083$} & [1.0879]\\
&& RK $K_{3,2}^G$ & [1.1864] &
\CG{$23.9051\pm0.7529$} & \CG{$8.7213\pm0.2773$} & \CG{$3.5969\pm0.0968$} & \CG{$1.6082\pm0.0404$} & \CG{$0.7779\pm0.0199$} & \CG{$0.3723\pm0.0095$} & [1.2013]\\
&& RK $K_{3,2}^L$& [2.3660] &
\CG{$53.3562\pm1.3928$} & \CG{$21.1493\pm0.6151$} & \CG{$8.8721\pm0.2516$} & \CG{$3.6972\pm0.0955$} & \CG{$1.7077\pm0.0461$} & \CG{$0.8118\pm0.0208$} & [2.6195]\\
&& PK of $K_{1,2}^B$ & [1.0448] &
$19.7222\pm0.6186$ & $7.3303\pm0.2124$ & $3.1041\pm0.0813$ & $1.4081\pm0.0351$ & $0.6781\pm0.0171$ & $0.3264\pm0.0083$ & [1.0531]\\
&& PK of $K_{1,2}^E$ & [1.1098] &
$19.8647\pm0.6456$ & \CG{$7.5905\pm0.2096$} & \CG{$3.2686\pm0.0869$} & \CG{$1.4949\pm0.0367$} & \CG{$0.7059\pm0.0179$} & \CG{$0.3438\pm0.0086$} & [1.1095]\\
&& PK of $K_{1,2}^L$& [2.9686] &
\CG{$52.7602\pm1.4184$} & \CG{$21.8585\pm0.6374$} & \CG{$9.2388\pm0.2486$} & \CG{$4.1435\pm0.1062$} & \CG{$1.9666\pm0.0503$} & \CG{$0.9162\pm0.0228$} & [2.9563]\\
\cmidrule{2-11}
& \mr{8}{4} 
& RK $K_{3,4}^B$ & [1.0000] &
$25.3986\pm0.8894$ & $8.4013\pm0.2747$ & $3.0604\pm0.0841$ & $1.2210\pm0.0318$ & $0.5073\pm0.0134$ & $0.2093\pm0.0057$ & [1.0000]\\
&& RK $K_{3,4}^E$ & [1.0861] &
$\mathbf{23.3127\pm0.8190}$ & $\mathbf{7.8901\pm0.2350}$ & $\mathbf{2.9518\pm0.0817}$ & $1.1936\pm0.0320$ & $0.5003\pm0.0133$ & $0.2084\pm0.0055$ & [0.9960]\\
&& RK $K_{3,4}^G$ & [1.3143] &
\CG{$37.2782\pm1.1748$} & \CG{$12.9172\pm0.4303$} & \CG{$4.5679\pm0.1240$} & \CG{$1.7626\pm0.0454$} & \CG{$0.7301\pm0.0194$} & \CG{$0.2955\pm0.0079$} & [1.4119]\\
&& RK $K_{3,4}^L$ & [3.9987] &
\CG{$70.6824\pm1.6955$} & \CG{$32.7709\pm0.8582$} & \CG{$14.6790\pm0.3851$} & \CG{$6.0564\pm0.1532$} & \CG{$2.4945\pm0.0655$} & \CG{$0.9916\pm0.0252$} & [4.7381]\\
&& PK of $K_{1,4}^B$ & [\textbf{0.9765}] &
\CG{$26.0719\pm0.8978$} & $8.4814\pm0.2697$ & $3.1138\pm0.0853$ & $1.2346\pm0.0318$ & $0.5133\pm0.0136$ & $0.2071\pm0.0056$ & [0.9896]\\
&& PK of $K_{1,4}^E$ & [1.0301] &
$24.4835\pm0.9137$ & $7.9345\pm0.2394$ & $2.9769\pm0.0823$ & $\mathbf{1.1889\pm0.0309}$ & $\mathbf{0.4941\pm0.0134}$ & $\mathbf{0.2045\pm0.0053}$ & [\textbf{0.9774}]\\
&& PK of $K_{1,4}^G$ & [1.2284] &
\CG{$35.2748\pm1.1315$} & \CG{$11.9953\pm0.3964$} & \CG{$4.2515\pm0.1147$} & \CG{$1.6530\pm0.0424$} & \CG{$0.6856\pm0.0182$} & \CG{$0.2761\pm0.0074$} & [1.3195]\\
&& PK of $K_{1,4}^L$& [5.4110] &
\CG{$70.0372\pm1.6944$} & \CG{$33.6666\pm0.8833$} & \CG{$15.2810\pm0.4095$} & \CG{$6.5560\pm0.1625$} & \CG{$2.8585\pm0.0761$} & \CG{$1.1353\pm0.0277$} & [5.4249]\\
\cmidrule{2-11}
& \mr{8}{6} 
& RK $K_{3,6}^B$ & [\textbf{1.0000}] &
$36.4994\pm1.2212$ & $11.6394\pm0.4029$ & $3.8464\pm0.1050$ & $1.4006\pm0.0358$ & $0.5244\pm0.0137$ & $0.1936\pm0.0051$ & [1.0000]\\
&& RK $K_{3,6}^E$ & [1.0769] &
$\mathbf{33.6589\pm1.1940}$ & $\mathbf{10.6750\pm0.3579}$ & $\mathbf{3.5802\pm0.1002}$ & $\mathbf{1.3213\pm0.0336}$ & $\mathbf{0.4910\pm0.0134}$ & $\mathbf{0.1844\pm0.0048}$ & [\textbf{0.9527}]\\
&& RK $K_{3,6}^G$ & [1.4103] &
\CG{$51.3040\pm1.4862$} & \CG{$19.3169\pm0.6060$} & \CG{$6.5221\pm0.1799$} & \CG{$2.3097\pm0.0603$} & \CG{$0.8633\pm0.0227$} & \CG{$0.3148\pm0.0082$} & [1.6260]\\
&& RK $K_{3,6}^L$ & [5.7653] &
\CG{$81.7929\pm1.8767$} & \CG{$42.6492\pm1.0293$} & \CG{$20.4169\pm0.5131$} & \CG{$9.4432\pm0.2305$} & \CG{$3.8039\pm0.0947$} & \CG{$1.3986\pm0.0339$} & [7.2246]\\
&& PK of $K_{1,6}^B$ & [1.0606] &
\CG{$37.5986\pm1.2218$} & \CG{$12.3234\pm0.3998$} & \CG{$4.1032\pm0.1129$} & \CG{$1.5007\pm0.0384$} & \CG{$0.5629\pm0.0146$} & \CG{$0.2027\pm0.0054$} & [1.0471]\\
&& PK of $K_{1,6}^E$ & [1.1092] &
$34.8445\pm1.1872$ & $11.1684\pm0.3591$ & $3.7513\pm0.1052$ & $1.3843\pm0.0355$ & $0.5196\pm0.0136$ & $0.1976\pm0.0048$ & [1.0206]\\
&& PK of $K_{1,6}^G$ & [1.4384] &
\CG{$51.3070\pm1.5192$} & \CG{$18.7785\pm0.5682$} & \CG{$6.5203\pm0.1789$} & \CG{$2.3292\pm0.0605$} & \CG{$0.8699\pm0.0227$} & \CG{$0.3156\pm0.0083$} & [1.6303]\\
&& PK of $K_{1,6}^L$& [9.1885] &
\CG{$84.8374\pm1.9266$} & \CG{$45.2915\pm1.1066$} & \CG{$22.4703\pm0.5508$} & \CG{$10.7301\pm0.2525$} & \CG{$4.6637\pm0.1161$} & \CG{$1.7211\pm0.0392$} & [8.8902]\\
\bottomrule
\end{tabular}\end{sc}}
\end{table}

For $q=2$, 
the MSE ratios were approximately close to the AMSE ratios,
which suggests that the AMSE criterion serves as a good indicator of real performance.
Also, the optimal Biweight kernel $K_{d,2}^B$ achieved the best estimation result 
for every $d$ and $n$ as expected from the asymptotic theories.
Although differences between MSEs for the Biweight kernel 
and those for other truncated kernels 
(RK $K_{d,2}^E$ and PKs of $K_{1,2}^B$ and $K_{1,2}^E$)
were not significant especially for smaller $n$,
differences between them and those for non-truncated kernels
(RKs $K_{d,2}^G$ and $K_{d,2}^L$ and PK of $K_{1,2}^L$) 
were statistically significant with $p$-value 0.05 for most combinations of $d$ and $n$.

For the higher orders $q=4, 6$,
truncated kernels performed well, whereas 
non-truncated ones gave significantly larger MSEs.
This tendency is the same as in the case $q=2$. 
However, the experimental results for the higher-order kernels 
exhibited deviations from the asymptotic theories:
except for the case of $(d,q)=(1,4)$,
even with the largest sample size $n=102400$ investigated, 
the smallest MSE values were achieved by 
kernels other than that with the minimum AMSE 
($K_{1,6}^E$ for $q=6$ when $d=1$, and PK of $K_{1,4}^B$ for $q=4$ and RK $K_{d,6}^B$ for $q=6$ when $d=2,3$).
Such deviations would be ascribed to the fact that asymptotics ignores residual terms of the AMSE as described at the beginning of this section.
The ratio between the leading-order and next-leading-order terms of the AMSE for the considered RKs and PKs is $O(n^{-\frac{4}{d+2q+2}})$, 
which gets larger as $d$ and/or $q$ increase.
Therefore, for the residual terms to be negligible, 
one needs more sample points for larger $d$ and/or $q$.
In the cases of $q=4,6$, even $n=102400$ might not have been large enough to accurately reflect 
the small difference less that about 10\ \% in the AMSE into the difference of MSE,
for the PDFs used.
However, considering that those kernels which performed inferior to the best-performing ones still performed 
close to the theoretical optimum with a difference less than 10\ \% in the AMSE criterion 
and that their experimental difference was not significant,
it can be found that even though the AMSE criterion may not be a quantitative performance index for higher-order kernels, it is still suggestive for real performance.

\section{Optimal Kernel for Other Methods}
\label{section:OtherMethods}
\subsection{In-Sample Mode Estimation}
\label{section:ISME}
A mode estimator considered in \citep{abraham2004asymptotic},
which we call the in-sample  mode estimator (ISME), 
is defined as the location of a sample point 
where a value of the KDE~\eqref{eq:KDE} becomes maximum among those at sample points:
\begin{align}
\label{eq:ISME}%
  \bar{\bt}_n\coloneq\argmax_{\bx\in\{\bX_i\}_{i=1}^n}f_n(\bx).
\end{align}
The ISME can be evaluated efficiently
with the quick-shift (QS) algorithm \citep{vedaldi2008quick}.
Although the QS algorithm has an extra tuning parameter in addition to $K$ and $h_n$,
which may affect the quality of the estimator, 
it has an advantage that it converges in a finite number of iterations irrespective of the kernel used as far as the sample size is finite,
making it computationally efficient.

\citet{abraham2004asymptotic} have proved in their Corollary~1.1 that, 
in the large-sample asymptotics, 
the ISME $\bar{\bt}_n$ converges in distribution to an asymptotic distribution of the KME $\bt_n$ 
if the bandwidth $h_n$ satisfies $n^{\frac{d-2}{2}}h_n^{\frac{d(d+2)}{2}}\ln n\to0$ as $n\to\infty$.
A simple calculation shows that,
if we use the optimal bandwidth $h_{d,q,n}^\opt\propto n^{-\frac{1}{d+2q+2}}$ in~\eqref{eq:OptBan},
the requirement for the bandwidth is fulfilled
for $(d,q)$ satisfying $(d-2)q<d+2$, that is,
for any $q$ when $d=1, 2$; $2\le q\le 4$ when $d=3$; and $q=2$ when $d=4, 5$.
Accordingly, at least for these cases,
the ISME has the same AMSE and the same AMSE criterion as the KME, 
so that the results on the optimal kernel still hold for the ISME as well: 
especially, $K_{d,q}^B$ minimizes the AMSE criterion of the ISME among the minimum-sign-change RKs,
for the above-mentioned combinations of $d$ and $q$.

\subsection{Modal Linear Regression}
\label{section:MLR}
Modal linear regression (MLR) \citep{yao2014new} aims 
to obtain a conditional mode predictor as a linear model.
In addition to intrinsic good properties rooted in a conditional mode, 
the MLR has an advantage that resulting parameter and regression line
are consistent even for a heteroscedastic or asymmetric conditional PDF, 
compared with robust M-type estimators which are not consistent in these cases \citep{baldauf2012use}.

Let $(\bX,\bY)$ be a pair of random variables in $\calX\times\calY\subset\R^{d_\X}\times\R^{d_\Y}$ following a certain joint distribution.
MLR assumes that the conditional distribution of $\bY$ conditioned on $\bX=\bx$ 
is such that the conditional PDF $f_\YX(\cdot|\bx)$ satisfies $\text{arg~max}_{\by\in\calY}f_\YX(\by|\bx)=\Theta\bx$ for any $\bx\in\calX$,
where $\Theta\in\R^{d_\Y\times d_\X}$ is an underlying MLR parameter.
For given i.i.d.~samples $\{(\bX_i,\bY_i)\in\calX\times\calY\}_{i=1}^n$ of $(\bX,\bY)$,
the MLR adopts, as the estimator of the parameter $\Theta$,
the global maximizer $\Theta_n$ of the kernel-based objective function
with argument $\Omega\in\R^{d_\Y\times d_\X}$, 
\begin{align}
\label{eq:MLR-Obj}
	O_n(\Omega)\coloneq\frac{1}{nh_n^{d_\Y}}
	\sum_{i=1}^n K\biggl(\frac{\Omega\bX_i-\bY_i}{h_n}\biggr),
\end{align}
with the kernel $K$ defined on $\R^{d_\Y}$ and the bandwidth $h_n>0$.

In \citep[Theorem~3]{kemp2019dynamic}, the asymptotic normality of the MLR has been proved.
They have considered the case where a 2nd order kernel is used 
and the AVC is relatively dominant compared with the AB 
by assuming $h_n^4\ll (nh_n)^{-(d_\Y+2)}$.
We provide an extension of their theorem which allows the use of a higher-order 
kernel and provides an explicit expression of the AB.
Note that the vectorization operator $\vecop(\cdot)$ is defined as the $nm$-vector 
$\vecop(\rmM)\coloneq(\bm{m}_1^\top,\ldots,\bm{m}_n^\top)^\top$ 
for an $m\times n$ matrix $\rmM=(\bm{m}_1,\ldots,\bm{m}_n)$, 
and that we adopt the definition that the operator $\nabla$ applied to a function with a matrix argument 
is the nabla operator with respect to the vectorization of the argument.
The same definition applies to the Hessian operator $H$ as well.

\begin{theorem}
\label{theorem:MLR}
Assume Assumption \ref{assumption:asmA.2} for an even integer $q\ge2$.
Then, the vectorization $\vecop(\Theta_n)$ of 
the MLR parameter estimator $\Theta_n$ asymptotically 
follows a normal distribution with the following AB and AVC: 
\begin{align}
  \mathrm{E}[\vecop(\Theta_n)-\vecop(\Theta)]
  \approx -h_n^q\rmA\bb,\quad
  \mathrm{Cov}[\vecop(\Theta_n)]
  \approx \frac{1}{nh_n^{d_\Y+2}}\rmA\rmV\rmA,
\end{align}
where the abbreviations $\rmA$, $\bb$, and $\rmV$ are defined by
\begin{align}
\label{eq:MLR-abbre}
\begin{split}
	\rmA&=\bigl\{\mathrm{E}[(\bX\otimes \rmI_{d_\Y})Hf_\YX(\Theta\bX|\bX)(\bX\otimes \rmI_{d_\Y})^\top]\bigr\}^{-1},\\
	\bb&=\mathrm{E}[\bX\otimes \bm{T}(\Theta\bX;\bX, f_\XY, K)],\\
	\rmV&=\mathrm{E}[f_\YX(\Theta\bX|\bX)(\bX\otimes \rmI_{d_\Y})\calV_{d_\Y}(\bX\otimes \rmI_{d_\Y})^\top].
\end{split}
\end{align}
In the above, $\bm{T}(\by;\bx, f_\XY, K)$ is defined as
\begin{align}
	\bm{T}(\by;\bx, f_\XY, K)
	\coloneq\sum_{\bi\in\mathbb{Z}_{\ge0}^{d_\Y}:|\bi|=q}
	\frac{1}{\bi!}
	\cdot \nabla \partial^\bi f_\YX(\by|\bx)
	\cdot \calB_{d_\Y,\bi}(K),
\end{align}
and $\calV_{d_\Y}$ is the $d_\Y\times d_\Y$ matrix defined via \eqref{eq:FuncVofK}.
Moreover, the AMSE is obtained as
\begin{align}
\label{eq:MLR-AMSE}
  \mathrm{E}[\|\vecop(\Theta_n)-\vecop(\Theta)\|^2]
  \approx h_n^{2q}\|\rmA\bb\|^2
  +\frac{1}{nh_n^{d_\Y+2}}\trace(\rmA\rmV\rmA).
\end{align}
\end{theorem}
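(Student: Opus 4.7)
The plan is to treat $\Theta_n$ as a standard M-estimator and to mirror the strategy used for Theorem~\ref{theorem:AN}, but now applied to the $d_\X d_\Y$-dimensional parameter $\vecop(\Theta)$. Writing $\Omega\bX_i = (\bX_i^\top\otimes \rmI_{d_\Y})\vecop(\Omega)$ and differentiating $O_n(\Omega)$ with respect to $\vecop(\Omega)$ yields
\begin{align*}
\nabla O_n(\Omega)&=\frac{1}{nh_n^{d_\Y+1}}\sum_{i=1}^n(\bX_i\otimes\rmI_{d_\Y})\,\nabla K\!\left(\frac{\Omega\bX_i-\bY_i}{h_n}\right),\\
H O_n(\Omega)&=\frac{1}{nh_n^{d_\Y+2}}\sum_{i=1}^n(\bX_i\otimes\rmI_{d_\Y})\,HK\!\left(\frac{\Omega\bX_i-\bY_i}{h_n}\right)(\bX_i^\top\otimes\rmI_{d_\Y}).
\end{align*}
The first-order condition $\nabla O_n(\Theta_n)=\bm{0}$ together with a Taylor expansion around $\vecop(\Theta)$ gives the linearization $\vecop(\Theta_n)-\vecop(\Theta)\approx -[HO_n(\Theta)]^{-1}\nabla O_n(\Theta)$, so it remains to analyze the two quantities on the right.

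For the bias I would condition on $\bX=\bx$, substitute $\bu=(\Theta\bx-\by)/h_n$, and integrate by parts once in $\bu$ to obtain
\begin{align*}
\mathrm{E}\!\left[\nabla K\!\left(\tfrac{\Theta\bx-\bY}{h_n}\right)\Big|\bX=\bx\right]
=h_n^{d_\Y+1}\int K(\bu)\,\nabla f_\YX(\Theta\bx-h_n\bu|\bx)\,d\bu.
\end{align*}
Taylor expanding $\nabla f_\YX(\Theta\bx-h_n\bu|\bx)$ around $\Theta\bx$ and using that $\Theta\bx$ is the conditional mode of $f_\YX(\cdot|\bx)$ eliminates the leading $h_n^0$-term; the moment condition~\eqref{eq:GenMomCon} kills orders $1,\ldots,q-1$, leaving the $h_n^q$-contribution $\bm{T}(\Theta\bx;\bx,f_\XY,K)$. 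Taking the outer expectation over $\bX$ then produces $\mathrm{E}[\nabla O_n(\Theta)]\approx h_n^q\,\bb$ with $\bb$ as in~\eqref{eq:MLR-abbre}.

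For the Hessian I would apply the analogous computation but integrating by parts twice, so that the leading-order conditional expectation reads $h_n^{d_\Y+2}Hf_\YX(\Theta\bx|\bx)$; combined with the factor $(\bX\otimes\rmI_{d_\Y})(\,\cdot\,)(\bX^\top\otimes\rmI_{d_\Y})$ this yields $\mathrm{E}[HO_n(\Theta)]\to\rmA^{-1}$, with convergence in probability following from the law of large numbers uniformly on a shrinking neighborhood of $\Theta$. For the variance, the conditional expectation
\begin{align*}
\mathrm{E}\!\left[\nabla K\nabla K^\top\Big|\bX=\bx\right]
=h_n^{d_\Y}\!\int \nabla K(\bu)\nabla K(\bu)^\top f_\YX(\Theta\bx-h_n\bu|\bx)\,d\bu
\approx h_n^{d_\Y}f_\YX(\Theta\bx|\bx)\,\calV_{d_\Y}
\end{align*}
gives, after the $(\bX\otimes\rmI_{d_\Y})$-sandwich and averaging, $\mathrm{Cov}[\nabla O_n(\Theta)]\approx \rmV/(nh_n^{d_\Y+2})$; the contribution from the squared mean is $O(h_n^{2q}/n)$ and hence negligible. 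Asymptotic normality of $\nabla O_n(\Theta)$ then follows from the Lindeberg--Feller CLT applied to the i.i.d.\ summands, after the standard verification that their truncated second moment is finite and the negligibility condition holds with bandwidth $h_n\to0$, $nh_n^{d_\Y+2}\to\infty$. Combining with Slutsky yields the stated AB and AVC, and the AMSE~\eqref{eq:MLR-AMSE} is immediate from the bias--variance decomposition.

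The main obstacle is not the leading-order calculus, which is routine once the integration-by-parts/Taylor-expansion scheme is set up, but rather justifying the initial linearization rigorously: one needs consistency of $\Theta_n$ together with uniform convergence of $HO_n(\Omega)$ in a neighborhood of $\Theta$ so that the remainder in the Taylor expansion of $\nabla O_n$ is negligible. This is where the precise content of Assumption~\ref{assumption:asmA.2}---smoothness of $f_\YX$ in $\by$, moment conditions on $\bX$ ensuring that the $(\bX\otimes\rmI_{d_\Y})$-weighted averages concentrate, tail/decay control on $K$ and its derivatives, and a curvature/identifiability condition at $\Theta$---enters. All these mirror the technical steps carried out for the KME in Appendix~\ref{section:ProofKME}, and I would reuse the machinery there almost verbatim, with the kernel argument $(\bx-\bX_i)/h_n$ replaced throughout by $(\Omega\bX_i-\bY_i)/h_n$ and the simple averaging replaced by $(\bX_i\otimes\rmI_{d_\Y})$-weighted averaging.
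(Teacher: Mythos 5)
Your proposal is correct and follows essentially the same route as the paper: the paper's Appendix~\ref{section:ProofMLR} gives no separate argument but states that Theorem~\ref{theorem:MLR} "can be proved in the manner almost same as that of Theorem~\ref{theorem:AN}" under Assumption~\ref{assumption:asmA.2}, and your write-up is exactly that adaptation (vectorization via $\Omega\bX_i=(\bX_i^\top\otimes\rmI_{d_\Y})\vecop(\Omega)$, the $(\bX_i\otimes\rmI_{d_\Y})$-weighted gradient and Hessian, integration by parts plus Taylor expansion for the bias, the $\calV_{d_\Y}$-based variance, and the linearization/CLT/Slutsky chain). Your leading-order calculations of $\mathrm{E}[\nabla O_n(\Theta)]\approx h_n^q\bb$, $\mathrm{E}[HO_n(\Theta)]\to\rmA^{-1}$, and $\mathrm{Cov}[\nabla O_n(\Theta)]\approx\rmV/(nh_n^{d_\Y+2})$ all check out against the stated AB, AVC, and AMSE.
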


By following the same line as the argument in Section~\ref{section:KME}, 
for instance, one can optimize the bandwidth $h_n$ in terms of the AMSE, 
and furthermore show that the AMSE criterion becomes 
$(B_{d_\Y,q}^{2(d_\Y+2)}\cdot V_{d_\Y,1}^{2q})^{\frac{1}{d_\Y+2q+2}}$ if an RK is used,
which implies that the kernel $K_{d_\Y,q}^B$ is optimal.
These results are generalization of \citep{yamasaki2020kernel} for $d_\Y=1$ and $q=2$.

\subsection{Mode Clustering}
\label{section:MC}
In mode clustering, 
a cluster is defined via the gradient flow $\bx'(t)=\nabla f(\bx(t))$ of the PDF $f$ regarded as a scalar field.
A limiting point of the gradient flow defines a center of the cluster corresponding to its domain of attraction.
For the mean-shift-based mode clustering \citep{comaniciu2002mean, yamasaki2019ms}, the KDE is often plugged into $f$.
Although clustering error for mode clustering using the KDE is difficult to evaluate in general dimensions,
\citet{casa2020} have analyzed clustering error rate in the univariate case in detail.

Here we review their discussion in a fairly simplified setting:
we consider the univariate case and assume that the PDF and KDE are bimodal, 
as depicted in Figure \ref{figure:CER}.
In this setting, the true clusters $C_1,C_2$ and estimated clusters $C_{n,1}, C_{n,2}$ are
\begin{align}
\begin{split}
	&C_1=\{x\in\R:x<\zeta\},\quad
	C_2=\{x\in\R:x\ge\zeta\},\\
	&C_{n,1}=\{x\in\R:x<\zeta_n\},\quad
	C_{n,2}=\{x\in\R:x\ge\zeta_n\},
\end{split}
\end{align}
with the local minimizers $\zeta$ and $\zeta_n$ of the PDF and KDE,
respectively, 
and the clustering error rate (CER) becomes
\begin{align}
	\mathrm{CER}_n
	=\sum_{j=1}^2 \Pr(X\in C_j\text{ and }X\not\in C_{n,j})
	=\begin{cases}
	\Pr(\zeta\le X<\zeta_n),&\zeta<\zeta_n,\\
	\Pr(\zeta_n\le X<\zeta),&\zeta_n<\zeta.
	\end{cases}
\end{align}
In Figure \ref{figure:CER}, the red area represents the CER.
The green area is $|\zeta_n-\zeta| \times f(\zeta)$, 
and the blue area is approximated as $|\zeta_n-\zeta| \times \frac{f^{(2)}(\zeta)}{2}|\zeta_n-\zeta|^2$, 
which is negligible in the asymptotic situation.
Therefore, the relationship `green area<red area (CER) $<$ green $+$ blue areas' implies 
that the CER asymptotically equals to the green area, $|\zeta_n-\zeta|\times f(\zeta)$: 
the asymptotic mean CER (AMCER) reduces to
\begin{align}
	\label{eq:AMCER}
	\lim_{n\to\infty}\mathrm{E}[\mathrm{CER}_n]
	=f(\zeta) \mathrm{E}[|\zeta_n-\zeta|].
\end{align}
On the basis of the fact that the form of the asymptotic distribution 
of the local minimizer $\zeta_n$ is the same as that of the mode, 
the AMCER behaves like the A$L^1$E for the mode, discussed in Section~\ref{section:Criteria}.

\begin{figure}[t]
\centering
\begin{overpic}[height=50mm, bb=0 0 665 335]{./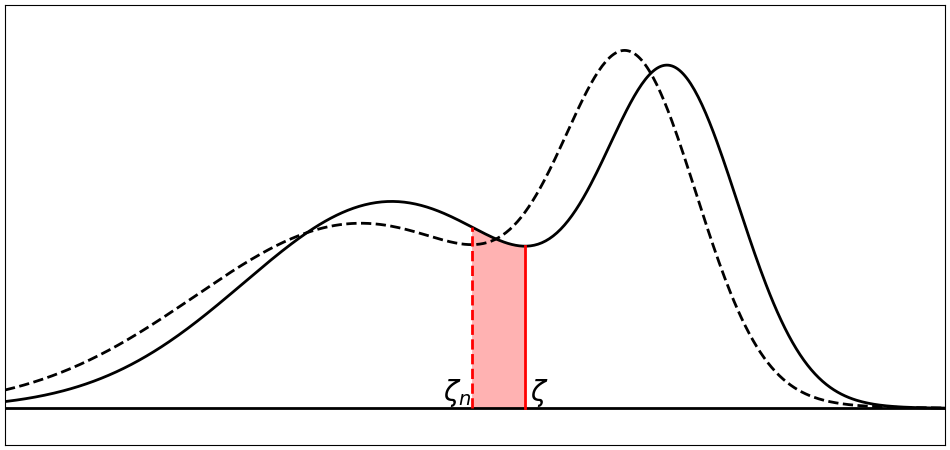}\put(3,42){\small(a)}\end{overpic}
\hspace{10pt}
\begin{overpic}[height=50mm, bb=0 0 105 335]{./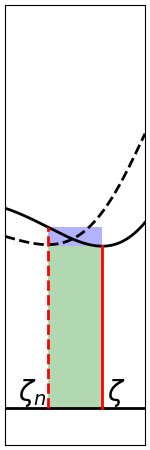}\put(5,85){\small(b)}\end{overpic}
\caption{
The black solid and dotted curves are the underlying PDF and KDE, respectively.
The red solid and dotted lines are located at local minimizers 
($\zeta$ and $\zeta_n$) of the underlying PDF and KDE, respectively.
In panel~(a), the red area represents the clustering error rate.
In panel~(b), the green area is a lower bound of the error rate, 
and the summation of the green and blue areas is an upper bound.}
\label{figure:CER}
\end{figure}

\citet{casa2020} have shown $n$-dependence of the optimal bandwidth minimizing the AMCER.
We further show the kernel dependence of the optimal bandwidth and the optimal kernel:
Since the AMCER behaves like the A$L^1$E for the mode, 
the optimal bandwidth minimizing the AMCER becomes
$h_{1,q,1,n}^\opt=c_1(V_{1,1}/(nB_{1,q})^2)^{\frac{1}{2q+3}}$, $c_1>0$,
and the resulting kernel optimization problem becomes equivalent to
the type-1 kernel optimization problem for the KME with the A$L^1$E criterion as the objective function.
Thus, the kernel $K_{1,q}^B$ minimizes the AMCER for the optimal bandwidth among the $q$-th order minimum-sign-change kernels.

\section{Conclusion}
\label{section:Conclusion}
In this paper, we have studied the kernel selection, particularly optimal kernel, 
for the kernel mode estimation, by extending the existing approach for the univariate setting by 
\citep{gasser1979kernel, gasser1985kernels, granovsky1989optimality, granovsky1991optimizing}.
This approach is based on asymptotics, and it seeks a kernel function that 
minimizes the main term of the AMSE for the optimal bandwidth among kernels (or their derivatives) 
that change their sign the minimum number of times required by their order.
Theorems~\ref{theorem:thm21} and \ref{theorem:thm22} are the main novel theoretical contribution, 
which shows that the Biweight hierarchy $\{G_{d,q}^B\}$ provides the optimal RK $K_{d,q}^B(\cdot)=G_{d,q}^B(\|\cdot\|)$
for every dimension $d\ge1$ and even order $q=2,4$ and $q\ge2$, respectively.
We have also studied the PK model, and compared it with the RK model. 
As a result, we have found that the 2nd order optimal RK $K_{d,2}^B$ 
is superior in terms of the AMSE to the PK model using any non-negative kernel. 
Simulation studies confirmed the superiority of the optimal Biweight kernel among the non-negative kernels,
as well as truncated kernels including the optimal kernel $K_{d,q}^B$, which gave better accuracy even in higher orders $q=4,6$.

The theories on the optimal kernel are also effective for 
other kernel-based modal statistical methods.
In Section~\ref{section:OtherMethods}, we show the optimal kernel for 
in-sample  mode estimation, modal linear regression, and univariate mode clustering.
Although elucidation of the optimal kernel for other methods that might be relevant such as principal curve estimation 
or density ridge estimation \citep{ozertem2011locally, sando2020modal} and multivariate mode clustering, 
in which it is difficult to represent asymptotic errors explicitly, is an open problem, 
studying them analytically or experimentally will also be interesting.

\appendix
\renewcommand{\theequation}{\Alph{section}.\arabic{equation}}
\renewcommand{\thedefinition}{\Alph{section}.\arabic{definition}}
\renewcommand{\thetheorem}{\Alph{section}.\arabic{theorem}}
\renewcommand{\theproposition}{\Alph{section}.\arabic{proposition}}
\renewcommand{\thelemma}{\Alph{section}.\arabic{lemma}}
\renewcommand{\theproblem}{\Alph{section}.\arabic{problem}}
\renewcommand{\theremark}{\Alph{section}.\arabic{remark}}
\renewcommand{\theassumption}{\Alph{section}.\arabic{assumption}}
\section{Proof of Asymptotic Behaviors of Kernel Mode Estimation}
\label{section:ProofKME}
\setcounter{equation}{0}
\setcounter{definition}{0}
\setcounter{theorem}{0}
\setcounter{proposition}{0}
\setcounter{lemma}{0}
\setcounter{problem}{0}
\setcounter{remark}{0}
\setcounter{assumption}{0}
Regularity conditions for Theorem~\ref{theorem:AN}, i.e., sufficient conditions for 
deriving the expressions~\eqref{eq:ABVC}, \eqref{eq:AMSE} of the AB, AVC, and AMSE 
along with the asymptotic normality of the KME, are listed below.
They consist of the conditions on the sample $\{\bX_i\in\R^d\}_{i=1}^n$ \hyl{A.1},
mode $\bt$ and PDF $f$ \hyl{A.2}--\hyl{A.6}, kernel $K$ \hyl{A.6}--\hyl{A.14},
and bandwidth $h_n$ \hyl{A.15} and \hyl{A.16}.
\begin{assumption}[Regularity conditions for Theorem~\ref{theorem:AN}]
\label{assumption:asmAN}
For finite $d\ge1$ and even $q\ge2$,
\begin{itemize}\setlength{\leftskip}{0.3cm}\setlength{\parskip}{0cm}\setlength{\itemsep}{0cm}
\item[\hyt{A.1}]%
	$\{\bX_i\in\R^d\}_{i=1}^n$ is a sample of i.i.d.~observations from $f$.
\item[\hyt{A.2}]%
	$f$ is $(q+2)$ times differentiable in $\R^d$
	(i.e., $f$ and $f^\bi$, $|\bi|=2$ are continuous at $\bt$).
\item[\hyt{A.3}]%
	$f$ has a unique and isolated maximizer at $\bt$
	(i.e., $f(\bx)<f(\bt)$ for all $\bx\neq\bt$, $\nabla f(\bt)=\bm{0}_d$ due to \hyl{A.2}, 
	and $\sup_{\bx\not\in N}f(\bx)<f(\bt)$ for a neighborhood $N$ of $\bt$).
\item[\hyt{A.4}]%
	$\partial^\bi f$, $|\bi|=2$ satisfies $\int|\partial^\bi f(\bx)|\,d\bx<\infty$,
	and $Hf(\bt)$ is non-singular.
\item[\hyt{A.5}]%
	$|\partial^\bi f(\bt)|<\infty$ for all $\bi$ s.t.\,$|\bi|=2,\ldots,q+1$, 
	and $\partial^\bi f$ is bounded for all $\bi$ s.t.\,$|\bi|=q+2$.
\item[\hyt{A.6}]%
	$\partial^\bi f(\bt)$, $|\bi|=q+1$ and $K$ satisfy $\bb(\bt;f,K)\neq\bm{0}_d$.
\item[\hyt{A.7}]%
	$K$ is bounded and twice differentiable in $\R^d$
	and satisfies the covering number condition, $\int |K(\bx)|\,d\bx<\infty$, 
	and $\lim_{\|\bx\|\to\infty} \|\bx\| |K(\bx)|=0$.
\item[\hyt{A.8}]%
	$\calB_{d,\bm{0}_d}(K)=1$.
\item[\hyt{A.9}]%
	$\calB_{d,\bi}(K)=0$ for all $\bi$ s.t.\,$|\bi|=1,\ldots,q-1$.
\item[\hyt{A.10}]%
	$|\calB_{d,\bi}(K)|<\infty$ for all $\bi$ s.t.\,$|\bi|=q$, 
	and $\calB_{d,\bi}(K)\neq0$ for some $\bi$ s.t.\,$|\bi|=q$.
\item[\hyt{A.11}]%
	$|\calB_{d,\bi}(K)|<\infty$ for all $\bi$ s.t.\,$|\bi|=q+1$.
\item[\hyt{A.12}]%
	$\partial_i K$ is bounded and satisfies $\int |\partial_i K(\bx)\partial_j K(\bx)|\,d\bx<\infty$, 
	$\lim_{\|\bx\|\to\infty}\|\bx\| |\partial_i K(\bx)\partial_j K(\bx)|=0$, 
	and $\int |\partial_i K(\bx)|^{2+\delta}\,d\bx<\infty$ for some $\delta>0$,
	for all $i,j=1,\ldots,d$.
\item[\hyt{A.13}]%
	$\int \nabla K(\bx)\nabla K(\bx)^\top\,d\bx=\calV_d(K)$ has a finite determinant.
\item[\hyt{A.14}]%
	$\partial^\bi K$, $|\bi|=2$ satisfies the covering number condition.
\item[\hyt{A.15}]%
	$\lim_{n\to\infty}(n h_n^{d+2q+2})^{\frac{1}{2}}=c<\infty$ (i.e., $h_n\to0$).
\item[\hyt{A.16}]%
	$\lim_{n\to\infty}n h_n^{d+4}/\ln n=\infty$.
\end{itemize}
\end{assumption}
\addtocounter{equation}{16}
\noindent%
Here, the covering number condition, that appears in \hyl{A.7} and \hyl{A.14}, is defined as follow:
\begin{definition}[{\citep[Definition~23]{pollard2012convergence} and \citep[Section~2]{mokkadem2003law}}]
\hfill
\begin{itemize}
\item 
Let $P$ be a probability measure on $S$ and $\calF$ be a class of 
functions in $\calL_1(P)\coloneq\{g:|\int_S g(\bx)\,P(d\bx)|<\infty\}$.
For each $\epsilon>0$, define the \emph{covering number} $N(\epsilon,P,\calF)$ as the smallest value of $m$ 
for which there exist functions $g_1,\ldots,g_m$ such that $\min_j \int_S |f-g_j|\,P(d\bx)\le\epsilon$
for each $f\in\calF$.
We set $N(\epsilon,P,\calF)=\infty$ if no such $m$ exists.
\item 
Let $g$ be a function defined on $\R^d$.
Let $\calF(g)=\{g(\frac{\cdot\,-\bx}{h}),h>0,\bx\in\R^d\}$ be 
the class of functions consisting of arbitrarily translated and scaled versions of $g$. 
We say that $g$ satisfies the \emph{covering number condition} 
if $g$ is bounded and integrable on $\R^d$, 
and if there exist $A>0$ and $W>0$ such that $N(\epsilon,P,\calF(g))\le A\epsilon^{-W}$ 
for any probability measure $P$ on $\R^d$ and any $\epsilon\in(0,1)$.
\end{itemize}
\end{definition}
\noindent%
For example, the RK $K(\bx)=G(\|\bx\|)$ and the PK $K(\bx)=\prod_{j=1}^d G(x_j)$ fulfill 
the covering number condition in~\hyl{A.7} if $G$ has bounded variations in both cases.

We give a proof of the asymptotic normality of the KME $\bt_n$, which is 
a modification of the proof of \citep[Theorem~2.2]{mokkadem2003law}, below.
\begin{proof}[{Proof of Theorem~\ref{theorem:AN}}]
\citet{pollard2012convergence} gives the following lemma
on uniform consistency (see his Theorem~37 in p.~34):
\begin{lemma}
\label{lemma:CNC}
Let $g$ be a function on $\R^d$ satisfying the covering number condition, 
and $\{h_n\}$ be a sequence of positive numbers satisfying $\lim_{n\to\infty}h_n=0$. 
If there exists $j\ge0$ such that $\lim_{n\to\infty}nh_n^{d+2j}/\ln n=\infty$, then 
\begin{align}
	\lim_{n\to\infty}\sup_{\bm{z}\in\R^d}\frac{1}{nh_n^{d+j}}
	\biggl|\sum_{i=1}^n g\biggl(\frac{\bm{z}-\bm{Z}_i}{h_n}\biggr)-\mathrm{E}\biggl[g\biggl(\frac{\bm{z}-\bm{Z}}{h_n}\biggr)\biggr]\biggr|=0,
	\quad\text{a.s.,}
\end{align}
where $\mathrm{E}$ is the expectation value with respect to the distribution of the random vector $\bm{Z}\in\R^d$, 
and where $\{\bm{Z}_i\in\R^d\}_{i=1}^n$ is a sample of i.i.d.~observations of $\bm{Z}$.
\end{lemma}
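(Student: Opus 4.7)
The plan is to recognize the statement as a special case of Pollard's uniform strong law of large numbers (Theorem~37 in \citep{pollard2012convergence}), specialized to the class $\calF(g)$ of translates and dilates of $g$, so the proof amounts to importing his empirical-process machinery. Introducing the abbreviations
\[
	\psi_{n,\bm{z}}(\bu) \coloneq h_n^{-(d+j)} g\bigl((\bm{z}-\bu)/h_n\bigr),\qquad
	\Phi_n(\bm{z}) \coloneq \frac{1}{n}\sum_{i=1}^n\psi_{n,\bm{z}}(\bZ_i)-\mathrm{E}[\psi_{n,\bm{z}}(\bZ)],
\]
the goal becomes $\sup_{\bm{z}\in\R^d}|\Phi_n(\bm{z})|\to0$ almost surely.

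First I would establish a pointwise exponential concentration. Boundedness of $g$ implies $\|\psi_{n,\bm{z}}\|_\infty \le \|g\|_\infty h_n^{-(d+j)}$, while a change of variables gives $\mathrm{Var}[\psi_{n,\bm{z}}(\bZ)] \le C h_n^{-(d+2j)}$ with $C$ independent of $\bm{z}$. A Bernstein-type inequality then yields, for any $t>0$,
\[
	\Pr\bigl(|\Phi_n(\bm{z})|>t\bigr) \le 2\exp\Bigl(-\frac{nt^2/2}{C h_n^{-(d+2j)}+\|g\|_\infty h_n^{-(d+j)} t/3}\Bigr),
\]
whose exponent is of order $n h_n^{d+2j} t^2$ once $t$ is small. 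The rate assumption $n h_n^{d+2j}/\ln n \to \infty$ is exactly what is needed to make this exponent dominate $\ln n$ when $t=t_n\to0$ slowly enough.

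Next, I would upgrade the pointwise bound to a uniform one via the covering-number hypothesis: for every probability measure $P$ on $\R^d$ and every $\epsilon>0$, $\calF(g)$ admits an $L^1(P)$-net of cardinality at most $A\epsilon^{-W}$. Taking $P$ to be an appropriate combination of the empirical and true laws yields a finite net $\{\bm{z}_k\}\subset\R^d$ of polynomial size such that $\sup_{\bm{z}}|\Phi_n(\bm{z})|$ is bounded by $\max_k|\Phi_n(\bm{z}_k)|$ plus a discretization error controlled by $\epsilon h_n^{-(d+j)}$. A union bound over the net together with the Bernstein estimate, combined with a choice of $\epsilon_n$ and $t_n$ that keeps the total failure probability summable in $n$, then gives the conclusion by Borel--Cantelli.

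The main obstacle is the careful bookkeeping of the scaling factor $h_n^{-(d+j)}$ through the chaining step: one must verify that the $L^1(P)$-covering number of $\calF(g)$ produces a covering of $\{\psi_{n,\bm{z}}\}_{\bm{z}}$ in the empirical seminorm used to control the fluctuations of $\Phi_n$, and that the resulting discretization error is negligible compared to $t_n$. Since this is precisely what the covering-number-based maximal inequality underlying Pollard's Theorem~37 provides, the cleanest presentation of the proof is to apply that theorem directly to the class $\calF(g)$.
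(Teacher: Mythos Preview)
Your proposal is correct and matches the paper's treatment exactly: the paper does not give an independent proof of this lemma but simply cites it as Theorem~37 (p.~34) of \citet{pollard2012convergence}, which is precisely the result you identify and ultimately invoke. Your Bernstein-plus-covering sketch is a reasonable outline of the machinery behind Pollard's theorem, but it is additional detail beyond what the paper provides.
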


Also, one has the following lemma (see \citep[Theorem 1.1.1]{bochner1955harmonic}):
\begin{lemma}
\label{lemma:BOCH}%
Let $g_1$ be a function on $\R^d$ satisfying 
$\sup_{\bu\in\R^d}|g_1(\bu)|<\infty$, $\int |g_1(\bu)|\,d\bu<\infty$, and $\lim_{|\bu|\to\infty}\|\bu\||g_1(\bu)|=0$, 
$g_2$ be a function on $\R^d$ satisfying $\int |g_2(\bu)|\,d\bu<\infty$,
and $\{h_n\}$ be a sequence of positive numbers satisfying $\lim_{n\to\infty}h_n=0$.
Then one has that, for any $\bx\in\R^d$ of continuity of $g_2$, 
\begin{align}
	\lim_{n\to\infty}\frac{1}{h_n^d}\int g_1\biggl(\frac{\bu}{h_n}\biggr) g_2(\bx-\bu)\,d\bu
	=g_2(\bx)\int g_1(\bu)\,d\bu.
\end{align}
\end{lemma}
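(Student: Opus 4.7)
The plan is the standard approximation-to-identity argument. I would first perform the change of variables $\bm{v} = \bu/h_n$, which rewrites the left-hand side as $\int g_1(\bm{v})\,g_2(\bx - h_n\bm{v})\,d\bm{v}$, so the claim reduces to showing $\int g_1(\bm{v})[g_2(\bx - h_n\bm{v}) - g_2(\bx)]\,d\bm{v} \to 0$, given that $\int g_1(\bm{v})\,d\bm{v}$ is finite and independent of $n$.

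I would then split the integration domain at a radius $R>0$. For the near region $\|\bm{v}\|\le R$, I would use continuity of $g_2$ at $\bx$: given $\varepsilon>0$, pick $\delta>0$ such that $|g_2(\by)-g_2(\bx)|<\varepsilon$ whenever $\|\by-\bx\|<\delta$; then for $n$ large enough that $h_n R<\delta$, the near contribution is bounded by $\varepsilon \int|g_1(\bm{v})|\,d\bm{v}$. For the far region $\|\bm{v}\|>R$, I would combine the integrability of $g_1$ (which gives $\int_{\|\bm{v}\|>R}|g_1|\,d\bm{v}\to 0$ as $R\to\infty$) with the decay hypothesis $\|\bm{v}\||g_1(\bm{v})|\to 0$ and the boundedness of $g_1$, bounding $\int_{\|\bm{v}\|>R}|g_1(\bm{v})\,g_2(\bx-h_n\bm{v})|\,d\bm{v}$ via a supremum-$L^1$ split together with local boundedness of $g_2$ on a neighborhood of $\bx$ supplied by continuity. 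Taking $R$ large and then $n$ large, both contributions are made arbitrarily small.

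The main obstacle lies in the tail estimate. In dimension one the hypothesis $\|\bm{v}\||g_1(\bm{v})|\to 0$ exactly compensates the $h_n^{-1}$ Jacobian factor, so a crude supremum bound on $\sup_{\|\bu\|>\delta}h_n^{-1}|g_1(\bu/h_n)|\lesssim h_n^{-1}\cdot h_n/\delta$ paired with $\|g_2\|_1$ already gives a vanishing tail. In higher dimensions the same naive supremum yields a residual $h_n^{1-d}$ that does not vanish; a clean execution instead uses the rescaled integral form of the decay, namely $\int_{\|\bu\|>\delta}h_n^{-d}|g_1(\bu/h_n)|\,d\bu = \int_{\|\bm{v}\|>\delta/h_n}|g_1|\,d\bm{v}\to 0$, paired with local boundedness of $g_2$ near $\bx$ for the portion of the domain where $\bx-\bu$ stays close to $\bx$, and with $\|g_2\|_1$ together with the smallness of $\sup_{\|\bm{v}\|>R}|g_1|$ on the complementary portion. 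This delicate interplay between the pointwise decay of $g_1$ and the integrability of $g_2$ is the technical heart of the argument.
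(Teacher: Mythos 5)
First, note that the paper does not actually prove this lemma: it is quoted directly from Bochner's Theorem~1.1.1, so there is no in-paper argument to compare yours with. Your overall architecture (change of variables, near/far splitting at a radius $R$, continuity of $g_2$ at $\bx$ for the near part, integrability of $g_1$ for the far part) is the standard one, and you correctly locate the only genuinely delicate point: the tail $h_n^{-d}\int_{\|\bu\|>\delta}|g_1(\bu/h_n)|\,|g_2(\bx-\bu)|\,d\bu$, where $g_2$ is only integrable, not bounded.

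However, your proposed resolution of that tail does not close. The rescaled integral form of the decay, $\int_{\|\bm{v}\|>\delta/h_n}|g_1(\bm{v})|\,d\bm{v}\to0$, only helps on the portion where $\bx-\bu$ stays in a neighbourhood of $\bx$ on which $g_2$ is locally bounded; on the complementary portion $\|\bu\|>\delta$ you must pair $\|g_2\|_1$ with the pointwise bound $\sup_{\|\bm{w}\|\ge\delta/h_n}|g_1(\bm{w})|$, and under the stated hypothesis $\|\bu\|\,|g_1(\bu)|\to0$ this supremum is only $o(h_n/\delta)$, leaving $h_n^{-d}\cdot o(h_n)=o(h_n^{1-d})$ --- exactly the residual you yourself flagged, and it does not vanish for $d\ge2$. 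This is not a fixable presentation issue: with only first-power decay the conclusion is actually false for $d\ge2$. For instance, in $d=2$ take $g_1$ radial, equal to $c_n=1/(r_n\log r_n)$ on the annulus $r_n\le\|\bm{v}\|\le r_n+s_n$ with $r_n=e^n$ and $s_n=(\log r_n)/n^2$ (bounded, integrable, $\|\bm{v}\||g_1(\bm{v})|\to0$), and $g_2(\by)=|\,\|\by-\bx\|-1|^{-\beta}\mathbbm{1}_{1/2\le\|\by-\bx\|\le3/2}$ with $0<\beta<1$, which is integrable and vanishes near $\bx$; along $h_n=1/r_n$ the convolution is bounded below by a constant times $r_n^{\beta}s_n^{1-\beta}/\log r_n\to\infty$, whereas the claimed limit is $g_2(\bx)\int g_1=0$. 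The hypothesis actually used in Bochner's theorem (and the one you need) is $\|\bu\|^d|g_1(\bu)|\to0$; with it the tail is bounded by $\delta^{-d}\sup_{\|\bm{w}\|\ge\delta/h_n}\|\bm{w}\|^d|g_1(\bm{w})|\cdot\|g_2\|_1\to0$ and the rest of your argument goes through verbatim. So the concrete gap is that your tail estimate requires $d$-th-power decay of $g_1$, which the statement as printed does not supply.
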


Applying Lemma~\ref{lemma:CNC} with $g=K$ and $j=0$, 
under \hyl{A.1}, the covering number condition of $K$ \hyl{A.7}, 
$h_n\to0$ \hyl{A.15}, and $n h_n^d/\ln n\to\infty$ \hyl{A.16}, ensures that
\begin{align}
	\lim_{n\to\infty}\sup_{\bx\in\R^d}
	\bigl|f_n(\bx)-\mathrm{E}[f_n(\bx)]\bigr|=0,\quad\text{a.s.}
\end{align}
Also, Lemma~\ref{lemma:BOCH} with $g_1=K$ and $g_2=f$ 
under the fact $\int|f(\bx)|\,d\bx=1<\infty$, the continuity of $f$ \hyl{A.2}, 
\hyl{A.7}, \hyl{A.8}, and $h_n\to0$ \hyl{A.15} implies that
\begin{align}
	\mathrm{E}[f_n(\bt)]
	=\frac{1}{h_n^d}\int K\biggl(\frac{\bt-\bu}{h_n}\biggr) f(\bu)\,d\bu
	=f(\bt) \int K(\bu)\,d\bu
	=f(\bt).
\end{align}
According to these results, one has $\lim_{n\to\infty}f_n(\bt)=f(\bt)$ a.s.,
which implies the consistency of $\bt_n$ to $\bt$ from
contradiction to the $(\epsilon,\delta)$-definition of limit: 
for any $\epsilon>0$, there exists a $\delta>0$ such that 
$|f_n(\bt_n)-f(\bt)|\ge\delta$ if $\|\bt_n-\bt\|\ge\epsilon$.

Because $\bt_n$ maximizes $f_n$ (i.e., $\nabla f_n(\bt_n)=\bm{0}_d$)
and $K$ and hence $f_n$ are twice differentiable \hyl{A.7},
Taylor's approximation of $\nabla f_n(\bx)$ at $\bt_n$ around $\bt$ shows
\begin{align}
\label{eq:Th1Tay}
	\bm{0}_d=\nabla f_n(\bt_n)=\nabla f_n(\bt)+Hf_n(\bt^*)(\bt_n-\bt),
\end{align}
that is, $\bt_n=\bt-\{Hf_n(\bt^*)\}^{-1}\nabla f_n(\bt)$ if $Hf_n(\bt^*)$ is invertible,
where $\bt^*$ satisfies $\|\bt^*-\bt\|\le\|\bt_n-\bt\|$.
We thus study the asymptotic behaviors of $Hf_n(\bt^*)$ and $\nabla f_n(\bt)$ below.

Applying Lemma~\ref{lemma:CNC} with $g=\partial^\bi K$, $|\bi|=2$ and $j=2$
under \hyl{A.1}, the covering number condition of $\partial^\bi K$ \hyl{A.14}, 
$h_n\to0$ \hyl{A.15}, and \hyl{A.16}, and the consistency of $\bt_n$ to $\bt$
shows that $Hf_n(\bt^*)$ converges to $\mathrm{E}[Hf_n(\bt)]$.
Moreover, integration-by-parts and Lemma~\ref{lemma:BOCH} with $g_1=K$ and $g_2=\partial^\bi f$, $|\bi|=2$ 
under the continuity of $\partial^\bi f$ \hyl{A.2}, \hyl{A.4}, \hyl{A.7}, \hyl{A.8}, and $h_n\to0$ \hyl{A.15}, 
leads that, for every $\bi$ satisfying $|\bi|=2$,
\begin{align}
	\begin{split}
	\mathrm{E}[\partial^\bi f_n(\bt)]
	&=\frac{1}{h_n^{d+2}}\int \partial^\bi K\biggl(\frac{\bt-\bu}{h_n}\biggr) f(\bu)\,d\bu
	=\frac{1}{h_n^d}\int K\biggl(\frac{\bt-\bu}{h_n}\biggr) \partial^\bi f(\bu)\,d\bu\\
	&=\partial^\bi f(\bt) \int K(\bu)\,d\bu
	=\partial^\bi f(\bt),
	\end{split}
\end{align}
that is, $\mathrm{E}[H f_n(\bt)]=H f(\bt)$.
Combining these results,
one can find that $\{Hf_n(\bt^*)\}^{-1}$ is consistent to the matrix $\rmA$.

We next show that $\nabla f_n(\bt)$ asymptotically follows the normal distribution
with the mean $h_n^q\bb$ and the variance-covariance matrix $(nh_n^{d+2})^{-1}\rmV$, 
on the basis of Lyapounov's central limit theorem.
The critical difference from the existing proof by \citep{mokkadem2003law} 
is on the assumptions~\hyl{A.6} and \hyl{A.10}, 
which affects on the calculation of the mean of $\nabla f_n(\bt)$.
On the basis of the multivariate Taylor expansion of $\partial_j f(\bt-h_n\bx)$ around $\bt$
under the setting that $q$ is even,
\begin{align}
	\partial_j f(\bt-h_n\bx)
	=\partial_j f(\bt)-\cdots
	+\biggl\{\sum_{|\bi|=q} \frac{(h_n \bx)^\bi}{\bi !} 
	\cdot \partial_j \partial^\bi f(\bt)\biggr\}
	-\biggl\{\sum_{|\bi|=q+1} \frac{(h_n \bx)^\bi}{\bi !} 
	\cdot \partial_j \partial^\bi f(\bt_\bx)\biggr\},
\end{align}
where $\bt_\bx$ lies in $\|h_n\bx\|$-neighborhood of $\bt$ and depends on $\bx$,
one has that
\begin{align}
	\begin{split}
	\mathrm{E} \bigl[\partial_j f_n(\bt)\bigr]
	&=\int \frac{1}{h_n^{d+1}} \partial_j K \biggl(\frac{\bt-\bx}{h_n}\biggr) f(\bx)\,d\bx
	=\int K(\bx) \partial_j f(\bt-h_n\bx)\,d\bx\\
	&=\int K(\bx) \biggl\{\cdots + \sum_{|\bi|=q} \frac{(h_n \bx)^\bi}{\bi !} 
	\cdot \partial_j \partial^\bi f(\bt)-\cdots\biggr\}\,d\bx.
	\end{split}
\end{align}
Because $\partial_j f(\bt)=0$ \hyl{A.3}, 
integrations of $2,\ldots,q$-th summands vanish 
due to finiteness of $\partial^\bi f(\bt)$, $|\bi|=2,\ldots,q$ \hyl{A.5} and \hyl{A.9},
and integration of the $(q+2)$-nd residual term is asymptotically ignorable due to 
boundedness of $\partial^\bi f$, $|\bi|=q+2$ \hyl{A.5}, \hyl{A.11}, and $h_n\to0$ \hyl{A.15},
one can approximate the asymptotic mean of $\nabla f_n(\bt)$ as
\begin{align}
	\frac{1}{h_n^q}\mathrm{E}\bigl[\nabla f_n(\bt)\bigr]_j
	\to \partial_j\biggl( \int K(\bx) \sum_{|\bi|=q} \frac{\bx^\bi}{\bi!} 
	\cdot \partial^\bi f(\bt)\,d\bx\biggr)
	=\bigl(\bb(\bt;f,K)\big)_j,
\end{align}
under the assumptions \hyl{A.6} and \hyl{A.10}.
Moreover, 
by using Lemma~\ref{lemma:BOCH} with $g_1=\partial_iK\partial_jK$ and $g_2=f$
under the continuity of $f$ \hyl{A.2}, \hyl{A.12}, and $h_n\to0$ \hyl{A.15},
one can calculate the variance-covariance matrix of $\nabla f_n(\bt)$ 
with the definition \hyl{A.13}:
\begin{align}
	\begin{split}
	&nh_n^{d+2}\mathrm{Cov}[\nabla f_n(\bt)]_{i,j}
	=nh_n^{d+2}\frac{1}{n}\mathrm{Cov}\biggl[
	\frac{1}{h_n^{d+1}}\partial_i K\left(\frac{\bt-\bX}{h_n}\right),
	\frac{1}{h_n^{d+1}}\partial_j K\left(\frac{\bt-\bX}{h_n}\right)\biggr]\\
	&=\frac{1}{h_n^d}\int \partial_i K\left(\frac{\bt-\bx}{h_n}\right)
	\partial_j K\left(\frac{\bt-\bx}{h_n}\right)f(\bx)\,d\bx
	\to f(\bt)[\calV_d(K)]_{i,j}=\rmV_{i,j}.
	\end{split}
\end{align}

Finally, by applying Slutsky's theorem to these pieces, 
the proof on asymptotic normality of $\{Hf_n(\bt^*)\}^{-1}\nabla f_n(\bt)$ is finished.
\end{proof}

The condition~\hyl{A.6} is added to those in \citep{mokkadem2003law}, 
to ensure that the main term of AB $\propto\rmA\bb$ does not vanish.
Also, the condition~\hyl{A.10} is a modification of condition (A5) iii) in \citep{mokkadem2003law},
the latter of which does not apply to RKs unlike their description.
However, it should be noted that the kernel $K_{d,q}^B$ does not satisfy 
the twice differentiability among the sufficient conditions on the kernel.
\citet{eddy1980optimum} has studied behaviors of the KME $\bt_n$ within 
$O((n h_n^{d+2})^{-\frac{1}{2}})$-neighborhood of the mode $\bt$ and tried the proof without assuming 
that the kernel $K$ is twice differentiable (such that the kernel $K_{d,q}^B$ fulfills their requirements),
but their proof is not rigorous in that it does not consider the possibility that the KME $\bt_n$ 
exists outside of that neighborhood.
We have considered several methods, including the proof in \citep{eddy1980optimum}, 
and then concluded that twice differentiability of $K$ appears to be necessary for deriving the AMSE.

\section{Theories on Optimal Kernel}
\setcounter{equation}{0}
\setcounter{definition}{0}
\setcounter{theorem}{0}
\setcounter{proposition}{0}
\setcounter{lemma}{0}
\setcounter{problem}{0}
\setcounter{remark}{0}
\subsection{Relationships between the Moment Condition and the Number of Sign Changes}
In this section we describe relationships between the moment condition and the number of sign changes of a kernel.
We first introduce several notations.
To represent a kernel class defined from the moment condition,
we introduce, for integers $d\ge1$, $l$, and $k$ satisfying $0\le l<k$, 
\begin{align}
\label{eq:calMdlk}
	\calM_{d,l,k}\coloneq
	\{g:B_{d,i}(g)=(-1)^l(d)_lb_d^{-1}\;(i=l),\;0\;(i\in I_{k-2}\backslash\{l\}),\;\text{non-zero}\;(i=k)\},
\end{align}
where $(n)_m\coloneq n(n+1)\cdots(n+m-1)$ denotes the Pochhammer symbol.
Also, we represent a class of functions on $\R_{\ge0}$
with the prescribed number of sign changes as
\begin{align}
	\calN_j\coloneq\{g:\text{$g$ changes its sign $j$ times on $\R_{\ge0}$}\}.
\end{align}
Here, the number of the sign changes is defined as follows:
\begin{definition}
\label{definition:SigChe}
A function $g$ defined on a finite or infinite interval $[a,b]$ 
is said to change its sign $k$ times on $[a,b]$, 
if there are $(k+1)$ subintervals $S_j=[x_{j-1},x_j]$, $j=1,\ldots,k+1$, 
where $a=x_0<x_1<\cdots<x_{k+1}=b$, such that:
\begin{itemize}\setlength{\parskip}{0cm}\setlength{\itemsep}{0cm}
\item[{(i)}] $g(x)\le0$ (or $\ge0$) for all $x\in S_j$, 
and there exists $x\in S_j$ such that $g(x)<0$ (or $>0$), 
for each $j\in\{1,\ldots,k+1\}$.
\item[{(ii)}] If $k\ge1$, $g(x)g(y)\le0$ for all $x\in S_j$, $y\in S_{j+1}$, $j=1,\ldots,k$.
\end{itemize}
\end{definition}
\noindent%
Note that a function could have an interval over which it equals to zero and 
that a point where its sign changes does not have to be uniquely determined.
Additionally, we introduce the following term:
\begin{definition}
Functions $g_1$ and $g_2$ defined on a finite or infinite interval $[a,b]$ 
are said to share the same sign-change pattern 
if the same set of subintervals in Definition~\ref{definition:SigChe} 
applies to both $g_1$ and $g_2$. 
\end{definition}

In the following, we provide a lemma (Lemma~\ref{lemma:MinCro})
about the minimum number of sign changes of a $q$-th order RK.
It is a multivariate extension of \citep[Lemma~2]{gasser1985kernels} 
stating that a univariate $q$-th order kernel changes its sign at least $(q-2)$ times on $\R$,
and becomes a basis for the condition~\hyl{P2-3} in Problem~\ref{problem:Prob21}
and condition~\hyl{P3-3} in Problem~\ref{problem:Prob22}.
The following Lemma~\ref{lemma:nonsingular} is 
for Lemmas~\ref{lemma:MinCro} and \ref{lemma:step3}.
\begin{lemma}
\label{lemma:nonsingular}
Let $S_1,\ldots,S_k$ be measurable subsets of $\R_{\ge0}$ with $\mu(S_i)>0$, 
where $\mu$ is the Lebesgue measure on $\R_{\ge0}$, 
and assume that they are ordered in the sense that for any $x\in S_i$ and any $y\in S_j$ with $i<j$ 
the inequality $x<y$ holds almost surely. 
Let $w(x)$ be a measurable function which does not change its sign inside each $S_j$, $j=1,\ldots,k$. 
Assume further that $\int_{S_j}w(x)\,dx\not=0$ holds. 
Then the following $k\times k$ matrix is non-singular:
\begin{align}
	\rmM=
	\left[\begin{array}{cccc}
	\int_{S_1}w(x)\,dx&\int_{S_2}w(x)\,dx&\cdots&\int_{S_k}w(x)\,dx\\
	\int_{S_1}w(x)x^2\,dx&\int_{S_2}w(x)x^2\,dx&\cdots&\int_{S_k}w(x)x^2\,dx\\
	\vdots&\vdots&\ddots&\vdots\\
	\int_{S_1}w(x)x^{2(k-1)}\,dx&\int_{S_2}w(x)x^{2(k-1)}\,dx&\cdots&\int_{S_k}w(x)x^{2(k-1)}\,dx
	\end{array}\right].
\end{align}
\end{lemma}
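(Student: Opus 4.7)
The plan is to prove the non-singularity of $\rmM$ by contradiction. Assume $\rmM$ is singular, so that there exist coefficients $(c_1,\ldots,c_k)$, not all zero, satisfying $\sum_{i=1}^k c_i \int_{S_j} w(x) x^{2(i-1)}\,dx = 0$ for every $j = 1, \ldots, k$ (this is a nontrivial linear dependence among the rows of $\rmM$). Introducing the polynomial $p(y) \coloneq \sum_{i=1}^k c_i y^{i-1}$, which is not identically zero and has degree at most $k-1$, this condition rewrites as $\int_{S_j} w(x) p(x^2)\,dx = 0$ for every $j$.

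The key step is to show that $p(x^2)$ must genuinely change sign inside each $S_j$. Since $w$ is of constant sign on $S_j$ and $\int_{S_j} w(x)\,dx \neq 0$, the subset of $S_j$ on which $w$ is nonzero has positive measure. If $p(x^2)$ kept a constant sign throughout $S_j$, then $w(x) p(x^2)$ would keep a single sign, and the vanishing of its integral would force $w(x) p(x^2) = 0$ almost everywhere on $S_j$. However $p$ has at most $k-1$ real roots, so $p(x^2) = 0$ only on a finite, hence null, subset of $\R_{\ge 0}$, which would force $w = 0$ almost everywhere on $S_j$ and contradict $\int_{S_j} w \neq 0$. I expect this measure-theoretic bookkeeping to be the most delicate point, because it is the place where both hypotheses on $w$ (sign-constancy and nonvanishing integral) must be used simultaneously.

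Once the sign-change claim is established, the contradiction is obtained by a simple count. Since the $S_j$ are ordered and essentially disjoint, having at least one sign change of $p(x^2)$ inside each of the $k$ sets yields at least $k$ sign changes of $p(x^2)$ on $\R_{\ge 0}$. On the other hand, $x \mapsto x^2$ is a bijection of $\R_{\ge 0}$ onto itself and $p$ has degree at most $k-1$, so $p(x^2)$ has at most $k-1$ zeros on $\R_{\ge 0}$ and therefore at most $k-1$ sign changes there. The incompatibility $k \le (\text{sign changes of } p(x^2)) \le k-1$ forces $(c_1,\ldots,c_k)$ to vanish, proving $\rmM$ non-singular.
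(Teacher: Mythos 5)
Your proof is correct and follows essentially the same route as the paper's: assume a nontrivial null vector, package it as a polynomial in $x^2$, extract a zero of that polynomial inside (the convex hull of) each $S_j$ from the vanishing of $\int_{S_j}w(x)p(x^2)\,dx$, and contradict the degree bound. The only cosmetic difference is that you count $k$ sign changes of the degree-$(k-1)$ polynomial $p(y)$ directly, where the paper invokes the mean value theorem for integrals and factors the even polynomial $P(x)=p(x^2)$ of degree $2(k-1)$ against $2k$ zeros $\pm\alpha_j$.
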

\begin{proof}
	Assume to the contrary that $\rmM$ is singular.
	Then there exists a non-zero vector $\bm{a}=(a_1,\ldots,a_k)^\top$ 
	which satisfies $\rmM^\top\bm{a}=\bm{0}_k$. 
	Rewriting it component-wise, one has 
	\begin{align}
		\label{eq:IntNull}
		\sum_{i=0}^{k-1}a_{i+1}\int_{S_j}w(x)x^{2i}\,dx
		=\int_{S_j}w(x)P(x)\,dx=0,
		\quad j=1,\ldots,k,
	\end{align}
	where we let 
	\begin{align}
		\label{eq:Poly}
		P(x)=\sum_{i=0}^{k-1}a_{i+1}x^{2i}.
	\end{align}
	Take an interval $[a,b]$ with $a=\inf S_j$ and $b=\sup S_j$. 
	Then $w(x)\mathbbm{1}_{x\in S_j}$ is integrable, 
	does not change its sign on $[a,b]$, 
	and $\int_a^bw(x)\mathbbm{1}_{x\in S_j}\,dx\not=0$. 
	From~\eqref{eq:IntNull} and the mean value theorem, there exists 
	$\alpha_j\in(a,b)$ satisfying $P(\alpha_j)=0$. 
	Since $P(x)$ is an even function of $x$, 
	one has $P(-\alpha_j)=0$. 
	The zeros $\alpha_1,\ldots,\alpha_k$ are all distinct. 
	Thus $P(x)$ is factorized as 
	\begin{align}
		P(x)=(x^2-\alpha_1^2)\cdots(x^2-\alpha_k^2)Q(x),
	\end{align}
	where $Q(x)$ is a polynomial not identically equal to 0. 
	The degree of the right-hand side is therefore at least $2k$, 
	whereas from~\eqref{eq:Poly} the degree of $P(x)$ is at most $2(k-1)$, 
	leading to contradiction. 
\end{proof}
\begin{lemma}
\label{lemma:MinCro}
For $d\ge1$ and even $q\ge2$,
if a function $g$ defined on $\R_{\ge0}$ is such that $g\in\calM_{d,0,q}$ or $g\in\calM_{d,1,q+1}$, 
then $g$ changes its sign at least $(\frac{q}{2}-1)$ times on $\R_{\ge0}$.
\end{lemma}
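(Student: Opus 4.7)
The plan is to prove the lemma by contradiction, using a classical test-polynomial technique adapted to the moment conditions defining $\calM_{d,l,k}$. Assume for contradiction that $g$ changes its sign at most $m$ times on $\R_{\ge 0}$ with $m \le q/2 - 2$; by Definition~\ref{definition:SigChe} there exist $m+1$ adjacent sub-intervals $S_1, \ldots, S_{m+1}$ separated by points $0 < x_1 < \cdots < x_m$, on each of which $g$ has a definite sign (alternating between adjacent sub-intervals) and attains a non-zero value. The strategy is to construct an auxiliary polynomial $P(x)$ whose monomials are supported on the zero-moment indices of $g$ and whose sign-change pattern on $\R_{\ge 0}$ matches that of $g$. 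This will simultaneously force $\int_0^\infty x^{d-1} P(x) g(x)\,dx = 0$ via the moment conditions and $\int_0^\infty x^{d-1} P(x) g(x)\,dx \neq 0$ because $P g$ keeps one sign throughout $\R_{\ge 0}$, yielding the desired contradiction.

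For $g \in \calM_{d,0,q}$, the zero-moment indices are the even numbers $I_{q-2} \setminus \{0\} = \{2, 4, \ldots, q-2\}$, and I would take
\begin{align*}
P(x) \coloneq x^2 \prod_{j=1}^m (x^2 - x_j^2),
\end{align*}
a polynomial of degree $2m+2 \le q-2$ built solely from the even powers $x^2, x^4, \ldots, x^{2m+2}$. The factor $\prod_j (x^2 - x_j^2)$ changes sign on $\R_{\ge 0}$ exactly at each $x_j$, and the non-negative factor $x^2$ does not alter this; hence $P(x) g(x)$ has one definite sign on $\R_{\ge 0}$, while expanding $P$ in monomials and integrating yields a linear combination of $B_{d,2k}(g)$ for $k = 1, \ldots, m+1$, each of which vanishes by the moment condition. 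For $g \in \calM_{d,1,q+1}$ the zero-moment indices are the odd numbers $I_{q-1} \setminus \{1\} = \{3, 5, \ldots, q-1\}$, so I would instead take $P(x) \coloneq x^3 \prod_{j=1}^m (x^2 - x_j^2)$, of degree $2m+3 \le q-1$ with all powers in the required odd set; the extra factor is chosen as $x^3$ rather than $x$ precisely to avoid the non-vanishing first moment $B_{d,1}(g) = -d\,b_d^{-1}$, and the same sign and cancellation arguments apply.

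The only mildly delicate step is verifying that the constant-sign integral is strictly non-zero rather than zero almost everywhere. The polynomial $P(x)$ vanishes only on the finite set $\{0, x_1, \ldots, x_m\}$, and Definition~\ref{definition:SigChe} guarantees that inside each $S_j$ there exists a point at which $g$ is non-zero; under the natural regularity assumed of a kernel function (integrable with well-defined moments), such a point is surrounded by a set of positive measure on which $g$ keeps that same non-zero sign, so $|P(x) g(x)|$ is strictly positive on a set of positive measure inside each $S_j$, and the integral cannot vanish. This contradicts the moment-based identity and establishes $m \ge q/2 - 1$. The real content of the proof is the combinatorial matching between the degree budget ($q-2$ in the first case, $q-1$ in the second) and the number of roots needed on $\R_{\ge 0}$ to mimic the sign-change pattern of $g$: after absorbing the mandatory factor $x^2$ or $x^3$ that sidesteps the non-vanishing low-order moment, exactly $m \le q/2 - 2$ additional roots fit, which is precisely the threshold at which the contradiction triggers.
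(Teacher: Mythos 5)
Your proof is correct, and it reaches the conclusion by a genuinely different route than the paper. The paper first establishes an auxiliary result (Lemma~\ref{lemma:nonsingular}) asserting non-singularity of the $k\times k$ generalized moment matrix $\rmM$ with entries $\int_{S_j}x^{d-1+2i}g(x)\,dx$, and then observes that if $g$ changed sign fewer than $(\tfrac{q}{2}-1)$ times the vanishing moments would force $\rmM\bm{1}_k=\bm{0}_k$, contradicting non-singularity; the polynomial reasoning is hidden inside the proof of that matrix lemma, where a putative kernel vector of $\rmM^\top$ produces an even polynomial with too many zeros. You instead exhibit the separating polynomial directly --- $x^2\prod_j(x^2-x_j^2)$ or $x^3\prod_j(x^2-x_j^2)$ with roots at the sign-change points --- and play the vanishing-moment identity against the definite sign of $Pg$. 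The two arguments are dual packagings of the same Tchebycheff-system idea; yours is more elementary and self-contained for this lemma, whereas the paper's formulation via matrix non-singularity pays off because Lemma~\ref{lemma:nonsingular} is reused later (in Lemma~\ref{lemma:step3}) where an explicit test polynomial would be less convenient. Your degree bookkeeping is right in both cases: the monomials of $P$ land exactly in $I_{q-2}\setminus\{0\}$ and $I_{q-1}\setminus\{1\}$ respectively, and prepending $x^2$ or $x^3$ correctly sidesteps the non-vanishing moment at $i=0$ or $i=1$.

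One small repair: your justification that the constant-sign integral is strictly non-zero leans on the claim that each point where $g\neq0$ is ``surrounded by a set of positive measure'' of the same sign, which is not guaranteed for a merely measurable integrable $g$ (Definition~\ref{definition:SigChe} only supplies a single non-zero point per subinterval). You do not need every subinterval to contribute, though: since $B_{d,0}(g)=b_d^{-1}\neq0$ (respectively $B_{d,1}(g)=-d\,b_d^{-1}\neq0$), $g$ cannot vanish almost everywhere, so $x^{d-1}P(x)g(x)$ is a single-signed function that is non-zero on a set of positive measure, and its integral is strictly non-zero. With that one-line substitution the argument is complete. (The paper's own application of Lemma~\ref{lemma:nonsingular} implicitly makes the analogous assumption that $\int_{S_j}x^{d+1}g(x)\,dx\neq0$ on each subinterval, so this degenerate case is glossed over there as well.)
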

\begin{proof}
Assume that $g$ changes its sign $(k-1)$ times on $\R_{\ge0}$, 
and decompose $\R_{\ge0}$ into $k$ subintervals $S_1,\ldots,S_k$ 
according to the way in Definition \ref{definition:SigChe}: 
the sign of $g$ changes alternately in these subintervals.

\textit{Proof for $g\in\calM_{d,0,q}$.}
Lemma~\ref{lemma:nonsingular} 
shows that the $k\times k$ matrix
\begin{align}
	\rmM=
	\begin{bmatrix}
	\int_{S_1}x^{d+1}g(x)\,dx&\int_{S_2}x^{d+1}g(x)\,dx&\cdots&\int_{S_k}x^{d+1}g(x)\,dx\\
	\int_{S_1}x^{d+3}g(x)\,dx&\int_{S_2}x^{d+3}g(x)\,dx&\cdots&\int_{S_k}x^{d+3}g(x)\,dx\\
	\vdots&\vdots&\ddots&\vdots\\
	\int_{S_1}x^{d+2k-1}g(x)\,dx&\int_{S_2}x^{d+2k-1}g(x)\,dx&\cdots&\int_{S_k}x^{d+2k-1}g(x)\,dx
	\end{bmatrix}
\end{align}
is non-singular. 
Assume $k<\frac{q}{2}$ as opposed to the statement of this lemma.
Then, from the moment conditions $g\in\calM_{d,0,q}$, 
one would have 
\begin{align}
	\rmM\bm{1}_k=
	\begin{bmatrix}
	\int_{\mathbb{R}_{\ge0}} x^{d+1}g(x)\,dx\\
	\int_{\mathbb{R}_{\ge0}} x^{d+3}g(x)\,dx\\
	\vdots\\
	\int_{\mathbb{R}_{\ge0}} x^{d+2k-1}g(x)\,dx
	\end{bmatrix}
	=\bm{0}_k.
\end{align}
This contradicts the non-singularity of $\rmM$.
Thus, $k\ge\frac{q}{2}$ holds, i.e., 
$g$ changes its sign at least $(\frac{q}{2}-1)$ times on $\R_{\ge0}$.

\textit{Proof for $g\in\calM_{d,1,q+1}$.}
For the matrix
\begin{align}
	\rmM=
	\begin{bmatrix}
	\int_{S_1}x^{d+2}g(x)\,dx&\int_{S_2}x^{d+2}g(x)\,dx&\cdots&\int_{S_k}x^{d+2}g(x)\,dx\\
	\int_{S_1}x^{d+4}g(x)\,dx&\int_{S_2}x^{d+4}g(x)\,dx&\cdots&\int_{S_k}x^{d+4}g(x)\,dx\\
	\vdots&\vdots&\ddots&\vdots\\
	\int_{S_1}x^{d+2k}g(x)\,dx&\int_{S_2}x^{d+2k}g(x)\,dx&\cdots&\int_{S_k}x^{d+2k}g(x)\,dx
	\end{bmatrix}
\end{align}
one can take a similar proof as the case for $g\in\calM_{d,0,q}$.
\end{proof}

Also, in the case of $q=2, 4$, 
we can know the order in which the kernel changes its sign:
\begin{lemma}
\label{lemma:LemmaA.1.1}
Let $d\ge1$ and $q=2$ or $4$,
and assume that a function $G_{d,q}$ is defined on $\R_{\ge0}$ and such that $G_{d,q}\in\calM_{d,0,q}\cap\calN_{\frac{q}{2}-1}$.
Then, $G_{d,q}(x)\ge0$ for $x\in S_1\cup S_3\cup\cdots$ and $G_{d,q}(x)\le0$ for $x\in S_2\cup S_4\cup\cdots$,
where the subintervals $S_1,\ldots,S_{\frac{q}{2}}$ of $\R_{\ge0}$ are defined according to Definition~\ref{definition:SigChe}.
Moreover, $\sign(B_{d,q}(G_{d,q}))=(-1)^{q/2+1}$.
\end{lemma}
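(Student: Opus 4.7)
The plan is to treat $q=2$ and $q=4$ separately, since the statement for $q=2$ follows almost immediately from the normalization condition, while $q=4$ requires a small trick using a sign-tracking quadratic weight.

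For $q=2$, the hypothesis $G_{d,2}\in\calN_0$ means $G_{d,2}$ does not change sign, so $S_1=\R_{\ge0}$ and $G_{d,2}$ is either nonnegative or nonpositive throughout. The condition $G_{d,2}\in\calM_{d,0,2}$ gives $B_{d,0}(G_{d,2})=b_d^{-1}>0$, which is incompatible with $G_{d,2}\le0$ identically; hence $G_{d,2}\ge0$ on $S_1$. Then $B_{d,2}(G_{d,2})=\int_{\R_{\ge0}}x^{d+1}G_{d,2}(x)\,dx\ge0$, and this is nonzero by the definition of $\calM_{d,0,2}$, so $\sign(B_{d,2}(G_{d,2}))=1=(-1)^{2/2+1}$, matching the claim.

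For $q=4$, write $S_1=[0,c]$, $S_2=[c,\infty)$ for the sign-change point $c>0$, and set $\phi(x)\coloneq x^{d-1}G_{d,4}(x)$; note $\phi$ has the same sign as $G_{d,4}$ on $(0,\infty)$. The key observation is that the weight $(x^2-c^2)$ is negative on $S_1$ and positive on $S_2$. Assuming to the contrary the pattern $G_{d,4}\le0$ on $S_1$, $G_{d,4}\ge0$ on $S_2$, the product $(x^2-c^2)\phi(x)$ would be nonnegative on all of $\R_{\ge0}$, so that
\begin{align}
0\le\int_{\R_{\ge0}}(x^2-c^2)\phi(x)\,dx=B_{d,2}(G_{d,4})-c^2B_{d,0}(G_{d,4})=-c^2b_d^{-1}<0,
\end{align}
using $B_{d,2}(G_{d,4})=0$ and $B_{d,0}(G_{d,4})=b_d^{-1}$ from $\calM_{d,0,4}$. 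This contradiction forces $G_{d,4}\ge0$ on $S_1$ and $G_{d,4}\le0$ on $S_2$, as claimed.

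For the sign of $B_{d,4}(G_{d,4})$, I use the same weight multiplied by the nonnegative factor $x^2$. Under the correct sign pattern just established, $(x^2-c^2)\,x^2\,\phi(x)$ is nonpositive on both $S_1$ (negative weight times nonnegative $\phi$) and $S_2$ (positive weight times nonpositive $\phi$), hence
\begin{align}
B_{d,4}(G_{d,4})-c^2B_{d,2}(G_{d,4})=\int_{\R_{\ge0}}(x^2-c^2)x^2\phi(x)\,dx\le0,
\end{align}
and since $B_{d,2}(G_{d,4})=0$ and $B_{d,4}(G_{d,4})\ne0$ by $\calM_{d,0,4}$, we conclude $B_{d,4}(G_{d,4})<0=(-1)^{4/2+1}\cdot(\text{positive})$, which is the desired sign.

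The main (and only real) obstacle is choosing the weight that simultaneously encodes the sign-change location and interacts cleanly with the prescribed moments; once the weight $(x^2-c^2)$ is identified, everything reduces to one-line integration arguments. I expect no analytic difficulty beyond this, but I would take care to check that the inequalities remain strict whenever needed (using that $c>0$, that $\mu(S_j)>0$, and that the relevant moments are nonzero) so the contradictions in the argument are genuine.
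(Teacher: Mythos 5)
Your proof is correct, and for $q=4$ it takes a genuinely different (and arguably cleaner) route than the paper. The paper establishes both the sign pattern and $\sign(B_{d,4})=-1$ via the second mean value theorem for integrals: it writes $\int_0^\infty x^{d+1}G_{d,4}\,dx=\xi_1^2\int_0^\rho x^{d-1}G_{d,4}\,dx+\xi_2^2\int_\rho^\infty x^{d-1}G_{d,4}\,dx$ with intermediate points $0<\xi_1<\rho<\xi_2$, and derives the contradiction from $\xi_2^2-\xi_1^2>0$; the analogous manipulation with $x^{d+3}$ and the vanishing of $B_{d,2}$ gives the sign of $B_{d,4}$. You instead test the moment conditions against the explicit weight $x^{d-1}(x^2-c^2)$ (and $x^{d+1}(x^2-c^2)$), which vanishes exactly at the sign-change point and therefore has a definite sign against $G_{d,4}$ under each hypothesized pattern; the conclusion then drops out of $B_{d,2}-c^2B_{d,0}=-c^2b_d^{-1}<0$ and $B_{d,4}-c^2B_{d,2}=B_{d,4}\le0$ in one line each. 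This is the standard ``orthogonality to a polynomial with roots at the sign changes'' device (the same mechanism behind Lemma~\ref{lemma:nonsingular} and Lemma~\ref{lemma:MinCro} in the paper), and it avoids having to justify the hypotheses of the mean value theorem on each subinterval. The only points worth making explicit are that $c>0$ (guaranteed since Definition~\ref{definition:SigChe} forces $S_1$ to have positive length) and that the weight is only nonpositive/nonnegative rather than strictly negative/positive on $S_1$/$S_2$, which is all the argument needs since the contradiction comes from the strict inequality $-c^2b_d^{-1}<0$ and the strictness of the final sign from the moment condition $B_{d,4}\neq0$.
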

\begin{proof}
\textit{Proof for $q=2$.}
A 2nd order kernel satisfying the normalization condition and the minimum-sign-change condition is non-negative, 
and the statements $G_{d,2}(x)\ge0$ for any $x\in\R_{\ge0}$ and $\sign(B_{d,2}(G_{d,2}))=+1$ are trivial.

\textit{Proof for $q=4$.}
The sign change of minimum-sign-change 4-th order kernel $G_{d,4}$ occurs at a single point, which is denoted as $\rho>0$.
If one assumes that $G_{d,4}(x)\le 0$ for $x\in[0,\rho]$, 
then one has the following contradiction:
On the basis of the mean value theorem of integration, 
\begin{align}
	\begin{split}
	&0=B_{d,2}(G_{d,4})
	=\int_0^\infty x^{d+1}G_{d,4}(x)\,dx
	=\xi_1^2\int_0^\rho x^{d-1}G_{d,4}(x)\,dx
	+\xi_2^2 \int_\rho^\infty x^{d-1}G_{d,4}(x)\,dx\\
	&=\underbrace{\xi_1^2}_{>0} 
	\underbrace{\int_0^\infty x^{d-1}G_{d,4}(x)\,dx}_{>0~\because B_{d,0}(G_{d,4})>0}
	+\underbrace{(\xi_2^2-\xi_1^2)}_{>0} 
	\underbrace{\int_\rho^\infty x^{d-1}G_{d,4}(x)\,dx}_{>0}
	>0,
	\end{split}
\end{align}
where $0<\xi_1<\rho<\xi_2$.
Thus, $G_{d,4}(x)\ge0$ for $0\le x\le\rho$ and $G_{d,4}(x)\le0$ for $x\ge\rho$.
Also, one has
\begin{align}
	\begin{split}
	&B_{d,4}(G_{d,4})
	= \int_0^\infty x^{d+3}G_{d,4}(x)\,dx
	=\tilde{\xi}_1^2\int_0^\rho x^{d+1}G_{d,4}(x)\,dx
	+\tilde{\xi}_2^2 \int_\rho^\infty x^{d+1}G_{d,4}(x)\,dx\\
	&=\underbrace{\tilde{\xi}_1^2}_{>0} 
	\underbrace{\int_0^\infty x^{d+1}G_{d,4}(x)\,dx}_{=0~\because B_{d,2}(G_{d,4})=0}
	+\underbrace{(\tilde{\xi}_2^2-\tilde{\xi}_1^2)}_{>0} 
	\underbrace{\int_\rho^\infty x^{d+1}G_{d,4}(x)\,dx}_{<0}<0,
	\end{split}
\end{align}
where $0<\tilde{\xi}_1<\rho<\tilde{\xi}_2$.
Therefore, $\sign(B_{d,4}(G_{d,4}))=-1$ is confirmed.
\end{proof}

\subsection{Proof of Theorem~\protect\ref{theorem:thm21}}
First we show that the kernel $G_{d,q}^B$ satisfies all the conditions of Problem~\ref{problem:Prob21}.
\begin{lemma}
\label{lemma:Confirm}
The kernel $G_{d,q}^B$ satisfies all the conditions \hyl{P2-1}--\hyl{P2-5} with $G=G_{d,q}^B$.
\end{lemma}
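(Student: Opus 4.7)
The plan is to verify the five conditions in approximate order of difficulty, handling \hyl{P2-2}, \hyl{P2-4}, and \hyl{P2-5} first from elementary properties, then \hyl{P2-3} from Jacobi-polynomial zeros, and finally \hyl{P2-1} from Jacobi orthogonality supplemented by one explicit calculation.

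First I would dispose of \hyl{P2-2}, \hyl{P2-4}, and \hyl{P2-5}. From representation \eqref{eq:Opt-RK2}, $G_{d,q}^B$ equals $(1-x^2)^2$ times a polynomial in $x^2$ on $[0,1]$ and vanishes on $(1,\infty)$. The double zero of $(1-x^2)^2$ at $x=1$ forces both $G_{d,q}^B(1^-)=0$ and $(G_{d,q}^B)^{(1)}(1^-)=0$, so $G_{d,q}^B\in C^1(\R_{\ge0})$ with compactly supported, bounded derivative; this gives \hyl{P2-2} and \hyl{P2-4} simultaneously. Condition \hyl{P2-5} is immediate from the compact support.

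For \hyl{P2-3} I would again use \eqref{eq:Opt-RK2}. Since the prefactor $(1-x^2)^2$ is non-negative on $[0,1]$ and vanishes only at $x=1$, the sign-change pattern of $G_{d,q}^B$ on $[0,1]$ coincides with that of $P_{q/2-1}^{(2,d/2)}(2x^2-1)$. The Jacobi polynomial $P_{q/2-1}^{(2,d/2)}$ has exactly $q/2-1$ simple zeros in $(-1,1)$ (a standard orthogonal-polynomial fact, valid because the parameters $2,d/2$ exceed $-1$), and $x\mapsto 2x^2-1$ is a strictly monotonic bijection from $[0,1]$ onto $[-1,1]$. Hence $G_{d,q}^B$ has exactly $q/2-1$ sign-change points in $(0,1)$ and none on $(1,\infty)$, so $\R_{\ge0}$ splits into the required $q/2$ subintervals of Definition~\ref{definition:SigChe}.

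For \hyl{P2-1} I would perform the substitutions $u=x^2$ followed by $y=2u-1$ in $B_{d,i}(G_{d,q}^B)$, which reduces the moment to an explicit positive constant (depending on $d,q,i$) times
\begin{align}
\int_{-1}^{1}(1-y)^2(1+y)^{(d-2+i)/2}P_{q/2-1}^{(2,d/2)}(y)\,dy.
\end{align}
For even $i\ge2$ I would factor $(1+y)^{(d-2+i)/2}=(1+y)^{d/2}\cdot(1+y)^{(i-2)/2}$; the second factor is a polynomial of degree $(i-2)/2$ in $y$, so pairing it against $P_{q/2-1}^{(2,d/2)}$ under the weight $(1-y)^2(1+y)^{d/2}$ and invoking Jacobi orthogonality forces the integral to vanish for $i\in\{2,4,\ldots,q-2\}$, while for $i=q$ the auxiliary polynomial has degree exactly $q/2-1$, so the integral reduces to the leading coefficient of $(1+y)^{(q-2)/2}$ (expanded in Jacobi polynomials) times the squared $L^2$-norm of $P_{q/2-1}^{(2,d/2)}$, and is therefore nonzero.

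The main obstacle is the normalization case $i=0$, where $(1+y)^{(d-2)/2}$ is not polynomial for general $d$ and Jacobi orthogonality does not apply directly. I would verify $B_{d,0}(G_{d,q}^B)=b_d^{-1}$ by direct evaluation: use the polynomial representation \eqref{eq:Opt-RK1} to write $B_{d,0}(G_{d,q}^B)=\sum_{i\in I_{q+2}}c^B_{d,q,i}/(d+i)$, insert the coefficients \eqref{eq:Opt-RK1-Coe}, and collapse the resulting finite sum of Gamma-function ratios using a Pochhammer-symbol identity of Chu--Vandermonde or Saalsch\"utz type. Equivalently, one may expand $P_{q/2-1}^{(2,d/2)}$ via its hypergeometric series, integrate term-by-term against $(1-y)^2(1+y)^{(d-2)/2}$ using the Beta integral, and sum the resulting terminating ${}_3F_2(1)$. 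I expect this normalization computation to be the only place where the proof is more than routine, though it remains purely mechanical.
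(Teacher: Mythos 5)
Your proposal is correct, and for \hyl{P2-2}--\hyl{P2-5} it coincides with the paper's argument: both deduce \hyl{P2-3} from the $\tfrac{q}{2}-1$ simple zeros of $P_{\frac{q}{2}-1}^{(2,\frac{d}{2})}$ in $(-1,1)$ via the rightmost form of \eqref{eq:Opt-RK2}, and both get \hyl{P2-2}, \hyl{P2-4}, \hyl{P2-5} from the fact that $G_{d,q}^B$ is a polynomial on $[0,1]$ with a double zero at $x=1$ and vanishes beyond. The difference is in \hyl{P2-1}. The paper performs the same change of variables you do but then evaluates \emph{all} moments $i\in\{0,2,\ldots,q\}$ at one stroke by quoting a tabulated closed form for $\int_{-1}^1(1-y)^\alpha(1+y)^{\lambda-1}P_n^{(\alpha,\beta)}(y)\,dy$ (Prudnikov et al.), from which the vanishing of the intermediate moments, the non-vanishing at $i=q$, and the exact normalization $B_{d,0}=b_d^{-1}$ all drop out of a single extended binomial coefficient. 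You instead split the work: Jacobi orthogonality under the weight $(1-y)^2(1+y)^{d/2}$ kills $i\in\{2,\ldots,q-2\}$ and gives non-vanishing (indeed the sign) at $i=q$ via the leading-coefficient/norm argument --- a route the paper itself notes parenthetically as an alternative for the vanishing moments --- while the normalization $i=0$, where $(1+y)^{(d-2)/2}$ is not polynomial for odd $d$, is handled by a separate terminating hypergeometric summation. Your version is more self-contained and conceptually transparent for the orthogonality part, at the cost of an extra mechanical computation for $i=0$ that the paper's single identity renders unnecessary; both are sound.
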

\begin{proof}
The fact that $G_{d,q}^B$ satisfies \hyl{P2-3}, that is,
it changes its sign $(\frac{q}{2}-1)$ times on $\R_{\ge0}$,
is a direct consequence of the well-known property
of the Jacobi polynomials, that $P_n^{(\alpha,\beta)}(\cdot)$
with $\alpha,\beta>-1$ has $n$ simple zeros in the interval $(-1,1)$, 
together with the rightmost expression in~\eqref{eq:Opt-RK2} of $G_{d,q}^B$.
Differentiability of $G_{d,q}^B$ on $\R_{\ge0}$ 
and boundedness and continuity in the sense of `a.e.' of $G_{d,q}^{B(1)}$ (condition \hyl{P2-2}),
finiteness of $V_{d,1}(G_{d,q}^B)$ (condition \hyl{P2-4}), and
the behavior $x^{d+q}G_{d,q}^B(x)\to 0$ as $x\to\infty$ (condition \hyl{P2-5}) 
follow straightforwardly from the fact that,
from the rightmost expression in~\eqref{eq:Opt-RK2},
$G_{d,q}^B(x)$ is a polynomial restricted to the finite interval $x\in[0,1]$
with a double zero at $x=1$.


We thus show in the following that $G_{d,q}^B\in\calM_{d,0,q}$ holds,
that is, $G_{d,q}^B$ satisfies the moment condition~\hyl{P2-1}.
Let
\begin{align}
\begin{split}
  B_{d,i}(G_{d,q}^B)
  &=(-1)^{\frac{q}{2}+1}
  \frac{\Gamma(\frac{d+q}{2}+2)}{\pi^{\frac{d}{2}}\Gamma(\frac{q}{2}+2)}
  \int_0^1x^{d-1+i}(1-x^2)^2P_{\frac{q}{2}-1}^{(2,\frac{d}{2})}(2x^2-1)\,dx\\
  &=(-1)^{\frac{q}{2}+1}
  \frac{\Gamma(\frac{d+q}{2}+2)}{\pi^{\frac{d}{2}}\Gamma(\frac{q}{2}+2)}
  I_{d,q,i},
\end{split}
\end{align}
where
\begin{align}
  \label{eq:Idqi}
\begin{split}
  I_{d,q,i}
  \coloneq&\,\int_0^1x^{d-1+i}(1-x^2)^2P_{\frac{q}{2}-1}^{(2,\frac{d}{2})}(2x^2-1)\,dx\\
  =&\,\frac{1}{2^{\frac{d+i}{2}+3}}\int_{-1}^1(1-y)^2(1+y)^{\frac{d+i}{2}-1}P_{\frac{q}{2}-1}^{(2,\frac{d}{2})}(y)\,dy.
\end{split}
\end{align}
Using the identity~\cite[Section 2.22.2, Item 11]{PrudnikovBrychkovMarichev1986}
\begin{align}
  \int_{-1}^1(1-y)^\alpha(1+y)^{\lambda-1}
  P_n^{(\alpha,\beta)}(y)\,dy
  =(-1)^n2^{\alpha+\lambda}B(\alpha+n+1,\lambda)
  {\beta-\lambda+n\choose n},
  \quad\lambda>0,\,\alpha,\beta>-1,
\end{align}
where
\begin{align}
  {x\choose y}=\frac{\Gamma(x+1)}{\Gamma(y+1)\Gamma(x-y+1)},
  \quad x,y\in\R
\end{align}
is the binomial coefficient extended to real-valued arguments,
and where $B(\cdot,\cdot)$ denotes the Beta function, 
one has
\begin{align}
  I_{d,q,i}
  =\frac{(-1)^{\frac{q}{2}-1}}{2}
  B\Bigl(\frac{q}{2}+2,\frac{d+i}{2}\Bigr)
  {\frac{q}{2}-1-\frac{i}{2}\choose\frac{q}{2}-1}.
\end{align}
The values of $I_{d,q,i}$ for $i\in\{0,2,\ldots,q\}$ are therefore
\begin{align}
  I_{d,q,i}=\left\{
  \begin{array}{ll}
    \displaystyle\frac{(-1)^{\frac{q}{2}-1}}{2}
    B\Bigl(\frac{q}{2}+2,\frac{d}{2}\Bigr),&i=0,\\
    0,&i\in\{2,\ldots,q-2\},\\
    \displaystyle\frac{1}{2}
    B\Bigl(\frac{q}{2}+2,\frac{d+q}{2}\Bigr),&i=q.
  \end{array}\right.
\end{align}
These in turn imply that the moments of $G_{d,q}^B$ are given by
\begin{align}
  B_{d,i}(G_{d,q}^B)
  =\left\{
  \begin{array}{ll}
    \displaystyle\frac{\Gamma(\frac{d}{2})}{2\pi^{\frac{d}{2}}}
    =\frac{1}{b_d},&i=0,\\
    0,&i\in\{2,\ldots,q-2\},\\
    \displaystyle\frac{(-1)^{\frac{q}{2}+1}}{2\pi^{\frac{d}{2}}}
    \frac{\Gamma(\frac{d+q}{2})\Gamma(\frac{d+q}{2}+2)}
    {\Gamma(\frac{d}{2}+q+2)},&i=q,
  \end{array}\right.
\end{align}
showing that $G_{d,q}^B\in\calM_{d,0,q}$ holds.
(The fact that $B_{d,i}(G_{d,q}^B)=0$ for $i\in\{2,\ldots,q-2\}$ can
alternatively be understood on the basis of the expression~\eqref{eq:Idqi} and
the orthogonality relation~\eqref{eq:JacobiPOrth} of the Jacobi polynomials.)
\end{proof}

Then, we introduce two polynomials related to the kernel $K_{d,q}^B(\cdot)=G_{d,q}^B(\|\cdot\|)$, 
before the succeeding proof:
\begin{align}
\label{eq:B.PGR}
	P_{d,q}^B(x)\coloneq \frac{\Gamma(\frac{d+q}{2})}{\pi^{\frac{d}{2}}\Gamma(\frac{q}{2})}P_{\frac{q}{2}+1}^{(\frac{d}{2},-2)}(1-2x^2),\quad
	R_{d,q}^B(x)\coloneq (d-1)\frac{P^{B(1)}_{d,q}(x)}{x}+P^{B(2)}_{d,q}(x).
\end{align}
$P_{d,q}^B$ is an original polynomial of $G_{d,q}^B$ before truncated.
Also, the function $R_{d,q}^B$ is defined such that the first derivative of $x^{d-1}P_{d,q}^{B(1)}(x)$ becomes $x^{d-1}R_{d,q}^B(x)$, 
and $R_{d,q}^B$ is a polynomial function with terms of $0,2,\ldots,(q-2)$-nd degree.

In the following, we give Lemma~\ref{lemma:LemmaA.2.4} on the properties of functions $P_{d,q}^B$ and $R_{d,q}^B$.
\begin{lemma}
\label{lemma:LemmaA.1.2}
Suppose that $P(x)=a_0(x^2-a_1^2)\cdots(x^2-a_n^2)$ has coefficients $a_0\neq0$, $0<a_1\le\cdots\le a_n$.
Then, $P(x)/x^j$, $j=0,\ldots,2n$; $P^{(1)}(x)/x^j$, $j=0,\ldots,2n-2$; 
and $P^{(2)}(x)/x^j$, $j=0,\ldots,2n-4$ are strictly monotonically increasing (resp.~decreasing) for $x>a_n$, 
when $a_0>0$ (resp.~$a_0<0$).
\end{lemma}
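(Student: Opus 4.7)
The plan is to reduce all three claims (for $P$, $P^{(1)}$, and $P^{(2)}$) to a single auxiliary monotonicity statement, and to obtain convenient factorizations of $P^{(1)}$ and $P^{(2)}$ by invoking Rolle's theorem. The auxiliary claim I will state first is: \emph{if $S(x)=c\prod_{i=1}^{m}(x^{2}-d_{i}^{2})$ with $c\neq 0$ and $0<d_{1}\le\cdots\le d_{m}$, then $S(x)/x^{k}$ is strictly monotonic on $(d_{m},\infty)$ for every $k\in\{0,1,\ldots,2m\}$, increasing when $c>0$ and decreasing when $c<0$.} Its proof is the one routine calculation I intend to write out, via the factorization
\[
	\frac{S(x)}{x^{k}}=c\cdot x^{2m-k}\cdot\prod_{i=1}^{m}\Bigl(1-\frac{d_{i}^{2}}{x^{2}}\Bigr),
\]
together with the observation that for $x>d_{m}$ each factor $1-d_{i}^{2}/x^{2}\in(0,1)$ is strictly increasing in $x$ and $x^{2m-k}$ is positive and non-decreasing (constant only when $k=2m$); a product of positive strictly increasing factors is strictly increasing, and the sign of $c$ determines the direction.

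With the auxiliary statement in hand, the claim about $P(x)/x^{j}$ is immediate, since $P$ already has the required form with $m=n$ and $d_{i}=a_{i}$. For the derivative cases I first produce the necessary factorizations. Because $P$ has all $2n$ real roots $\pm a_{1},\ldots,\pm a_{n}$ (counted with multiplicity), Rolle's theorem forces $P^{(1)}$ to have $2n-1$ real roots, all lying in $[-a_{n},a_{n}]$; combined with the fact that $P^{(1)}$ is odd (as $P$ is even), these roots must be $0$ together with $\pm b_{1},\ldots,\pm b_{n-1}$ for some $0<b_{1}\le\cdots\le b_{n-1}\le a_{n}$. Matching leading coefficients gives $P^{(1)}(x)=2na_{0}\,x\prod_{i=1}^{n-1}(x^{2}-b_{i}^{2})$. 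Applying Rolle once more, $P^{(2)}$ has $2n-2$ real roots in $[-a_{n},a_{n}]$; since $P^{(2)}$ is even, these form symmetric pairs $\pm c_{i}$ with $0<c_{1}\le\cdots\le c_{n-1}\le a_{n}$, yielding $P^{(2)}(x)=2n(2n-1)a_{0}\prod_{i=1}^{n-1}(x^{2}-c_{i}^{2})$.

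To finish, the auxiliary statement is applied to each factorized polynomial. For $j\ge 1$ the ratio $P^{(1)}(x)/x^{j}$ equals $[2na_{0}\prod_{i}(x^{2}-b_{i}^{2})]/x^{j-1}$, which fits the auxiliary form with $m=n-1$ and exponent $k=j-1\in\{0,\ldots,2(n-1)\}$; this contains the requested range $j\in\{1,\ldots,2n-2\}$ and yields strict monotonicity on $(b_{n-1},\infty)\supseteq(a_{n},\infty)$. The boundary case $j=0$ is handled separately by writing $P^{(1)}(x)=x\cdot[P^{(1)}(x)/x]$ and noting that on $(a_{n},\infty)$ both factors are positive and strictly increasing when $a_{0}>0$. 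The $P^{(2)}$ case is a direct application of the auxiliary statement with $m=n-1$, whose admissible range $k\in\{0,\ldots,2n-2\}$ comfortably contains the requested $\{0,\ldots,2n-4\}$. The $a_{0}<0$ case is obtained in every instance by flipping the monotonicity direction. I do not anticipate any serious obstacle; the only slightly delicate point is that $b_{n-1}$ or $c_{n-1}$ may coincide with $a_{n}$ when $a_{n}$ is a repeated root of $P$, but this is harmless because the auxiliary statement only requires $x>d_{m}$ with $d_{m}\le a_{n}$, and the conclusion is stated on $(a_{n},\infty)$, which fits.
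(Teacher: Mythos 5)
Your proof is correct, but it takes a genuinely different route from the paper's. The paper expands $P^{(1)}$ and $P^{(2)}$ by the product rule into sums of terms, each of the form $a_0\,x^{\epsilon}\prod_{i\in S}(x^2-a_i^2)$ over subsets $S$ of the original roots, applies the first/second statements of \citep[Lemma~4]{gasser1985kernels} to each summand, and concludes because a sum of functions that are all strictly increasing (resp.~decreasing) is strictly increasing (resp.~decreasing). You instead keep $P^{(1)}$ and $P^{(2)}$ as single products: real-rootedness is preserved under differentiation (Rolle), Gauss--Lucas keeps the new roots in $[-a_n,a_n]$, and parity forces them into the form $2na_0\,x\prod_i(x^2-b_i^2)$ and $2n(2n-1)a_0\prod_i(x^2-c_i^2)$, after which a single self-proved monotonicity lemma (via the factorization $c\,x^{2m-k}\prod_i(1-d_i^2/x^2)$) handles all three cases. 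What each buys: the paper's version is a two-line computation that leans on the cited external lemma and termwise monotonicity; yours is self-contained and avoids the combinatorial sum, at the cost of invoking the (standard but unproved in your sketch) facts that all roots of $P^{(1)}$, $P^{(2)}$ are real and confined to $[-a_n,a_n]$. Two cosmetic points: in your separate treatment of $P^{(1)}(x)/x^0$ the factor $P^{(1)}(x)/x$ is a positive \emph{constant} when $n=1$, not strictly increasing, though the product with $x$ is still strictly increasing as required; and you assert $b_1>0$ and $c_1>0$ without checking that $0$ is only a simple root of $P^{(1)}$ and not a root of $P^{(2)}$ (both follow from $Q'(0)\neq0$ where $P(x)=Q(x^2)$, and in any case your argument survives even if some $b_i$ or $c_i$ vanished, since the exponent $2m-k$ stays positive in every instance where you would need it).
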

\begin{proof}
From simple calculus, one obtains
\begin{align}
\label{eq:A.1.2P12}
	\begin{split}
	&P^{(1)}(x)=2a_0\sum_{i=1}^n x (x^2-a_1^2)\cdots(x^2-a_{i-1}^2)(x^2-a_{i+1}^2)\cdots(x^2-a_n^2),\\
	&P^{(2)}(x)=2a_0\sum_{i=1}^n (x^2-a_1^2)\cdots(x^2-a_{i-1}^2)(x^2-a_{i+1}^2)\cdots(x^2-a_n^2)\\
	&\hphantom{P^{(2)}(x)}
	+4a_0\sum_{1\le i<j\le n} x^2(x^2-a_1^2)\cdots(x^2-a_{i-1}^2)(x^2-a_{i+1}^2)\cdots(x^2-a_{j-1}^2)(x^2-a_{j+1}^2)\cdots(x^2-a_n^2).
	\end{split}
\end{align}
The first statement in~\citep[Lemma~4]{gasser1985kernels} shows the statement on $P(x)/x^j$, $j=0,\ldots,2n$ in this lemma.
Also, applying the latter (resp.~former) statement of their lemma to every summand of $P^{(1)}(x)$ (resp.~$P^{(2)}(x)$) in~\eqref{eq:A.1.2P12}, 
a result on $P^{(1)}(x)/x^j$, $j=0,\ldots,2n-2$ (resp.~$P^{(2)}(x)/x^j$, $j=0,\ldots,2n-4$) can be proved.
\end{proof}
\begin{lemma}
\label{lemma:LemmaA.2.4}
The polynomial $P_{d,q}^B$ in~\eqref{eq:B.PGR} can be represented 
as $P_{d,q}^B(x)=a_0(x^2-a_1^2)\cdots(x^2-a_{\frac{q}{2}+1}^2)$, 
$a_0>0$ (resp.~$a_0<0$) when $q=2,6,\ldots$ (resp.~$q=4,8,\ldots$), 
$0<a_1<\cdots<a_{\frac{q}{2}}=a_{\frac{q}{2}+1}=1$.
Also, for the polynomial $R_{d,q}^B$ in~\eqref{eq:B.PGR}, 
$R_{d,q}^B(x)/x^j$, $j=0,\ldots,q-2$ is monotonically increasing and positive 
(resp.~decreasing and negative) for $x>1$ when $q=2,6,\ldots$ (resp.~$q=4,8,\ldots$).
\end{lemma}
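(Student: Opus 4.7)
The plan is to derive the first assertion directly from the equivalent expression for $G_{d,q}^B$ in~\eqref{eq:Opt-RK2}, which gives
\[
P_{d,q}^B(x)=(-1)^{q/2+1}\frac{\Gamma(\frac{d+q}{2}+2)}{\pi^{d/2}\,\Gamma(\frac{q}{2}+2)}\,(1-x^2)^2\,P_{q/2-1}^{(2,d/2)}(2x^2-1).
\]
The factor $(1-x^2)^2$ contributes the double root at $x=1$, accounting for $a_{q/2}=a_{q/2+1}=1$. For the remaining $q-2$ roots I would invoke the classical fact that the Jacobi polynomial $P_n^{(\alpha,\beta)}$ with $\alpha,\beta>-1$ has $n$ simple real zeros all lying in the open interval $(-1,1)$. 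Substituting $y=2x^2-1$ in the factorization $P_{q/2-1}^{(2,d/2)}(y)=c_0\prod_{i=1}^{q/2-1}(y-y_i)$ with $y_i\in(-1,1)$ translates each $y_i$ into $a_i=\sqrt{(1+y_i)/2}\in(0,1)$, giving $q/2-1$ distinct roots $0<a_1<\cdots<a_{q/2-1}<1$ and establishing the factorization $P_{d,q}^B(x)=a_0\prod_{i=1}^{q/2+1}(x^2-a_i^2)$. The sign of $a_0$ follows by comparing leading coefficients: the leading coefficient $c_0$ of $P_{q/2-1}^{(2,d/2)}(y)$ is strictly positive (standard Jacobi formula), the change of variables contributes a further factor $2^{q/2-1}>0$, and the overall prefactor is $(-1)^{q/2+1}$, whence $\sign(a_0)=(-1)^{q/2+1}$, which is $+1$ for $q\in\{2,6,10,\ldots\}$ and $-1$ for $q\in\{4,8,12,\ldots\}$.

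For the second assertion, I would apply Lemma~\ref{lemma:LemmaA.1.2} to $P=P_{d,q}^B$ with $n=q/2+1$, using the factorization just obtained together with $a_{q/2+1}=1$. The lemma yields that $P_{d,q}^{B(1)}(x)/x^{j+1}$ for $j+1\in\{0,\ldots,q\}$ and $P_{d,q}^{B(2)}(x)/x^{j}$ for $j\in\{0,\ldots,q-2\}$ are all strictly monotonically increasing (resp.\ decreasing) for $x>1$ when $a_0>0$ (resp.\ $a_0<0$). By the definition~\eqref{eq:B.PGR},
\[
\frac{R_{d,q}^B(x)}{x^{j}}=(d-1)\,\frac{P_{d,q}^{B(1)}(x)}{x^{j+1}}+\frac{P_{d,q}^{B(2)}(x)}{x^{j}},
\]
is a nonnegative-weighted sum of two strictly monotonic functions of the same direction, and therefore strictly monotonic in that direction for every $j\in\{0,\ldots,q-2\}$.

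To pin down the sign on $(1,\infty)$, I would evaluate at the endpoint $x=1$. Writing $P_{d,q}^B(x)=a_0(x-1)^2 g(x)$ with $g(x)=(x+1)^2\prod_{i=1}^{q/2-1}(x^2-a_i^2)$, one has $P_{d,q}^{B(1)}(1)=0$ and $P_{d,q}^{B(2)}(1)=2a_0\,g(1)$, where $g(1)=4\prod_{i=1}^{q/2-1}(1-a_i^2)>0$ since $0<a_i<1$. Hence $R_{d,q}^B(1)/1^{j}=2a_0\,g(1)$, whose sign is $(-1)^{q/2+1}$. Combined with the monotonicity already established, this sign is preserved throughout $x>1$, completing the proof.

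The main obstacle I anticipate is the sign bookkeeping in the first assertion: one must correctly combine the positive leading coefficient of the Jacobi polynomial, the positive factor $2^{q/2-1}$ from the substitution $y=2x^2-1$, and the overall prefactor $(-1)^{q/2+1}$ into a consistent statement about $\sign(a_0)$. Once the explicit factorization of $P_{d,q}^B$ is secured, the second assertion reduces to a direct application of Lemma~\ref{lemma:LemmaA.1.2} together with the endpoint evaluation at $x=1$.
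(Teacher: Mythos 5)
Your proof is correct and follows essentially the same route as the paper's: both obtain the factorization from the location of the Jacobi-polynomial zeros in $(-1,1)$ together with the double zero of $(1-x^2)^2$, both deduce the monotonicity of $R_{d,q}^B(x)/x^j$ by applying Lemma~\ref{lemma:LemmaA.1.2} to the factored form, and both fix the sign on $(1,\infty)$ by evaluating $R_{d,q}^B$ at $x=1$, where only the term coming from $P_{d,q}^{B(2)}$ survives. The only (immaterial) difference is that you determine $\sign(a_0)$ by comparing leading coefficients, whereas the paper evaluates $P_{d,q}^B(0)$ via $P_n^{(\alpha,\beta)}(1)=(\alpha+1)_n/n!$; both give $\sign(a_0)=(-1)^{q/2+1}$.
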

\begin{proof}
The symmetric polynomial function $P_{d,q}^B$ has $(q-2)$ zeros inside $(-1,1)$ and double zeros at $x=\pm1$, 
Thus, its coefficients become $0<a_1<\cdots<a_{\frac{q}{2}-1}<1$ and $a_{\frac{q}{2}}=a_{\frac{q}{2}+1}=1$.

Using the relationship of the Jacobi polynomial
\begin{align}
\label{eq:Poly1}
	P_n^{(\alpha,\beta)}(1)=\frac{(\alpha+1)_n}{n!},
\end{align}
which appears in~\citep[p.7]{askey1975orthogonal}, 
one has 
\begin{align}
	P_{d,q}^B(0)
	=\frac{\Gamma(\frac{d+q}{2})}{\pi^{\frac{d}{2}}\Gamma(\frac{q}{2})}P_{\frac{q}{2}+1}^{(\frac{1}{2},-2)}(1)
	=\frac{\Gamma(\frac{d+q}{2})}{\pi^{\frac{d}{2}}\Gamma(\frac{q}{2})}\frac{(\frac{3}{2})_{\frac{q}{2}+1}}{(\frac{q}{2}+1)!}>0.
\end{align}
Then, on the basis of the factorized expression of $P_{d,q}^B$, 
\begin{align}
	P_{d,q}^B(0)
	=a_0(-a_1^2)\cdots(-a_{\frac{q}{2}+1}^2)
	=(-1)^{\frac{q}{2}+1}a_0a_1^2\cdots a_{\frac{q}{2}+1}^2
	>0,
\end{align}
one has $a_0>0$ (resp.~$a_0<0$) when $q=2,6,\ldots$ (resp.~$q=4,8,\ldots$).

Finally, the increase or decrease of $R_{d,q}^B(x)/x^j$, $j=0,\ldots,q-2$ for $x>1$ is a direct corollary of Lemma~\ref{lemma:LemmaA.1.2}.
On the basis of the calculation~\eqref{eq:A.1.2P12}, 
one has
\begin{align}
	\begin{split}
	R_{d,q}^B(1)
	&=(d-1)\frac{P^{B(1)}_{d,q}(1)}{1}+P^{B(2)}_{d,q}(1)
	=(d-1)\frac{0}{1}+4 a_0 1^2 (1-a_1)^2\cdots (1-a_{\frac{q}{2}-1})^2\\
	&\propto a_0 (1-a_1)^2\cdots (1-a_{\frac{q}{2}-1})^2
	\end{split}
\end{align}
which, together with the above-mentioned fluctuation of $R_{d,q}^B(x)/x^0=R_{d,q}^B(x)$, ensures that the sign of $R_{d,q}^B(x)$, $x>1$ and that of $a_0$ match with each other.
\end{proof}

Finally, we provide a proof of Theorem~\ref{theorem:thm21} below:
\begin{proof}[Proof of Theorem~\ref{theorem:thm21}]
Suppose that $G_{d,q}=G_{d,q}^B+Q_{d,q}:\R_{\ge0}\to\R$ 
satisfies the conditions \hyl{P2-1}--\hyl{P2-5} and 
has the same $q$-th moment $B_{d,q}$ as that of $G_{d,q}^B$.
For this setting, 
because $G_{d,q}^B$ satisfies \hyl{P2-1}--\hyl{P2-5} (shown in Lemma~\ref{lemma:Confirm}),
the disturbance $Q_{d,q}$ needs to satisfy the moment conditions
\begin{align}
\label{eq:B.mom}
	B_{d,i}(Q_{d,q})=0,\quad i\in I_q.
\end{align}
From the radial symmetry, odd-order moment conditions are automatically fulfilled.
Additionally, it should be noted that $Q_{d,q}$ does not need to have a compact support, 
which makes the kernel's optimality not limited to just the class of truncation kernels.
In contrast, the following proof requires the condition~\hyl{P2-5} (i.e., $\lim_{x\to\infty}x^{d+q}Q_{d,q}(x)=0$).

The following calculations are performed to derive the inequality~\eqref{eq:B.ineq}.
On the basis of the integral by part, 
$P_{d,q}^{B(1)}(0)=0$, $\lim_{x\to\infty}x^{d+q}Q_{d,q}(x)=0$, 
combination of the moment conditions~\eqref{eq:B.mom} and 
the fact that the polynomial $R_{d,q}^B$ defined in~\eqref{eq:B.PGR} has $0,2,\ldots,q$-th degree terms, 
one has
\begin{align}
\label{eq:B.1}
	\begin{split}
	\int_0^\infty x^{d-1}P_{d,q}^{B(1)}(x)Q_{d,q}^{(1)}(x)\,dx
	&=\biggl[x^{d-1}P_{d,q}^{B(1)}(x)Q_{d,q}(x)\biggr]_0^\infty
	-\int_0^\infty x^{d-1}R_{d,q}^B(x)Q_{d,q}(x)\,dx\\
	&=0.
	\end{split}
\end{align}
Secondly, 
integral by part, together with 
$P_{d,q}^{B(1)}(1)=0$ and $\lim_{x\to\infty}x^{d+q}Q_{d,q}(x)=0$, 
shows
\begin{align}
\label{eq:B.2}
	\begin{split}
	\int_1^\infty x^{d-1}P_{d,q}^{B(1)}(x)Q_{d,q}^{(1)}(x)\,dx
	&=\biggl[x^{d-1}P_{d,q}^{B(1)}(x)Q_{d,q}(x)\biggr]_1^\infty
	-\int_1^\infty x^{d-1}R_{d,q}^B(x)Q_{d,q}(x)\,dx\\
	&=	-\int_1^\infty x^{d-1}R_{d,q}^B(x)Q_{d,q}(x)\,dx.
	\end{split}
\end{align}
Collecting these pieces, one can obtain the following inequality:
\begin{align}
	\label{eq:B.ineq}
	\begin{split}
	&V_{d,1}(G_{d,q})-V_{d,1}(G_{d,q}^B)
	=V_{d,1}(Q_{d,q})
	+2\int_0^\infty x^{d-1}G_{d,q}^{B (1)}(u)Q_{d,q}^{(1)}(x)\,dx\\
	&\ge 2\int_0^\infty x^{d-1}G_{d,q}^{B (1)}(u)Q_{d,q}^{(1)}(x)\,dx
	= 2\int_0^\infty x^{d-1}\{G_{d,q}^{B (1)}(x)-P_{d,q}^{B (1)}(x)\}Q_{d,q}^{(1)}(x)\,dx\\
	&= -2\int_1^\infty x^{d-1}P_{d,q}^{B(1)}(x)Q_{d,q}^{(1)}(x)\,dx
	= 2\int_1^\infty x^{d-1}R_{d,q}^B(x)Q_{d,q}(x)\,dx,
	\end{split}
\end{align}
where the equality holds just when $Q^{(1)}_{d,q}(x)\equiv 0$, 
which holds only if $Q_{d,q}(x)\equiv 0$ due to the moment condition $B_{d,0}(Q_{d,q})=0$.

Thus, since the kernels $G_{d,q}$ and $G_{d,q}^B$ have the same value of $q$-th moment, 
in order to prove the optimality of $G_{d,q}^B$ with respect to the AMSE criterion, 
it is sufficient to prove
\begin{align}
	\label{eq:B.TODO}
	\int_1^\infty x^{d-1}R_{d,q}^B(x)Q_{d,q}(x)\,dx\ge0
\end{align}
for $Q_{d,q}(x)\not\equiv 0$,
instead of showing $V_{d,1}(G_{d,q})>V_{d,1}(G_{d,q}^B)$.
We prove the inequality~\eqref{eq:B.TODO} for order $q=2,4$, below.

\textit{Proof for $q=2$.}
First, we provide the proof for $q=2$.
Lemma~\ref{lemma:LemmaA.2.4} shows $R_{d,q}^B(x)>0$ for $x>1$.
Also, since $G_{d,2}$ is non-negative due to the minimum-sign-change condition and $G_{d,2}^B(x)=0$ for $x>1$, 
the disturbance $Q_{d,2}(x)=G_{d,2}(x)$, $x>1$ has to be non-negative.
Thus,~\eqref{eq:B.TODO} holds.

\textit{Proof for $q=4$.}
We next show the theorem for $q=4$.
The only sign change of minimum-sign-change 4-th order kernel $G_{d,4}$ goes from 
positive to negative at some point $\rho>0$ from Lemma~\ref{lemma:LemmaA.1.1}.

If $\rho\le1$, then $G_{d,4}(x)=Q_{d,4}(x)$, $x\ge1\ge\rho$ is non-positive.
Also, Lemma~\ref{lemma:LemmaA.2.4} shows that $R_{d,4}^B(x)<0$ for $x\ge1$.
Thus, the inequality~\eqref{eq:B.TODO} holds.

Next we consider the other case with $\rho>1$, where the followings are satisfied:
\begin{align}
\label{eq:B.3}
	&G_{d,4}(x)\ge 0\text{ for }x\le\rho\,
	(\text{which implies }Q_{d,4}(x)\ge0\text{ for }x\in[1,\rho]),\\
\label{eq:B.4}
	&G_{d,4}(x)=G_{d,4}^B(x)+Q_{d,4}(x)=Q_{d,4}(x)\le 0\text{ for }x\ge\rho(>1).
\end{align}
From the equation 
\begin{align}
	0
	=B_{d,2}(G_{d,4})
	=\underbrace{\int_0^1 x^{d+1}G_{d,4}(x)\,dx}_{\ge0~\because\eqref{eq:B.3}}
	+\int_1^\infty x^{d+1}Q_{d,4}(x)\,dx,
\end{align}
one has 
\begin{align}
\label{eq:B.5}
	\int_1^\infty x^{d+1}Q_{d,4}(x)\,dx\le 0,
\end{align}
which leads us to obtain
\begin{align}
\label{eq:B.6}
	\int_\rho^\infty x^{d+1}Q_{d,4}(x)\,dx
	=\int_1^\infty x^{d+1}Q_{d,4}(x)\,dx
	-\int_1^\rho x^{d+1}Q_{d,4}(x)\,dx
	\le0.
\end{align}
On the basis of the mean value theorem of integration, one gets 
\begin{align}
	\begin{split}
	&\int_1^\infty x^{d-1}R_{d,4}^B(x)Q_{d,4}(x)\,dx
	=\frac{R_{d,4}^B(\zeta_1)}{\zeta_1^2} \int_1^\rho x^{d+1}Q_{d,4}(x)\,dx
	+\frac{R_{d,4}^B(\zeta_2)}{\zeta_2^2} \int_\rho^\infty x^{d+1}Q_{d,4}(x)\,dx\\
	&=\underbrace{\frac{R_{d,4}^B(\zeta_1)}{\zeta_1^2}}_{<0}
	\underbrace{\int_1^\infty x^{d+1}Q_{d,4}(x)\,dx}_{\le0~\because \eqref{eq:B.5}}
	+\underbrace{\biggl(\frac{R_{d,4}^B(\zeta_2)}{\zeta_2^2}-\frac{R_{d,4}^B(\zeta_1)}{\zeta_1^2}\biggr)}_{<0}
	\underbrace{\int_\rho^\infty x^{d+1}Q_{d,4}(x)\,dx}_{\le0~\because \eqref{eq:B.6}}
	\ge0,
	\end{split}
\end{align}
for $1<\zeta_1<\zeta_2$,
where we use $R_{d,4}^B(1)<0$ and that $R_{d,4}^B(x)/x^2$ is monotonically decreasing for $x>1$, 
which result from Lemma~\ref{lemma:LemmaA.2.4}.
This inequality implies the completion of the proof.
\end{proof}


\subsection{Proof of Theorem~\protect\ref{theorem:thm22}}
\subsubsection{Reformulation of Problem~\protect\ref{problem:Prob22}}
In this section, we provide a proof of Theorem~\ref{theorem:thm22} that is 
a multivariate RK extension of the existing results by~\citep{granovsky1991optimizing}.
We here reformulate Problem~\protect\ref{problem:Prob22} to Problem~\ref{problem:Prob4}, 
to which we can almost directly apply the proof in the existing works.

Problem~\ref{problem:Prob22} includes both the function $G$ and derivative $G^{(1)}$
in its formulation, which makes the problem difficult to handle.
%
Under the condition~\hyl{P3-5} (i.e., $x^{d+q}G(x)\to0$ as $x\to\infty$), 
integration-by-parts yields the equation,
\begin{align}
\label{eq:IntPar}
	\begin{split}
	B_{d,i}(G)
	&=\int_0^\infty x^{d-1+i}G(x)\,dx
	=\biggl[\frac{1}{d+i}x^{d+i}G(x)\biggr]_0^\infty
	-\frac{1}{d+i}\int_0^\infty x^{d+i}G^{(1)}(x)\,dx\\
	&=-\frac{1}{d+i}B_{d,i+1}(G^{(1)}),
	\quad\text{for}\;i\in I_q.
	\end{split}
\end{align}
This equation allows us to rewrite the objective function \hyl{P3} as a functional of $G^{(1)}$ only, 
and shows the following lemma that allows us to represent the moment condition \hyl{P3-1} in terms of $G^{(1)}$.
\begin{lemma}
\label{lemma:equiv12}
For $d\ge1$ and even $q\ge2$,
assume that $x^{d+q}G(x)\to0$ holds as $x\to\infty$
and that $G^{(1)}$ is continuous a.e.
Then the following two conditions are equivalent:
\begin{itemize}
\item[(i)] $G \in \calM_{d,0,q}$ and $G$ is differentiable.
\item[(ii)] $G^{(1)}\in\calM_{d,1,q+1}$.
\end{itemize}
\end{lemma}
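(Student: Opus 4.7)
The plan is to obtain the equivalence directly from the integration-by-parts identity \eqref{eq:IntPar}, which under the hypotheses of the lemma reads
\begin{align*}
  B_{d,i}(G) = -\frac{1}{d+i}\,B_{d,i+1}(G^{(1)}), \qquad i \in I_q.
\end{align*}
Note that the decay hypothesis $x^{d+q}G(x)\to 0$ implies $x^{d+i}G(x)\to 0$ for every $i\le q$ (since $x^{d+i}G(x)=x^{d+q}G(x)\cdot x^{i-q}\to 0$), which is exactly the boundary vanishing needed to justify \eqref{eq:IntPar} for each $i\in I_q$. Because the identity rewrites each relevant moment of $G$ as a nonzero multiple of a moment of $G^{(1)}$, the entire lemma reduces to an index-bookkeeping argument matching the two $\calM$-classes.

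First I would unpack the relevant index sets. Since $q$ is even, $I_{q-2}=\{0,2,\ldots,q-2\}$, $I_q=\{0,2,\ldots,q\}$, and $I_{q-1}=\{1,3,\ldots,q-1\}$. The shift $i\mapsto i+1$ maps $I_q$ bijectively onto $\{1,3,\ldots,q+1\}$, sending the three distinguished pieces $\{0\}$, $I_{q-2}\setminus\{0\}$, $\{q\}$ to $\{1\}$, $I_{q-1}\setminus\{1\}$, $\{q+1\}$ respectively---exactly the three distinguished pieces appearing in the definition of $\calM_{d,1,q+1}$.

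Next I would match the three types of moment conditions term-by-term via \eqref{eq:IntPar}. The normalization $B_{d,0}(G)=b_d^{-1}$ is equivalent to $B_{d,1}(G^{(1)})=-d\,b_d^{-1}=(-1)^1(d)_1 b_d^{-1}$ (using $(d)_1=d$), which is precisely the normalization prescribed by $\calM_{d,1,q+1}$. The vanishing $B_{d,i}(G)=0$ for $i\in I_{q-2}\setminus\{0\}$ translates to $B_{d,j}(G^{(1)})=0$ for $j=i+1\in I_{q-1}\setminus\{1\}$. Finally, because $-1/(d+q)\ne 0$, one has $B_{d,q}(G)\ne 0$ iff $B_{d,q+1}(G^{(1)})\ne 0$. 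Combining the three equivalences yields (i)$\Leftrightarrow$(ii); the differentiability of $G$ stated explicitly in (i) is tacit in (ii) through the notation $G^{(1)}$.

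I do not expect any substantive obstacle: the content is a direct unpacking of \eqref{eq:IntPar} combined with matching of index sets. The only items that warrant a careful check are (a) that \eqref{eq:IntPar} is indeed valid at the boundary case $i=q$, which consumes the full strength of the decay assumption $x^{d+q}G(x)\to 0$, and (b) that the bijection $\{0\}\cup(I_{q-2}\setminus\{0\})\cup\{q\}=I_q \leftrightarrow \{1\}\cup(I_{q-1}\setminus\{1\})\cup\{q+1\}$ respects the three-way partition used in defining both $\calM_{d,0,q}$ and $\calM_{d,1,q+1}$; both checks are essentially immediate once written out.
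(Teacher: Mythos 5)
Your proposal is correct and follows essentially the same route as the paper, which derives the lemma directly from the integration-by-parts identity~\eqref{eq:IntPar} (the paper does not even write out the index bookkeeping, which you supply explicitly and accurately, including the normalization check $-(d)_1 b_d^{-1}=-db_d^{-1}$ and the nonvanishing of the factor $-1/(d+q)$). No gaps.
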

\noindent%
Thus, the following problem is equivalent to Problem~\ref{problem:Prob22}
in the sense that a solution of one of the problems is a solution of the other.
\begin{problem}
\label{problem:Prob3}%
Find $G^*(x)=-\int_x^\infty F^*(t)\,dt$, where
$F^*$ is a solution of the following optimization problem,
with $G(x)\coloneq-\int_x^\infty F(t)\,dt$:
\begin{align}
	\tmin_F\quad
	&B^{2(d+2)}_{d,q+1}(F)\cdot V^{2q}_{d,0}(F),
\tag{\hypertarget{P4}{P4}}\\
	\tst\quad
	&F\in\calM_{d,1,q+1},
\tag{\hypertarget{P4-1}{P4-1}}\\
	&F\;\text{is integrable},
\tag{\hypertarget{P4-2}{P4-2}}\\
	&F\in\calN_{\frac{q}{2}-1},
\tag{\hypertarget{P4-3}{P4-3}}\\
	&V_{d,0}(F)<\infty,
\tag{\hypertarget{P4-4}{P4-4}}\\
	&\text{there exists }\delta>0\text{ s.t.~}x^{d+q+\delta}G(x)\to0\;\text{as}\;x\to\infty.
\tag{\hypertarget{P4-5}{P4-5}}
\end{align}
\end{problem}

The kernel optimization problem \ref{problem:Prob3} still has scale indeterminacy of a solution,
as described also in Footnote~\ref{ft:ft3}.
Indeed, one has the following lemma.
\begin{lemma}
\label{lemma:equiv23}
Let $d\ge1$ and $q=2,4,\ldots$
For a function $F$ and $s>0$, 
we define a scale-transformed function $F_s(\cdot)\coloneq s^{-(d+1)} F(\cdot/s)$.
Then, one has
\begin{align}
	B_{d,i}(F_s)=s^{i-1} B_{d,i}(F),\quad
	V_{d,0}(F_s)=s^{-(d+2)}V_{d,0}(F).
\end{align}
\end{lemma}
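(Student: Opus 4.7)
The plan is to verify both identities by a direct change of variables $y = x/s$. Since $B_{d,i}(F) = \int_0^\infty x^{d-1+i} F(x)\,dx$ and $V_{d,0}(F) = \int_0^\infty x^{d-1} \{F(x)\}^2\,dx$ are both integrals over $\R_{\ge0}$ with power-law weights, rescaling $F\mapsto F_s$ with $F_s(x) = s^{-(d+1)} F(x/s)$ only introduces a net power of $s$ that can be pulled outside the integral, so the task reduces to careful exponent bookkeeping.

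Concretely, for the first identity I would substitute $y=x/s$ into
\[
B_{d,i}(F_s) = \int_0^\infty x^{d-1+i}\,s^{-(d+1)} F(x/s)\,dx,
\]
producing three multiplicative contributions in $s$: an $s^{d-1+i}$ from $x^{d-1+i} = (sy)^{d-1+i}$, the prefactor $s^{-(d+1)}$ inherited from the definition of $F_s$, and an $s$ from the Jacobian $dx = s\,dy$. Their product is $s^{(d-1+i) - (d+1) + 1} = s^{i-1}$, yielding the claimed formula. For the second identity, the same substitution in $V_{d,0}(F_s)$ squares the prefactor to $s^{-2(d+1)}$, so the net exponent becomes $(d-1) - 2(d+1) + 1 = -(d+2)$, as asserted.

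There is no substantive obstacle here; the lemma is a pure scaling identity whose proof is a one-line exponent computation. It is worth remarking, however, that the normalization $s^{-(d+1)}$ in the definition of $F_s$ is precisely what makes $F_s$ agree with the derivative of the correspondingly rescaled $G_s(x) = s^{-d} G(x/s)$, i.e., $F_s = (G_s)^{(1)}$ when $F = G^{(1)}$. This is consistent with the scale indeterminacy of Problem~\ref{problem:Prob21} noted in Footnote~\ref{ft:ft3} and, when combined with the exponent computations above, immediately implies that the objective $B_{d,q+1}^{2(d+2)}(F)\cdot V_{d,0}^{2q}(F)$ of Problem~\ref{problem:Prob3} is scale invariant in $s$, so that one may normalize a candidate solution by, e.g., fixing the support or a particular moment.
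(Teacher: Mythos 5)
Your change-of-variables computation is correct, and the exponent bookkeeping for both $B_{d,i}$ and $V_{d,0}$ matches the stated identities; the paper itself treats this lemma as an immediate calculation and offers no separate proof, so your argument is exactly the intended one. The closing remarks on $F_s=(G_s)^{(1)}$ and the resulting scale invariance of the objective are also consistent with how the paper uses the lemma.
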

\noindent%
This lemma immediately shows that the objective function~\hyl{P4}
is scale-invariant, that is, for any $s>0$ one has
\begin{align}
	B_{d,q+1}^{2(d+2)}(F_s)\cdot V_{d,0}^{2q}(F_s)
	=B_{d,q+1}^{2(d+2)}(F)\cdot V_{d,0}^{2q}(F).
\end{align}
It also shows that the set $\calM_{d,1,k}$ is scale-invariant,
that is, $F\in\calM_{d,1,k}$ implies $F_s\in\calM_{d,1,k}$ for any $s>0$.
The following Problem~\ref{problem:Prob4} resolves the scale indeterminacy.
\begin{problem}
\label{problem:Prob4}%
Find $G^*(x)=-\int_x^\infty F^*(t)\,dt$, where
$F^*$ is a solution of the following optimization problem for a given constant $C>0$,
with $G(x)\coloneq-\int_x^\infty F(t)\,dt$:
\begin{align}
	\tmin_F\quad
	&V_{d,0}(F),
\tag{\hypertarget{P5}{P5}}\\
	\tst\quad
	&F\in\calM_{d,1,q+1},
\tag{\hypertarget{P5-1}{P5-1}}\\
	&F\;\text{is integrable},
\tag{\hypertarget{P5-2}{P5-2}}\\
	&F\in\calN_{\frac{q}{2}-1},
\tag{\hypertarget{P5-3}{P5-3}}\\
	&V_{d,0}(F)<\infty,
\tag{\hypertarget{P5-4}{P5-4}}\\
	&\text{there exists }\delta>0\text{ s.t.~}x^{d+q+\delta}G(x)\to0\;\text{as}\;x\to\infty,
\tag{\hypertarget{P5-5}{P5-5}}\\
	&|B_{d,q+1}(F)|=(d+q)C.
\tag{\hypertarget{P5-6}{P5-6}}
\end{align}
\end{problem}

\subsubsection{Auxiliary Lemmas}
\label{sec:Hilbert}
Here we introduce notations and lemmas that will be used for proofs in the next sections.
Let $d$ be a positive integer. 
For functions $f,g$ defined on $\R_{\ge0}=[0,\infty)$, define
\begin{align}
  \langle f,g\rangle_d\coloneq \int_0^\infty x^{d-1}f(x)g(x)\,dx.
\end{align}
Let $L^2(\R_{\ge0},x^{d-1})$ be the space of square-integrable functions
on $\R_{\ge0}$ with weight $x^{d-1}$.
This is a Hilbert space equipped with the inner product
$\langle\cdot,\cdot\rangle_d$.
Note however that, for any nonnegative integer $i$,
one has $x^i\not\in L^2(\R_{\ge0},x^{d-1})$, as the integral
of $(x^i)^2$ over $\R_{\ge0}$ with weight $x^{d-1}$ diverges.

With a measurable subset $S$ of $\R_{\ge0}$, define 
\begin{align}
  \langle f,g\rangle_{S,d}\coloneq \int_S x^{d-1}f(x)g(x)\,dx,
\end{align}
and let $L^2(S,x^{d-1})$ be the space of square-integrable functions
on $S$ with weight $x^{d-1}$.
Let $D$ be a compact measurable subset of $\R_{\ge0}$,
and let $D^c=\R_{\ge0}\backslash D$.
One has that $L^2(D,x^{d-1})$ and $L^2(D^c,x^{d-1})$ are both
subspaces of $L^2(\R_{\ge0},x^{d-1})$, and that
the map $V:f\to f\mathbbm{1}_{x\in D}\oplus f\mathbbm{1}_{x\in D^c}$
from $L^2(\R_{\ge0},x^{d-1})$ to
$L^2(D,x^{d-1})\oplus L^2(D^c,x^{d-1})$ is
an isomorphism~\cite[page 25]{Conway2007}.
We therefore identify $L^2(\R_{\ge0},x^{d-1})$
and $L^2(D,x^{d-1})\oplus L^2(D^c,x^{d-1})$ via this isomorphism.

Let $I$ be a finite subset of $\mathbb{Z}_{\ge0}$, and let
\begin{align}
  M_{D,I}\coloneq \mathop{\mathrm{span}}\{x^i\mathbbm{1}_{x\in D},i\in I\}.
\end{align}
For fixed $D$ and $I$, $M_{D,I}$ is a subspace of $L^2(D,x^{d-1})$
with dimension equal to $|I|$.
\begin{lemma}
\label{lemma:oc}
  Assume that a function $f_0\in L^2(\R_{\ge0},x^{d-1})$
  satisfies $\langle f_0,f\rangle_d=0$
  for any $f$ with $\supp(f)\subset D$
  and $\langle f,x^i\rangle_d=0$ for all $i\in I$.
  Then $f_0\mathbbm{1}_{x\in D}\in M_{D,I}$.
\end{lemma}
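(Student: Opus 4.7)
The plan is to interpret both the hypothesis and the conclusion as statements in the Hilbert space $H \coloneq L^2(D, x^{d-1})$ and reduce the lemma to the standard identity $(M_{D,I}^\perp)^\perp = M_{D,I}$ for a finite-dimensional (hence closed) subspace. Since $D$ is compact, each monomial $x^i$ is bounded on $D$, so the generating functions $x^i\mathbbm{1}_{x\in D}$, $i\in I$, all belong to $H$, and $M_{D,I}$ is a well-defined finite-dimensional subspace of $H$.

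First I would rewrite the inner products appearing in the hypothesis in terms of $H$. For any $f$ with $\supp(f)\subset D$ one has
\begin{align}
\langle f_0,f\rangle_d = \langle f_0\mathbbm{1}_{x\in D},\,f\rangle_{D,d},\qquad
\langle f,x^i\rangle_d = \langle f,\,x^i\mathbbm{1}_{x\in D}\rangle_{D,d}.
\end{align}
So the hypothesis says exactly that every $f\in H$ with $f\perp M_{D,I}$ (i.e., $f\in M_{D,I}^\perp$) also satisfies $\langle f_0\mathbbm{1}_{x\in D},f\rangle_{D,d}=0$. This is the assertion $f_0\mathbbm{1}_{x\in D}\in (M_{D,I}^\perp)^\perp$. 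Because $M_{D,I}$ is closed in $H$, the projection theorem gives $H=M_{D,I}\oplus M_{D,I}^\perp$ and $(M_{D,I}^\perp)^\perp = M_{D,I}$, which yields $f_0\mathbbm{1}_{x\in D}\in M_{D,I}$.

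If I preferred a self-contained argument not invoking the double-complement identity, I would use orthogonal projection directly: let $P$ be the orthogonal projection of $H$ onto $M_{D,I}$, set $g\coloneq(I-P)(f_0\mathbbm{1}_{x\in D})\in M_{D,I}^\perp$, and extend $g$ by zero outside $D$ so that $\supp(g)\subset D$ and $\langle g,x^i\rangle_d=0$ for all $i\in I$. Applying the hypothesis to this $g$ gives $0=\langle f_0,g\rangle_d=\langle f_0\mathbbm{1}_{x\in D},g\rangle_{D,d}=\|g\|_{D,d}^2$, so $g=0$ and $f_0\mathbbm{1}_{x\in D}=P(f_0\mathbbm{1}_{x\in D})\in M_{D,I}$. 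There is no real obstacle; the only items to verify are that $M_{D,I}\subset H$ (true because $D$ is compact) and that the test function $g$ constructed above is admissible in the hypothesis, which is immediate from $f_0\mathbbm{1}_{x\in D}\in H$ and boundedness of $P$.
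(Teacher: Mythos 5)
Your proof is correct. It rests on the same key fact as the paper's argument --- that the double orthogonal complement of a finite-dimensional, hence closed, subspace equals the subspace itself --- but you set it up more economically. The paper works in the full space $L^2(\R_{\ge0},x^{d-1})$, identifies it with $L^2(D,x^{d-1})\oplus L^2(D^c,x^{d-1})$, enlarges $M_{D,I}$ to $\bar{M}_{D,I}=\mathop{\mathrm{span}}\{x^i\mathbbm{1}_{x\in D},i\in I\}\oplus L^2(D^c,x^{d-1})$ so that the hypothesis describes exactly $\bar{M}_{D,I}^\perp$, and needs an auxiliary lemma ($(A+B)^\perp=A^\perp\cap B^\perp$) to compute that complement. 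You instead restrict everything to $H=L^2(D,x^{d-1})$ from the outset, observe that the admissible test functions in the hypothesis correspond exactly, via extension by zero, to $M_{D,I}^\perp$ in $H$, and apply the double-complement identity there; this removes the direct-sum bookkeeping and the auxiliary lemma entirely. Your second, projection-based variant is more elementary still: it needs only the orthogonal projection onto a finite-dimensional subspace and a single application of the hypothesis to the residual $g=(I-P)(f_0\mathbbm{1}_{x\in D})$, whose extension by zero is an admissible test function. The one point requiring care --- that $x^i\mathbbm{1}_{x\in D}\in H$ so that $M_{D,I}$ is well defined --- you correctly dispose of using the compactness of $D$, which the paper assumes. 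There is no gap.
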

\begin{proof}
In view of the above direct-sum decomposition
of $L^2(\R_{\ge0},x^{d-1})$, the subspace $M_{D,I}$
is regarded as the restriction of 
\begin{align}
  \bar{M}_{D,I}=\mathop{\mathrm{span}}\{x^i\mathbbm{1}_{x\in D},i\in I\}
  \oplus L^2(D^c,x^{d-1})
\end{align}
to $L^2(D,x^{d-1})$. 
The orthogonal complement of $\bar{M}_{D,I}$ in $L^2(\R_{\ge0},x^{d-1})$
is given, via Lemma~\ref{lemma:orthcomp} below, by
\begin{align}
\begin{split}
  \bar{M}_{D,I}^\perp
  &=(\mathop{\mathrm{span}}\{x^i\mathbbm{1}_{x\in D},i\in I\})^\perp
  \cap L^2(D^c,x^{d-1})^\perp\\
  &=\{f\in L^2(\R_{\ge0},x^{d-1}):\langle f,x^i\rangle_{D,d}=0
  \;\text{for all}\;i\in I\}\cap
  \{f:\supp(f)\in D\}\\
  &=\{f\in L^2(\R_{\ge0},x^{d-1}):
  \supp(f)\subset D,\ \langle f,x^i\rangle_d=0
  \;\text{for all}\;i\in I\}.
\end{split}
\end{align}

From the assumption of this lemma,
the function $f_0$ belongs to the orthogonal complement
$\bar{M}_{D,I}^{\perp\perp}$ of $\bar{M}_{D,I}^{\perp}$.
On the other hand, $\bar{M}_{D,I}$ is a closed subspace
in $L^2(\R_{\ge0},x^{d-1})$
thanks to~\cite[Theorem 1.42]{Rudin1991},
since $M_D$ is finite-dimensional.
One can therefore conclude that $\bar{M}_{D,I}^{\perp\perp}=\bar{M}_{D,I}$
holds, implying $f_0\in\bar{M}_{D,I}$, and the statement of the lemma follows. 
\end{proof}
\begin{lemma}
  \label{lemma:orthcomp}
  Let $H$ be a Hilbert space, and let $A,B$ be
  subsets of $H$ satisfying $0\in A\cap B$.
  Then $(A+B)^\perp=A^\perp\cap B^\perp$ holds.
\end{lemma}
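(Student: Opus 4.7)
The plan is to prove the identity by showing the two inclusions $(A+B)^\perp\subseteq A^\perp\cap B^\perp$ and $A^\perp\cap B^\perp\subseteq(A+B)^\perp$ separately. Both directions follow directly from the bilinearity of the inner product together with the hypothesis $0\in A\cap B$, so no deep Hilbert-space machinery is needed; the argument should fit in a few lines.

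First I would establish the inclusion $(A+B)^\perp\subseteq A^\perp\cap B^\perp$. Suppose $x\in(A+B)^\perp$, so that $\langle x,a+b\rangle=0$ for every $a\in A$ and $b\in B$. Using $0\in B$, specialize to $b=0$ to obtain $\langle x,a\rangle=0$ for every $a\in A$, giving $x\in A^\perp$. Using $0\in A$, specialize to $a=0$ to obtain $\langle x,b\rangle=0$ for every $b\in B$, giving $x\in B^\perp$. Hence $x\in A^\perp\cap B^\perp$. This is the step that genuinely uses the hypothesis $0\in A\cap B$; without it, the elements of $A$ and $B$ would appear only in the combination $a+b$, and one could not isolate them.

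For the reverse inclusion $A^\perp\cap B^\perp\subseteq(A+B)^\perp$, take any $x\in A^\perp\cap B^\perp$ and any element $a+b\in A+B$. By linearity of the inner product in the second argument, $\langle x,a+b\rangle=\langle x,a\rangle+\langle x,b\rangle=0+0=0$. Therefore $x\in(A+B)^\perp$. This direction requires no special assumption on $A$ and $B$.

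Combining the two inclusions yields the claimed equality. The proof is essentially unconditional modulo the mild hypothesis $0\in A\cap B$, and I do not anticipate any obstacle; the only subtle point worth mentioning is that $A$ and $B$ are arbitrary subsets (not assumed to be subspaces or closed), so the identity holds at the level of raw orthogonal complements without invoking closures or projection theorems.
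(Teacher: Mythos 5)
Your proof is correct and follows essentially the same route as the paper: the forward inclusion via $0\in A\cap B$ (the paper phrases it as $A\subset A+B$, you phrase it by specializing $b=0$, which is the same observation), and the reverse inclusion by linearity of the inner product. No issues.
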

\begin{proof}
  Since $0\in B$, one has $A\subset A+B$ and hence $(A+B)^\perp\subset A^\perp$.
  One similarly has $(A+B)^\perp\subset B^\perp$,
  and thus $(A+B)^\perp\subset A^\perp\cap B^\perp$.
  On the other hand, take $x\in A^\perp\cap B^\perp$ and $y\in A+B$.
  Noting that $y$ is represented as $y=a+b$ with $a\in A$ and $b\in B$,
  one has
  \begin{align}
    \langle x,y\rangle
    =\langle x,a\rangle+\langle x,b\rangle
    =0,
  \end{align}
  where the final equality is due to $x\in A^\perp$ and $x\in B^\perp$,
  implying that $x\in(A+B)^\perp$ holds.
  This proves $A^\perp\cap B^\perp\subset(A+B)^\perp$. 
\end{proof}
\noindent%
It should be noted that the above lemma holds
regardless of the dimensionality or the closedness of $A,B$.

\subsubsection{Prove that the optimal solution is a polynomial kernel}
The following lemma is an RK extension of \citep[Theorem]{granovsky1989optimality},
and it states the functional form that an optimal solution of Problem~\ref{problem:Prob4} should take.
\begin{lemma}
\label{lemma:GraThm}
For $d\ge1$ and even $q\ge2$,
a solution of Problem~\ref{problem:Prob4} takes a form of $\bar{p}_{q+1,\tau}(\cdot)\coloneq p_{q+1}(\cdot)\mathbbm{1}_{\cdot\le\tau}$,
where $\tau>0$ is decided from a constant $C$ in Problem~\ref{problem:Prob4}, 
and where $p_{q+1}$ is a polynomial function with terms of degree in $I_{q+1}$
and $\bar{p}_{q+1,\tau}$ has no discontinuity (i.e., $p_{q+1}(\tau)=0$).
\end{lemma}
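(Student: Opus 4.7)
The plan is to derive the Euler--Lagrange condition for Problem~\ref{problem:Prob4} by a direct perturbation argument and then invoke Lemma~\ref{lemma:oc} to translate it into a pointwise polynomial structure for $F$. The moment condition \hyl{P5-1} together with the normalization \hyl{P5-6} pins down the values of the $(\frac{q}{2}+1)$ moments $B_{d,i}(F)$ for every $i\in I_{q+1}$, so the admissible perturbations $F+\epsilon h$ that preserve all the equality constraints are those for which $h$ is bounded, integrable, satisfies $\langle h,x^i\rangle_d=0$ for every $i\in I_{q+1}$, and has compact support inside the open set $U\coloneq\{x\in\R_{\ge 0}:F(x)>0\}\cup\{x\in\R_{\ge 0}:F(x)<0\}$; the last requirement guarantees that the minimum-sign-change condition \hyl{P5-3} is preserved for sufficiently small $|\epsilon|$.

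For any such $h$, the first-order condition $\frac{d}{d\epsilon}V_{d,0}(F+\epsilon h)\big|_{\epsilon=0}=2\langle F,h\rangle_d=0$ must hold. Fixing a compact $D\subset U$ and varying $h$ over the admissible family supported in $D$, Lemma~\ref{lemma:oc} yields $F\mathbbm{1}_{x\in D}\in M_{D,I_{q+1}}$, so $F$ coincides on $D$ with a polynomial whose monomials are of degrees in $I_{q+1}$. Choosing $D$ that straddles two sign-definite components forces the polynomials so obtained to agree on both components, and iterating shows there is a \emph{single} polynomial $p_{q+1}$ with terms of degree in $I_{q+1}$ such that $F=p_{q+1}$ on the entire set $\{x:F(x)\neq 0\}$.

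Since $V_{d,0}(F)<\infty$ by \hyl{P5-4} and a nonzero polynomial fails to be square-integrable on $\R_{\ge 0}$ with weight $x^{d-1}$, the support of $F$ must be bounded; setting $\tau\coloneq\sup\supp(F)$, one then has $F(x)=p_{q+1}(x)\mathbbm{1}_{x\le\tau}$ up to a possibly nonzero jump $p_{q+1}(\tau)$. Treating $\tau$ as a free parameter in the minimization (with the polynomial coefficients simultaneously adjusted to maintain the moment constraints) and writing out the stationarity condition in $\tau$ yields the natural boundary condition $p_{q+1}(\tau)=0$; equivalently, since the Euler--Lagrange relation forces $F(x)=\frac{1}{2}\sum_{i\in I_{q+1}}\lambda_i x^i$ on $(0,\tau)$ with Lagrange multipliers $\lambda_i$, the $\tau$-stationarity equation $\tau^{d-1}p_{q+1}(\tau)\bigl(p_{q+1}(\tau)-\sum_i\lambda_i\tau^i\bigr)=0$ reduces to $p_{q+1}(\tau)=0$. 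This yields the desired form $\bar p_{q+1,\tau}$.

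The main obstacle will be to carry out the perturbation argument cleanly for a general admissible $F$, before having any control on the structure of its zero set: one must confirm that for \emph{every} feasible $F$ the open set $U$ contains a compact $D$ rich enough that the family $\{h\in L^2(\R_{\ge 0},x^{d-1}):\supp(h)\subset D,\;\langle h,x^i\rangle_d=0,\;i\in I_{q+1}\}$ is infinite-dimensional (so that Lemma~\ref{lemma:oc} gives genuine information), and, in particular, that an optimal $F$ cannot vanish on a nonempty open subinterval interior to its support (else the deduced global-polynomial structure fails across components). Handling this measure-theoretic obstruction and rigorously justifying the free-boundary condition in $\tau$ within the constraint set \hyl{P5-1}--\hyl{P5-6} is the crux of the proof.
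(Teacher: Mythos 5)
Your overall strategy (first-order perturbation, Lemma~\ref{lemma:oc} to extract the polynomial structure on the set where $F$ is active, boundedness of the support from \hyl{P5-4}) is the same as the paper's Step~2, and that part is sound. However, there are two genuine gaps. First, you never establish that a minimizer of Problem~\ref{problem:Prob4} \emph{exists}; your argument only characterizes a minimizer conditionally on its existence. This is not a formality here: Proposition~\ref{prop:nosol} shows that the analogous unconstrained problem has no solution, so attainment of the infimum under the minimum-sign-change constraint is exactly the point at issue, and it is needed downstream (Lemma~\ref{lemma:GraLem2} only identifies the unique candidate; without existence one cannot conclude that $G_{d,q}^B$ is optimal). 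The paper devotes its Step~1 to this, extracting a weakly convergent subsequence from a minimizing sequence in $L^2(\R_{\ge0},x^{d-1})$ (bounded sets are weakly compact), showing the weak limit attains the infimum of $V_{d,0}$ and still satisfies \hyl{P5-1}, \hyl{P5-3}, \hyl{P5-6}.

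Second, the step you yourself flag as ``the crux'' --- that an optimal $F$ cannot vanish on an open set interior to its support, so that $F=p_{q+1}\mathbbm{1}_{x\le\tau}$ with $p_{q+1}(\tau)=0$ rather than $p_{q+1}\mathbbm{1}_{x\in D}$ for some disconnected or jump-truncated $D$ --- is left unresolved, and your proposed substitute does not close it. The free-boundary/$\tau$-stationarity computation presupposes that the support is already an interval $[0,\tau]$, and moreover the feasible variations of $\tau$ are constrained by the sign-change condition \hyl{P5-3}, so one must separately argue that two-sided variations are admissible before concluding $p_{q+1}(\tau)=0$ from stationarity. The paper instead proves Lemma~\ref{lemma:step3}: for each sign-definite subinterval $S$ of $p_{q+1}$, either $\mu(S\cap D_*)=0$ or $\mu(S\cap D_*^c)=0$. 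Its proof constructs an explicit admissible perturbation $f=s(\mathbbm{1}_{x\in S\cap D_*^c}-f_{\bm c})$ whose moments of orders in $I_{q+1}$ vanish --- the coefficients $\bm c$ being obtained by inverting a moment matrix whose nonsingularity is Lemma~\ref{lemma:nonsingular} --- and shows $\langle f_*,f\rangle_d<0$, contradicting optimality. Combined with the sign-change count and boundedness of $D_*$, this forces $\overline{D_*}=[0,\tau]$ with $\tau$ the largest zero of $p_{q+1}$, which yields $p_{q+1}(\tau)=0$ without any boundary-variation argument. You would need to supply both the existence step and an argument of the strength of Lemma~\ref{lemma:step3} for your proof to go through.
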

\begin{proof}[Proof of Lemma~\ref{lemma:GraThm}]
In this proof, we work with Problem~\ref{problem:Prob4}.
To briefly represent classes of kernel functions that satisfy various combinations of the conditions, 
we introduce a new notation: we let $\calC_{i_1,\ldots,i_j}$ denote a class of functions 
satisfying the conditions $(\mathrm{P5\text{-}}i_1),\ldots,(\mathrm{P5\text{-}}i_j)$ in Problem~\ref{problem:Prob4}.
$C_4$ equals to $L^2(\R_{\ge0},x^{d-1})$.
Moreover, we use an abbreviation $\calB\coloneq\calC_{1,2,3,4,5,6}$.

\paragraph{Step~1}
%
Define $V=\inf_{f\in\calB}V_{d,0}(f)$, 
and let $\{f_n\in\calB\}$ be a sequence of functions satisfying 
$V_{d,0}(f_n)-V<\epsilon_n$ with positive numbers $\{\epsilon_n\}$ satisfying $\epsilon_n\to0$ as $n\to\infty$.
The function set $\{f\in\calC_4:V_{d,0}(f)\le\text{const}\}$ is weakly compact 
as a bounded set in the Hilbert space $\calC_4$ equipped with 
the scalar product $\naiseki{\cdot,\cdot}_d$.
It is thus found that there exists a subsequence of $f_n$ 
weakly converging to a function $f_*\in\calC_4$.
Note that this convergence particularly shows $\naiseki{f_n,f_*}_d\to\naiseki{f_*,f_*}_d$,
and one has
\begin{align}
	\naiseki{f_*,f_*}_d^2
	=\lim_{n\to\infty}\naiseki{f_*,f_n}_d^2
	\le\lim_{n\to\infty}\naiseki{f_n,f_n}_d \naiseki{f_*,f_*}_d
	=V\naiseki{f_*,f_*}_d,
\end{align}
which implies $\naiseki{f_*,f_*}_d=V$.

In the rest of Step~1, we show that $f_*$ satisfies \hyl{P5-1}, \hyl{P5-3}, and \hyl{P5-6}:
$f_n\in\calC_{1,6}$ implies that
\begin{align}
	\int_0^\infty x^{d-1+i}f_n(x)\,dx=c_i,\quad i\in I_{q+1},
\end{align}
where $c_i$, $i\in I_{q+1}$ are $i$-dependent constants.
Introducing $g_n(x)\coloneq-\int_x^\infty f_n(t)\,dt$ and a $x$-independent constant $c>0$,
from the condition \hyl{P5-5} for $G=g_n$ (that is, $|g_n(x)|=o(x^{-(d+q+\delta)})$),
one has
\begin{align}
\begin{split}
	&\lim_{M\to\infty}\biggl|\int_M^\infty x^{d-1+i}f_n(x)\,dx\biggr|
	\le\lim_{M\to\infty}\biggl\{\biggl|\biggl[x^{d-1+i}g_n(x)\biggr]_M^\infty\biggr| 
	+\int_M^\infty (d-1+i)x^{d-2+i}|g_n(x)|\,dx\biggr\}\\
	&\le \lim_{M\to\infty}c\int_M^\infty x^{-(q+2+\delta-i)}\,dx
	\le \lim_{M\to\infty}c\int_M^\infty x^{-(1+\delta)}\,dx=0,\quad i\in I_{q+1}.
\end{split}
\end{align}
Because $f_n$ converges to $f_*$ and 
$x^i\mathbbm{1}_{x\le M}\in\calC_4$, 
one has
\begin{align}
	\int_0^M x^{d-1+i}f_*(x)\,dx
	=\lim_{n\to\infty}\int_0^M x^{d-1+i}f_n(x)\,dx
	=\lim_{n\to\infty}\left\{ c_i - \int_M^\infty x^{d-1+i}f_n(x)\,dx\right\}.
\end{align}
From these results, one has
\begin{align}
	\int_{\R_{\ge0}} x^{d-1+i}f_*(x)\,dx
	=\lim_{M\to\infty} \int_0^M x^{d-1+i}f_*(x)\,dx
	=c_i,\quad i\in I_q,
\end{align}
which implies that $f_*$ satisfies \hyl{P5-1} and \hyl{P5-6} for $F=f_*$.
Moreover, since $f_n\in\calN_{\frac{q}{2}-1}$ it also holds that $f_*\in\calN_{\frac{q}{2}-1}$ \hyl{P5-3}, 
otherwise $f_*$ and $f_n$ have a interval (say $S$) where their signs do not coincide 
(so $\naiseki{\mathbbm{1}_{x\in S},f_n}_d\neq\naiseki{\mathbbm{1}_{x\in S},f_*}_d$ with $\mathbbm{1}_{x\in S}\in\calC_4$), 
which contradicts that $f_n$ converges to $f_*$.

\paragraph{Step~2}
On the basis of the fact that $f_*\in\calC_{1,3,4,6}$ proved in Step~1,
we here show a functional form of $f_*$.
We introduce the function class 
\begin{align}
\label{eq:PerMom}
	\calF=\calC_4\cap\{f:B_{d,i}(f)=0\;\text{for all}\;i\in I_{q+1}\}.
\end{align}
Then, one has that $f_*+\delta f\in\calC_{1,3,4,6}$ 
for $\delta$ with a sufficiently small absolute value and 
any bounded non-identically-zero function $f\in\calF$ satisfying $\supp(f)\subset D_*$,
where $\supp(f)$ denotes the support of $f$ and $D_*=\supp(f_*)$.
For the optimality of $f_*$, $\naiseki{f_*,f}_d=0$ for all $f\in\calF$ is necessary,
because $|\delta|\gg\delta^2$, 
\begin{align}
	V_{d,0}(\bar{f})-V_{d,0}(f_*)
	=2\delta\naiseki{f_*,f}_d+\delta^2 V_{d,0}(f)\ge0
\end{align}
and $f_*$ is not identically zero since $V=V_{d,0}(f_*)$ is away from 0 due to the proof by 
contradiction with the condition \hyl{P5-3} for $F=f_*$ and Lemma~\ref{lemma:MinCro}.
This implies that $f_*$ should lie in the orthogonal complement of
$\calF\cap\{f:\supp(f)\subset D_*\}$, that is, 
$f_*\in\mathop{\mathrm{span}}\{x^i\mathbbm{1}_{x\in D_*},i\in I_{q+1}\}$, 
which is proved by applying Lemma~\ref{lemma:oc}.
The result $f_*\in\mathop{\mathrm{span}}\{x^i\mathbbm{1}_{x\in D_*},i\in I_{q+1}\}$ 
implies that $f_*$ takes a form of $p_{q+1}(x)\mathbbm{1}_{x\in D_*}$, 
where $p_{q+1}$ is a polynomial with terms of degree in $I_{q+1}$.
Also, the boundedness of $V_{d,0}(f_*)$,
in addition to the functional form of $f_*$ derived above,
concludes that the support $D_*$ of $f_*$ is bounded.

\paragraph{Step~3}
Hereafter we show that an optimal solution $f_*$ has no discontinuity 
and that, for a polynomial $p_{q+1}$ satisfying $p_{q+1}(\tau)=0$, 
$f_*$ takes a form
\begin{align}
\label{eq:LEMB5}
	f_*(x)=\bar{p}_{q+1,\tau}(x)\coloneq p_{q+1}(x)\mathbbm{1}_{x\le\tau}.
\end{align}
Then, we prove the following lemma.
\begin{lemma}
	\label{lemma:step3}
	Let $S$ be a subset of $\R_{\ge0}$ 
	with $\mu(S)\in(0,\infty)$, where $\mu$ denotes the Lebesgue measure 
	on $\R_{\ge0}$.
	Let $f_*$ be a function defined in \eqref{eq:LEMB5} and $D_*$ be the support of $f_*$. 
	Assume that $p_{q+1}(x)$ takes the same sign over $S$, 
	and that $f_*(\cdot)$ and $p_{q+1}(\cdot)\mathbbm{1}_{\cdot\in S\cup D_*}$ 
	share the same sign-change pattern. 
	Then either $\mu(S\cap D_*)$ or $\mu(S\cap D_*^c)$ is zero.
\end{lemma}
\begin{proof}[Proof of Lemma~\ref{lemma:step3}]
	Assume to the contrary that both $\mu(S\cap D_*)$ and $\mu(S\cap D_*^c)$ 
	are strictly positive. 
	Since $\mu(S\cap D_*)>0$, one can take an ordered partition 
	$\{P_j\}_{j=1,\ldots,k+1}$ of $S\cap D_*$ with $\mu(P_j)\in(0,\infty)$ for all $j=1,\ldots,k+1$, 
	with which, for $x\in P_i$ and $y\in P_j$ with $i<j$ 
	one has $x<y$ almost surely. 
	We let $\rmM=(m_{ij})$ and $\bm{b}=(b_1,\ldots,b_{k+1})^\top$ with 
	\begin{align}
		&m_{ij}\coloneq\int_{P_j}x^{d+2i-2}f_*(x)\,dx,\quad
		i,j=1,2,\ldots,k+1,\\
		&b_i\coloneq\int_{S\cap D_*^c}x^{d+2i-2}\,dx,\quad
		i=1,2,\ldots,k+1.
	\end{align}
	One has $0<b_i<\infty$ due to the assumption $0<\mu(S\cap D_*^c)\le\mu(S)<\infty$. 
	Furthermore, let $\bm{c}=(c_1,\ldots,c_{k+1})^\top$ and define 
	\begin{align}
		f_{\bm{c}}(x)=\sum_{j=1}^{k+1}c_jp_{q+1}(x)\mathbbm{1}_{x\in P_j}.
	\end{align}
	Note that $f_{\bm{c}}$ is equivalent to $f_*$ on $S$ 
	when $\bm{c}=\bm{1}_{k+1}$.

	We now want to set $\bm{c}$ so that $f_{\bm{c}}(x)$ and 
	the function $\mathbbm{1}_{x\in S\cap D_*^c}$ 
	share the same moments of orders in $I_{q+1}$. 
	It will be achieved if $\bm{c}$ satisfies $\rmM\bm{c}=\bm{b}$, 
	since the $(2i-1)$-st moments of $f_{\bm{c}}(x)$ 
	and $\mathbbm{1}_{x\in S\cap D_*^c}$ are $\sum_{j=1}^{k+1}m_{ij}c_j$ 
	and $b_i$, respectively. 
	Applying Lemma~\ref{lemma:nonsingular} 
	with $w(x)=x^df_*(x)$ shows that $\rmM$ is invertible, 
	so that the assertion $\bm{c}=\rmM^{-1}\bm{b}$ 
	makes the moments of $f_{\bm{c}}(x)$ 
	and $\mathbbm{1}_{x\in S\cap D_*^c}$ to coincide.

	With the above $\bm{c}$, 
	let $f(x)=s\cdot (\mathbbm{1}_{x\in S\cap D_*^c}-f_{\bm{c}}(x))$, 
	where $s\in\{-1,1\}$ is the sign of $p_{q+1}$ on $S$. 
	One has $f\in\calF$ by construction, 
	which implies $\langle p_{q+1},f\rangle_d=0$. 
	On the other hand, one has 
	\begin{align}
		\int_{S\cap D_*^c}x^{d-1}p_{q+1}(x)f(x)\,dx
		=\int_{S\cap D_*^c}x^{d-1}p_{q+1}(x)s\,dx>0.
	\end{align}
	Since $f$ vanishes outside $S$, we have 
	\begin{align}
		\langle f_*,f\rangle_d
		=\int_{S\cap D_*}x^{d-1}p_{q+1}(x)f(x)\,dx
		=\langle p_{q+1},f\rangle_d-\int_{S\cap D_*^c}x^{d-1}p_{q+1}(x)s\,dx<0.
	\end{align}
	For a sufficiently small $\delta>0$, the function $f_*+\delta f$ 
	has the same sign $s$ as $p_{q+1}$ on $S$, 
	and it is equal to $f_*$ outside $S$. 
	Therefore, $f_*$ and $f_*+\delta f$ 
	share the same sign-change pattern, 
	and one consequently has $f_*+\delta f\in\calC_{1,3,4,6}$.
	The above inequality $\langle f_*,f\rangle_d<0$ 
	implies that for a sufficiently small $\delta>0$ 
	one has $V_{d,0}(f_*+\delta f)<V_{d,0}(f_*)$, 
	leading to contradiction. 
\end{proof}
\noindent%
We consider, as in Definition \ref{definition:SigChe}, 
subintervals partitioning $\R_{\ge0}$ according to the sign of $p_{q+1}$. 
Since $p_{q+1}$ is a polynomial of degree $(q+1)$ 
with odd-degree terms only, 
the number of the above subintervals is at most $(\frac{q}{2}+1)$. 
Lemma~\ref{lemma:step3} states that, for each of the subintervals, 
the support $D_*$ of $f_*$ either contains its interior as a whole or not, 
and cannot contain only some fraction of it in terms of the Lebesgue measure.
Since $f_*\in\calN_{\frac{q}{2}-1}$ and $D_*$ is bounded, 
the only possibility is therefore that the closure of $D_*$ 
equals to $[0,\tau]$, where $\tau$ is the largest zero of $p_{q+1}$, 
that is, $f_*=\bar{p}_{q+1,\tau}$.

In Lemma~\ref{lemma:GraLem2} below, 
we will show that such a function $f_*$ uniquely exists
and that function also satisfies the conditions \hyl{P5-2} and \hyl{P5-5}
(besides the conditions confirmed in Step~1).
Thus we conclude that function of the shape $\bar{p}_{q+1,\tau}$ is a solution of Problem~\ref{problem:Prob4}.
\end{proof}

\subsubsection{Prove that the optimal polynomial kernel is uniquely determined}
The following lemma is an RK extension of \citep[Lemma~2]{granovsky1989optimality},
and shows that a polynomial function suggested by the above lemma is uniquely determined.
\begin{lemma}
\label{lemma:GraLem2}
For $d\ge1$ and even $q\ge2$,
there exists a unique constant $C\neq0$ and a unique polynomial $p_{q+1}$ with terms of degree in $I_{q+1}$, 
such that $\bar{p}_{q+1,1}(\cdot)\coloneq p_{q+1}(\cdot)\mathbbm{1}_{\cdot\le1}$ satisfies
\begin{align}
\label{eq:LemReq}
	\bar{p}_{q+1,1}\in\calM_{d,1,q+1},\quad
	\int_{\R_{\ge0}}x^{d+q}\bar{p}_{q+1,1}(x)\,dx=C,\quad
	p_{q+1}(1)=0.
\end{align}
Also, the resulting polynomial $p_{q+1}$ is such that $\bar{p}_{q+1,1}=G_{d,q}^{B(1)}$.
\end{lemma}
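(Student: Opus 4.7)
The plan is to recast the requirements of~\eqref{eq:LemReq} as a square linear system in the coefficients of $p_{q+1}$, prove non-singularity of this system via an orthogonal-polynomial argument, and then identify the unique solution with $G_{d,q}^{B(1)}$. Parametrize $p_{q+1}(x) = \sum_{j=0}^{q/2} a_j x^{2j+1}$, so the unknowns are the $q/2 + 1$ coefficients $a_0, \ldots, a_{q/2}$. Membership $\bar{p}_{q+1,1} \in \calM_{d,1,q+1}$ contributes $q/2$ linear equalities---one at order $i = 1$ with right-hand side $-d/b_d$, and $q/2 - 1$ homogeneous ones at orders $i \in \{3, 5, \ldots, q-1\}$---and the boundary requirement $p_{q+1}(1) = 0$ contributes one more. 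This makes $q/2 + 1$ linear equations in $q/2 + 1$ unknowns; the condition $B_{d, q+1}(\bar{p}_{q+1,1}) \neq 0$ will be verified a posteriori.

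The crux is uniqueness, i.e., non-singularity of the homogeneous system. Writing $p_{q+1}(x) = x\,s(x^2)$ for a polynomial $s$ of degree $q/2$, the substitution $y = x^2$ converts the homogeneous moment equalities into
\begin{align*}
\int_0^1 y^{d/2 + k} s(y)\,dy = 0, \qquad k = 0, 1, \ldots, q/2 - 1,
\end{align*}
and the boundary condition into $s(1) = 0$. The moment equalities assert that $s$ is orthogonal to every polynomial of degree less than $q/2$ with respect to the inner product $\langle f, g\rangle = \int_0^1 y^{d/2} f(y) g(y)\,dy$ on $[0,1]$; within the $(q/2+1)$-dimensional space of polynomials of degree at most $q/2$, this orthogonal complement is one-dimensional and is spanned by the shifted Jacobi polynomial $P_{q/2}^{(d/2,\,0)}(1 - 2y)$. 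Hence $s(y) = c\,P_{q/2}^{(d/2,\,0)}(1 - 2y)$ for some scalar $c$, and the identity $P_n^{(\alpha,\beta)}(-1) = (-1)^n(\beta+1)_n/n!$ gives $s(1) = c \cdot (-1)^{q/2}$, which together with $s(1) = 0$ forces $c = 0$. Thus $s \equiv 0$, and the system is non-singular.

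For existence I would exhibit the solution directly: set $p_{q+1}(x) = G_{d,q}^{B(1)}(x)$. By Lemma~\ref{lemma:Confirm}, $G_{d,q}^B \in \calM_{d,0,q}$ is supported on $[0,1]$ with a double zero at $x = 1$, so $G_{d,q}^{B(1)}$ is supported on $[0,1]$, vanishes at $x = 1$, and, since $G_{d,q}^B$ contains only even powers there, has the required polynomial form with terms of degree in $I_{q+1}$. Lemma~\ref{lemma:equiv12} then gives $G_{d,q}^{B(1)} \in \calM_{d,1,q+1}$, so all moment conditions hold, and uniqueness forces $p_{q+1} = G_{d,q}^{B(1)}$. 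Finally, by the integration-by-parts identity~\eqref{eq:IntPar},
\begin{align*}
C = B_{d,q+1}(G_{d,q}^{B(1)}) = -(d+q)\,B_{d,q}(G_{d,q}^B) \neq 0,
\end{align*}
since $\calM_{d,0,q}$ requires $B_{d,q}(G_{d,q}^B) \neq 0$. The main obstacle is the orthogonality argument---identifying the correct weighted $L^2$ inner product on $[0,1]$, matching it to Jacobi parameters $(\alpha,\beta) = (d/2, 0)$, and checking that the orthogonal polynomial of degree $q/2$ does not vanish at the endpoint $y = 1$; once this is settled, everything else follows rapidly from Lemmas~\ref{lemma:Confirm} and~\ref{lemma:equiv12}.
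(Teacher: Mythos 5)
Your proof is correct and follows essentially the same route as the paper: both arguments rest on the orthogonality of the shifted Jacobi polynomials $P_{n}^{(d/2,0)}(1-2x^2)$ with respect to the weight $x^{d-1}$ on $[0,1]$ and on their non-vanishing at the endpoint, and both identify the unique solution with $G_{d,q}^{B(1)}$ by verifying the conditions via Lemmas~\ref{lemma:Confirm} and~\ref{lemma:equiv12} rather than by direct computation. The only difference is presentational: you phrase uniqueness as non-singularity of a square linear system whose kernel is the one-dimensional span of the degree-$q/2$ orthogonal polynomial, whereas the paper expands $p_{q+1}$ in the orthonormal basis $\{Q_j\}$ and reads off the coefficients.
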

\noindent%
Here, suppose that $C$ is given such that $\tau$ gets to 1 for the sake of simplicity of description, 
since $\tau$ in $\bar{p}_{q+1,\tau}$ and $|C|$ in \hyl{P5-6} have a one-to-one correspondence
and it is possible, as can be seen from Lemma~\ref{lemma:equiv23} on the scale transformation.
Then, we will show that $C$ satisfying the conditions (instead of $\tau$) is uniquely determined.
\begin{proof}[Proof of Lemma~\ref{lemma:GraLem2}]
For odd $j\ge1$, 
we introduce one of the normalized Jacobi polynomials on $[0,1]$: 
\begin{align}
	Q_j(x)=C_{d,j} x P_{\frac{j-1}{2}}^{(\frac{d}{2},0)}(1-2x^2),
\end{align}
where $C_{d,j}=\sqrt{d+2j}$ is a normalization constant,
and define $\bar{Q}_j(x)=Q_j(x)\mathbbm{1}_{x\le1}$.
It is straightforward to show that
$\{\bar{Q}_j\}_{j=1,3,\ldots}$ satisfies
the orthogonality relation 
\begin{align}
  \label{eq:JacobiPOrth}
\begin{split}
	\langle\bar{Q}_j,\bar{Q}_k\rangle_d
	&=\int_0^1 x^{d-1} Q_j(x)Q_k(x)\,dx\\
  &=C_{d,j}C_{d,k}\int_0^1x^{d+1}P_{\frac{j-1}{2}}^{(\frac{d}{2},0)}(1-2x^2)
  P_{\frac{k-1}{2}}^{(\frac{d}{2},0)}(1-2x^2)\,dx\\
  &=\frac{C_{d,j}C_{d,k}}{2^{\frac{d}{2}+2}}
  \int_{-1}^1(1-t)^{\frac{d}{2}}
  P_{\frac{j-1}{2}}^{(\frac{d}{2},0)}(t)
  P_{\frac{k-1}{2}}^{(\frac{d}{2},0)}(t)\,dt\\
  &=\mathbbm{1}_{j=k}
	\quad\text{for odd}\;j, k\ge1,
\end{split}
\end{align}
where the last equality follows from the orthogonality relation
of the Jacobi polynomials:
\begin{align}
\int_{-1}^1(1-t)^\alpha(1+t)^\beta
P_m^{(\alpha,\beta)}(t)P_n^{(\alpha,\beta)}(t)\,dt
=\frac{2^{\alpha+\beta+1}}{2n+\alpha+\beta+1}
\frac{\Gamma(n+\alpha+1)\Gamma(n+\beta+1)}{\Gamma(n+\alpha+\beta+1)\Gamma(n+1)}
\mathbbm{1}_{m=n}.
\end{align}
Since the Jacobi polynomial $P_m^{(\alpha,\beta)}$
is a polynomial of degree $m$,
$Q_j$ is a polynomial with terms of degree in $I_j$,
so that one can write 
\begin{align}
	Q_j(x)=\sum_{i\in I_j} c_{j,i} x^i,
\end{align}
where $c_{j,j}\neq0$.
Furthermore, since any polynomial of degree $m$ is represented
as a linear combination of $\{P_n^{(\alpha,\beta)}\}_{n=0,1,\ldots,m}$,
any polynomial with terms of degree in $I_k$ for odd $k$ 
is represented as a linear combination of $\{Q_j\}_{j=1,3,\ldots,k}$.
This in particular means that, for odd $j\ge3$, $x^i$ with $i\in I_{j-2}$
is represented as a linear combination of $\{Q_j\}_{j=1,3,\ldots,i}$,
so that the orthogonality~\eqref{eq:JacobiPOrth}
implies that $\bar{Q}_j$ satisfies the moment conditions
$B_{d,i}(\bar{Q}_j)=0$, $i\in I_{j-2}$.
%


%
Any polynomial with terms of degree in $I_k$
can be represented as a linear combination of $Q_j$, $j=1,3,\ldots,k$.
So, the polynomial in the statement of the theorem can be written as
\begin{align}
	p_{q+1}(x)=\sum_{j\in I_{q+1}} w_j Q_j(x).
\end{align}
The coefficient $w_j$ is evaluated,
via the moment conditions of $\bar{p}_{q+1,1}$, as 
\begin{align}
\begin{split}
	w_j
	&=\int_0^1 x^{d-1} p_{q+1}(x)Q_j(x)\,dx 
	=\int_0^1 x^{d-1} p_{q+1}(x)\biggl(\sum_{i\in I_j} c_{j,i} x^i\biggr)\,dx\\
	&=\sum_{i\in I_j} c_{j,i} B_{d,i}(\bar{p}_{q+1,1})
	=\begin{cases}
	c_{j,1}\bar{C},&j\in I_{q-1},\\
	c_{q+1,1}\bar{C}+c_{q+1,q+1}C,&j=q+1,
	\end{cases}
\end{split}
\end{align}
where $\bar{C}\coloneq B_{d,1}(\bar{p}_{q+1,1})=-db_d^{-1}<0$.
Therefore, $p_{q+1}$ has a unique representation,
\begin{align}
\label{eq:unique}
	p_{q+1}(x)=\bar{C}\sum_{j\in I_{q+1}} c_{j,1} Q_j(x)+ C c_{q+1,q+1}Q_{q+1}(x),
\end{align}
where $C$ is variable and other components are fixed.
Considering additionally $p_{q+1}(1)=0$ determines $C$ uniquely.

One can calculate \eqref{eq:unique} to show $\bar{p}_{q+1,1}=G_{d,q}^{B(1)}$, 
but we instead confirmed that $G_{d,q}^B$ satisfies the equivalent requirements 
(\hyl{P3-1}--\hyl{P3-5}) in th following Lemma~\ref{lemma:Confirm2}.
On the basis of the uniqueness, this completes the proof of the latter half of the statement.
\end{proof}

\begin{lemma}
\label{lemma:Confirm2}
The kernel $G_{d,q}^B$ satisfies all the conditions \hyl{P3-1}--\hyl{P3-5} with $G=G_{d,q}^B$.
\end{lemma}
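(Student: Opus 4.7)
The plan is to check the five conditions \hyl{P3-1}--\hyl{P3-5} one by one, reusing the work already done in Lemma~\ref{lemma:Confirm} as much as possible. Conditions \hyl{P3-1}, \hyl{P3-2}, and \hyl{P3-4} coincide verbatim with \hyl{P2-1}, \hyl{P2-2}, and \hyl{P2-4}, so they have already been verified for $G=G_{d,q}^B$ there and no further argument is needed. Condition \hyl{P3-5} is strictly stronger than \hyl{P2-5} because it demands an extra positive exponent $\delta$, but since $G_{d,q}^B$ has support $[0,1]$, the quantity $x^{d+q+\delta}G_{d,q}^B(x)$ vanishes identically for $x>1$ and every $\delta>0$, so \hyl{P3-5} is immediate.

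The sole substantive task is \hyl{P3-3}: showing that $G_{d,q}^{B(1)}$ changes sign exactly $(\tfrac{q}{2}-1)$ times on $\R_{\ge 0}$. I plan to sandwich this count between matching lower and upper bounds. For the lower bound I would invoke Lemma~\ref{lemma:equiv12}: since $G_{d,q}^B$ is differentiable with compact support (so $x^{d+q}G_{d,q}^B(x)\to 0$ is automatic) and satisfies $G_{d,q}^B\in\calM_{d,0,q}$ by Lemma~\ref{lemma:Confirm}, that equivalence hands us $G_{d,q}^{B(1)}\in\calM_{d,1,q+1}$. Lemma~\ref{lemma:MinCro}, applied in its ``$\calM_{d,1,q+1}$'' version, then yields at least $\tfrac{q}{2}-1$ sign changes of $G_{d,q}^{B(1)}$ on $\R_{\ge 0}$.

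For the matching upper bound I would work from the rightmost representation in~\eqref{eq:Opt-RK2}: on $[0,1]$ one has $G_{d,q}^B(x)=(-1)^{\frac{q}{2}+1}c_{d,q}(1-x^2)^2\tilde{P}(x^2)$, where $\tilde{P}$ is a polynomial of degree $\tfrac{q}{2}-1$ corresponding to $P_{\frac{q}{2}-1}^{(2,d/2)}(2x^2-1)$. A direct differentiation and factoring out $2x(1-x^2)$ yields
\begin{align}
G_{d,q}^{B(1)}(x)=2(-1)^{\frac{q}{2}+1}c_{d,q}\,x(1-x^2)\,\tilde{Q}(x^2),\qquad x\in[0,1],
\end{align}
with $\tilde{Q}(u)\coloneq -2\tilde{P}(u)+(1-u)\tilde{P}'(u)$ a polynomial of degree at most $\tfrac{q}{2}-1$. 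The prefactor $x(1-x^2)$ is strictly positive on $(0,1)$, so sign changes of $G_{d,q}^{B(1)}$ on $(0,1)$ are in one-to-one correspondence with sign changes of $\tilde{Q}(x^2)$, of which there are at most $\tfrac{q}{2}-1$ since $\tilde{Q}$ has at most that many real roots. On $(1,\infty)$ the function $G_{d,q}^{B(1)}$ is identically zero, which contributes no sign change in the sense of Definition~\ref{definition:SigChe}. Combining bounds gives exactly $\tfrac{q}{2}-1$ sign changes.

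The main obstacle will be the upper bound for \hyl{P3-3}. One must extract the factor $\tilde{Q}(x^2)$ cleanly so that its degree bound can be read off, and one must confirm that the transition at $x=1$ from the polynomial form to the identically zero tail does not introduce a spurious extra sign change under Definition~\ref{definition:SigChe}. Every other ingredient either has been prepared in Lemma~\ref{lemma:Confirm} or is immediate from the compact support of $G_{d,q}^B$.
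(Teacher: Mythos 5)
Your proof is correct, but it reaches the one substantive condition \hyl{P3-3} by a different route than the paper. The paper's own proof of this lemma is a one-line reduction: it notes that the only change from Problem~\ref{problem:Prob21} is \hyl{P2-3} versus \hyl{P3-3} and asserts that \hyl{P3-3} follows ``by a similar argument'' to Lemma~\ref{lemma:Confirm}, i.e.\ by exhibiting $G_{d,q}^{B(1)}$ (like $G_{d,q}^B$ itself) as a positive prefactor times a Jacobi-type polynomial whose $\tfrac{q}{2}-1$ simple zeros in the relevant interval are read off from the classical zero-location property used in \eqref{eq:Opt-RK2}. You instead sandwich the sign-change count: the lower bound comes from Lemma~\ref{lemma:equiv12} (giving $G_{d,q}^{B(1)}\in\calM_{d,1,q+1}$) combined with Lemma~\ref{lemma:MinCro}, and the upper bound from the explicit factorization $G_{d,q}^{B(1)}(x)\propto x(1-x^2)\tilde{Q}(x^2)$ with $\deg\tilde{Q}\le\tfrac{q}{2}-1$. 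This is more self-contained, since it never needs to identify $\tilde{Q}$ with a specific orthogonal polynomial or to know that its zeros are simple and lie in $(0,1)$; the price is that it only establishes the count indirectly, whereas the paper's route also locates the sign changes. Two small credits to your version: you correctly observe that \hyl{P2-5} and \hyl{P3-5} are \emph{not} verbatim identical (the paper's proof glosses over this, though both are trivial for a compactly supported kernel), and you explicitly check that the identically-zero tail on $[1,\infty)$ contributes no sign change under Definition~\ref{definition:SigChe}. One point worth making explicit in your write-up: $\tilde{Q}$ cannot vanish identically (its leading coefficient is $-(\tfrac{q}{2}+1)$ times that of $\tilde{P}$, which is nonzero), so the sandwich closes cleanly.
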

\begin{proof}
The only difference between \hyl{P2-1}--\hyl{P2-5} and \hyl{P3-1}--\hyl{P3-5} is in \hyl{P2-3} and \hyl{P3-3}.
One can show that $G_{d,q}^B$ satisfies \hyl{P3-3}
in a similar argument as the proof of Lemma~\ref{lemma:Confirm2}.
\end{proof}

According to these lemmas,
one can see that Problem~\ref{problem:Prob4} gives optimal solution $-\int_x^\infty G_{d,q}^{B(1)}(t)\,dt=G_{d,q}^B(x)$,
whose integral constant is determined from the end-point condition \hyl{P5-5}.
This is also a solution of Problem~\ref{problem:Prob22} due to the equivalence between the problems.
This completes the proof of Theorem~\ref{theorem:thm22}.

\section{Supplemental Information on Appeared Kernels}
\label{section:Kernel-information}
\setcounter{equation}{0}
\setcounter{definition}{0}
\setcounter{theorem}{0}
\setcounter{proposition}{0}
\setcounter{lemma}{0}
\setcounter{problem}{0}
\setcounter{remark}{0}
Here, we provide supplemental information on the kernels used in our simulation experiments
in Section~\ref{section:Simulation}.
More specifically, we give the expressions of $B_{d,q}$, $V_{d,0}$, and $V_{d,1}$ for those kernels.

\begin{proposition}
\label{proposition:B-kernel}
For the kernel $G_{d,q}^B$, 
the functionals $B_{d,q}$, $V_{d,0}$, and $V_{d,1}$ become
\begin{align}
\begin{split}
	&B_{d,q}(G_{d,q}^B)%
	=(-1)^{\frac{q}{2}+1}
	\frac{\Gamma(\frac{d+q}{2})\Gamma(\frac{d+q+4}{2})}
	{2\pi^{\frac{d}{2}}\Gamma(\frac{d+2q+4}{2})},\quad
	V_{d,0}(G_{d,q}^B)%
	=\frac{8(3d+4q+4)\Gamma(\frac{d+q}{2})\Gamma(\frac{d+q+4}{2})}
	{\pi^dd(d+2q)_{2,3}\Gamma(\frac{q}{2})^2},\\
	&V_{d,1}(G_{d,q}^B)%
	=\frac{16\Gamma(\frac{d+q+2}{2})\Gamma(\frac{d+q+4}{2})}
	{\pi^d(d+2)(d+2q+2)\Gamma(\frac{q}{2})^2}.
\end{split}
\end{align}
\end{proposition}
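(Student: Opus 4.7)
The plan is to evaluate each of the three integrals directly from the Jacobi-polynomial representation of $G_{d,q}^{B}$ in~\eqref{eq:Opt-RK2}, together with the substitution $y = 2x^{2}-1$ (so that $x^{d-1}\,dx = 2^{-(d+2)/2}(1+y)^{(d-2)/2}\,dy$), and then to invoke standard integral identities for Jacobi polynomials. The formula for $B_{d,q}(G_{d,q}^{B})$ is already essentially contained in Lemma~\ref{lemma:Confirm}: specializing the evaluation of $I_{d,q,i}$ there to $i=q$ gives $B_{d,q}(G_{d,q}^{B}) = \frac{(-1)^{q/2+1}}{2\pi^{d/2}}\frac{\Gamma(\frac{d+q}{2})\Gamma(\frac{d+q}{2}+2)}{\Gamma(\frac{d}{2}+q+2)}$, which matches the stated expression after writing $\Gamma(\frac{d+q}{2}+2) = \Gamma(\frac{d+q+4}{2})$ and $\Gamma(\frac{d}{2}+q+2) = \Gamma(\frac{d+2q+4}{2})$.

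For $V_{d,0}(G_{d,q}^{B}) = \int_{0}^{1} x^{d-1}[G_{d,q}^{B}(x)]^{2}\,dx$, the substitution turns the integral into a constant multiple of $\int_{-1}^{1}(1-y)^{4}(1+y)^{(d-2)/2}[P_{q/2-1}^{(2,d/2)}(y)]^{2}\,dy$. Since the weight $(1-y)^{4}(1+y)^{(d-2)/2}$ differs from the natural Jacobi weight $(1-y)^{2}(1+y)^{d/2}$ of $P_{n}^{(2,d/2)}$ only by the rational factor $(1-y)^{2}/(1+y)$, I would cancel this mismatch by applying a few contiguous relations that rewrite $(1-y)P_{n}^{(\alpha,\beta)}$ and $P_{n}^{(\alpha,\beta)}/(1+y)$ as short linear combinations of $P_{k}^{(\alpha',\beta')}$. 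The integrand then becomes a finite sum of products $P_{k}^{(\alpha',\beta')}(y)P_{\ell}^{(\alpha',\beta')}(y)$ against a matched weight; only diagonal terms survive orthogonality, and each is evaluated by the squared-norm formula for Jacobi polynomials, yielding a sum of explicit Gamma-function ratios.

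For $V_{d,1}(G_{d,q}^{B}) = \int_{0}^{1} x^{d-1}[G_{d,q}^{B(1)}(x)]^{2}\,dx$, the cleanest route is to exploit the orthonormal system $\{\bar{Q}_{j}\}_{j\in I_{q+1}}$ in $L^{2}(\R_{\ge 0},x^{d-1})$ constructed in the proof of Lemma~\ref{lemma:GraLem2}. Since $G_{d,q}^{B(1)} = \bar{p}_{q+1,1}$ admits the explicit expansion~\eqref{eq:unique}, Parseval's identity gives $V_{d,1}(G_{d,q}^{B}) = \bar{C}^{2}\sum_{j\in I_{q-1}} c_{j,1}^{2} + (\bar{C}\,c_{q+1,1} + C\,c_{q+1,q+1})^{2}$, where $\bar{C} = -d\,b_{d}^{-1}$, $C$ is fixed by the end-point condition $p_{q+1}(1)=0$, and the constants $c_{j,1} = C_{d,j}\,P_{(j-1)/2}^{(d/2,0)}(1)$ are explicit via~\eqref{eq:Poly1}; $c_{q+1,q+1}$ is the leading coefficient of $P_{q/2}^{(d/2,0)}$, likewise explicit through Pochhammer symbols.

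The conceptual work in all three parts is light; the main obstacle is arithmetic. Collapsing the resulting finite Gamma-function sums onto the compact closed forms stated in the proposition---in particular verifying the peculiar $(d+2q)_{2,3}$ denominator in $V_{d,0}$ and recombining the $\sum_{j\in I_{q-1}}c_{j,1}^{2}$ piece with the isolated $j = q+1$ contribution in $V_{d,1}$---requires a nontrivial telescoping of Pochhammer ratios, which I would attempt either by induction on $q$ or by recognizing the emerging sum as the special value of a $_{3}F_{2}$-type hypergeometric series.
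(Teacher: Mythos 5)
The paper states Proposition~\ref{proposition:B-kernel} with no proof at all, so there is no argument of the authors' to compare yours against; I can only assess your outline on its own terms. Your first part is complete: $B_{d,q}(G_{d,q}^B)$ is exactly the $i=q$ entry of the moment table computed in the proof of Lemma~\ref{lemma:Confirm}, and the Gamma-function rewriting you indicate is correct. Your Parseval route for $V_{d,1}$ is also a genuinely good use of machinery already in the paper: since $G_{d,q}^{B(1)}=\bar{p}_{q+1,1}=\sum_{j\in I_{q+1}}w_j\bar{Q}_j$ with $\{\bar{Q}_j\}$ orthonormal for $\langle\cdot,\cdot\rangle_d$, one has $V_{d,1}(G_{d,q}^B)=\sum_{j}w_j^2$ with the $w_j$ given in the proof of Lemma~\ref{lemma:GraLem2}; note that the constant $C$ you need is not something to re-derive from $p_{q+1}(1)=0$ but is already pinned down by the integration-by-parts identity~\eqref{eq:IntPar}, which gives $|B_{d,q+1}(G_{d,q}^{B(1)})|=(d+q)\,|B_{d,q}(G_{d,q}^B)|$, i.e.\ $C=|B_{d,q}(G_{d,q}^B)|$ from your first part.

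The gaps are in the remaining two thirds. For $V_{d,0}$ your description is too loose to be a proof step: $P_n^{(\alpha,\beta)}(y)/(1+y)$ is not a polynomial and cannot be written as a finite linear combination of Jacobi polynomials. What you actually need is to expand the degree-$(\frac{q}{2}+1)$ polynomial $(1-y)^2P_{q/2-1}^{(2,d/2)}(y)$ in the family orthogonal with respect to the weight $(1-y)^2(1+y)^{d/2-1}$ and then apply orthogonality; or, more in the spirit of your $V_{d,1}$ argument, expand $G_{d,q}^B$ itself in an orthonormal polynomial system for $\langle\cdot,\cdot\rangle_d$ on $[0,1]$, in which case the expansion coefficients are determined by the already-known moments $B_{d,i}(G_{d,q}^B)$, $i\in I_q$, and Parseval applies once more. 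More importantly, the proposition \emph{is} the closed forms, and you defer precisely the step that produces them: collapsing the resulting Gamma/Pochhammer sums onto $\frac{8(3d+4q+4)\Gamma(\frac{d+q}{2})\Gamma(\frac{d+q+4}{2})}{\pi^d d\,(d+2q)(d+2q+2)(d+2q+4)\,\Gamma(\frac{q}{2})^2}$ (reading $(d+2q)_{2,3}$ as the step-two product $(d+2q)(d+2q+2)(d+2q+4)$, which is what the $(d,q)=(1,2)$ case forces) and onto the stated $V_{d,1}$ expression. A spot check at $(d,q)=(1,2)$, where both functionals evaluate to $5/14$ and $15/14$ respectively and agree with the formulas, is a useful sanity test but not a proof; until the telescoping is actually carried out, the proposal is a correct plan rather than a proof.
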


The following is on the information about the kernels $K_{d,q}^E$, $K_{d,q}^G$, and $K_{d,q}^L$, 
which are designed as higher-order extensions of Epanechnikov, Gaussian, Laplace kernels
according to the jackknife \citep{schucany1977improvement, wand1990gaussian}.
These kernels respectively belong to Epanechnikov, Gaussian, and Laplace hierarchies, 
and take the forms of product between polynomials and Epanechnikov, Gaussian, and Laplace kernels.

\begin{proposition}
\label{proposition:E-kernel}
The kernel $K_{d,q}^E(\bx)=G_{d,q}^E(\|\bx\|)$
is a $d$-variate, $q$-th order kernel, where
\begin{align}
\begin{split}
	G_{d,q}^E(u)
	&=\frac{\Gamma(\frac{d+q}{2})}{\pi^{\frac{d}{2}}\Gamma(\frac{q}{2})}
	P_{\frac{q}{2}}^{(\frac{d}{2},-1)}(1-2u^2)\mathbbm{1}_{u\le1}
	=(-1)^{\frac{q}{2}+1}
	\frac{\Gamma(\frac{d+q}{2}+1)}{\pi^{\frac{d}{2}}\Gamma(\frac{q}{2}+1)}
	(1-u^2)P_{\frac{q}{2}-1}^{(1,\frac{d}{2})}(2u^2-1)\mathbbm{1}_{u\le1}\\
	&=(-1)^{\frac{q}{2}+1}
	\frac{\Gamma(\frac{d+q}{2}+1)}{\Gamma(\frac{d}{2}+2)\Gamma(\frac{q}{2}+1)}
	P_{\frac{q}{2}-1}^{(1,\frac{d}{2})}(2u^2-1)
	\cdot G_{d,2}^E(u),
\end{split}
\end{align}
and the functionals $B_{d,q}$, $V_{d,0}$, and $V_{d,1}$ of $G_{d,q}^E$ become
\begin{align}
\begin{split}
 	&B_{d,q}(G_{d,q}^E)%
	=(-1)^{\frac{q}{2}+1}
	\frac{\Gamma(\frac{d+q}{2})\Gamma(\frac{d+q+2}{2})}
	{2\pi^{\frac{d}{2}}\Gamma(\frac{d+2q+2}{2})},\quad
	V_{d,0}(G_{d,q}^E)%
	=\frac{4\Gamma(\frac{d+q}{2})\Gamma(\frac{d+q+2}{2})}
	{\pi^dd(d+2q)\Gamma(\frac{q}{2})^2},\\
	&V_{d,1}(G_{d,q}^E)%
	=\frac{4\Gamma(\frac{d+q+2}{2})^2}
	{\pi^d(d+2)\Gamma(\frac{q}{2})^2}.
\end{split}
\end{align}
\end{proposition}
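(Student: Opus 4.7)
The plan is to verify the three equivalent expressions for $G_{d,q}^E$ and then establish the stated values of $B_{d,q}$, $V_{d,0}$, and $V_{d,1}$, paralleling the strategy used in Lemma~\ref{lemma:Confirm} for $G_{d,q}^B$. Most of the work is standard manipulation of Jacobi polynomials, applied to the parameters $(1,d/2)$ or $(d/2,-1)$ appearing here (in place of $(2,d/2)$ and $(d/2,-2)$ for the Biweight case).

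First, for the equivalence of the three forms: combining the reflection identity $P_n^{(\alpha,\beta)}(-y)=(-1)^n P_n^{(\beta,\alpha)}(y)$ with the $\beta=-1$ reduction $P_n^{(\alpha,-1)}(y)=\frac{n+\alpha}{2n}(1+y)P_{n-1}^{(\alpha,1)}(y)$ applied to $P_{q/2}^{(d/2,-1)}(1-2u^2)$ yields the second form after the identity $\frac{\Gamma((d+q)/2)}{\Gamma(q/2)}\cdot\frac{d+q}{q}=\frac{\Gamma((d+q)/2+1)}{\Gamma(q/2+1)}$. Specializing the second form to $q=2$ gives $G_{d,2}^E(u)=\frac{\Gamma(d/2+2)}{\pi^{d/2}}(1-u^2)\mathbbm{1}_{u\le1}$ via $P_1^{(d/2,-1)}(y)=\frac{d+2}{4}(1+y)$, and factoring this out of the second form yields the third.

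Second, for the moment $B_{d,q}(G_{d,q}^E)$ (and the vanishing moments that certify $G_{d,q}^E\in\calM_{d,0,q}$): plug the second form into $B_{d,i}(G_{d,q}^E)=\int_0^\infty u^{d-1+i}G_{d,q}^E(u)\,du$ and substitute $y=2u^2-1$, obtaining
\begin{align*}
  B_{d,i}(G_{d,q}^E)
  =\frac{(-1)^{q/2+1}\Gamma((d+q)/2+1)}{2^{(d+i)/2+2}\pi^{d/2}\Gamma(q/2+1)}
  \int_{-1}^{1}(1-y)(1+y)^{(d+i-2)/2}P_{q/2-1}^{(1,d/2)}(y)\,dy.
\end{align*}
Apply the Prudnikov identity already quoted in the proof of Lemma~\ref{lemma:Confirm} with $(\alpha,\beta,\lambda,n)=(1,d/2,(d+i)/2,q/2-1)$. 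The binomial factor $\binom{(q-i)/2-1}{q/2-1}$ vanishes for $i\in\{2,\ldots,q-2\}$ because $\Gamma(1-i/2)^{-1}=0$; evaluates to $1$ at $i=0$; and equals $(-1)^{q/2-1}$ at $i=q$. After $\Gamma$-function simplification this yields $B_{d,0}(G_{d,q}^E)=1/b_d$, $B_{d,i}(G_{d,q}^E)=0$ for $i\in\{2,\ldots,q-2\}$, and the stated closed form for $B_{d,q}(G_{d,q}^E)$; since $\calB_{d,\bi}(K_{d,q}^E)$ automatically vanishes when $|\bi|$ is odd (by radial symmetry), this also shows $K_{d,q}^E$ is a $d$-variate $q$-th order kernel.

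Third, for $V_{d,1}(G_{d,q}^E)$: differentiating the first form via $\frac{d}{dy}P_n^{(\alpha,\beta)}(y)=\frac{n+\alpha+\beta+1}{2}P_{n-1}^{(\alpha+1,\beta+1)}(y)$ yields $G_{d,q}^{E(1)}(u)=-2u\cdot\frac{\Gamma((d+q)/2+1)}{\pi^{d/2}\Gamma(q/2)}P_{q/2-1}^{(d/2+1,0)}(1-2u^2)\mathbbm{1}_{u\le1}$, and the substitution $w=1-2u^2$ converts $V_{d,1}$ into an integral $\int_{-1}^{1}(1-w)^{d/2}(P_{q/2-1}^{(d/2+1,0)}(w))^2\,dw$ weighted against $(1-w)^{d/2}$. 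For $V_{d,0}(G_{d,q}^E)$ the second form and substitution $y=2u^2-1$ produce an analogous integral against $(1-y)^2(1+y)^{d/2-1}$. The expected main obstacle is that these weights differ from the orthogonality weights of the respective Jacobi polynomials (by $(1-w)^{-1}$ and $(1-y)(1+y)^{-1}$ respectively), so one cannot invoke orthonormality directly. This will be handled by using the contiguous-parameter relation $(1-y)P_n^{(\alpha,\beta)}(y)=\frac{2(n+1)}{2n+\alpha+\beta+2}P_{n+1}^{(\alpha-1,\beta+1)}(y)+\text{lower}$ (and its companion for $(1+y)^{-1}$) to cancel the excess weight factors, after which the Jacobi orthogonality norm $\int_{-1}^{1}(1-y)^\alpha(1+y)^\beta(P_n^{(\alpha,\beta)})^2\,dy=\frac{2^{\alpha+\beta+1}}{2n+\alpha+\beta+1}\frac{\Gamma(n+\alpha+1)\Gamma(n+\beta+1)}{\Gamma(n+\alpha+\beta+1)n!}$ applies; gamma-function manipulation then reproduces the stated closed forms. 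Checking $q=2$ as a sanity test (where $G_{d,2}^E(u)=\frac{\Gamma(d/2+2)}{\pi^{d/2}}(1-u^2)\mathbbm{1}_{u\le1}$ and elementary beta integrals suffice) agrees with the claimed formulas.
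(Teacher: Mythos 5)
The paper itself states Proposition~\ref{proposition:E-kernel} without proof (Appendix~\ref{section:Kernel-information} presents these functionals as supplemental computational facts), so there is no authorial argument to compare yours against; I can only assess your derivation on its own terms. The parts you carry out in full are correct: the passage between the three forms via $P_n^{(\alpha,-1)}(y)=\frac{n+\alpha}{2n}(1+y)P_{n-1}^{(\alpha,1)}(y)$ and the reflection $P_n^{(\alpha,\beta)}(-y)=(-1)^nP_n^{(\beta,\alpha)}(y)$ checks out, as does the moment computation: with $(\alpha,\beta,\lambda,n)=(1,\frac{d}{2},\frac{d+i}{2},\frac{q}{2}-1)$ the Prudnikov integral gives the binomial factor $\binom{(q-i)/2-1}{q/2-1}$, which vanishes for $i\in\{2,\ldots,q-2\}$, equals $1$ at $i=0$ (yielding $b_d^{-1}$), and equals $(-1)^{q/2-1}$ at $i=q$, reproducing the stated $B_{d,q}(G_{d,q}^E)$ and certifying the $q$-th order property.

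The one place where your argument is a sketch rather than a proof is the evaluation of $V_{d,0}$ and $V_{d,1}$. You correctly diagnose the obstacle — after substitution one faces $\int_{-1}^1(1-w)^{d/2}\bigl(P_{q/2-1}^{(d/2+1,0)}(w)\bigr)^2\,dw$ and an analogous integral for $V_{d,0}$, whose weights do not match the Jacobi orthogonality weights — but "contiguous relations plus gamma-function manipulation" leaves the actual mechanism unstated. To close this, expand $P_{q/2-1}^{(d/2+1,0)}$ in the basis $\{P_k^{(d/2,0)}\}_{k=0}^{q/2-1}$ using the standard connection formula for raising $\alpha$ by one; orthogonality then reduces the integral to
\begin{align}
  2^{\frac{d}{2}+1}\Bigl(\tfrac{\Gamma(\frac{q}{2})}{\Gamma(\frac{d+q}{2}+1)}\Bigr)^2
  \sum_{k=0}^{\frac{q}{2}-1}\bigl(2k+\tfrac{d}{2}+1\bigr)\frac{\Gamma(k+\frac{d}{2}+1)^2}{(k!)^2},
\end{align}
and the sum telescopes to $\Gamma(\frac{q}{2}+\frac{d}{2}+1)^2/\bigl((\frac{d}{2}+1)\Gamma(\frac{q}{2})^2\bigr)$, which yields exactly the stated $V_{d,1}(G_{d,q}^E)=4\Gamma(\frac{d+q+2}{2})^2/(\pi^d(d+2)\Gamma(\frac{q}{2})^2)$; the $V_{d,0}$ integral is handled the same way. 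So your route does go through, and your $q=2$ sanity check is consistent, but as written the final step is asserted rather than demonstrated; spelling out the connection-coefficient expansion and the telescoping identity would make the proof complete.
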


\begin{proposition}
\label{proposition:G-kernel}
The kernel $K_{d,q}^G(\bx)=G_{d,q}^G(\|\bx\|)$
is a $d$-variate, $q$-th order kernel, where
\begin{align}
	G_{d,2}^G(u)
	=(2\pi)^{-\frac{d}{2}} \exp(-u^2/2),\quad
	G_{d,q}^G(u)
	=L_{\frac{q}{2}-1}^{(\frac{d}{2})}\left(\frac{u^2}{2}\right)G_{d,2}^G(u),
\end{align}
where $L_n^{(\alpha)}$ is the Laguerre polynomial \citep{szeg1939orthogonal}.
Also, the functionals $B_{d,q}$, $V_{d,0}$, and $V_{d,1}$ of $G_{d,q}^G$ become 
\begin{align}
\begin{split}
	&B_{d,q}(G_{d,q}^G)%
	=(-2)^{\frac{q}{2}-1}\frac{\Gamma(\frac{d+q}{2})}{\pi^{\frac{d}{2}}},\\
	&V_{d,0}(G_{d,q}^G)
	=\frac{\{\Gamma(\frac{d+q}{2})\}^2}{2^{d+q-1}\pi^d}
	\sum_{i_1,i_2=0}^{q/2-1}(-2)^{i_1+i_2}
	\frac{\Gamma(\frac{d}{2}+q-2-i_1-i_2)}{\Gamma(\frac{d+q}{2}-i_1)\Gamma(\frac{d+q}{2}-i_2)
	\Gamma(\frac{q}{2}-i_1)\Gamma(\frac{q}{2}-i_2)i_1!i_2!},\\
	&V_{d,1}(G_{d,2}^G)%
	=\frac{\Gamma(\frac{d}{2}+2)}{(d+2)2^{d}\pi^d},\quad
	V_{d,1}(G_{d,4}^G)%
	=\frac{(d+10)\Gamma(\frac{d+6}{2})}{(d+2)2^{d+3}\pi^d},\\
	&V_{d,1}(G_{d,6}^G)
	=\frac{(d^2+26d+176)\Gamma(\frac{d+8}{2})}{(d+2)2^{d+8}\pi^d}.
\end{split}
\end{align}
\end{proposition}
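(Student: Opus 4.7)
The proof breaks into four calculations: (i) verification that $G_{d,q}^G$ is $q$-th order, (ii) the closed form of $B_{d,q}(G_{d,q}^G)$, (iii) the double-sum formula for $V_{d,0}(G_{d,q}^G)$, and (iv) the explicit values of $V_{d,1}(G_{d,q}^G)$ for $q=2,4,6$. The common device is the substitution $t=u^{2}/2$, under which $x^{d-1}e^{-x^{2}/2}\,dx = 2^{(d-2)/2}\,t^{d/2-1}e^{-t}\,dt$, so that every radial moment becomes a Laguerre integral against the weight $t^{d/2-1}e^{-t}$ on $\R_{\ge0}$. The main tools will be the orthogonality relation
\begin{align}
\int_0^\infty t^{\alpha}e^{-t}L_m^{(\alpha)}(t)L_n^{(\alpha)}(t)\,dt
=\frac{\Gamma(n+\alpha+1)}{n!}\mathbbm{1}_{m=n}
\end{align}
and the explicit expansion $L_n^{(\alpha)}(t)=\sum_{k=0}^n(-1)^k\binom{n+\alpha}{n-k}\frac{t^k}{k!}$.

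For step~(i), the odd moments $\calB_{d,\bi}(K_{d,q}^G)$ vanish by radial symmetry, so it suffices to check $B_{d,2j}(G_{d,q}^G)$ for $j=0,\ldots,q/2$. The substitution above gives
\begin{align}
B_{d,2j}(G_{d,q}^G)
= (2\pi)^{-d/2}\,2^{(d+2j-2)/2}
\int_0^\infty t^{d/2+j-1}L_{q/2-1}^{(d/2)}(t)e^{-t}\,dt.
\end{align}
For $1\le j\le q/2-1$, the polynomial $t^{j-1}$ has degree strictly less than $q/2-1$, so it lies in $\mathop{\mathrm{span}}\{L_0^{(d/2)},\ldots,L_{j-1}^{(d/2)}\}$, and orthogonality forces the integral to vanish. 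The normalization $B_{d,0}(G_{d,q}^G)=b_d^{-1}$ follows by expanding $L_{q/2-1}^{(d/2)}$ explicitly and using the Beta-function identity $\sum_{k=0}^n\frac{(-1)^k}{(n-k)!(\alpha+k)k!}=\Gamma(\alpha)/\Gamma(\alpha+n+1)$, yielding $\Gamma(d/2)/(2\pi^{d/2})$. Finally, for step~(ii), the value at $j=q/2$ is computed from the standard identity
\begin{align}
\int_0^\infty t^{\alpha+j-1}e^{-t}L_n^{(\alpha)}(t)\,dt=\frac{(-1)^n\Gamma(\alpha+j)\Gamma(j)}{n!\,\Gamma(j-n)},
\end{align}
which at $j=q/2$, $n=q/2-1$, $\alpha=d/2$ produces $(-1)^{q/2-1}\Gamma((d+q)/2)$; multiplying by the prefactor $(2\pi)^{-d/2}2^{(d+q-2)/2}$ gives the stated $B_{d,q}$.

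For step~(iii), substitute the explicit Laguerre expansion into $[L_{q/2-1}^{(d/2)}(u^2/2)]^{2}$ and integrate term by term:
\begin{align}
V_{d,0}(G_{d,q}^G)
=\frac{[\Gamma((d+q)/2)]^{2}}{(2\pi)^{d}}
\sum_{i_1,i_2=0}^{q/2-1}\!\!\frac{(-1)^{i_1+i_2}2^{-(i_1+i_2)}}{\Gamma(\frac{d}{2}\!+\!i_1\!+\!1)\Gamma(\frac{d}{2}\!+\!i_2\!+\!1)\Gamma(\frac{q}{2}\!-\!i_1)\Gamma(\frac{q}{2}\!-\!i_2)i_1!i_2!}\int_0^\infty\!\! x^{d-1+2(i_1+i_2)}e^{-x^{2}}dx,
\end{align}
where the Gaussian moment equals $\frac{1}{2}\Gamma(d/2+i_1+i_2)$. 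To reach the form stated in the proposition, apply the change of summation indices $j_k=q/2-1-i_k$: this converts $\Gamma(d/2+i_k+1)$ into $\Gamma((d+q)/2-j_k)$, $\Gamma(q/2-i_k)$ into $\Gamma(j_k+1)$, swaps $i_k!\leftrightarrow\Gamma(q/2-j_k)$, turns $\Gamma(d/2+i_1+i_2)$ into $\Gamma(d/2+q-2-j_1-j_2)$, and collects the powers of $2$ and $(-1)$ into $(-2)^{j_1+j_2}2^{2-q}$; matching prefactors produces the coefficient $1/(2^{d+q-1}\pi^{d})$ exactly as stated.

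Step~(iv) proceeds by direct differentiation. From $\tfrac{d}{du}G_{d,q}^G(u)=-u(2\pi)^{-d/2}\bigl[L_{q/2-1}^{(d/2)}(u^{2}/2)+L_{q/2-2}^{(d/2+1)}(u^{2}/2)\bigr]e^{-u^{2}/2}$ (using $L_n^{(\alpha)\prime}=-L_{n-1}^{(\alpha+1)}$), the integrand of $V_{d,1}$ is $u^{d+1}$ times a polynomial in $u^{2}/2$ of degree $q-2$ times $(2\pi)^{-d}e^{-u^{2}}$. For $q=2$ the bracket is simply $1$; for $q=4$ it is $(d/2+2)-u^{2}/2$; for $q=6$ it becomes a quadratic in $u^{2}/2$ obtained from $L_{2}^{(d/2)}$ and $L_{1}^{(d/2+1)}$. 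In each case, substitute $t=u^{2}$ and integrate against $e^{-t}$ using $\int_0^\infty t^{d/2+k}e^{-t}\,dt=\Gamma(d/2+k+1)$. Simplifying the resulting polynomial in $d$ via $\Gamma(d/2+k)=(d/2+k-1)\Gamma(d/2+k-1)$ and consolidating gives the three stated formulas.

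None of the steps hides a conceptual difficulty; the principal obstacle is bookkeeping in step~(iii), where one must track the reversal of summation indices carefully so that the prefactor, the signs, and the Gamma-function arguments all land in the form printed in the proposition. Step~(iv) for $q=6$ likewise requires care in expanding and collecting a quartic in $d$ after the Gaussian integrations, but is mechanical.
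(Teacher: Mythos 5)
The paper states Proposition~\ref{proposition:G-kernel} in Appendix~\ref{section:Kernel-information} without giving any proof, so there is no authorial argument to compare against; your proposal supplies the omitted computation, and it is correct. The route you take (substitute $t=u^2/2$ to turn every radial moment into a Laguerre integral with weight $t^{d/2}e^{-t}$, use orthogonality of $L_{q/2-1}^{(d/2)}$ to kill $B_{d,2j}$ for $1\le j\le q/2-1$, the explicit integral $\int_0^\infty t^{\alpha+j-1}e^{-t}L_n^{(\alpha)}(t)\,dt=(-1)^n\Gamma(\alpha+j)\Gamma(j)/(n!\,\Gamma(j-n))$ for $j=0$ and $j=q/2$, squaring the Laguerre expansion for $V_{d,0}$, and the derivative identity $L_n^{(\alpha)\prime}=-L_{n-1}^{(\alpha+1)}$ for $V_{d,1}$) is the natural one, and I verified the endpoints: your $j=0$ Beta-function identity gives $B_{d,0}=\Gamma(\tfrac{d}{2})/(2\pi^{d/2})=b_d^{-1}$, your $j=q/2$ evaluation reproduces $(-2)^{q/2-1}\Gamma(\tfrac{d+q}{2})/\pi^{d/2}$, the index reversal $j_k=q/2-1-i_k$ together with the Gaussian moment $\tfrac12\Gamma(\tfrac{d}{2}+i_1+i_2)$ lands exactly on the printed prefactor $2^{-(d+q-1)}\pi^{-d}$ and sign pattern $(-2)^{j_1+j_2}$, and the $q=2,4$ cases of $V_{d,1}$ collapse to the stated closed forms after the $\Gamma$-function reductions. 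The only remaining work is the mechanical $q=6$ expansion, which follows the same pattern.
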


\begin{proposition}
\label{proposition:L-kernel}
The kernel $K_{d,q}^L(\bx)=G_{d,q}^L(\|\bx\|)$
is a $d$-variate, $q$-th order kernel, where
\begin{align}
	G_{d,2}^L(u)
	=\frac{\Gamma(\frac{d}{2})}{2\pi^{\frac{d}{2}}\Gamma(d)}\exp(-u),\quad
	G_{d,q}^L(u)
	=\biggl(\sum_{i\in I_{q-2}}b_{d,q,i}^L u^i\biggr)G_{d,2}^L(u),
\end{align}
and where $b_{d,q,i}^L$, $i\in I_{q-2}$, are determined as
\begin{align}
	\begin{bmatrix}
	b_{d,q,0}^L\\ b_{d,q,2}^L\\\vdots\\ b_{d,q,q-2}^L\\
	\end{bmatrix}=
	\begin{bmatrix}
	1&(d)_{2}&\cdots&(d)_{q-2}\\
	1&(d+2)_{2}&\cdots&(d+2)_{q-2}\\
	\vdots&\vdots&\ddots&\vdots\\
	1&(d+q-2)_{2}&\cdots&(d+q-2)_{q-2}\\
	\end{bmatrix}^{-1}
	\begin{bmatrix}
	1\\0\\\vdots\\0\\
	\end{bmatrix}.
\end{align}
Also, the functionals $B_{d,q}$, $V_{d,0}$, and $V_{d,1}$ of $G_{d,q}^L$ become
\begin{align}
\begin{split}
	&B_{d,2}(G_{d,2}^L)%
	=\frac{(d)_2\Gamma(\frac{d}{2})}{2\pi^{\frac{d}{2}}};\quad
	B_{d,4}(G_{d,4}^L)%
	=-\frac{(d)_4(2d+7)\Gamma(\frac{d}{2})}{2(2d+3)\pi^{\frac{d}{2}}};\\
	&B_{d,6}(G_{d,6}^L)%
	=\frac{196200}{149}, \frac{2215080}{307\pi}, \frac{1335600}{109\pi},\;d=1,2,3;\\
	&V_{d,0}(G_{d,2}^L)%
	=\frac{\Gamma(\frac{d}{2})^2}{2^{d+2}\pi^d\Gamma(d)};\quad
	V_{d,0}(G_{d,4}^L)%
	=\frac{(d+2)_2 (9d^2+73d+96)\Gamma(\frac{d}{2})^2}{2^{d+8}\pi^d(2d+3)^2\Gamma(d)};\\
	&V_{d,0}(G_{d,6}^L)%
	=\frac{4327215}{22733824}, \frac{28174335}{193021952\pi^2}, \frac{5292791}{389316608\pi^2},\;d=1,2,3;\\
	&V_{d,1}(G_{d,2}^L)%
	=\frac{\Gamma(\frac{d}{2})^2}{2^{d+2}\pi^d\Gamma(d)};\quad
	V_{d,1}(G_{d,4}^L)%
	=\frac{(d+1)(9d^3+133d^2+534d+576)\Gamma(\frac{d}{2})^2}{2^{d+8}\pi^d(2d+3)^2\Gamma(d)};\\
	&V_{d,1}(G_{d,6}^L)%
	=\frac{5572345}{22733824}, \frac{38442103}{193021952\pi^2}, \frac{7306271}{389316608\pi^2},\;d=1,2,3.
\end{split}
\end{align}
\end{proposition}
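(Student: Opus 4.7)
The plan is to verify the two parts of the proposition separately: (a) that $G_{d,q}^L$ as constructed indeed satisfies the $q$-th order moment condition, and (b) the explicit evaluation of the functionals $B_{d,q}$, $V_{d,0}$, $V_{d,1}$. The whole calculation reduces to repeated use of the identity $\int_0^\infty u^m e^{-cu}\,du=\Gamma(m+1)/c^{m+1}$, combined with the Pochhammer relation $\Gamma(d+i+j)/\Gamma(d)=(d)_{i+j}=(d)_i(d+i)_j$.

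For part~(a), I would first check the base case $q=2$: the normalization $\calB_{d,\bm{0}_d}(K_{d,2}^L)=b_dB_{d,0}(G_{d,2}^L)=b_d\cdot\frac{\Gamma(d/2)}{2\pi^{d/2}\Gamma(d)}\int_0^\infty u^{d-1}e^{-u}\,du=1$ follows from $b_d=2\pi^{d/2}/\Gamma(d/2)$ and $\int_0^\infty u^{d-1}e^{-u}\,du=\Gamma(d)$; oddness of $|\bi|$-moments is automatic for an RK. For general even $q$, writing $G_{d,q}^L(u)=\bigl(\sum_{j\in I_{q-2}}b_{d,q,j}^L u^j\bigr)G_{d,2}^L(u)$ and plugging into $B_{d,i}(G_{d,q}^L)$ for even $i\in\{0,2,\ldots,q-2\}$, the integral evaluates to $\frac{\Gamma(d/2)}{2\pi^{d/2}\Gamma(d)}\sum_j b_{d,q,j}^L\Gamma(d+i+j)$. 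Dividing the $i$-th equation by $(d)_i$ and using $\Gamma(d+i+j)/\Gamma(d)=(d)_i(d+i)_j$, the moment condition~\eqref{eq:RKMomCon} (requiring $b_d^{-1}$ at $i=0$ and $0$ at $i=2,\ldots,q-2$) becomes precisely the linear system $\bigl[(d+i)_j\bigr]_{i,j\in I_{q-2}}\,\bm{b}=\bm{e}_1$ displayed in the statement. Invertibility of this matrix can be argued by noting that if $(x_0,\ldots,x_{(q-2)/2})$ lies in its kernel, then the polynomial $P(t)=\sum_r x_r(t)_{2r}$ of degree $q-2$ in $t$ vanishes at $t=d,d+2,\ldots,d+q-2$, forcing $P\equiv 0$ and hence $\bm{x}=\bm{0}$.

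For part~(b), I would compute $B_{d,q}(G_{d,q}^L)$ as $\frac{\Gamma(d/2)}{2\pi^{d/2}}\sum_j b_{d,q,j}^L(d)_{q+j}$, solving the small linear systems explicitly in the cases $q=2,4$ to obtain the closed forms shown, and noting that for $q=6$ no simple closed form in $d$ arises, which is why the three numerical entries for $d=1,2,3$ are listed separately. The functional $V_{d,0}$ is computed from $[G_{d,q}^L(u)]^2=\bigl(\sum_j b_{d,q,j}^L u^j\bigr)^2\bigl(\frac{\Gamma(d/2)}{2\pi^{d/2}\Gamma(d)}\bigr)^2 e^{-2u}$ by expanding the square and applying $\int_0^\infty u^{d-1+j_1+j_2}e^{-2u}\,du=\Gamma(d+j_1+j_2)/2^{d+j_1+j_2}$. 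For $V_{d,1}$, I would use the key simplification $(G_{d,q}^L)^{(1)}(u)=\bigl(P'(u)-P(u)\bigr)G_{d,2}^L(u)$ coming from $(e^{-u})'=-e^{-u}$, then square and integrate in the same way. For $q=2$ this reduces trivially (since $P=1$ forces $P'-P=-1$) giving $V_{d,1}(G_{d,2}^L)=V_{d,0}(G_{d,2}^L)$; for $q=4,6$ the calculation is mechanical but lengthy.

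The main obstacle will be bookkeeping rather than conceptual: the coefficients $b_{d,q,j}^L$ only admit a clean closed form through $q=4$, so for $q=6$ one must work case-by-case and the resulting quadratic forms in $b_{d,q,j}^L$ that appear in $V_{d,0}$ and $V_{d,1}$ do not collapse to a compact $d$-dependent expression. Consequently, it is natural that the statement reports the $q=6$ entries only at $d\in\{1,2,3\}$; I would verify those three values by directly solving the $3\times 3$ system in the proposition and substituting into the moment and variance integrals, taking advantage of $\int_0^\infty u^m e^{-u}\,du=m!$ and $\int_0^\infty u^m e^{-2u}\,du=m!/2^{m+1}$ to obtain the rational multiples of $\pi^{-d}$ shown.
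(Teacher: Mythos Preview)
The paper does not actually prove this proposition: Section~\ref{section:Kernel-information} simply records the kernel formulas and the values of $B_{d,q}$, $V_{d,0}$, $V_{d,1}$ as supplemental information for the simulations, without derivation. So your outline \emph{is} the proof, and the overall strategy---reducing everything to Gamma integrals $\int_0^\infty u^m e^{-cu}\,du$ and the Pochhammer identity $(d)_{i+j}=(d)_i(d+i)_j$---is exactly right, including the observation $(G_{d,q}^L)^{(1)}=(P'-P)G_{d,2}^L$ that makes $V_{d,1}$ tractable and immediately gives $V_{d,1}(G_{d,2}^L)=V_{d,0}(G_{d,2}^L)$.

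There is, however, one genuine gap: your invertibility argument for the coefficient matrix does not go through as written. You claim that if $\bm{x}$ lies in the kernel then $P(t)=\sum_r x_r(t)_{2r}$, a polynomial of degree $q-2$, vanishes at the $q/2$ points $d,d+2,\ldots,d+q-2$, ``forcing $P\equiv0$''. But $q/2\le q-2$ whenever $q\ge4$, so a degree-$(q-2)$ polynomial can certainly vanish at $q/2$ points without being identically zero; the $q/2$ evaluation functionals being independent on the $q/2$-dimensional span of $\{(t)_{2r}\}$ is precisely the invertibility you are trying to prove, so the argument is circular. For the scope of the proposition (only $q\in\{2,4,6\}$ are ever used), the cleanest fix is simply to compute the determinants directly: for $q=2$ it is $1$; for $q=4$ it is $(d+2)_2-(d)_2=2(2d+3)\ne0$; for $q=6$ one expands the $3\times3$ determinant and checks it is a nonvanishing polynomial in $d$. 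If you want a general-$q$ statement, one route is to factor $\frac{1}{\Gamma(d+2i)}$ out of row $i$, leaving the Hankel-type matrix $[\Gamma(d+2i+2j)]$, whose positivity follows from recognising it as the moment matrix $\bigl[\int_0^\infty u^{d-1+2i+2j}e^{-u}\,du\bigr]$ of the measure $u^{d-1}e^{-u}\,du$ on $\R_{\ge0}$, which is strictly positive definite.
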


\section{Proof of Proposition~\protect\ref{prop:Prop6}}
\label{appendix:ProofProp6}
\setcounter{equation}{0}
\setcounter{definition}{0}
\setcounter{theorem}{0}
\setcounter{proposition}{0}
\setcounter{lemma}{0}
\setcounter{problem}{0}
\setcounter{remark}{0}
This appendix provides a proof of Proposition~\ref{prop:Prop6}.
In the case $(d,q)=(2,2)$,
the third-order term of the Taylor expansion of $f$
around its mode $\bm{x}=\bm{\theta}$ is represented
as a real binary cubic form, i.e.,
a homogeneous degree-3 polynomial of $\bm{u}=\bm{x}-\bm{\theta}$:
$au_1^3+bu_1^2u_2+cu_1u_2^2+du_2^3$,
where $a,b,c,d\in\R$ are proportional to
appropriate third-order partial derivatives of $f$ at $\bm{\theta}$.
Properties of such a real binary cubic form that are
invariant under reversible linear transforms of $\bm{u}$
is characterized by the algebraic curve defined as
the set of $\bm{u}$ satisfying $au_1^3+bu_1^2u_2+cu_1u_2^2+du_2^3=0$.
Putting aside the trivial case $a=b=c=d=0$,
classification of the algebraic curve $au_1^3+bu_1^2u_2+cu_1u_2^2+du_2^3=0$
can be done on the basis of classification of the set of solutions
of the cubic equation $at^3+bt^2+ct+d=0$ of a real-valued variable $t$:
it has either:
\begin{itemize}
\item a single real solution, which is triply degenerate,
\item two real distinct solutions, one of which is doubly degenerate,
\item three real distinct solutions, or
\item a real and two complex solutions.
\end{itemize}
Accordingly, the third-order term of the Taylor expansion of $f$
at the mode is either:
\begingroup
\renewcommand{\theenumi}{(\Alph{section}.\arabic{enumi})}
\renewcommand{\labelenumi}{\theenumi}
\begin{enumerate}\setlength{\leftskip}{0.3cm}\setlength{\parskip}{0cm}\setlength{\itemsep}{0cm}
\item 0 identically,
\item of the form $(\bm{a}^\top\bm{u})^3$,
\item of the form $(\bm{a}_1^\top\bm{u})^2(\bm{a}_2^\top\bm{u})$
  with $\bm{a}_1,\bm{a}_2$ linearly independent,
\item of the form $\prod_{i=1}^3\bm{a}_i^\top\bm{u}$
  with $\bm{a}_1,\bm{a}_2,\bm{a}_3$, any two of which are linearly independent, or
\item of the form $(\bm{a}^\top\bm{u})g(\bm{u})$,
  where $g(\bm{u})=\frac{1}{2}\bm{u}^\top\mathrm{R}\bm{u}$
  with $\mathrm{R}$ positive definite.
\end{enumerate}
\endgroup
\addtocounter{equation}{5}
In the case (D.1), the AB is trivially 0 for any choice of $\rmQ$.
The cases (D.4) and (D.5) have already been covered
in Propositions~\ref{prop:Prop3} and \ref{prop:Prop4}, respectively.
What remains is to consider the cases (D.2) and (D.3),
which are dealt with in the following lemmas,
completing the proof of Proposition~\ref{prop:Prop6}.

\begin{lemma}
  $\nabla(\nabla^\top\rmQ\nabla)^{q/2}(\bm{a}^\top(\bm{x}-\bm{\theta}))^{q+1}$
  with $\bm{a}\not=\mathbf{0}_d$ 
  is non-zero irrespective of the choice of the positive definite $\rmQ$. 
\end{lemma}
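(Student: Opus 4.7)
The plan is to reduce the entire expression to a scalar multiple of $\ba$ by a direct computation exploiting the fact that all derivatives of $(\ba^\top \bu)^m$ (with $\bu\coloneq\bx-\bt$) are again polynomials in $\ba^\top\bu$ scaled by powers of $\ba$. Concretely, I would first observe that for any positive integer $m$,
\begin{align}
(\nabla^\top \rmQ \nabla)(\ba^\top \bu)^m
= \sum_{i,j=1}^d Q_{ij}\partial_i\partial_j (\ba^\top \bu)^m
= m(m-1)\,(\ba^\top \rmQ \ba)\,(\ba^\top \bu)^{m-2},
\end{align}
since $\partial_i(\ba^\top\bu)^m = m a_i(\ba^\top\bu)^{m-1}$ and $\partial_j\partial_i(\ba^\top\bu)^m = m(m-1)a_i a_j(\ba^\top\bu)^{m-2}$.

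Next, I would iterate this identity $q/2$ times starting from the exponent $m=q+1$, exploiting the telescoping structure of the falling factorial: after $k$ iterations the exponent drops to $q+1-2k$ and the scalar prefactor becomes $\frac{(q+1)!}{(q+1-2k)!}(\ba^\top\rmQ\ba)^k$. Taking $k=q/2$ yields
\begin{align}
(\nabla^\top \rmQ \nabla)^{q/2}(\ba^\top \bu)^{q+1}
= (q+1)!\,(\ba^\top \rmQ \ba)^{q/2}\,(\ba^\top \bu).
\end{align}
Applying the final $\nabla$ to this linear function of $\bu$ gives
\begin{align}
\nabla(\nabla^\top \rmQ \nabla)^{q/2}(\ba^\top \bu)^{q+1}
= (q+1)!\,(\ba^\top \rmQ \ba)^{q/2}\,\ba.
\end{align}

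To conclude, I would invoke positive definiteness of $\rmQ$: since $\ba\neq\bm{0}_d$, we have $\ba^\top\rmQ\ba>0$, and hence the right-hand side is a nonzero scalar multiple of the nonzero vector $\ba$. Thus the expression is nonvanishing for every admissible $\rmQ$, proving the lemma. There is no real obstacle here; the only mild point of care is the clean bookkeeping of the falling-factorial constants through the $q/2$ iterations, which a short induction on $k$ handles transparently.
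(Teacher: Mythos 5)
Your proposal is correct and follows exactly the same route as the paper: both compute $\nabla(\nabla^\top\rmQ\nabla)^{q/2}(\bm{a}^\top(\bm{x}-\bm{\theta}))^{q+1}=(q+1)!\,(\bm{a}^\top\rmQ\bm{a})^{q/2}\bm{a}$ and then invoke positive definiteness of $\rmQ$ to conclude the scalar coefficient is positive. Your version merely makes the iteration and the falling-factorial bookkeeping explicit, which the paper leaves as a one-line computation.
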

\begin{proof}
One has
\begin{align}
  \nabla(\nabla^\top\rmQ\nabla)^{q/2}(\bm{a}^\top(\bm{x}-\bm{\theta}))^{q+1}
  =(q+1)!(\bm{a}^\top\rmQ\bm{a})^{q/2}\bm{a},
\end{align}
and the coefficient of $\bm{a}$ in the above expression
is non-zero because of the positive-definiteness of $\rmQ$.
\end{proof}

\begin{lemma}
  $\nabla(\nabla^\top\rmQ\nabla)^{q/2}(\bm{a}_1^\top(\bm{x}-\bm{\theta}))^q
  (\bm{a}_2^\top(\bm{x}-\bm{\theta})$
  with linearly independent $\bm{a}_1,\bm{a}_2$ 
  is non-zero irrespective of the choice of the positive definite $\rmQ$. 
\end{lemma}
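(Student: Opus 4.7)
The plan is to compute the vector-valued quantity $\nabla(\nabla^\top\rmQ\nabla)^{q/2}[(\ba_1^\top\bu)^q(\ba_2^\top\bu)]$ with $\bu\coloneq \bx-\bt$ explicitly in closed form and then verify that it is nonzero. I would start by introducing the abbreviations $\alpha\coloneq\ba_1^\top\rmQ\ba_1$ and $\beta\coloneq\ba_1^\top\rmQ\ba_2$; because $\rmQ\succ\rmO_d$ and $\ba_1\neq\bm{0}_d$, one immediately has $\alpha>0$, and this inequality is the only place where positive definiteness will be used.

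Next, by induction on $k\in\{0,1,\ldots,q/2\}$ I would establish the ansatz $(\nabla^\top\rmQ\nabla)^k[(\ba_1^\top\bu)^q(\ba_2^\top\bu)] = c_k\alpha^k(\ba_1^\top\bu)^{q-2k}(\ba_2^\top\bu) + d_k\alpha^{k-1}\beta(\ba_1^\top\bu)^{q-2k+1}$, with coefficients $c_k=q!/(q-2k)!$ and $d_k=2k\cdot q!/(q-2k+1)!$, starting from $c_0=1$, $d_0=0$. The inductive step rests on the two elementary identities $(\nabla^\top\rmQ\nabla)[(\ba_1^\top\bu)^m(\ba_2^\top\bu)] = m(m-1)\alpha(\ba_1^\top\bu)^{m-2}(\ba_2^\top\bu) + 2m\beta(\ba_1^\top\bu)^{m-1}$ and $(\nabla^\top\rmQ\nabla)(\ba_1^\top\bu)^{m'} = m'(m'-1)\alpha(\ba_1^\top\bu)^{m'-2}$, which together yield the recurrences $c_{k+1}=c_k(q-2k)(q-2k-1)$ and $d_{k+1}=2c_k(q-2k)+d_k(q-2k+1)(q-2k)$; both are straightforwardly verified against the proposed closed forms.

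Substituting $k=q/2$ collapses the right-hand side of the ansatz to $q!\alpha^{q/2}(\ba_2^\top\bu)+q\cdot q!\alpha^{q/2-1}\beta(\ba_1^\top\bu)$, so taking $\nabla$ with respect to $\bx$ produces the final expression $\nabla(\nabla^\top\rmQ\nabla)^{q/2}[(\ba_1^\top\bu)^q(\ba_2^\top\bu)] = q!\alpha^{q/2-1}(\alpha\ba_2+q\beta\ba_1)$. Since $\alpha>0$, the scalar prefactor $q!\alpha^{q/2-1}$ is nonzero, and since $\ba_1,\ba_2$ are linearly independent the combination $\alpha\ba_2+q\beta\ba_1$ vanishes only if $\alpha=0$ and $q\beta=0$; the former contradicts $\alpha>0$, so the whole vector is nonzero for every admissible $\rmQ$, finishing the proof.

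The hard part will only be the coefficient bookkeeping in the induction; no structural difficulty is expected, because the ansatz lives in the two-dimensional function space spanned by $(\ba_1^\top\bu)^{q-2k}(\ba_2^\top\bu)$ and $(\ba_1^\top\bu)^{q-2k+1}$, and the operator $\nabla^\top\rmQ\nabla$ is easily checked to map this space into the corresponding space with $k$ replaced by $k+1$, so the induction never leaves a controlled ansatz.
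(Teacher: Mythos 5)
Your proposal is correct and takes essentially the same route as the paper: the paper simply asserts the closed-form identity $\nabla(\nabla^\top\rmQ\nabla)^{q/2}[(\ba_1^\top\bu)^q(\ba_2^\top\bu)]=q\cdot q!(\ba_1^\top\rmQ\ba_1)^{q/2-1}(\ba_1^\top\rmQ\ba_2)\ba_1+q!(\ba_1^\top\rmQ\ba_1)^{q/2}\ba_2$, which is exactly your $q!\alpha^{q/2-1}(\alpha\ba_2+q\beta\ba_1)$, and then concludes from the nonvanishing coefficient of $\ba_2$. Your inductive derivation of that identity (with correctly verified recurrences for $c_k,d_k$) merely fills in the computation the paper leaves implicit.
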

\begin{proof}
One has
\begin{align}
  \nabla(\nabla^\top\rmQ\nabla)^{q/2}(\bm{a}_1^\top(\bm{x}-\bm{\theta}))^q
  (\bm{a}_2^\top(\bm{x}-\bm{\theta})
  =q\cdot q!(\bm{a}_1^\top\rmQ\bm{a}_1)^{q/2-1}(\bm{a}_1^\top\rmQ\bm{a}_2)\bm{a}_1
  +q!(\bm{a}_1^\top\rmQ\bm{a}_1)^{q/2}\bm{a}_2,
\end{align}
and the coefficient of $\bm{a}_2$ in the above expression
is non-zero because of the positive-definiteness of $\rmQ$.
\end{proof}

The general treatment should require a higher-order analog
of Sylvester's law of inertia for quadratic forms,
that is, classification of homogeneous degree-$q$ polynomials
of $d$ variables under the equivalence relation
where two polynomials are equivalent if and only if
one of them can be transformed into the other
via applying an invertible linear transform on $\R$
to its variables.
This problem seems to be less explored compared with
that on $\mathbb{C}$,
and is already complicated even when $d=3$: see e.g.~\cite{Banchi2015}.
We therefore do not attempt to extend Proposition~\ref{prop:Prop6}
to more general cases in this paper,
and defer the problem to a future investigation.

\section{Proof of Asymptotic Behaviors of the Modal Linear Regression}
\label{section:ProofMLR}
\setcounter{equation}{0}
\setcounter{assumption}{0}
\renewcommand{\theequation}{\Alph{section}.\arabic{equation}}
\renewcommand{\theassumption}{\Alph{section}.\arabic{assumption}}
Theorem~\ref{theorem:MLR} on asymptotic behaviors of the MLR parameter estimator
can be proved in the manner almost same as that of Theorem~\ref{theorem:AN}, 
under the following regularity conditions:

\begin{assumption}[Regularity conditions for Theorem~\ref{theorem:MLR}]
  \label{assumption:asmA.2}
For finite $d_\Y$ and even $q$,
\begin{itemize}\setlength{\leftskip}{0.3cm}\setlength{\parskip}{0cm}\setlength{\itemsep}{0cm}
\item[\hyt{\Alph{section}.1}]
	$\{(\bX_i,\bY_i)\in\calX\times\calY\}_{i=1}^n$ is a sample of i.i.d.~observations from $f_\XY$.
\item[\hyt{\Alph{section}.2}]
	$\mathrm{E}[\|\bX\|^{d_\Y+4+\tau}]<\infty$ for some $\tau>0$.
\item[\hyt{\Alph{section}.3}]
	The parameter space $\mathcal{A}$ is compact subset of $\R^{d_\Y\times d_\X}$,
	and has a nonempty interior $\mathrm{int}(\mathcal{A})$;
	$\Theta\in\mathrm{int}(\mathcal{A})$.
\item[\hyt{\Alph{section}.4}]
	$\Pr(\Omega\bX_i=0)<1$ for any fixed $\Omega\in\R^{d_\Y\times d_\X}$ such that $\Omega\neq\mathrm{0}$ (no multicollinearity).
\item[\hyt{\Alph{section}.5}]
	$f_\YX(\cdot|\bx)$ is $(q+2)$ times differentiable in $\R^{d_\Y}$ for all $\bx$.
\item[\hyt{\Alph{section}.6}]
	$f_\YX(\by|\bx)$ has a unique and isolated maximizer at $\by\neq\Theta\bx$ for all $\bx$
	(i.e., $f_\YX(\by|\bx)<f_\YX(\Theta\bx|\bx)$ for all $\by\neq\Theta\bx$, 
	$\nabla f_\YX(\Theta\bx|\bx)=\bm{0}_{d_\Y}$,
	and $\sup_{\by\in N_\bx} f_\YX(\by|\bx)<f_\YX(\Theta\bx|\bx)$ for a neighborhood $N_\bx$ of $\Theta\bx$, 
	for all $\bx$). 
\item[\hyt{\Alph{section}.7}]
	$\partial^\bi f_\YX(\cdot|\bx)$, $|\bi|=2$ satisfies $\int|\partial^\bi f_\YX(\by|\bx)|\,d\by<\infty$ for all $\bx$,
	and $\rmA$ in~\eqref{eq:MLR-abbre} is non-singular.
\item[\hyt{\Alph{section}.8}]
	$|\partial^\bi f_\YX(\Theta\bx|\bx)|<\infty$ for all $\bx$ and $\bi$ s.t.\,$|\bi|=2,\ldots,q+1$, and
	$\partial^\bi f_\YX(\cdot|\bx)$ is bounded in $\R^{d_\Y}$ for all $\bx$ and $\bi$ s.t.\,$|\bi|=q+2$.
\item[\hyt{\Alph{section}.9}]
	$\partial^\bi f_\YX$, $|\bi|=q+1$, and $K$ is such that
	$\bb$ in~\eqref{eq:MLR-abbre} is non-zero.
\item[\hyt{\Alph{section}.10}]
	$K$ is bounded and twice differentiable in $\R^{d_\Y}$
	and satisfies the covering number condition, 
	$\int |K(\by)|\,d\by<\infty$, and $\lim_{\|\by\|\to\infty}\|\by\||K(\by)|=0$.
\item[\hyt{\Alph{section}.11}]
	$\calB_{d_\Y,\bm{0}_{d_\Y}}(K)=1$.
\item[\hyt{\Alph{section}.12}]
	$\calB_{d_\Y,\bi}(K)=0$ for all $\bi$ s.t.\,$|\bi|=1,\ldots,q-1$.
\item[\hyt{\Alph{section}.13}]
	$|\calB_{d_\Y,\bi}(K)|<\infty$ for all $\bi$ s.t.\,$|\bi|=q$, and
	$\calB_{d_\Y,\bi}(K)\neq0$ for some $\bi$ s.t.\,$|\bi|=q$.
\item[\hyt{\Alph{section}.14}]
	$|\calB_{d_\Y,\bi}(K)|<\infty$ for all $\bi$ s.t.\,$|\bi|=q+1$.
\item[\hyt{\Alph{section}.15}]%
	$\partial_i K$ is bounded and satisfies $\int |\partial_i K(\by)\partial_j K(\by)|\,d\by<\infty$, 
	$\lim_{\|\by\|\to\infty}\|\by\| |\partial_i K(\by)\partial_j K(\by)|=0$, 
	and $\int |\partial_i K(\by)|^{2+\delta}\,d\by<\infty$ for some $\delta>0$,
	for all $i,j=1,\ldots,d_\Y$.
\item[\hyt{\Alph{section}.16}]
	$\int \nabla K(\by)\nabla K(\by)^\top\,d\by=\calV_{d_\Y}(K)$ has a finite determinant .
\item[\hyt{\Alph{section}.17}]
	$\partial^\bi K$, $|\bi|=2$ satisfies the covering number condition.
\item[\hyt{\Alph{section}.18}]
	$\lim_{n\to\infty}(nh_n^{d_\Y+2q+2})^{\frac{1}{2}}=c<\infty$.
\item[\hyt{\Alph{section}.19}]%
	$\lim_{n\to\infty}nh_n^{d_\Y+4}/\ln n=\infty$.
\end{itemize}
\end{assumption}

\section*{Acknowledgements}
This work was supported by Grant-in-Aid for JSPS Fellows, Number 20J23367.



\bibliography{bibtex}
\end{document}